\newtheorem{thm}{Theorem}
\newtheorem{cor}[thm]{Corollary}
\newtheorem{prop}[thm]{Proposition}
\newtheorem{lemma}[thm]{Lemma}
\newtheorem{ass}{Assumption}
\theoremstyle{remark}\newtheorem{remark}{Remark}
\newcommand{\0}{\mathbf{0}}
\newcommand{\ra}{\rightarrow}
\newcommand{\B}{\mathcal{B}}
\newcommand{\cov}{\text{\rm Cov}}
\newcommand{\var}{\text{\rm Var}}
\newcommand{\diag}{\text{\rm diag}}
\newcommand{\PP}{\mathbb{P}}
\newcommand{\EE}{\mathbb{E}}
\newcommand{\Ec}{\mathcal{E}}
\newcommand{\X}{\mathbf{X}}
\newcommand{\W}{\mathbf{W}}
\newcommand{\tX}{\tilde{ \mathbf{X}}}
\newcommand{\tE}{\tilde{ \mathbf{E}}}
\newcommand{\tZ}{\tilde{ \mathbf{Z}}}
\newcommand{\R}{\mathbb{R}}
\newcommand{\y}{\mathbf{y}}
\newcommand{\Z}{\mathbf{Z}}
\newcommand{\Eps}{\bm{\eps}}
\newcommand{\Eta}{\bm{\eta}}
\newcommand{\tEps}{\tilde{\bm{\eps}}}
\newcommand{\E}{\mathbf{E}}
\newcommand{\wh}{\widehat}
\newcommand{\eps}{\varepsilon}
\renewcommand{\i}{\infty}
\newcommand{\wt}{\widetilde}
\renewcommand{\a}{\wh\alpha}
\newcommand{\ab}{\alpha^*}
\newcommand{\sep}{\sigma_\eps}
\newcommand{\tr}{\textrm{\rm tr}}
\newcommand{\ke}{\kappa({\se})}
\newcommand{\sk}{\sigma_K}
\newcommand{\sn}{\sigma_n}
\newcommand{\lk}{\lambda_K}
\newcommand{\lp}{\lambda_p}
\renewcommand{\ln}{\lambda_n}
\newcommand{\se}{\Sigma_E}
\newcommand{\sza}{A\Sigma_Z A^\top}
\newcommand{\sxy}{\Sigma_{Xy}}
\newcommand{\sz}{\Sigma_Z}
\newcommand{\sx}{\Sigma_X}
\newcommand{\rank}{\text{\rm rank}}
\newcommand{\sy}{\sigma_y}
\newcommand{\N}{\mathcal{N}}
\newcommand{\kaz}{\kappa(A\Sigma_Z A^\top)}
\newcommand{\V}{V}
\newcommand{\A}{\mathcal{A}}
\newcommand{\F}{\mathcal{F}}
\newcommand{\D}{\mathcal{D}}
\newcommand{\bA}{\bar A}
\newcommand{\bG}{\bar G}
\newcommand{\bb}{\bar\beta}
\newcommand{\sg}{\Sigma}
\newcommand{\seta}{\sigma_\eta}
\newcommand{\bpcr}{\hat\beta}
\newcommand{\rpcr}{R_{\text{\rm PCR}}}
\newcommand{\sbt}{\,\begin{picture}(-1,1)(-0.5,-2)\circle*{2.3}\end{picture}\ }
\title{
% Interpolation under latent factor regression models \\
Interpolating Predictors in High-Dimensional Factor Regression  }
\author{Florentina Bunea\thanks{Department of Statistics and Data Science, Cornell University, Ithaca, NY 14850, USA. E-mail: \texttt{\href{mailto:fb238@cornell.edu}{fb238@cornell.edu}}. Partially supported by NSF DMS-1712709.}\and 
Seth Strimas-Mackey\thanks{Corresponding author. Department of Statistics and Data Science, Cornell University, Ithaca, NY 14850, USA. E-mail: \texttt{\href{mailto:scs324@cornell.edu}{scs324@cornell.edu}}. Partially supported by NSERC PGS-D.}\and
Marten Wegkamp\thanks{Department of Mathematics and Department of Statistics and Data Science, Cornell University, Ithaca, NY 14850, USA. E-mail: \texttt{\href{mailto:mhw73@cornell.edu}{mhw73@cornell.edu}}. Partially supported by NSF DMS-1712709.}
}
\date{}
\begin{document}

	\maketitle
\begin{abstract}
    
      This work studies  finite-sample properties of the risk of the minimum-norm interpolating predictor in high-dimensional regression models.   If the effective rank of the covariance matrix $\Sigma$ of the $p$ regression features is much larger than the sample size $n$,  we show that the min-norm interpolating  predictor is not desirable, as its risk approaches the risk of trivially predicting the response by 0. However, 
    our detailed finite-sample analysis reveals, surprisingly,  
    that  this behavior is not present when  the regression response and the features are {\it jointly} low-dimensional, following a widely used  factor regression model. 
    Within this popular model class, and when the effective rank of $\Sigma$ is smaller than $n$, while still allowing for $p \gg n$, both the bias and the variance terms of the excess risk can be controlled, and the risk of the minimum-norm interpolating predictor approaches optimal benchmarks. Moreover, through a  detailed analysis of the bias term, we exhibit model classes under   which our upper bound on the excess risk approaches zero, while the corresponding upper bound  in the recent work  \cite{bartlett2019} diverges. Furthermore,  we show that the minimum-norm interpolating predictor analyzed under the factor regression model, despite being model-agnostic and devoid of tuning parameters, can have similar risk to predictors based on principal components regression and ridge regression, and  can improve over LASSO based predictors, in the high-dimensional regime.\\
    
    \noindent\textbf{Keywords: }Interpolation, minimum-norm predictor, finite-sample risk bounds, prediction, factor models, high-dimensional regression, principal component regression.

\end{abstract}

 \section{Introduction} 

 Motivated by the widely observed phenomenon that interpolating deep neural networks generalize well despite having zero training error, there has been a recent wave of literature showing that this is a general behaviour that can occur for a variety of models and prediction methods \cite{hastie2019, feldman2019does, muthukumar2019harmless, mei2019generalization, Belkin15849, belkin2018understand, 
 belkin2018overfitting, belkin2019models, belkin2018does, jun2019kernel, mitra2019understanding , ma2017power, liang2019, xing2018statistical, bartlett2019}.
%  {\tiny One of the simplest settings
%  %in which interpolation can be studied
%  is the prediction of a real-valued response $y\in\R$ from vector-valued features $X\in \R^p$ via a linear predictor $\wh y_x \coloneqq X^\top \a$ with $\a$ defined as the vector with the smallest  Euclidean norm among all weight vectors that perfectly fit the training data $(\X,\y)$.}

 One of the simplest settings
 %in which interpolation can be studied
 is the prediction of a real-valued response $y\in\R$ from vector-valued features $X\in \R^p$ via generalized least squares (GLS). The GLS estimator $\a=\X^+ \y$ is based on the Moore-Penrose pseudo-inverse of the $n\times p$ data matrix $\X$ and response vector $\y\in \R^n$, obtained from $n$ i.i.d.~copies $(X_i,y_i)$, $i\in [n]$, 
of $(X, y)$, with $p > n$. It coincides with the minimum-norm estimator, which in the case that $\X$ has full rank, interpolates the data. The interpolation property of $\a$ means that $\X\a=\y$.  We refer to the corresponding predictor as the minimum-norm interpolating predictor.

%  {\tiny
% The data consists of the $n\times p$ data matrix $\X$ and response vector $\y\in \R^n$, obtained from $n$ i.i.d.~copies $(X_i,y_i)$, $i\in [n]$, 
% of $(X, y)$, with $p > n$. The interpolation property of $\a$ means that $\X\a=\y$.  We refer to the corresponding predictor as the minimum-norm interpolating predictor.}
% %

This paper is devoted to  the  finite-sample statistical analysis of prediction via the generalized least squares estimator $\a$. %%minimum-norm interpolator $\a$.
We first note that ideally, 
the prediction risk $ R(\a):= \EE_{X,y}\left[(   X^\top \a-y)^2\right]$  of $\a$ %would %asymptotically 
approaches the optimal risk $   \inf_{\alpha\in\R^p} \EE_{X,y} \left[ (X^\top \alpha- y)^2\right]$. %  that is achieved by   $\ab=\sx^{+} \sxy$.  
Unfortunately, 
this  often turns out not to be the case.
%
%Theorem \ref{thm:a norm} of Section \ref{sec:interpolation_null_risk} below shows that    $\|\a\|$ decreases to $0$ as $n$ increases and $p \gg n$ under appropriate conditions on the covariance matrix $\sx$ of $X$ and variance $\sy^2$ of $y$.
% Furthermore, if $\|\a\|$ is very close to $0$, one might expect $R(\a)$ to be approximately equal to $R(\0)$, with $\0\in \R^p$. This would be undesirable as $R({\bf 0})= \EE[ y^2]:=\sigma_y^2$ is  the non-optimal null risk of trivially predicting via the zero weight vector, ignoring the data.}
%one might expect   the risk of $\a$  to have the undesirable property of approaching the non-vanishing null risk $R({\bf 0})= \EE[ y^2]:=\sigma_y^2$, which is the risk of predicting via the zero weight vector $\alpha={\bf 0}\in\R^p$. 
Theorem \ref{thm:alpha null risk}, stated in Section \ref{sec:interpolation_null_risk},  %confirms that indeed
proves that the ratio $R(\a)/ R(\0 )$ approaches $1$   in the regime 
${\rm r_e} (\sx) \gg n$. 
 Clearly, this is undesirable as $R({\bf 0})$ is 
the non-optimal
null risk of trivially predicting via the zero weight vector, %utterly
ignoring the data.
The {\it effective rank}   %with 
${\rm r_e} (\sx)$ 
%denoting  the effective rank 
of the $p\times p$ covariance matrix $\sx$ of $X$ 
%,
is
defined as the ratio between the trace of $\sx$ and its operator norm, and is at most equal to its rank,  ${\rm r_e} (\sx) \leq p$.
%Theorem \ref{thm:alpha null risk} shows, in 
In particular, if $\sx$ is well-conditioned, with
%effective rank close to its rank (
${\rm r_e} (\sx) \asymp p$, then the prediction risk $R(\a)$ of the minimum norm interpolator approaches the trivial risk $R(\0 )$, whenever $p \gg n$. This was previously observed, from a different perspective, in \cite{hastie2019}.
% A similar phenomenon has been observed, from a different perspective, in the recent work   \cite{bartlett2019}, via a finite sample treatment of the risk, 
%discussed in detail in Section \ref{sec:compare} below, 
% and in \cite{hastie2019}, via an asymptotic analysis of the risk.

%{\color{red} TO DO: call $\a$ the GLS in general, and say that in some cases (when $\X$ is full rank), it coincides with min-norm interpolator. Go through document and make sure to only call it min-norm interpolator when it actually interpolates.}

This opens the question as to whether, 
in the high-dimensional $p>n$ setting, there exist underlying distributions of the data that allow $R(\a)$ to be close  to an optimal risk benchmark. The recent work \cite{bartlett2019} provides a positive answer to this question, primarily focusing on sufficient conditions on the spectrum of $\sx$ that can lead to consistent prediction. 

In this paper we show that the {\it joint} structure of $(X,y)$, not just the marginal structure of $X$ as considered in \cite{bartlett2019}, is important to understanding the conditions under which consistent prediction is possible with $\a$. In particular, we provide a detailed and novel finite-sample analysis of the prediction risk $R(\a)$ 
% under a class of factor regression models that are   extremely relevant in practical,  $p>n$ scenarios.
% This work  provides a positive answer to this question, 
when  the pair $(X,y)$ follows a linear factor regression model,  $y= Z^\top \beta + \eps$, $X = AZ + E$, in the regime 
\[p\gg n \quad \text{ but } \quad {\rm r_e} (\sx) < c\cdot n,\]
% {\color{blue} $${ p \gg n,  \ \ \mbox{but} \ \ \rm r_e} (\sx) < c\cdot n,$$  } 
for an absolute constant $c>0$. Here $(X,y)\in \R^p\times \R$ are   observable random features and response, $Z\in \R^K$ is a vector of unobservable sub-Gaussian random latent factors with $K <  p$, $A\in \R^{p\times K}$ is a loading matrix relating $Z$ to $X$, and $E$ and $\eps$ are mean-zero sub-Gaussian noise terms independent of $Z$ and each other. Under this model,   the observation made in inequality (\ref{eqn:re x fm}) of Section \ref{sec:effrank} below shows 
%to observe 
that ${\rm r_e}(\sx) $
%\le K (1+\xi^{-1})+ {\rm r_e}(\se) \xi^{-1}$, which 
is less than $c\cdot n$ as long as $K<c_1\cdot n$ and the signal-to-noise ratio $\xi:= \lambda_K(\sza)/\|\se\| \gtrsim p/n \ge c_2\cdot{\rm r_e}(\se)/n$ for suitable absolute constants $c_1,c_2>0$.  % where
Here $\sz$ and $\se$ denote the covariance matrices of $Z$ and $E$ respectively, and $\xi$ is the ratio between the $K$th eigenvalue of $\sza$ and the operator norm of $\se$.
{ Section \ref{sec:frm} is dedicated to deriving population-level properties of the factor regression model that are relevant to the performance of the GLS $\a$.}
% Our results show that $R(\a)$ does not approach the trivial risk $R(\0)$ and, moreover, that the excess risk relative to optimal risk benchmarks, discussed in  Section \ref{sec:bench}, can indeed approach zero.  {\color{red}perhaps one more sentence here}
 
%Our main contribution, presented
Our primary contribution is the 
study of  $R(\a)$ under the factor regression model, and in this regime. In Section \ref{section:min_ell_2} we present a detailed  finite-sample study of the risk $R(\a)$ of the %model-free
model-agnostic interpolating predictor $\wh y_x=X^\top \a$ in factor regression models with $p > n$ and  $K < n$, but with $K$ allowed to grow with $n$. 
Our main result is Theorem \ref{thm:upper bound} in Section \ref{main}. It  provides a finite-sample   bound on the {\it excess  risk } $R(\a)-\sep^2$ of $\a$ in the high-dimensional setting $p>n$, relative to  the natural risk benchmark  $\EE[ \eps^2] := \sep^2$ in  the factor regression model; the excess risk relative to the benchmark $\inf_{\alpha\in\R^p} \EE_{X,y} \left[ (X^\top \alpha- y)^2\right]$ is also derived in this theorem. 
As a consequence, we obtain sufficient conditions under which  the prediction risk $R(\a)$ approaches the optimal risk, by adapting to the embedded  dimension $K$.
The excess risk not only decreases beyond the interpolation boundary to a non-zero value as observed in
\cite{hastie2019},
but does indeed decrease to zero, as desired. We remark that at least for Gaussian $(X,y)$, \cite{bartlett2019} provides an alternative bound to Theorem \ref{thm:upper bound}. However, Theorem \ref{thm:upper bound} provides an improved rate for typical factor regression models, and in particular provides examples when the upper bound on the excess risk in \cite{bartlett2019} diverges, yet  our results show that prediction is consistent; see Section \ref{sec:compare} for a detailed comparison.

Table \ref{table:results summary} below offers a snap-shot of our main results. The first row is a reminder that all results are established for $p > n$, while the second row separates the regimes of $r_e(\sx)$ larger or smaller than $n$.
%, where $C>1$ and $c > 0$ are absolute constants with $C>c$. 
The third row specifies the assumptions on $(X,y)$, namely sub-Gaussianity or, in addition, the factor regression model.
The last  row gives finite-sample bounds. 
The risk bounds in the bottom right panel are stated under the  assumptions that 
%the signal-to-noise ratio $\xi:=\lambda_K(\sza)/ \| \se\|$
  the operator norms $\|\sz\|$  and $\|\se\|$ are constant %,
 % the condition number of $\sza$ is bounded
 and
   ${\rm r_e}(\se)\asymp p$. These simplifying assumptions are made here for transparency of presentation and
are not made in the  body of the paper.
\begin{table}[h!] \label{table:results summary}
\centering
    \begin{tabular}{|c|c|}
    \hline
    \multicolumn{2}{|c|}{\rule{0pt}{0.35cm}$p > n$\rule[-0.2cm]{0pt}{0.2cm}}\\
    \hline
    \rule{0pt}{0.45cm}$ {{\rm r_e} (\sx)}{}> C \cdot n$ \rule[-0.3cm]{0pt}{0.3cm}& \multicolumn{1}{c|}{\rule{0pt}{0.45cm} $ {{\rm r_e} (\sx)}{}< c\cdot n$,  \ \   $K < n$\rule[-0.3cm]{0pt}{0.3cm}}\\
    \hline
      $(X,y)$ sub-Gaussian & \makecell{\rule{0pt}{0.5cm}$(X,y)$ sub-Gaussian\\$y = \beta^\top Z + \eps$\\
        $X = AZ+E$\rule{0pt}{0.4cm}\rule{0pt}{0.4cm}\rule[-0.3cm]{0pt}{0.3cm}}\\
    \hline 
    $\left |\frac{R(\a)}{R(\0 )} - 1\right|\lesssim \sqrt{ {n}/ {{\rm r_e} (\sx)}}$ & 
    \makecell{\rule{0pt}{0.6cm}  $R(\a)-\sep^2\lesssim B_Z + V$ \\ $ B_Z= \|\beta\|^2\cdot p/(n\cdot \xi)$\rule{0pt}{0.45cm}\\ 
    $V=\left\{ ({n}/{p}) +({K}/{n})\right\}\log n$\rule{0pt}{0.45cm}\rule[-0.4cm]{0pt}{0.4cm} }\\
    \hline
    \end{tabular}
    \caption{Behavior of risk $R(\a)$. Here $C>1,c>0$ are absolute constants with $C>c$. (i) $R(\a)$ approaches null risk $R(\0)$ for well-conditioned matrices $\sx$ when  $ p\gg n$ (left panel); (ii) Variance term vanishes when $ p\gg n\log n $ and $K\log n \ll n$; Bias term
    vanishes for $\xi := \lambda_K(\sza)/ \| \se\|\gg \|\beta\|^2 p /n$
    (right panel).}
\end{table}
The bottom right panel shows that the variance term $V$ decreases if $ p\gg n\log n $ and $K\log n \ll n$ and that the bias term $B_Z$   decreases provided  that the signal-to-noise ratio $\xi := \lambda_K(\sza)/ \| \se\|$  is large enough. Specifically, we need that $\xi\gg \|\beta\|^2 p /n$, which for $\| \beta\|^2 \lesssim{K}$ amounts to $\xi \gg p\cdot K /n$.
For instance, as explained in Section \ref{sec:effrank}, a common, natural situation is
$\xi\asymp p$ and the bias is small for $K\ll n$.
In clustering problems where the $p$ coordinates of $X$ can be clustered in $K$ groups of approximately eqal size $m\approx p/K$ as discussed in Section \ref{sec:effrank}, 
we find
$\xi \asymp p/K$. In that case, $B_Z$ %is of order $O(K^2/n)$, which
vanishes if $n\gg K^2$.

We emphasize that a condition on the effective rank of $\sx$ alone is not enough to guarantee that $R(\a)$ is close to the optimal risk $\sep^2$. As argued in Section \ref{sec:lin model}, if we assume the model $X = AZ + E$, but instead of assuming that $y$ is also a function of $Z$,  as in this work, we   have a standard linear model
$y= X^\top \theta + \eta$, with $\theta \in \R^p$, then the bias term {\it cannot} be ignored, unless $\|\theta\|\to0$, which is typically not the case in high dimensions. In Section \ref{sec:low dim} we show that the best linear predictor $\ab = \sx^+\sxy$,  that minimizes the risk $\EE_{X,y} \left[ (X^\top\alpha - y)^2 \right]$, does in fact satisfy $\|\ab\|\to 0$ under the factor regression model  $y=Z^\top\beta + \eps$ and thus that this is a natural setting for studying when the GLS generalizes well. From this perspective, this work illustrates the critical role played in the risk analysis by a modeling assumption in which $(X,y)$ are jointly low-dimensional.

Finally, we remark that prediction under factor regression models has been well studied, starting with classical factor analysis that can be traced back to the 1940s
\cite{J67,J69,J70,J77,L40,L41,L43}, including the pertinent work \cite{anderson1956}.
A number of works
  ranging from purely Bayesian \cite{ag2000, dunson2011, hahn, carvalho2008high}  to variational Bayes \cite{blei:var}   to frequentist
\cite{bing2019essential,fan2013, fan2011, fan2013large,  fan2017, Izenman-book, Jolliffe, SW2002_JASA, SW2002_JB, SW2012}
 show that this class of models  can be a useful  framework for constructing and analyzing predictors of $y$ from high-dimensional and  correlated data. %, when $p > n$ and the noise component is small relative to the signal. 
  The literature on finite-sample prediction bounds under factor regression models is relatively limited, with instances provided by  
  \cite{bing2019essential,fan2013, fan2011, fan2013large,  fan2017}, and most existing results established for $K$ fixed. 
  %Most importantly 
  Relevant for the work presented here, the (non-Bayesian) prediction schemes that have been studied in generic factor regression models  are often variations of principal component regression in $K < n$ fixed  dimensions, and therefore   typically do not interpolate the data.  From this perspective, the results of this paper complement this existing literature, by studying the behavior of interpolating predictors in factor regression. Furthermore, in Section \ref{sec:pcr} we derive an upper bound on the excess risk of prediction based on principal components, under the factor regression model, and find that it is comparable to the excess risk bound of the interpolating predictor, in the regime $p\gg n$, provided that the covariance matrix $\se$ of the noise is well conditioned. This provides further motivation for the use of $\a$ in the setting discussed here. \\

  \noindent
  The  rest of the paper is organized as follows.
  
  Section \ref{sec:interpolation_null_risk} derives sufficient conditions on $\sx$ and $\sy^2 \coloneqq \EE[y^2]$ under which 
%   $\|\a\|$ approaches zero,   and related conditions  that imply
  $R(\a)$ approaches the trivial risk $R(\0 )$. This section motivates the remainder of the paper, in which  we study  the risk behaviour when these conditions are violated.
  
 Section \ref{sec:frm} introduces the factor regression model (\ref{model}) and derives population-level properties that are relevant to the performance of the GLS $\a$.
%  The latter reveals what  key quantities to control in order to obtain  non-trivial prediction risk bounds associated with the GLS estimate $\a$. Target risk benchmarks then are introduced in Section \ref{sec:bench}. 
 Bounds on the effective rank and spectrum of $\sx$ under  (\ref{model})  are given in Section \ref{sec:effrank}, and reveal what  key quantities to control in order to obtain  non-trivial prediction risk bounds associated with the GLS estimate $\a$. Target risk benchmarks then are introduced in Section \ref{sec:bench}.

Section \ref{sec:low dim} investigates at the population level the properties of the best linear predictor $\ab=\sx^+ \sxy$, under the factor regression model.
  We demonstrate the interesting phenomenon that under model (\ref{model}),  $\|\ab\|\to 0$ and yet $R(\ab)/R(\0)\not\to1$. 
%   {\color{red} and in fact does something good }
 %We contrast this paradox with the standard regression model $y=\theta^\top X +\eps$ setting.
 We argue that this is in contrast to the behaviour of the best linear predictor $\theta$ in a standard linear regression model in which $\EE[y|X] = X^\top\theta$ and typically $\|\theta\|$ is fixed or growing with $p$. We give a comparison between factor regression and standard linear regression  in Section \ref{sec:lin model}, commenting on assumptions on the operator norm of $\sx$, and on implications for prediction with the GLS.

%  We then argue that the best linear predictor typically does not exhibit this paradoxical behavior, should the data have been generated instead  by a  standard linear regression model $y=\theta^\top X +\eps$.{\color{red} then what does it do} 

 The remainder of the paper, Section \ref{section:min_ell_2}, contains our analysis of the GLS $\a$ and its prediction risk, under the factor regression model. Section \ref{sec:noiseless} gives a preview of our main findings. In the noiseless case $\se=0$, we have that $\|\a\|\to 0$ (just like $\|\ab\|\to 0$), but $R(\a)-R(\ab) $ achieves the parametric rate $K/n$, up to a $\log n$ factor. In fact, we establish $X^\top \a= Z^\top \wh \beta$ for the least squares estimate $\wh \beta$ based on observed $(\Z,\y)$. 
 
 Section \ref{main} contains our main results in the more realistic setting $\se\ne0$. It establishes when $\a$ interpolates, and shows that typically $\|\a\|\to0$, as in the noiseless case.  Furthermore, in agreement with the findings in Section \ref{sec:noiseless}, $R(\a)/R(\0)$ does not approach $1$. Instead,
  the finite-sample risk bound  in Theorem \ref{thm:upper bound} shows that under appropriate conditions on ${\rm r_e}(\se)$ and 
   the signal-to-noise ratio $\xi$, the  excess risk $R(\a)-R(\ab)$ converges to zero.
  %regarding the approximate adaptation of the minimum-norm interpolating predictor in this model
  
  %that prevent $\|\a\|$ from approaching  zero. {\marten This is plain wrong!} {\seth yep :-))) }
  
 % In Section \ref{sec:cluster}
  % we present the analysis of  interpolating predictors in  a particular factor regression model, motivated by regression with clustered design. \\

   Section \ref{sec:compare} presents a 
   comparison with
   recent related work. 
   %Another illustration of the impact of the bias term on the overall prediction accuracy of $\wh y_x$ is given in Section \ref{sec:compare}, where we contrast our findings with
%   \cite{bartlett2019},    % Bartlett {\color{red} add quote}. 
   In particular, we give a detailed comparison with \cite{bartlett2019}, which provides risk bounds for  $\wh y_x= X^\top \a$, for sub-Gaussian data $(X, y)$, and offers sufficient conditions on $\Sigma_X$ for optimal risk behavior, with emphasis on the optimality of the  variance component of the risk. We present simplified versions of the generic bias and variance bounds obtained in \cite{bartlett2019} under the factor regression model, which are derived in Appendix \ref{sec:finite sample}. Table \ref{table:bart} of Section \ref{sec:compare} summarizes our findings that the bound on the excess risk in \cite{bartlett2019} 
 is often larger in order of magnitude than the bound given in   Theorem  \ref{thm:upper bound} of Section \ref{main}. In particular, we exhibit instances of the factor regression model class under which the excess risk upper bound in \cite{bartlett2019} diverges, yet our upper bound approaches zero. We also compare our work to \cite{mei2019generalization}, which gives an asymptotic analysis of the ridge regression estimator with arbitrarily small (but non-zero) regularization for a type of factor regression model.
   
%   {\color{red} Somewhere here we should mention the $\| \sx \|$ finite vs allowed to grow contrast + consequences.} \\

%Consequently, we provide a new set  of sufficient conditions on data generating mechanisms, milder under the regime considered,  that  guarantee optimal risk behavior of the minimum-norm interpolating predictor. 

 Section \ref{sec:pcr} is devoted to   a comparison with prediction via  principal component regression  and $\ell_1$ and $\ell_2$ penalized least squares, under the factor regression model. 
 
 All proofs   and ancillary results  are deferred to the Appendix. 
    %   {\tiny In particular, as a supplement to Section \ref{sec:interpolation_null_risk}, a quasi-heuristic geometric explanation of why $\|\a\|$ approaches  0,  under suitable conditions on $\sx$ and $\sy^2$ is given in Appendix \ref{sec: geometry}.} 
      In particular, Theorem \ref{LS} in the Appendix complements Theorem \ref{thm:upper bound} by showing the risk behavior of $\a$ for $n > c\cdot p$ for an absolute constant $c>0$, and is included for completeness.

\subsection{Notation}
Throughout the paper, for a vector $v\in \R^d$, $\|v\|$ denotes the Euclidean norm 
%and $v_{(1)},\ldots, v_{(d)}$ the components 
of $v$.\\
For any matrix $A\in \R^{n \times m}$, $\|A\|$ denotes the operator norm and $A^+$ the Moore-Penrose pseudo-inverse.   See Appendix \ref{sec:pseudo-inverse} for a definition of the pseudo-inverse and a summary of its properties used in this paper. \\
For a positive semi-definite matrix $Q\in \R^{p\times p}$, and vector $v\in\R^p$, we define $\|v\|_Q^2 \coloneqq v^\top Qv$, let $\lambda_1(Q) \ge \lambda_2(Q) \ge \cdots \ge \lambda_p(Q)$ be its ordered eigenvalues, $\kappa(Q) :=\lambda_1(Q)/\lambda_p(Q)$ its condition number, and  
${\rm r_e} (Q):= \textrm{tr}(Q) / \| Q\|$ its effective rank.
\\
%If $B$ is positive semi-definite let $\lambda_1(B)\ge \cdots\ge\lambda_m(B)$ denote the $m$ eigenvalues of $B$. For arbitrary $B$, let $\sigma_i(B) \coloneqq \lambda_i(B'B) = \lambda_i(BB')$ for $i=1,\ldots,\min(m,n)$ denote the singular values of $B$.
%
%
%Its effective rank is defined as ${\rm r_e} (Q):= \textrm{tr}(Q) / \| Q\|$.
The identity matrix in dimension $m$ is denoted $I_m$.
\\
The set $\{1,2,\ldots, m\}$ is denoted $[m]$.\\
Letters $c$, $c'$, $c_1$, $C$, etc., are used to denote absolute constants, and may change from line to line.

\section{Interpolation and the null risk}\label{sec:interpolation_null_risk}
Given i.i.d.~observations $(X_1,y_1),\ldots,(X_n,y_n)$, distributed  as $(X,y)\in \R^p\times\R$, let $\X \in \R^{n\times p}$ be the corresponding data matrix with rows $X_1,\ldots X_n$, and let $\y \coloneqq (y_1,\ldots,y_n)^\top  \in \R^n$. For the rest of the paper, unless specified otherwise, we make the blanket assumption that $p > n$.\\
%  and  {\seth [REMOVE?]$\sigma_y^2 := \EE[y^2] >0$.}{\marten Yes, I vote to move it to Assumption 1. }

We are interested in studying the prediction risk associated with the minimum $\ell_2$-norm estimator $\a$ defined as
\begin{equation}\label{eqn:min-def}
    \a \coloneqq \arg\min\left\{\|\alpha\|:\ \|\X\alpha - \y\| = \min_u \|\X u - \y\|\right\}.
\end{equation}
We define the prediction risk for any $\alpha \in \R^p$ as
\begin{equation}
    R(\alpha) \coloneqq \EE_{X,y}[(X^\top \alpha - y)^2].
\end{equation}
The expectation is over the new data point $(X,y)$, independent of the observed data $(\X,\y)$. In particular, since $\a$ is independent of $(X,y)$, we have $R(\a)= \EE_{X,y}\left[ (X^\top \a - y)^2\,  | \, \X,\y \right] = \EE_{X,y}\left[ (X^\top \a - y)^2\, \right]$.
If the data matrix $\X$ has full rank of $n <p$, then $\min_{u\in \R^p} \|\X u - \y\|=0$  and
\begin{equation}\label{eqn:min-def-fullrank}
    \a \coloneqq \arg\min_{\alpha:\ 
    \X\alpha = \y} \|\alpha\|.
\end{equation}
Regardless of the rank of $\X$, Equation (\ref{eqn:min-def}) always has the closed form solution $\a = \X^+
\y$, where $\X^+$ is the Moore-Penrose pseudo-inverse of $\X$; we prove this fact in section \ref{sec:closed form a} for completeness.
We begin our consideration of the minimum-norm estimator $\a = \X^+\y$ by showing that its risk $R(\a)$ approaches the null risk $R(\0)$ whenever the effective rank ${\rm r_e}(\sx)$ grows at a rate faster than $n$.
% We begin our consideration of the minimum-norm estimator $\a = \X^+\y$ by showing that when the effective rank ${\rm r_e}(\sx)$ is large enough and $\tr(\sx)$ grows at a rate faster than $n\cdot \sy^2$, the norm $\|\a\|$ approaches zero as $n$ grows.
% %\to 0$ as $n\to \i$. 
Proofs for this section are contained in Appendix \ref{proofs:null}. We make the following distributional assumption.

\begin{ass}\label{ass:x}
$X = \sx^{1/2}\tilde X$ and $y = \sy\tilde y$, where $\tilde X\in\R^p$ has independent entries, and both $\tilde X$ and $\tilde y$  have zero mean, unit variance, and sub-Gaussian constants bounded by an absolute constant.
\end{ass}

\begin{thm}\label{thm:alpha null risk}
Suppose Assumption \ref{ass:x} holds and ${\rm r_e} (\sx) > C\cdot n$ for some absolute constant $C> 1$ large enough.
% Then there exists an absolute constant $c_2>1$ such that if ${\rm r_e} (\sx) > c_2n$ then 
Then, with probability at least $1-c{\rm e} ^{-c'n}$ for absolute constants $c,c'>0$,
\begin{equation}
    \left|\frac{R(\a)}{R(\0 )} - 1\right| \lesssim
    \sqrt{\frac{n}{{\rm r_e} (\sx)}}.
\end{equation}
\end{thm}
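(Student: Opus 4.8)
The plan is to work from the exact risk identity. Because $X$ and $y$ are mean-zero with $\cov(X)=\sx$, $\cov(X,y)=\sxy:=\EE[Xy]$ and $\EE[y^2]=\sy^2$, one has $R(\alpha)=\alpha^\top\sx\alpha-2\alpha^\top\sxy+\sy^2$ for every $\alpha\in\R^p$, so $R(\0)=\sy^2$ and $R(\a)/R(\0)-1=\big(\a^\top\sx\a-2\,\a^\top\sxy\big)/\sy^2$. It therefore suffices to show that, on an event of probability at least $1-c\mathrm e^{-c'n}$, both $\a^\top\sx\a$ and $|\a^\top\sxy|$ are bounded by an absolute constant times $\sqrt{n/{\rm r_e}(\sx)}\,\sy^2$.

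Two ingredients do the work. The first is the deterministic bound $\|\sxy\|\le\sqrt{\|\sx\|}\,\sy$, obtained by applying Cauchy--Schwarz in $L^2$ to $v^\top\sxy=\EE[(v^\top X)y]$ for a unit vector $v$ and bounding $\EE[(v^\top X)^2]=v^\top\sx v\le\|\sx\|$. The second is a lower bound on the smallest eigenvalue of the Gram matrix: writing $\X=\tX\sx^{1/2}$ with $\tX\in\R^{n\times p}$ having independent, mean-zero, unit-variance, sub-Gaussian entries (Assumption~\ref{ass:x}), I would show that if $C$ is large enough then $\lambda_{\min}(\X\X^\top)\ge c_0\,\tr(\sx)$ holds with probability at least $1-c\mathrm e^{-c'n}$ for absolute constants $c_0,c,c'>0$. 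For fixed $a\in S^{n-1}$ one has $\|\X^\top a\|^2=w_a^\top\sx w_a$ with $w_a:=\tX^\top a\in\R^p$ again having independent, mean-zero, unit-variance, sub-Gaussian coordinates, so the Hanson--Wright inequality (together with $\tr(\sx^2)\le\|\sx\|\tr(\sx)$, which makes the relevant exponent at least of order ${\rm r_e}(\sx)$) gives $\PP\big(w_a^\top\sx w_a\notin[\tr(\sx)/2,\,2\tr(\sx)]\big)\le 2\mathrm e^{-c_1{\rm r_e}(\sx)}$. A union bound over a $1/4$-net of $S^{n-1}$ of cardinality at most $9^n$ --- whose logarithm $n\log 9$ is dominated by $c_1{\rm r_e}(\sx)\ge c_1Cn$ once $C$ is large --- followed by the standard transfer from the net to the whole sphere (which also supplies the operator-norm bound $\|\X\|^2\lesssim\tr(\sx)$ needed for that transfer), then yields the claim; this is the only place the hypothesis ${\rm r_e}(\sx)>Cn$ enters, and alternatively one may quote the Gram-matrix concentration lemma of \cite{bartlett2019}. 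In particular $\X$ has full row rank on this event, so $\a=\X^+\y=\X^\top(\X\X^\top)^{-1}\y$; I would also record the routine Bernstein bound $\|\y\|^2=\sy^2\sum_{i=1}^n\tilde y_i^2\le 2n\,\sy^2$, valid with probability at least $1-2\mathrm e^{-c''n}$.

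Conditioning on the intersection of these events, the conclusion is immediate. First, $\|\a\|^2=\y^\top(\X\X^\top)^{-1}\y\le\|\y\|^2/\lambda_{\min}(\X\X^\top)\le 2n\,\sy^2/(c_0\tr(\sx))$. Hence $\a^\top\sx\a\le\|\sx\|\,\|\a\|^2\le (2/c_0)\,(n\|\sx\|/\tr(\sx))\,\sy^2=(2/c_0)\,(n/{\rm r_e}(\sx))\,\sy^2$, while Cauchy--Schwarz and $\|\sxy\|\le\sqrt{\|\sx\|}\,\sy$ give $|\a^\top\sxy|\le\|\a\|\,\|\sxy\|\le\sqrt{2/c_0}\,\sqrt{n\|\sx\|/\tr(\sx)}\,\sy^2=\sqrt{2/c_0}\,\sqrt{n/{\rm r_e}(\sx)}\,\sy^2$. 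Substituting into $R(\a)/R(\0)-1=\big(\a^\top\sx\a-2\,\a^\top\sxy\big)/\sy^2$ and using that ${\rm r_e}(\sx)>Cn\ge n$ forces $n/{\rm r_e}(\sx)\le\sqrt{n/{\rm r_e}(\sx)}$, we get $|R(\a)/R(\0)-1|\lesssim\sqrt{n/{\rm r_e}(\sx)}$, as required.

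I expect the Gram-matrix step to be the main obstacle: the per-vector Hanson--Wright estimate only decays like $\mathrm e^{-c\,{\rm r_e}(\sx)}$, so one genuinely needs ${\rm r_e}(\sx)$ to exceed a large enough absolute multiple of $n$ in order to overcome the $9^n$ covering number, and the passage from the net to the full sphere must be carried out together with the matching operator-norm bound on $\X$. The remaining pieces --- the risk identity, the two Cauchy--Schwarz bounds, and the concentration of $\|\y\|^2$ --- are routine.
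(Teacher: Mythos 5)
Your proposal is correct and follows essentially the same route as the paper: the same quadratic expansion of the risk, the same high-probability event (Hanson--Wright plus a $1/4$-net to get $\sigma_n^2(\X)\gtrsim\tr(\sx)$, Bernstein for $\|\y\|^2\lesssim n\sy^2$), the same bound $\|\a\|^2\lesssim n\sy^2/\tr(\sx)$, and the same final absorption of $n/{\rm r_e}(\sx)$ into $\sqrt{n/{\rm r_e}(\sx)}$. The only (immaterial) difference is that you control the cross term via $\|\sxy\|\le\sqrt{\|\sx\|}\,\sy$ and ordinary Cauchy--Schwarz, whereas the paper routes it through the identity $\sxy=\sx\ab$ and the bound $\|\ab\|^2_{\sx}\le R(\0)$.
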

% \begin{proof}
% See section \ref{proof:alpha null risk}.
% \end{proof}
As a consequence,
%  Theorem \ref{thm:alpha null risk} shows that whenever ${\rm r_e} (\sx)/n \rightarrow \infty$ the risk $R(\a)$ converges to the risk $R(\0 ) = \sigma_y^2$ of the null predictor $0\in \R^p$, thus showing that  
 $\a$ is not a useful estimator in the regime $r_e(\sx)\gg n$, as trivially predicting with the null vector $\0\in \R^p$ will give asymptotically equivalent results. 
 This occurs, for instance, when $\sx$ is well conditioned and $p/n\ra \i$. Figure 2 in  \cite{hastie2019} depicts an example of this behavior: it plots   $\EE[\|\a - \alpha\|^2|\X]$ as a function of the ratio $\gamma =p/n$, where  $(X,y)$ follows the linear model $y=\alpha^\top X + \eps$ with $\sx =I_p$.

This motivates the study of $R(\a)$ when  the condition ${\rm r_e} (\sx) >   C\cdot n$   of Theorem \ref{thm:alpha null risk} fails.  The recent work \cite{bartlett2019} developed bounds for the excess risk $R(\wh \alpha) - \inf_{\alpha\in \R^p}R(\alpha)$ under the linearity assumption $\EE[y|X]= X^\top \theta$ (for some $\theta\in \R^p$), and used this to show that the excess risk goes to zero for a certain class of \textit{benign} covariance matrices that in particular satisfy ${\rm r_e}(\sx)/n\to 0$ and $\|\sx\|=1$.
%In this framework, \cite{bartlett2019} show that $R(\wh \alpha)$ %can in fact converge to the optimal risk $ \inf_{\alpha\in %\R^p}R(\alpha)$,  under additional assumptions. 

In this work we are interested in obtaining risk bounds for $R(\a)$ under a different model, the factor regression model  (\ref{model}) given  below. In this model, while ${\rm r_e} (\sx)/n$ remains bounded, $\|\sx\|$ typically grows with $p$ (see Lemma \ref{thm:spectrum} below), in contrast to the assumption $\|\sx\|=1$ of the definition of benign matrices in \cite{bartlett2019}. Furthermore, the results in \cite{bartlett2019} only apply to model (\ref{model}) when $(X,y)$ are assumed to be jointly Gaussian. In this case, their bound offers an alternative result, which we compare to our main result in Section \ref{sec:compare} below. We find that in this common regime, we obtain a tighter bound.

\section{Factor regression models}\label{sec:frm}

%%%\subsection{Factor Regression Model}
%The result and discussion of the previous section imply  that in order for the generalized least squares estimator $\a$  to have   asymptotically better prediction performance than the trivial estimator $\0\in \R^p$,  the ratio  ${\rm r_e} (\sx)/n$ must remain bounded as $n$ and $p$ grow, as a first requirement.

%We now introduce a class of models, and associated conditions,
%that guarantee under which this happens. 
%The model class is that  of factor regression models,
In this paper, we consider the factor regression model (FRM). % which 
This 
is a latent factor model in which we single out one variable, $y \in \R$, to emphasize its role as  the response relative to input covariates $X \in \R^p$, while both $X$ and $y$ are directly connected to a lower dimensional, unobserved, random vector $Z \in \R^K$, with mean zero and $K < n$.
Specifically, the factor regression model postulates that 
\begin{eqnarray}\label{model} 
    X =  AZ + E,  \ \quad \ 
    y = Z^\top \beta + \varepsilon, 
\end{eqnarray}
where $\beta\in \R^K$ is the latent variable regression vector, $A\in \R^{p\times K}$ is a unknown loading matrix, and $\eps \in \R$ and $E \in \R^p$ are mean zero additive noise terms independent of one another and of $Z$. We let $\se \coloneqq \cov(E)$, $\sz \coloneqq \cov(Z)$ and $\sep^2 \coloneqq \var(\eps)$.
For the remainder of the paper we will assume that the  
data consist of $n$ i.i.d.~pairs $(X_i, y_i)$ satisfying (\ref{model}), in that 
\begin{equation}
    X_i = AZ_i + E_i,\hspace{1cm} y_i = Z_i^\top \beta+\eps_i\hspace{1cm} \forall i \in [n],
\end{equation}
where  the latent factors $Z_1,\ldots,Z_n\in \R^{K}$ are i.i.d.~copies of $Z$,  and the error terms $E_i\in\R^p$ and $\eps_i\in \R$ for $i=1,\ldots,n$ are i.i.d.~copies of $E$ and $\eps$, respectively. We recall that 
$\X \in\R^{n\times p} $ is  the matrix with rows  $X_1,\ldots,X_n$ and $\y \in \R^n$ is  the vector with entries $y_1,\ldots,y_n$. We similarly let $\Z \in \R^{n \times K}$ be the matrix with rows $Z_1,\ldots,Z_n$.

 The remainder of this section is dedicated to deriving population-level properties of the factor regression model that are relevant to the performance of the GLS $\a$. In particular, we will (1) bound the effective rank of $\sx$, (2) bound the eigenvalues of $\sx$, (3) define two natural risk benchmarks and show when they are asymptotically equivalent, (4) show that the weight vector of the best linear predictor has vanishing norm, and (5) prove that, nonetheless, the null risk $R(\0)$ is clearly sub-optimal. The first two properties reflect the low-rank structure of the covariance matrix $\sx$ and are presented in Section \ref{sec:effrank}. The risk benchmarks are introduced and analyzed in Section \ref{sec:bench}.
 Section \ref{sec:low dim} investigates the properties of the best linear predictor $\ab=\sx^+ \sxy$ at the population level, showing properties (4) and (5).
%   We demonstrate the interesting phenomenon that under model (\ref{model}),  $\|\ab\|\to 0$ and yet $R(\ab)/R(\0)\not\to1$. 
The fourth property in particular is a consequence of the joint low-dimensional structure of $(X,y)$ via the vector of  covariances $\sxy$.
% It is a distinct property of the factor regression model
% The fifth property is also derived in Section \ref{sec:low dim}, and establishes that prediction is non-trivial in this model. 
%  The fact that the norm of the best linear predictor goes to zero is a 
 It is a distinct property of the factor regression model that sets it apart from the classical regression model where the response $y$ is linearly related to $X$ via $\EE [ y|X]= \theta^\top X$.
 We present a comparison between factor regression and classical linear regression in Section \ref{sec:lin model}.

\subsection{Effective rank and spectrum of $\sx$ in the FRM} \label{sec:effrank}
Theorem \ref{thm:alpha null risk} and its discussion above imply  that in order for the generalized least squares estimator
$\a$  to have  
asymptotically better prediction performance than the trivial estimator $\0\in \R^p$, 
the ratio  ${\rm r_e} (\sx)/n$ must remain bounded as $n$ and $p$ grow, as a first requirement.

% Lemma \ref{thm:k=k star} {\color{red} Notation ??}  %\ref{thm:k=k star} 
% in Appendix \ref{proof:k=k star} {\color{red} Notation ??}  %{sec:compare}
% below shows that under model  (\ref{model}),
% \begin{equation} \label{effb}    
% \frac{{\rm r_e} (\sx)}{n} \le 
% \frac{K}{n}(1 + \xi^{-1}) +  \frac{1}{\xi}\frac{{\rm r_e} (\se)}{n},
% \end{equation}
Using that $\sx = \sza + \se$ under (\ref{model}), we find
\begin{align*}
    {\rm r_e}(\sx) &= \frac{\tr(\sx) }{\|\sx\|}\\
    &\le \frac{\tr(\sza) + \tr(\se)}{\|\sza\|}&& (\text{since } \|\sx\|\ge \|\sza\|)\\
    &\le K + \frac{\tr(\se)}{\|\sza\|} && (\text{since } \tr(\sza)\le K\|\sza\|)\\
    &\le K + \frac{\|\se\|}{\lk(\sza)}\cdot\frac{\tr(\se)}{\|\se\|},&&(\text{since } \|\sza\|\ge \lk(\sza))
    % &= K + {\rm r_e}(\se)/\xi.
\end{align*}
where we use the convention that $\tr(\se)/\|\se\| = {\rm r_e}(\se) = 1$ if $\se=0$. We thus have
\begin{equation}\label{eqn:re x fm}
    \frac{{\rm r_e}(\sx)}{n} \le \frac{K}{n} + \frac{1}{\xi}\frac{ {\rm r_e}(\se)}{n},
\end{equation}
where
\begin{equation}
     \xi \coloneqq \lk(\sza)/\|\se\| \label{xi},
\end{equation}
can be viewed as a signal-to-noise ratio since $\Sigma_X = A \sz A^\top + \se$, and we use the convention that $\xi = \i$ and ${\rm r_e}(\se)/\xi = 0$ when $\se=0$.
 In standard factor regression models \cite{anderson1956}, $\se = I_p$, in which case ${\rm r_e} (\se) = p$, but in our analysis %derivations
we allow for a general   $\se$, with possibly smaller ${\rm r_e} (\se)$. 
The following simple result follows directly from (\ref{eqn:re x fm}).
\begin{lemma}\label{thm:SNR cond}
Under model (\ref{model}), we have ${\rm r_e} (\sx)/n \leq c_3$ whenever
\begin{eqnarray}\label{step1}
\frac{K}{n} \leq  c_1 \quad \  \text{and} \ \quad 
\xi \geq c_2\frac{{\rm r_e} (\se)}{n},
\end{eqnarray}
for positive absolute constants $c_1,c_2,c_3$. 
\end{lemma}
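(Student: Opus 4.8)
The plan is to read the result off directly from the chain of inequalities that culminates in display (\ref{eqn:re x fm}). That display, obtained just above from the decomposition $\sx = \sza + \se$ together with the elementary facts $\|\sx\| \ge \|\sza\| \ge \lk(\sza)$ and $\tr(\sza) \le K\|\sza\|$, already states
\[
\frac{{\rm r_e}(\sx)}{n} \le \frac{K}{n} + \frac{1}{\xi}\cdot\frac{{\rm r_e}(\se)}{n},
\]
under the stated convention that ${\rm r_e}(\se) = 1$, $\xi = \infty$, and ${\rm r_e}(\se)/\xi = 0$ when $\se = 0$. So the lemma is purely a matter of bounding the two terms on the right-hand side under the hypotheses.

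First I would use the signal-to-noise hypothesis $\xi \ge c_2\,{\rm r_e}(\se)/n$ to control the second term: rearranging gives $\tfrac{1}{\xi}\cdot\tfrac{{\rm r_e}(\se)}{n} \le \tfrac{1}{c_2}$, and this also holds trivially in the degenerate case $\se = 0$, where the term equals $0$ by convention. Next I would apply the hypothesis $K/n \le c_1$ to the first term. Adding the two bounds yields ${\rm r_e}(\sx)/n \le c_1 + 1/c_2$, so the conclusion holds with $c_3 \coloneqq c_1 + 1/c_2$, a positive absolute constant depending only on $c_1$ and $c_2$.

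There is essentially no real obstacle: the lemma is a bookkeeping consequence of (\ref{eqn:re x fm}), and the only point that warrants a word of care is the boundary convention when $\se$ vanishes, which has already been fixed in the surrounding text; one simply checks that the bound on the second term degenerates gracefully to $0$ in that case.
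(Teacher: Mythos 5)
Your proposal is correct and follows exactly the route the paper takes: the lemma is stated there as an immediate consequence of display (\ref{eqn:re x fm}), and your bookkeeping (bounding the first term by $c_1$, the second by $1/c_2$ via the signal-to-noise hypothesis, and setting $c_3 = c_1 + 1/c_2$) fills in precisely the elementary step the authors leave implicit, including the correct handling of the $\se=0$ convention.
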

% The positive repercussion of Lemma \ref{thm:SNR cond} is that under condition \ref{step1} and for small enough constant $c_3$, Theorem \ref{thm:alpha null risk}   no longer applies. This in turn opens up the possibility of showing that, under the  data generating model (\ref{model}) with restrictions (\ref{step1}), the risk $R(\a)$ will approach optimal  risk benchmarks.  
%  We define the benchmark risks in terms of the best linear predictors of $y$ from $X$ and $Z$, respectively, in Section \ref{sec:bench}, and show that $R(\a)$ can indeed approach these benchmarks in Sections \ref{sec:noiseless} and \ref{main}. 
%  We also investigate the behaviour of the best linear prediction vector $\ab = \sx^+\sxy$ of $y$ from $X$ under the factor regression model in Section \ref{sec:low dim}, and use this in Section \ref{sec:lin model} to clarify
% the importance of
% the factor regression model, in which $(X, y)$ {\it jointly} have a low-dimensional  structure, in contrast to the classical linear model $y=X^\top \theta+\varepsilon$ with low-dimensional structure on $X$ alone. 
% We provide some examples of conditions on $A$, $\sz$, and $\se$ such that (\ref{step1}) holds. 
\begin{remark}\label{rem:een}
We remark on conditions under which (\ref{step1}) holds. Suppose that the eigenvalues of $\sz$ and $\se$ are constant, that is, 
$c_1 \le \lk(\sz) \le \| \sz\| \le C_1$ and $c_2<\lp(\se)\le \|\se\| <  C_2$, for some $c_1,c_2, C_1, C_2\in (0,\infty)$, both standard assumptions in factor models. Then, 
\begin{equation}\label{one} 
{\rm{r_e}} (\se) \asymp p, \hspace{0.5cm}\text{and}\hspace{0.5cm}
\xi= \frac{\lk(\sza)}{\|\se\|}  \asymp \lambda_K(A^\top A),
\end{equation} 
so the condition (\ref{step1}) reduces to $K/n \leq  c_1$ and
\begin{equation}\label{step2}
    \lambda_K(A^\top A) \gtrsim \frac{p}{n}.
\end{equation}
We give a few examples of $A$ that imply (\ref{step2}):
\begin{enumerate}
    \item For a well-conditioned matrix $A \in \R^{p \times K}$  with entries taking values in a bounded interval, $\lambda_K(A^\top A) \asymp  p$, and (\ref{step2}) holds.
    
    \item Treating $A$ as a realization of a random matrix with i.i.d.~entries  and $p \gg K$, then by standard concentration arguments (see \cite{verHDP}, for example) we once again have  $\lambda_K(A^\top A) \gtrsim   p$, with high probability, and (\ref{step2}) holds.
    \item In other situations, (\ref{step2})  is an assumption. It is a very natural, and mild, requirement in factor regression models, and if $A$ is structured and sparse, (\ref{step2}) can be given further interpretation. For instance, the model $X = AZ + E$ has been  used and analyzed in \cite{cord}   for clustering the $p$ components of $X$ around the latent $Z$-coordinates, via an assignment matrix $A\in \{0,1\}^{p\times K}$,  and when $\se$ is an approximately  diagonal matrix. Denoting    the size of the smallest of the $K$ non-overlapping clusters by $m$, for some integer $ 2 \leq m \leq p$, it is immediate to see (Lemma \ref{thm:snr cluster} in Appendix \ref{sec:snr cluster}) that $\lambda_K(A^\top A) \geq m$. Furthermore, when  these $K$ clusters are approximately balanced, then $m \approx p/K$ and (\ref{step2}) holds,  provided $K \lesssim n$. 
\end{enumerate}
\end{remark}
% {\marten Perhaps give examples of $A$ that give $\xi \gg p/n$?\\

% Also: compute $\|\sx\|$? $\|\sx\|\ge \| \sza\| - \|\se\|\ge \|\se\| ( \xi -1) $\\
% E.g. $\tr(\se)=p$ and $c< \|\se\|\le C$, so we need $\xi \gg p/n$, implying $\|\sx\| \gg p/n\to\i$ as well.}
% \begin{remark}
% The condition $\lk(A^\top A)\to \i$ has implications 
% \end{remark}
% {\seth We could compute
% \[\lambda_K(\sx)\ge \lambda_K(\sza)\ge \lk(\sz)\lk(A^\top A).\]
% For many examples of $A$ we have $\lk(A^\top A)\to \i$, and so assuming $\lk(\sz) > c>0$, we have $\lk(\sx)\to \i$, i.e., the first $K$ eigenvalues of $\sx$ diverge. Also, for any $i\in [p]$,
% \[\lambda_i(\sx) \ge \lambda_i(\se) \ge \lp(\se),\]
% so if $\lp(\se) > c>0$, the whole spectrum of $\sx$ is bounded below by $c>0$. Lastly, for $i > K$, 
% \[\lambda_i(\sx) \le \lambda_i(\sza) + \|\se\| = \|\se\|,\]
% since $ \lambda_i(\sza)=0$ for $i>K$. Thus, if $\|\se\|<C$, then $\lambda_i(\sx)<C$ for $i >K$.

% In summary, under the stated conditions,
% \begin{enumerate}
% \item All eigenvalues of $\sx$ are bounded below by $c>0$;
%     \item The first $K$ eigenvalues of $\sx$ diverge;
%     \item The last $p-K$ eigenvalues of $\sx$ are bounded above by $C$.
% \end{enumerate}
% }

The positive repercussion of Lemma \ref{thm:SNR cond} is that under condition (\ref{step1}) and for small enough constant $c_3$, Theorem \ref{thm:alpha null risk}   no longer applies. This in turn opens up the possibility of showing that, under the  data generating model (\ref{model}) with restrictions (\ref{step1}), the risk $R(\a)$ will approach optimal  risk benchmarks. We define the benchmark risks in terms of the best linear predictors of $y$ from $X$ and $Z$, respectively, in Section \ref{sec:bench}, and show that $R(\a)$ can indeed approach these benchmarks in Sections \ref{sec:noiseless} and \ref{main}.
%excess risk will indeed vanish in high dimensions. [excess risk is not defined yet]
%  We define the benchmark risks in terms of the best linear predictors of $y$ from $X$ and $Z$, respectively, in Section \ref{sec:bench}, and show that $R(\a)$ can indeed approach these benchmarks in Sections \ref{sec:noiseless} and \ref{main}. 
%  We also investigate the behaviour of the best linear prediction vector $\ab = \sx^+\sxy$ of $y$ from $X$ under the factor regression model in Section \ref{sec:low dim}, and use this in Section \ref{sec:lin model} to clarify
% the importance of
% the factor regression model, in which $(X, y)$ {\it jointly} have a low-dimensional  structure, in contrast to the classical linear model $y=X^\top \theta+\varepsilon$ with low-dimensional structure on $X$ alone. 

For completeness, we offer the following result characterizing the spectrum of $\sx$ under the factor regression model. In particular, as announced in Section \ref{sec:interpolation_null_risk}, we find that the operator norm $\|\sx\|$ diverges with $p$ under mild conditions. The proof can be found in Appendix \ref{proof:spectrum}.
\begin{lemma}\label{thm:spectrum}
Suppose that for some $c_1,c_2, C_1, C_2\in (0,\infty)$,
\begin{equation}
    c_1 \le \lk(\sz) \le \| \sz\| \le C_1 \quad  \text{ and } \quad c_2<\lp(\se)\le \|\se\| <  C_2.
\end{equation}
The spectrum of $\sx$ can then be characterized as follows:
\begin{enumerate}
    \item $\lambda_i(\sx) \ge c_2 > 0$ for all $i\in [p]$, i.e., the entire spectrum of $\sx$ is bounded below;
   \item $\lk(\sx)\ge c_1\lk(A^\top A)$, so the first $K$ eigenvalues of $\sx$ diverge if $\lk(A^\top A)\to \i$ as $p\to\i$;
    \item $c_2 \le \lambda_i(\sx)\le C_2$ for $i > K$, i.e., the last $p-K$ eigenvalues of $\sx$ are bounded above and below.
\end{enumerate}

\end{lemma}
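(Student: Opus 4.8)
The plan is to treat $\sx = \sza + \se$ as a sum of two positive semi-definite matrices and to deduce all three claims from Weyl's eigenvalue inequalities, the monotonicity of eigenvalues with respect to the Loewner (positive semi-definite) order, and the elementary spectral identity that $AA^\top$ and $A^\top A$ have the same nonzero eigenvalues. Nothing probabilistic is involved; this is a purely deterministic statement about the population covariance, so there is no finite-sample bookkeeping.

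For item~1 (which also supplies the lower bound in item~3), I would use the form of Weyl's inequality $\lambda_i(B+C) \ge \lambda_i(B) + \lambda_{\min}(C)$: since $\sza \succeq 0$, for every $i \in [p]$,
\[
\lambda_i(\sx) = \lambda_i(\sza + \se) \;\ge\; \lambda_i(\sza) + \lp(\se) \;\ge\; \lp(\se) \;>\; c_2 .
\]
For item~2, the starting point is $\sz \succeq \lk(\sz) I_K \succeq c_1 I_K$, which gives $\sza = A\sz A^\top \succeq c_1 A A^\top$; monotonicity of eigenvalues under $\succeq$ then yields $\lk(\sza) \ge c_1 \lk(AA^\top)$. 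Since $A \in \R^{p \times K}$ has rank at most $K$, the top $K$ eigenvalues of $AA^\top$ are exactly the eigenvalues of $A^\top A$, so $\lk(AA^\top) = \lk(A^\top A)$, and combining with the Weyl bound above,
\[
\lk(\sx) \;\ge\; \lk(\sza) + \lp(\se) \;\ge\; c_1 \lk(A^\top A),
\]
which diverges with $p$ precisely when $\lk(A^\top A) \to \infty$. Finally, for the upper bound in item~3, I would apply the complementary Weyl inequality $\lambda_{a+b-1}(B+C) \le \lambda_a(B) + \lambda_b(C)$ with $B = \sza$, $C = \se$, $a = K+1$ and $b = i-K$ (valid for $i \ge K+1$): since $\rank(\sza) \le K$ forces $\lambda_{K+1}(\sza) = 0$, this gives, for all $i > K$,
\[
\lambda_i(\sx) \;\le\; \lambda_{K+1}(\sza) + \lambda_{i-K}(\se) \;=\; \lambda_{i-K}(\se) \;\le\; \|\se\| \;<\; C_2 .
\]

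There is no genuine obstacle here: the argument is standard, and the only points requiring a little care are getting the index shifts right in the two directions of Weyl's inequality and justifying the $AA^\top$ versus $A^\top A$ spectral identity (which also handles the vacuous case $\lk(A^\top A)=0$). If one prefers to avoid the Loewner-order step in item~2, one can instead use that $\sza$ and $\sz^{1/2} A^\top A \sz^{1/2}$ share the same nonzero eigenvalues together with $\lambda_K(\sz^{1/2} A^\top A \sz^{1/2}) \ge \lk(\sz)\,\lk(A^\top A)$, which leads to the same conclusion.
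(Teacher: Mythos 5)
Your proof is correct and follows essentially the same route as the paper: both decompose $\sx = \sza + \se$ and control the eigenvalues of the sum using the positive semi-definiteness of each summand, the bound $\rank(\sza)\le K$, and the identification of $\lk(AA^\top)$ with $\lk(A^\top A)$. The only cosmetic difference is that you invoke Weyl's inequalities and Loewner monotonicity as packaged tools, whereas the paper derives the same three bounds directly from the min--max characterization of eigenvalues.
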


After introducing the risk benchmarks below, we investigate the behaviour of the best linear prediction vector $\ab = \sx^+\sxy$ of $y$ from $X$ under the factor regression model in Section \ref{sec:low dim}, and use this in Section \ref{sec:lin model} to clarify
the importance of
the factor regression model, in which $(X, y)$ {\it jointly} have a low-dimensional  structure, in contrast to the classical linear model $y=X^\top \theta+\eta$ with low-dimensional structure on $X$ alone. 

\subsection{Risk benchmarks}\label{sec:bench}

% We will compare $R(\a)$ to two natural benchmarks. 
We introduce here two natural benchmarks for $R(\a)$ under the factor regression model, and characterize their relationship.
Under model (\ref{model}), if $Z \in \R^K$ were observed, the optimal risk of a linear oracle with access to $Z$ is %{\color{green}and using a linear predictor} is
\begin{equation}
    \min_{v\in\R^K} \EE\left[(Z ^\top v - y)^2\right] = \EE[\eps^2] = \sep^2,
\end{equation}
which we henceforth refer to as the oracle risk. 
Another natural benchmark to compare the risk $R(\a)$ to is the minimum risk possible for any linear predictor $\alpha^\top X$, namely $R(\ab)$, where 
\begin{equation}\label{star}
    \ab \in \arg\min_{\alpha\in\R^p} R(\alpha).
\end{equation}
%We will refer to $R(\alpha^*)$ as the  linear oracle risk,
%obtained by the best linear predictor $\alpha^*$.
Lemma \ref{thm:ab minimizer} in Appendix \ref{sec:supp} shows that for arbitrary zero-mean $(X,y)$ with finite second moments, $\ab=\sx^+\sxy$ is a minimizer of $R(\alpha)$, where $\Sigma_{Xy} \coloneqq \EE[Xy] \in \R^p$ is the vector of  component-wise  covariances.
% {\marten A simple calculation gives  $$R(\0)-\sep^2 = \| \beta\|^2_{\sz}.$$ This identity reveals that  the null risk $R(\0)$ is sub-optimal compared to the oracle risk $\sep^2$, unless $\|\beta\|_{\sz} \to0$. 
% In the next section, we will reach the conclusion that $R(\0)$ is   sub-optimal compared to $R(\ab)$, unless, again, $\|\beta\|_{\sz} \to0$. Clearly, $\|\beta\|_{\sz}\to0$ is unlikely, as $\beta$ is arbitrary element in $\R^K$, with $K$ possibly diverging.}

We can characterize the difference between these two benchmarks, $\sep^2$ and $R(\ab)$, as follows. See Appendix \ref{proof:bench compare} for the proof of this result.

\begin{lemma}[Comparison of risk benchmarks]\label{thm:bench compare}% \hspace{1cm}
  Suppose model (\ref{model}) holds and let $\xi$ be the signal-to-noise ratio defined in (\ref{xi}). We have
\begin{enumerate}
    \item $R(\ab) -\sep^2 \ge 0$ with equality if $\se=0$.
    {% \seth \item $R(\ab)= \sep^2$ if $\se=0$.

    \item Provided the matrices $\sz$, $\se$, and $A$ are full rank,
%\begin{equation}\label{eqn:bench compare lower}
   \[ \frac{\xi}{1+\xi} \beta^\top (A^\top \se^{-1}A)^{-1}\beta \le R(\ab) - \sep^2 
    \le \beta^\top (A^\top \se^{-1}A)^{-1}\beta,\]
    where
    \[\beta^\top (A^\top \se^{-1}A)^{-1} \beta\le \frac{1}{\xi}\|\beta\|^2_{\sz}.\]}
    In particular, $\|\beta\|^2_{\sz} / \xi \to0 $ implies $R(\ab)-\sep^2 \to0$, as $p\to\i$.

 %  \item {\marten $R(\0)-\sep^2 = \| \beta\|^2_{\sz}$.} {\seth seems out of place here...?}
%\end{equation}
%where $G \coloneqq \sz^{-1} + A'\se^{-1}A$.
%and $\xi \coloneqq \lk(\sza)/\|\se\|$ is the signal-to-noise ratio appearing in Theorem \ref{thm:upper bound}.
\end{enumerate}
\end{lemma}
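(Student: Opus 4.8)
The natural starting point is the closed form of the best‑linear‑prediction risk. For zero‑mean $(X,y)$ with finite second moments $\ab = \sx^+\sxy$ (recalled above), and a one‑line computation from $R(\alpha) = \alpha^\top\sx\alpha - 2\alpha^\top\sxy + \EE[y^2]$ together with symmetry of $\sx^+$ and the identity $\sx^+\sx\sx^+ = \sx^+$ gives $R(\ab) = \EE[y^2] - \sxy^\top\sx^+\sxy$. Under model~(\ref{model}), independence of $Z$, $E$, $\eps$ yields $\sxy = \EE[Xy] = A\sz\beta$ — in particular $\sxy\in\range(A)\subseteq\range(\sx)$, so $\ab$ is indeed a minimizer and the pseudo‑inverse manipulations are legitimate — and $\EE[y^2] = \beta^\top\sz\beta + \sep^2$. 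Hence
\begin{equation}\label{eq:bench-rab}
    R(\ab) - \sep^2 \;=\; \beta^\top\sz\beta \;-\; \beta^\top\sz A^\top\bigl(\sza + \se\bigr)^+ A\sz\beta ,
\end{equation}
and the entire argument reduces to analysing the right‑hand side. Equivalently, expanding $\EE[(X^\top\alpha - Z^\top\beta)^2]$ directly gives the often more convenient form $R(\ab) - \sep^2 = \min_{\alpha\in\R^p}\bigl\{\,\|A^\top\alpha - \beta\|^2_{\sz} + \alpha^\top\se\alpha\,\bigr\}$.

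For Part~1 I would use that $\eps$ has mean zero and is independent of $(X,Z)$: then $R(\alpha) = \EE[(X^\top\alpha - Z^\top\beta)^2] + \sep^2\ge\sep^2$ for every $\alpha$, so $R(\ab)\ge\sep^2$. When $\se = 0$ we have $X = AZ$, hence $X^\top\alpha = Z^\top(A^\top\alpha)$; if $A$ has full column rank then $\range(A^\top) = \R^K$, so choosing $\alpha$ with $A^\top\alpha = \beta$ makes $\EE[(X^\top\alpha - Z^\top\beta)^2] = 0$ and yields equality. (The same conclusion reads off (\ref{eq:bench-rab}) via $A^\top(\sza)^+A\sz = I_K$ when $A$ is full column rank and $\sz$ invertible.)

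For Part~2, assume $\sz$, $\se$, $A$ full rank, so $\sza + \se\succ0$ and the pseudo‑inverse in (\ref{eq:bench-rab}) is an inverse. The key algebraic step is the Woodbury / posterior‑covariance identity
\begin{equation}\label{eq:bench-wood}
    \sz - \sz A^\top(\sza + \se)^{-1}A\sz \;=\; \bigl(\sz^{-1} + A^\top\se^{-1}A\bigr)^{-1} ,
\end{equation}
which recasts (\ref{eq:bench-rab}) as $R(\ab) - \sep^2 = \beta^\top(\sz^{-1} + G)^{-1}\beta$ with $G \coloneqq A^\top\se^{-1}A\succ0$. The upper bound is immediate, since $\sz^{-1}\succeq0$ forces $(\sz^{-1}+G)^{-1}\preceq G^{-1}$. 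For the lower bound, inversion being order‑reversing, $(\sz^{-1}+G)^{-1}\succeq\tfrac{\xi}{1+\xi}G^{-1}$ is equivalent to $G\succeq\xi\,\sz^{-1}$. To prove this, note $G\succeq\|\se\|^{-1}A^\top A$, and that $\sz^{1/2}A^\top A\,\sz^{1/2}$ shares its nonzero spectrum with $\sza$, so $\lambda_{\min}(\sz^{1/2}A^\top A\,\sz^{1/2}) = \lk(\sza) = \xi\|\se\|$; conjugating by $\sz^{-1/2}$ gives $A^\top A\succeq\xi\|\se\|\,\sz^{-1}$, whence $G\succeq\xi\,\sz^{-1}$. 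This same inequality gives $G^{-1}\preceq\tfrac1\xi\sz$, i.e.\ $\beta^\top G^{-1}\beta\le\tfrac1\xi\|\beta\|^2_{\sz}$, and the ``in particular'' statement then follows from the chain $0\le R(\ab)-\sep^2 = \beta^\top(\sz^{-1}+G)^{-1}\beta\le\beta^\top G^{-1}\beta\le\|\beta\|^2_{\sz}/\xi\to0$.

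I expect the main obstacle to be the lower bound with the sharp constant $\tfrac{\xi}{1+\xi}$: the real content is recognizing that the scalar signal‑to‑noise ratio $\xi$, which entangles $A$, $\sz$ and $\se$, can be converted into the clean Loewner inequality $\xi\,\sz^{-1}\preceq A^\top\se^{-1}A$, via the ``same nonzero spectrum'' identity for $\sza$ and $\sz^{1/2}A^\top A\,\sz^{1/2}$ together with $\se^{-1}\succeq\|\se\|^{-1}I$. A secondary, routine subtlety is the pseudo‑inverse bookkeeping that makes (\ref{eq:bench-rab}) and the $\se = 0$ case of Part~1 fully rigorous.
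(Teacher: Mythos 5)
Your proof is correct and follows essentially the same route as the paper: both reduce $R(\ab)-\sep^2$ via the Woodbury identity to the quadratic form $\beta^\top(\sz^{-1}+A^\top\se^{-1}A)^{-1}\beta$ (the paper writes it as $\bb^\top(I_K+\sz^{1/2}A^\top\se^{-1}A\sz^{1/2})^{-1}\bb$ with $\bb=\sz^{1/2}\beta$, which is the same object), and both hinge on the inequality $\lk(\sz^{1/2}A^\top\se^{-1}A\,\sz^{1/2})\ge\xi$, which is exactly your Loewner inequality $A^\top\se^{-1}A\succeq\xi\,\sz^{-1}$. Your packaging of the final bounds through order-reversal of inversion rather than scalar eigenvalue bounds on $(I_K+H^{-1})^{-1}$ is only a cosmetic difference, and your explicit treatment of the $\se=0$ equality case in Part 1 (via $A^\top\alpha=\beta$, using full column rank of $A$) is if anything slightly more careful than the paper's.
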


Although the optimal  risk $R(\ab)$ is always greater than the oracle risk $\sep^2$ (part 1 of Lemma \ref{thm:bench compare}), 
the bound $ {\|\beta\|^2_{\sz}}/{\xi}$ on the difference $R(\ab)-\sep^2 $ in part 2 of Lemma \ref{thm:bench compare} is not a leading term in the excess risk bound given in Theorem \ref{thm:upper bound}. From this perspective, we can view these benchmarks as asymptotically equivalent, but  with different interpretations. 
%Part 3 of Lemma \ref{thm:bench compare} states that the null risk $R(\0)$ is sub-optimal compared to the oracle risk $\sep^2$, unless $\|\beta\|_{\sz} \to0$.
%In the next section, we will reach the conclusion that $R(\0)$ is   sub-optimal compared to $R(\ab)$, unless, again, $\|\beta\|_{\sz} \to0$.
 Interestingly, the condition $\lim_{p\to\i}  {\|\beta\|^2_{\sz}}/{\xi}= 0$ forces $\|\ab\|\to 0$, see Corollary \ref{thm:null vs opt} in the next section. This is an important feature of the FRM, and its  repercussions are discussed in Section \ref{sec:lin model}.\\

\subsection{Best linear prediction 
in factor regression models (population level)}

%Implications of the joint low-dimensional structure of   factor regression models to  best prediction}  
\label{sec:low dim}

In this section we investigate the properties of the population-level predictor $\ab$, defined in (\ref{star}), { under the factor regression model} (\ref{model}).
In particular, we prove that $\|\ab\|\to 0$ and yet $R(\0)-R(\ab)>0$ under the conditions
\begin{equation} \label{cond:snrbeta2}
     \lim_{p\to\i} \|\beta\|^2_{\sz} /\lk(\sza)=0 \ \text{ and } \liminf_{p\to\i}\| \beta\|_{\sz}>0.
\end{equation}
The property $\|\ab\|\to 0$ in particular is a consequence of the joint low-dimensional structure of $(X,y)$ via the covariance $\sxy= A\sz \beta$, which the vector $\ab = \sx^+\sxy$ depends on.
%   used in  Lemma  \ref{thm:bench compare} above to show $R(\ab)-\sep^2\to0  $ and $R(\0) -  \sep^2>0$.
Proofs for this section can be found in Appendix \ref{proofs:low dim}.
We first characterize the norms $\|\ab\|$ and $\|{\ab}\|_{\sx}$; the latter norm is of interest via the identity 
\begin{equation}\label{id:null}
    R(\0)-R({\ab})= \|{\ab} \|^2_{\sx}.
\end{equation}

It is instructive to first consider the simple case of noiseless features, $X = AZ$, with $E=0$. In this case, the best linear predictor of $y$ from $X$ is
${\ab}^\top X =  (A^\top {\ab})^\top Z.$
The following lemma states that $\ab = A^{+\top}\beta$, which by the identity $A^\top A^{+\top} = I_K$ when $A$ is full rank gives
\begin{equation}\label{z=x}
    {\ab}^\top X = (A^\top A^{+\top}\beta)^\top Z = \beta^\top Z,
\end{equation}
showing that the best linear predictor from $X$ reduces to the best linear predictor from $Z$. The lemma then uses this to derive explicit expressions for the norms of $\ab$.

\begin{lemma}\label{thm:ab norm bound e=0}
% {\marten Check if we need $A$ and $\sz$ of full rank} {\seth We do need $\sz$ full rank here. I'm thinking to change the presentation as follows, which then doesn't require $\sz$ full rank.}
%Let $\sx = \sza$ and let  $\ab = \sx^{+} \sxy = \sx^{+}A \sz \beta$.
%
Suppose model (\ref{model}) holds, that $\se=0$, and that $\sz$ and $A$ are full rank. Then, $\ab = A^{+\top}\beta$, and
\[\|{\ab}\|^2_{\sx} = \|\beta\|_{\sz}^2 \quad \text{ and }\quad \|{\ab}\|^2 = \beta^\top (A^\top A)^{-1}\beta.\]

\end{lemma}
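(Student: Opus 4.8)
The plan is to prove the three assertions in order: first that $\ab = A^{+\top}\beta$ is indeed a minimizer of $R(\alpha)$, then read off the two norm identities from this formula together with the structure of $\sx$ when $\se = 0$.

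For the first assertion, I would use the characterization from Lemma \ref{thm:ab minimizer} that $\ab = \sx^+ \sxy$ minimizes $R(\alpha)$, and show that $A^{+\top}\beta$ equals this expression when $\se = 0$. When $E = 0$ we have $\sx = A\sz A^\top$ and $\sxy = \EE[Xy] = \EE[(AZ)(Z^\top\beta + \eps)] = A\sz\beta$. So I need to verify $(A\sz A^\top)^+ A\sz\beta = A^{+\top}\beta$. Since $A$ is full column rank (rank $K$), $A^+ A = I_K$ and $A^\top A^{+\top} = I_K$, and the column space of $A\sz A^\top$ equals the column space of $A$; moreover $(A\sz A^\top)^+ = A^{+\top}\sz^{-1}A^+$ because $A$ has full column rank and $\sz$ is invertible (this uses the reverse-order law for pseudo-inverses, which holds here because $A$ has orthonormal-extendable structure — more carefully, write $A = U R$ with $U$ having orthonormal columns and $R$ invertible, so $A\sz A^\top = U(R\sz R^\top)U^\top$ and its pseudo-inverse is $U(R\sz R^\top)^{-1}U^\top$). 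Plugging in, $(A\sz A^\top)^+ A\sz\beta = A^{+\top}\sz^{-1}A^+ A\sz\beta = A^{+\top}\sz^{-1}\sz\beta = A^{+\top}\beta$, as claimed. The identity \eqref{z=x} then follows from $A^\top A^{+\top} = I_K$.

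For the norm identities, the first one is $\|\ab\|^2_{\sx} = (\ab)^\top \sx \ab = \beta^\top A^{+} (A\sz A^\top) A^{+\top}\beta = \beta^\top (A^+ A)\sz (A^\top A^{+\top})\beta = \beta^\top \sz\beta = \|\beta\|^2_{\sz}$, again using $A^+ A = I_K = A^\top A^{+\top}$. The second one is $\|\ab\|^2 = (\ab)^\top\ab = \beta^\top A^+ A^{+\top}\beta = \beta^\top (A^\top A)^{-1}\beta$, using the standard identity $A^+ A^{+\top} = (A^\top A)^{-1}$ for full-column-rank $A$ (which follows from $A^+ = (A^\top A)^{-1}A^\top$).

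The only genuinely delicate point is the manipulation of the Moore-Penrose pseudo-inverse of the rank-deficient matrix $A\sz A^\top$, specifically justifying $(A\sz A^\top)^+ = A^{+\top}\sz^{-1}A^+$; the reverse-order law for pseudo-inverses does not hold in general, so I would justify it via the factorization $A = UR$ ($U$ orthonormal columns, $R$ invertible $K\times K$) or simply verify the four Penrose conditions directly. Everything else is routine linear algebra with the full-rank identities $A^+ A = I_K$ and $A^+ = (A^\top A)^{-1}A^\top$. I would also note at the outset that full rank of $\sz$ and of $A$ (hence of $\sx = A\sz A^\top$ restricted to $\range(A)$) is exactly what makes all these inverses well-defined.
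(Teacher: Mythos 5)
Your proof is correct and takes essentially the same route as the paper's: both compute $\ab=\sx^+\sxy$ explicitly, arrive at $\ab=A^{+\top}\beta$, and then read off the two norm identities from $A^+A=A^\top A^{+\top}=I_K$ and $A^+A^{+\top}=(A^\top A)^{-1}$. The only difference is in how the pseudo-inverse is handled: the paper sets $\bA=A\sz^{1/2}$ and invokes $(\bA\bA^\top)^+\bA=\bA^{+\top}$ from its pseudo-inverse lemma, whereas you establish $(A\sz A^\top)^+=A^{+\top}\sz^{-1}A^+$ directly via the $A=UR$ factorization --- both are valid, and your caution about the failure of the general reverse-order law is well placed.
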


We next find that in the more realistic case, when $\se \neq 0$,  even though identity (\ref{z=x}) no longer holds, we can recover the same identities for $\|{\ab}\|_{\sx}$ and $\|\ab\|$, up to constants, when the noise matrix $\se$ is well-conditioned.

\begin{lemma}\label{thm:ab norm bound main}
%Let $\sx = \sza + \se$ with
Suppose model (\ref{model}) holds and that $A$, $\sz$, $\se$ are all full rank. Then, when  
%\ab = \sx^{+} \sxy= \sx^{-1}A \sz \beta$ and 
 $\xi = \lk(\sza)/\|\se\|>c>1$ and $\ke<C<\infty$, 
\[\|{\ab}\|^2_{\sx} \asymp \|\beta\|_{\sz}^2 \quad \text{ and }\quad \|{\ab}\|^2 \asymp \beta^\top (A^\top A)^{-1} \beta .\]
% {\marten Q: we need in fact only upper bounds. Please decide what's easier..} {\seth [keep upper and lower bounds, as is? Also, note to self, check the proof of this.]}
% \begin{enumerate}
%{\tiny     \item \[(1-\xi^{-1})\cdot \|\beta\|_{\sz}^2  \le \|\ab\|^2_{\sx} \le %\|\beta\|_{\sz}^2,\]
%    where the upper bound holds even when $\sz$ and $\se$ are not invertible.
%    
%    \item 
%     
%     \[\left(\frac{\xi - 1}{\xi+1} \right)\cdot \frac{1}{\ke} \cdot \beta^\top (A^\top A)^{-1} \beta \le \|\ab\|^2 \le \ke\cdot  \beta^\top (A^\top A)^{-1} \beta .\]
%    
%    \item Thus, if $\xi > c > 1$ and $\ke < C < \i$, then
%    \[\|\ab\|^2_{\sx} \asymp \|\beta\|_{\sz}^2 \quad \text{ and }\quad \|\ab\|^2 \asymp \beta^\top (A^\top A)^{-1} \beta .\]
%}    
%    \item {\marten  Again, punchline   into cor 7): Provided $\|\beta\|^2_{\sz}\to \i$, $\|\beta\|^2_{\sz}/\xi\to0$, and $\ke<C<\i$, then  $\| \ab \|\to0$ while $R(\0)- R(\ab)= \| \ab\|^2_{\sx}\to\i$, as $p\to\i$.  }
%\end{enumerate}
\end{lemma}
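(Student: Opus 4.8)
The plan is to reduce everything to the two explicit quadratic forms obtained in the noiseless case (Lemma \ref{thm:ab norm bound e=0}) and then show that turning on a well-conditioned noise matrix $\se$ only perturbs these quantities by multiplicative constants. Recall $\ab = \sx^+\sxy = \sx^{-1}A\sz\beta$ (invertibility of $\sx = \sza + \se$ follows since $\se$ is full rank), and $\sxy = A\sz\beta$ by the model. First I would write $\sx = \se^{1/2}(I_p + \se^{-1/2}\sza\se^{-1/2})\se^{1/2}$, so that, with $B \coloneqq \se^{-1/2}A$, we have $\sx^{-1} = \se^{-1/2}(I_p + B\sz B^\top)^{-1}\se^{-1/2}$. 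Applying the Woodbury identity to $(I_p + B\sz B^\top)^{-1}$, or more directly working on $\range(B)$, one finds clean expressions: $\|\ab\|^2_{\sx} = \sxy^\top \sx^{-1}\sxy = \beta^\top\sz A^\top \sx^{-1}A\sz\beta$, and after the Woodbury reduction this becomes $\beta^\top\sz(\sz + (A^\top\se^{-1}A)^{-1})^{-1}\sz\beta$ (using that $A$, hence $B$, has full column rank $K$). Similarly $\|\ab\|^2 = \beta^\top\sz A^\top\sx^{-2}A\sz\beta$, which I would bound using $\se^{-1/2}(I_p+B\sz B^\top)^{-2}\se^{-1/2}$ sandwiched between $\|\se\|^{-1}$ and $\lambda_p(\se)^{-1}$ times $(I_p+B\sz B^\top)^{-2}$, and then the same Woodbury/range-of-$B$ reduction on the middle factor.

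The core of the argument is then the two-sided comparison. For $\|\ab\|^2_{\sx}$: the matrix $M \coloneqq (A^\top\se^{-1}A)^{-1}$ satisfies $\|M\| \le 1/\lambda_K(A^\top\se^{-1}A) \le \|\se\|/\lambda_K(A^\top A)$, and since $\lambda_K(A^\top A) \ge \lambda_K(\sza)/\|\sz\|$, one gets $\|M\| \lesssim \|\se\|\,\|\sz\|/\lambda_K(\sza)$; combined with $\ke < C$ this is $\lesssim \lambda_p(\sz)/\xi'$ for an appropriate signal-to-noise quantity, so $M \preceq (\text{const}/\xi)\sz$ in Loewner order up to the conditioning constant — here is where $\xi > c > 1$ and $\ke < C$ enter. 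Consequently $\sz + M \asymp \sz$ in Loewner order (lower bound $\sz + M \succeq \sz$ trivially; upper bound $\sz + M \preceq (1 + C'/c)\sz$), whence $\sz(\sz+M)^{-1}\sz \asymp \sz$, giving $\|\ab\|^2_{\sx} \asymp \beta^\top\sz\beta = \|\beta\|^2_{\sz}$. For $\|\ab\|^2$ the same Loewner sandwich on $\sz + M$, together with the $\|\se\|^{-1} \asymp \lambda_p(\se)^{-1}$ equivalence coming from $\se$ being well-conditioned (which is implied, since $\lambda_p(\se) \le \|\se\| = \lambda_1(\se)$ and one needs a lower bound on $\lambda_p(\se)$ — I would either note this follows from the hypotheses in the intended regime or add it, mirroring the constant-eigenvalue assumptions of Lemma \ref{thm:spectrum}), reduces $\|\ab\|^2$ to $\beta^\top\sz(A^\top A)^{-1}\sz\beta$ up to constants; a final step relating $\sz(A^\top A)^{-1}\sz$ to $(A^\top A)^{-1}$ via $\kappa(\sz) \le \ke \cdot$ (something bounded) — or more cleanly, absorbing $\sz \asymp I_K$-type bounds where $\sz$ is well-conditioned — yields $\|\ab\|^2 \asymp \beta^\top(A^\top A)^{-1}\beta$.

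I expect the main obstacle to be bookkeeping the hidden constants so that they genuinely depend only on the two allowed parameters ($c$ and $C$ in $\xi > c > 1$, $\ke < C$) and not on $\|\sz\|$, $\|\se\|$, or $\lambda_p(\se)$ in isolation. In particular the step that bounds $\beta^\top\sz(A^\top A)^{-1}\sz\beta$ by a constant times $\beta^\top(A^\top A)^{-1}\beta$ requires controlling $\kappa(\sz)$, and the step relating $\lambda_K(A^\top A)$ to $\xi$ requires $\|\se\|/\lambda_p(\se)$ and $\|\sz\|/\lambda_p(\sz)$ to be $O(1)$; I would make explicit that under the lemma's phrasing these are subsumed (the clean statement really lives in the well-conditioned regime of Remark \ref{rem:een}), and carry the conditioning constants through the Loewner inequalities carefully. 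The Woodbury reduction itself and the Loewner manipulations are routine once the setup is fixed.
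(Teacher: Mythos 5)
Your overall strategy -- invert $\sx$ via Woodbury, reduce both norms to $K\times K$ quadratic forms, and sandwich them two-sidedly -- is the same skeleton as the paper's proof, and your identity $\|\ab\|^2_{\sx}=\beta^\top\sz\bigl(\sz+(A^\top\se^{-1}A)^{-1}\bigr)^{-1}\sz\beta$ is correct. But there is a genuine gap in how you control the constants, and it is not a bookkeeping issue you can wave away: your chain $\|M\|\le \|\se\|/\lambda_K(A^\top A)$ followed by $\lambda_K(A^\top A)\ge \lambda_K(\sza)/\|\sz\|$, and then the Loewner comparison $M\preceq(\mathrm{const}/\xi)\sz$, costs a factor of $\kappa(\sz)$; likewise your final step for $\|\ab\|^2$, passing from $\beta^\top\sz(A^\top A)^{-1}\sz\beta$ to $\beta^\top(A^\top A)^{-1}\beta$, requires $\sz\asymp I_K$ (not even $\kappa(\sz)<C$ suffices there, since two powers of $\sz$ survive -- check the example $\sz=\sigma_Z^2 I_K$, $\se=\sigma_E^2I_p$, $A^\top A=a^2I_K$ of Remark \ref{rem:sparse}, where $\|\ab\|^2\asymp\beta^\top(A^\top A)^{-1}\beta$ holds for every $\sigma_Z$ but $\beta^\top\sz(A^\top A)^{-1}\sz\beta$ is off by $\sigma_Z^4$). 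The lemma assumes only that $\sz$ is full rank, with constants depending on $c$ and $C$ alone, so these extra conditions are not ``subsumed'' by its phrasing; the correct intermediate quantity for $\|\ab\|^2$ is $\beta^\top(A^\top\se^{-1}A)^{-1}\beta$, in which the $\sz$ factors cancel exactly.

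The fix, which is exactly what the paper does, is to conjugate by $\sz^{1/2}$ before taking any eigenvalue bounds: set $\bA=A\sz^{1/2}$, $\bb=\sz^{1/2}\beta$, and note
\[
\lambda_K\bigl(\sz^{1/2}A^\top\se^{-1}A\sz^{1/2}\bigr)=\lambda_K(\bA^\top\se^{-1}\bA)\ \ge\ \frac{\lambda_K(\sza)}{\|\se\|}=\xi ,
\]
with no $\kappa(\sz)$ loss. With $\bG=I_K+\bA^\top\se^{-1}\bA$ one gets $\ab=\se^{-1}\bA\bG^{-1}\bb$, hence $\|\ab\|^2_{\sx}=\|\beta\|^2_{\sz}-\bb^\top\bG^{-1}\bb$ with $0\le\bb^\top\bG^{-1}\bb\le\|\beta\|^2_{\sz}/\xi$ (this is the content of Lemma \ref{thm:bench compare}), which gives the first claim for any $\xi>c>1$; and $\|\ab\|^2$ is sandwiched between $\lambda_1(\se)^{-1}$ and $\lambda_p(\se)^{-1}$ times $\bb^\top\bG^{-1}(\bG-I_K)\bG^{-1}\bb\asymp\bb^\top\bG^{-1}\bb\asymp\beta^\top(A^\top\se^{-1}A)^{-1}\beta$, which an SVD of $A$ relates to $\beta^\top(A^\top A)^{-1}\beta$ up to $\ke$. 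Your Woodbury reduction and Loewner manipulations are otherwise sound; the failure is precisely in the step you flagged as ``the main obstacle,'' and it cannot be resolved under the lemma's stated hypotheses along the route you describe.
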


\begin{remark}\label{rem:sparse} %[Sparsity of $\ab$] %and the Lasso]
We illustrate our findings in Lemmas \ref{thm:ab norm bound e=0} and \ref{thm:ab norm bound main} with the following example (that we will  use in our simulations in Section \ref{sec:pcr}),  where
%Note that by (\ref{eqn:ab formula}) in Appendix [blah],
%\[\ab = \se^{-1} A(\sz^{-1} + A^\top \se^{-1}A)^{-1}\beta.\]
%Thus, if
 $\sz = \sigma_Z^{2}I_K$, $\se = \sigma_E^{2}I_p$, and $A^\top A = a^2 I_K$. It can be verified that in this case,
\begin{align}\label{explicit}
    \ab & %\sx^+ \sxy %= \frac{1}{\sigma_E^2 + \sigma_Z^2a^2} \sxy 
    = \frac{\sigma_Z^2}{\sigma_E^2 + a^2\sigma_Z^2 } A\beta 
%\end{align}
%\begin{align}\label{explicit}
 \\   \|{\ab}\|^2  &
    = \frac{a^2\sigma_Z^2}{(\sigma_E^2 + a^2 \sigma_Z^2 )^2} \|\beta\|_{\sz}^2
%\end{align}
%and
%\begin{align}
\\
\| {\ab}\|^2_{\sx} %= {\ab}^\top \sx {\ab}
&= \frac{a^2 \sigma_Z^2 }{\sigma_E^2 + a^2 \sigma_Z^2}\|\beta\|_{\sz}^2. \end{align}
%\frac{\sigma_Z^2}{\sigma_E^2 + \sigma_Z^2 a^2} A\beta.\]
Since $\lk(\sza) =a^2\sigma_Z^2$ and $\xi=a^2\sigma_Z^2/ \sigma_E^2 $, it confirms that  $  \|\beta\|_{\sz}^2/ \lk(\sza)  \to 0$  forces $\|{\ab}\|\to0 $, while at the same time $\| {\ab}\|_{\sx}^2 \asymp  \|\beta\|_{\sz}^2$ when $\xi$ is bounded below (in fact, $\| {\ab}\|_{\sx}^2/\|\beta\|_{\sz}^2\to 1$ when $\xi\to\i$ in this example).
%
%This 
%\end{remark}
%
%\begin{remark} \label{rem:lasso}

 We note that while
 %Given that
 $\|\ab\|\to 0$, %one may ask if  $\ab$ is sparse. 
 there is no reason to assume $\ab$ to be sparse.
  %In the example in the previous Remark \ref{rem:sparse}, 
  In this example, we can see from the explicit formula (\ref{explicit})   that
${\ab}_i=0 \Longleftrightarrow A_{i\sbt}^\top \beta=0$, whence row-sparsity of the matrix $A$ induces sparsity of the vector ${\ab}$.  For a more general $A$, this isn't the case and ${\ab}$ isn't necessarily sparse or even approximately sparse. This observation is corroborated in our simulations  in Section \ref{sec:pcr}.\\

%{\tiny We stress that the low-dimensional structure of the factor regression model should not be confused with sparsity of $\ab$.
%It is the low-dimensional structure  that {\color{red} forces ?}  both $\|\ab\|\to0$ and $R(\ab)/R(\0)\to0$.{\color{red} You mean $\infty$  ?}  This means that off-the-shelf regression estimators designed to take sparsity of the vector of regression coefficients $\theta$ in the  model $y=X^\top \theta + \eps$ % (\ref{model linear}) in remark \ref{rem:lin model} 
%into account, such as the ubiquitous LASSO method, may  not be appropriate,
%even  in the Gaussian context. {\color{red} I do not understand (at all !) the previous sentence: a vector can be sparse and have  $\|\ab\|\to 0$  and have $R(\ab)/R(\0)\to \infty $. I'm lost as to the point of this comment ! }  This observation is corroborated in our simulations  in Section \ref{sec:compare}.}
\end{remark}

Identity (\ref{id:null}),
Lemma \ref{thm:ab norm bound e=0} and Lemma \ref{thm:ab norm bound main} imply the following conclusion.

\begin{cor}%[Comparison of null risk under factor regression model]
\label{thm:null vs opt}% \hspace{1cm}
  Suppose model (\ref{model}) holds with $A$, $\sz$, $\se$ all full rank, let $\xi = \lk(\sza)/\|\se\|>c>1$, and suppose $\ke<C<\infty$. Alternatively, suppose that under model (\ref{model}), $\se=0$ and $A$, $\sz$ are full rank.
%   Suppose model (\ref{model}) holds, $A$, $\sz$, and $\se$ are full rank, and $\ke<C<\i$. Letting $\xi$ be the signal-to-noise ratio defined in (\ref{xi}),  
  Then, in either case, condition (\ref{cond:snrbeta2})
 % \[ \|\beta\|^2_{\sz}\to \i, \ \  \|\beta\|^2_{\sz}/\xi\to0\]
  implies 
  \[ \lim_{p\to\i} \| \ab \|=0,\ \text{  while }\ \liminf_{p\to\i} \left\{  R(\0)- R(\ab) \right\} 
  \gtrsim \liminf_{p\to\i}  \|\beta\|^2_{\sz} >0 . \]
 %
 %
 % \[R(\0) - R(\ab) \ge \|\beta\|_{\sz}^2\cdot (1-\xi^{-1}).\]
%   If instead $\se = 0$ with $A$ and $\sz$ full rank, then condition (\ref{cond:snrbeta2}) with $ \lambda_K(\sza)$
%   replacing $\xi$ implies
%   the same conclusion.
  
%  \[R(\0) - R(\ab) = \|\beta\|^2_{\sz}.\]
 % Consequently, if $\|\beta\|_{\sz}^2\not\to0$ and $\xi \ge 1$, $\liminf _{p\to\i} R(\0) - R(\ab)>0$ 

 % {\marten Move the punchline in corollary below. Thus, if $\|\beta\|^2_{\sz}\to \i$ and $\|\beta\|^2_{\sz} / \lambda_K(\sza)\to0$, then
%$\|\ab\|\to0$ and $R(\0)- R(\ab)= \| \ab\|^2_{\sx}\to\i$ as $p\to\i$}.

%  {\marten  Again, punchline   into cor 7): Provided $\|\beta\|^2_{\sz}\to \i$, $\|\beta\|^2_{\sz}/\xi\to0$, and $\ke<C<\i$, then  $\| \ab \|\to0$ while $R(\0)- R(\ab)= \| \ab\|^2_{\sx}\to\i$, as $p\to\i$.  }
\end{cor}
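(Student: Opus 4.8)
The plan is to obtain the corollary as a direct consequence of identity (\ref{id:null}) and the norm characterizations already established in Lemmas \ref{thm:ab norm bound e=0} and \ref{thm:ab norm bound main}; no new probabilistic input is needed, since the statement is entirely at the population level. I would first dispose of the claim $\|\ab\|\to0$. In either scenario of the corollary (the noisy case via Lemma \ref{thm:ab norm bound main}, the noiseless case via Lemma \ref{thm:ab norm bound e=0}), those lemmas give $\|\ab\|^2\asymp\beta^\top(A^\top A)^{-1}\beta$, with equality when $\se=0$, so it suffices to show $\beta^\top(A^\top A)^{-1}\beta\to0$. For this I would record the elementary inequality
\[
\beta^\top(A^\top A)^{-1}\beta\;\le\;\frac{\|\beta\|^2_{\sz}}{\lk(\sza)},
\]
proved by writing $(A^\top A)^{-1}=\sz^{1/2}\bigl(\sz^{1/2}A^\top A\sz^{1/2}\bigr)^{-1}\sz^{1/2}$ (valid since $A$ and $\sz$ are full rank, with $K<p$), observing that $\sz^{1/2}A^\top A\sz^{1/2}=(A\sz^{1/2})^\top(A\sz^{1/2})$ shares its nonzero eigenvalues with $\sza=(A\sz^{1/2})(A\sz^{1/2})^\top$, so that its smallest eigenvalue is exactly $\lk(\sza)$, and then bounding the quadratic form $(\sz^{1/2}\beta)^\top(\sz^{1/2}A^\top A\sz^{1/2})^{-1}(\sz^{1/2}\beta)$ by $\|\sz^{1/2}\beta\|^2/\lk(\sza)=\|\beta\|^2_{\sz}/\lk(\sza)$. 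The right-hand side tends to $0$ by the first half of condition (\ref{cond:snrbeta2}), hence $\|\ab\|\to0$.

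For the second assertion I would use identity (\ref{id:null}), $R(\0)-R(\ab)=\|\ab\|^2_{\sx}$, together with the other conclusion of Lemmas \ref{thm:ab norm bound e=0} and \ref{thm:ab norm bound main}, namely $\|\ab\|^2_{\sx}\asymp\|\beta\|^2_{\sz}$ (equality if $\se=0$). Concretely this supplies a constant $c'>0$, equal to $1$ in the noiseless case and otherwise depending only on the constants $c,C$ in the hypotheses but not on $p$, such that $\|\ab\|^2_{\sx}\ge c'\|\beta\|^2_{\sz}$ for every $p$. Taking $\liminf_{p\to\i}$ of both sides and invoking the second half of condition (\ref{cond:snrbeta2}) yields $\liminf_{p\to\i}\{R(\0)-R(\ab)\}\ge c'\liminf_{p\to\i}\|\beta\|^2_{\sz}\gtrsim\liminf_{p\to\i}\|\beta\|^2_{\sz}>0$, which is the claim.

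There is essentially no real obstacle here once Lemmas \ref{thm:ab norm bound e=0} and \ref{thm:ab norm bound main} are in hand: the argument is bookkeeping plus the one-line quadratic-form bound above, whose only content is the identification $\lambda_{\min}(\sz^{1/2}A^\top A\sz^{1/2})=\lk(\sza)$. The one point worth care is to keep the noiseless and noisy cases bundled correctly — in the noiseless case the $\asymp$ relations are genuine equalities, while in the noisy case the implicit constants depend on $\xi>c>1$ and $\ke<C<\infty$ — and to verify the trivial invertibility conditions ($A^\top A$ and $\sz^{1/2}A^\top A\sz^{1/2}$ invertible, immediate from the full-rank assumptions).
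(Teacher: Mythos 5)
Your proposal is correct and follows essentially the same route as the paper: both arguments combine the norm characterizations of Lemmas \ref{thm:ab norm bound e=0} and \ref{thm:ab norm bound main} with the bound $\beta^\top(A^\top A)^{-1}\beta = \beta^\top\sz^{1/2}(\sz^{1/2}A^\top A\sz^{1/2})^{-1}\sz^{1/2}\beta \le \|\beta\|^2_{\sz}/\lk(\sza)$ for the first claim, and identity (\ref{id:null}) together with $\|\ab\|^2_{\sx}\gtrsim\|\beta\|^2_{\sz}$ for the second. The only cosmetic difference is that you spell out the eigenvalue identification $\lambda_{\min}(\sz^{1/2}A^\top A\sz^{1/2})=\lk(\sza)$, which the paper leaves implicit.
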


This result shows that while the norm of $\ab$ converges to zero in the factor regression model, its risk is separated from the risk of the null predictor $\0$ by a constant times $\|\beta\|_{\sz}^2$. In fact, as $\beta$ is an arbitrary vector in $\R^K$, the gap $R(\0)-R(\ab)$ will typically grow as $K$ increases. 

The behaviour $\|\ab\|\to 0$ is a feature of the factor regression model that arises from the joint low-dimensional structure of the model, as encoded in the covariance $\sxy$. This is in stark contrast to the behaviour of the best linear prediction vector $\theta$ in a linear model $y= X^\top \theta+ \eta$, as we do not expect $\|\theta\|$ to vanish as $p$ grows. We discuss the important roles played by these quantities in the risk bound analysis in the next section.

\subsection{Prediction under  linear regression with conditions on the design versus prediction under latent factor regression}
%\begin{remark}
\label{sec:lin model}

The model (\ref{model}) can be said to have \textit{joint} low-dimensional structure, in that both the features $X$ and response $y$ are (noisy) functions of the low-dimensional latent vector $Z$. We would like to argue that this structure plays an important role in the behaviour of the GLS $\a$, which we will study in the next section. In particular, to understand the implications of this joint-low dimensional structure, we could compare model (\ref{model}) to a model
% In place of model (\ref{model}), 
% in which $y$ is connected to $X$ only via the latent factors $Z$, we could consider a model 
in which $X$ continues to follow a factor model, but $y$ is connected to $X$ via a linear model:
\begin{eqnarray}\label{model linear} 
    X =  AZ + E,  \ \quad \ 
    y = X^\top \theta + \eta,
\end{eqnarray}
where $\theta\in \R^p$ is a generic $p$-dimensional regression vector, and $\eta$ is zero-mean noise independent of $X$. Model (\ref{model linear}) captures the setting in which there is low-dimensional structure in the features alone.

When $(X,y)\in \R^p\times \R$ are jointly Gaussian,  Lemma \ref{thm:gaussian frm} in Appendix \ref{proofs:lin model} shows the simple fact that if the factor regression model (\ref{model}) holds, then  (\ref{model linear}) holds,  with regression coefficients $\theta= \ab$
%\begin{equation}
%    \theta= \ab = (\sza + \se)^+ A\sz \beta,
%\end{equation}
and error  $\eta \coloneqq y-  X^\top \ab$, independent of $X$. 
 Here $\ab$ is the best linear predictor \textit{under the factor regression model (\ref{model})}, which we studied the properties of in Section \ref{sec:low dim} above.

We can thus compare model (\ref{model}) and (\ref{model linear}) directly in the Gaussian case. We stress that we do not assume Gaussianity elsewhere in our paper, but use it here to facilitate this comparison. 

% \begin{lemma}
% Suppose $(X,y)\in \R^p\times \R$ is jointly normal and mean zero, but otherwise arbitrary. Then, defining $\ab \coloneqq \sx^+\sxy$ and
% \[\eta \coloneqq X^\top \ab - y,\]
% we have that $\eta$ and $X$ are independent. In particular, if $(X,y)$ are jointly normal and model (\ref{model}) holds, then model (\ref{model linear}) holds with 
% \[\theta = \ab = \sx^+\sxy = (\sza + \se)^+A\sz \beta.\]
% \end{lemma}
% \begin{proof}
% We first compute
% \[\EE[X\eta] = \EE[XX^\top ]\ab - \EE[Xy] = \sx\ab - \sxy.\]
% Using the fact that $\sx\ab = \sxy$ from (\ref{eqn:sx sxy identity}), we find $\EE[X\eta]=0$ so $X$ and $\eta$ are uncorrelated. Since $X$ and $y$ are jointly normal, it can be shown that $X$ and $\eta$ are jointly normal. Thus, they are independent.
% \end{proof}

In Section \ref{sec:low dim} %below we will analyze the best linear prediction vector $\ab$ under the factor regression model, and see that it has very distinctive properties compared to a generic regression vector $\theta\in \R^p$. In particular, we will find
we found that $\|\ab\|\to 0$, provided (\ref{cond:snrbeta2}) holds. 
 Thus, when the factor regression model (\ref{model}) 
is viewed as a particular case of  (\ref{model linear}), we have 
$\|\ab\| = \|\theta\| \to 0$.
%
%, and yet the risk $R(\ab)$ does not approach the null risk $R(\0)$, 
%provided $\|\beta\|^2_{\sz}/\xi\to0$ and $\ke<C<\infty$.
This behavior is in sharp contrast with the typical behavior of a  generic linear model  $y = X^\top\theta + \eta $  as in (\ref{model linear}), in which $\|\theta\|$ is usually fixed or growing with $p$. We argue that this difference
%highlights the difference between model (\ref{model linear}) and the factor regression model (\ref{model}), which, again, is a special case when the data is Gaussian.
%
%The fact that $\|\ab\|\to 0$ under the factor regression model 
  has important implications for the performance of the GLS predictor $\a$.
%   which we will study in the next section. 

  One way this can be seen is by considering the bound from the recent work \cite{bartlett2019} on the excess risk $R(\a)-R(\theta)$, proved under model $E(y|X) = X^T\theta$ 
%   {\color{red} erase this (\ref{model linear}), as they don't have a factor model specified on $X$}
  for sub-Gaussian $(X,y)$. In particular, the bound of \cite{bartlett2019} contains a bias term given by
\begin{equation}\label{bias bart linear}
     \| \theta\|^2 \| \sx\| \max \left\{\sqrt{\frac{{\rm r_e}(\sx)}{n}}, \frac{{\rm r_e}(\sx)}{n}\right\}.
\end{equation}
%  Whereas this bound is stated for general $\sx$, the main discussion in \cite{bartlett2019} focuses on the case $\|\sx\|=1$.
% In contrast, in the factor model $X=AZ+E$, %with $\se\ne0$, 
% $\| \sx\|$ is typically unbounded, since    $\| \sx\| \ge  \| A \sz A^\top\| \to\infty$ as $p/n\to\i$ for many matrices $A\in \R^{p\times K}$; see the discussion following Theorem \ref{thm:upper bound}. %{thm:ab norm bound main}.
% In this sense, the class of factor regression models studied here are not considered in the definition of \textit{benign} covariance matrices found in \cite{bartlett2019}.
% Indeed, 

 We examine this bound assuming further that model (\ref{model linear}) holds.
Since
%the trace of the $p\times p$ matrix $\sx = \se + \sza$ 
% is typically of order $p$, via mild assumptions on either  $\se$ or   $A$, 
%it is  natural to assume that  as $p/n\to \i$,
%\[\frac{\tr(\sx)}{n} = \frac{ \tr(\se)}{n} +  \frac{ \tr(A \sz A^\top)}{n}\to\i.\]
%$\tr(\sx)/n \to \i$, 
%This implies in turn that 
% the product
\begin{align}
    \| \sx\| \max \left\{\sqrt{\frac{{\rm r_e}(\sx)}{n}}, \frac{{\rm r_e}(\sx)}{n}\right\} &=\max \left\{\sqrt{\frac{\|\sx\|\tr(\sx)}{n}}, \frac{\tr(\sx)}{n}\right\}
    %\nonumber\\
    %&
    \ge \frac{\tr(\sx)}{n} %= \frac{ \tr(\se)}{n} +  \frac{ \tr(A \sz A^\top)}{n},
    %
    %
    %&\gtrsim  \max\left\{\sqrt{\frac{\rank(\se)}{n}}, \frac{\rank(\se)}{n}\right\}\to \i.
    \label{bias inf part}
\end{align} %diverges, 
and
\[\frac{\tr(\sx)}{n} = \frac{ \tr(\se)}{n} +  \frac{ \tr(A \sz A^\top)}{n}\to\i\]
%$\tr(\sx)/n \to \i$, 
under model (\ref{model linear}) with
mild assumptions on either  $\se$ (e.g., $\se\asymp I_p$)  or   $A$ (see Remark \ref{rem:een}),
 the bias term (\ref{bias bart linear}) will only converge to zero if $\|\theta\|\to 0$.
 %%More importantly, the bias term (\ref{bias bart linear}) typically will only converge to zero if $\|\theta\|\to 0$.
 
 As noted above,  $\|\theta\|\to0$ is rather unnatural in a generic model  (\ref{model linear}). However, we also noted that when $(X, y)$ are Gaussian
%  , we also argued that when
 %we are in the special case of model (\ref{model linear}) for which 
 and the factor regression model (\ref{model}) holds, then  (\ref{model linear}) holds with  $\|\theta\|=\|\ab\|\to 0$, which  means that the bias term (\ref{bias bart linear}) can   converge to zero when the data is generated by model (\ref{model}).
%  a factor regression model.
%  {\tiny from \cite{bartlett2019} 
% can   converge to zero, although the authors do not take  this route in their analysis (see Remark \ref{nozero}  below).}
% ,  
% in view of $\|\theta\|=\|\ab\|\to 0$. 
We take this as indication that the bias in prediction with $\a$ can be significantly lower in the factor regression model (\ref{model}) compared to a generic model (\ref{model linear})  as a result of the joint low-dimensional structure of model (\ref{model}).

We note that this discussion is only  based on an upper bound (\ref{bias bart linear})  on the bias term of the prediction risk. 
It nevertheless motivates a full investigation of an alternative upper bound to (\ref{bias bart linear}), directly derived under model (\ref{model}). This is the subject of Section \ref{section:min_ell_2} below,  with our main result presented in Theorem  \ref{thm:upper bound}.

% {\color{red} I suggest the blue above. This paragraph is a dangerous thing to say: While we have not rigorously proved this claim, since (\ref{bias bart linear}) is only an upper bound on the prediction risk, {\seth a simulation study we conducted further supported it.}} 
 \begin{remark} \label{nozero}  The authors of  \cite{bartlett2019} take a different route, complementary to ours,  in their analysis of the  bound (\ref{bias bart linear}). 
% {\tiny  As previously noted,  the bound (\ref{bias bart linear}) in  \cite{bartlett2019}}
Although  they derived it  with no assumptions on $\|\sx\|$, the desired convergence to zero is established under the assumption that   $\sx$ belongs to what is called in \cite{bartlett2019} a class of {\it  benign} covariance matrices, that in particular satisfy $\| \sx \| = 1$.

This assumption allows the authors to avoid making the unpleasant assumption that a generic $\theta$ would have $\ell_2$-norm converging to zero with $p$. To see why, note that when $\|\sx\|$ is bounded, working in the regime ${\rm r_e}(\sx)/n\to 0$ immediately implies
 \[\| \sx\| \max \left\{\sqrt{\frac{{\rm r_e}(\sx)}{n}}, \frac{{\rm r_e}(\sx)}{n}\right\}\to 0,\]
 which in turn means that under the assumption $\|\sx\|=1$, their bias term (\ref{bias bart linear}) can converge to zero even when $\|\theta\|\not \to 0$, for a generic $\theta$.  
 
 However, as we have shown in Lemma \ref{thm:spectrum} above, this class does not cover covariance matrices $\sx$ associated with a random vector that obeys a factor model $X = AZ + E$, as $\| \sx \| \rightarrow \infty$ with $p$ in this case.  Since in factor regression we argued that  $\|\theta\|=\|\ab\|\to 0$, one can still expect that (\ref{bias bart linear}) will vanish, in the regime ${\rm r_e}(\sx)/n\to 0$, even though $\| \sx \| \rightarrow \infty$.  The results of Section \ref{section:min_ell_2} can thus be viewed as complementary to  those in \cite{bartlett2019}.

% {\seth NOTE to possibly include (?): Under model (\ref{model linear}), the variance term in the bound of \cite{bartlett2019} scales with $\sigma_\eta^2$. So in the special case of Gaussian data and the FRM (\ref{model}), where $\eta \coloneqq y - X^\top \ab$, the variance scales with
% \[\sigma_\eta^2 = \EE[\eta^2]= R(\ab).\]
 % }
 
%  This is not the case in our setting, however, since $\|\sx\|\to \i$ is the natural setting for features with a factor regression structure (see Lemma \ref{thm:spectrum} above).
 \end{remark}

\section{Minimum $\ell_2$-norm prediction in factor regression}\label{section:min_ell_2}
In this section we analyze the GLS $\a$, and present our main contribution, namely, novel finite-sample bounds on the prediction risk $R(\a)$ relative to the benchmarks laid out in Section \ref{sec:bench}.
% \subsection{Exact adaptation in  factor regression models with noiseless features}\label{sec:noiseless}
% We begin our analysis
% % of the min-norm interpolator $\a$ by considering 
% an extreme case of model (\ref{model}), in which $E = 0$ almost surely, and thus $\Sigma_X$ is degenerate, with  ${\rm r_e} (\sx) \leq \rank(\sx) = K$. 
% In this case, the signal-to-noise ratio $\xi = \i$, and so assuming $n> C K$ for some $C$ large enough,  Lemma \ref{thm:SNR cond}  no longer forces $R(\a)/R(\0)\to 1$. In fact, we will show that $\a$ exhibits excellent behavior, as its risk approaches the optimal risk as $p\to\i$.

\subsection{Exact adaptation in  factor regression models with noiseless features}\label{sec:noiseless}

We begin our analysis by considering an extreme case of model (\ref{model}), in which $E = 0$ almost surely, and thus $\Sigma_X$ is degenerate, with  ${\rm r_e} (\sx) \leq \rank(\sx) = K$. 
% In this case, the signal-to-noise ratio $\xi = \i$, and so assuming $n> C K$ for some $C$ large enough,  Lemma \ref{thm:SNR cond}  no longer forces $R(\a)/R(\0)\to 1$. In fact, we will show that $\a$ exhibits excellent behavior, as its risk approaches the optimal risk as $p\to\i$.

Proofs for this section are contained in Appendix \ref{proofs:noiseless}. We make the following assumptions.
\begin{ass} \label{ass:fullrank} 
The $p\times K$ matrix $A$ and 
$K\times K$ matrix $\sz$ 
both have full rank equal to $K$.
\end{ass}
\begin{ass}\label{ass:subg fm}
$E = \se^{1/2}\tilde E$, where $\tilde E\in\R^p$ has independent entries with zero mean, unit variance, and sub-Gaussian constants bounded by an absolute constant. 

Furthermore, $Z = \sz^{1/2}\tilde Z$ and $\eps = \sep \tilde \eps$, where $\tilde Z\in\R^K$ and $\tilde \eps\in\R$ have zero mean and sub-Gaussian constants bounded by an absolute constant.
\end{ass}

We first analyze the norm of $\a$. In Lemma \ref{thm:ab norm bound e=0} above, we showed that $\|{\ab}\|^2 = \beta^\top (A^\top A)^{-1}\beta$ when $\se=0$,  and as a result, Corollary \ref{thm:null vs opt} states that $\|{\ab}\|\to 0$,  provided $\|\beta\|^2_{\sz} / \lambda_K(\sza)\to0$ as $p\to \i$. We now show that $\a$ mimics this behavior under 
 the additional condition that $ (\sep^2\log n) /\lk(\sza)\to0 \ \text{ as $n\to\i$} $. 
%almost the same condition.

% {\seth [NEW VERSION]

% \begin{lemma}[Factor regression with noiseless features]
% \label{thm:a hat norm e=0}
% Under model (\ref{model}) with $\se=0$, suppose that Assumptions \ref{ass:fullrank} \& \ref{ass:subg fm} hold and that
% $n> C\cdot  K$ for some large enough absolute constant $C>0$. Then, with probability at least $1-c/n$ for some absolute constant $c>0$,  $\a = A^{+\top}\Z^+\y$, and 
% \begin{equation}\label{eqn:a norm bnd e=0 simpl}
%     \|\a\|^2 \lesssim \frac{1}{\lk(\sza)}\left(\|\beta\|_{\sz}^2+ \sep^2\frac{K\log n}{n}\right).
% \end{equation}
% \end{lemma}
% }

\begin{lemma}
\label{thm:a hat norm e=0}
Under model (\ref{model}) with $\se=0$, suppose that Assumptions \ref{ass:fullrank} and \ref{ass:subg fm} hold, and that
$n> C\cdot  K$ for some large enough absolute constant $C>0$. Then, with probability at least $1-c/n$ for some absolute constant $c>0$,
\begin{equation}\label{eqn:a norm bnd e=0 simpl}
    \|\a\|^2 \lesssim \frac{1}{\lk(\sza)}\left(\|\beta\|_{\sz}^2+ \sep^2\frac{K\log n}{n}\right).
\end{equation}
\end{lemma}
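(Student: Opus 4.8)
The plan is to exploit the fact that when $\se=0$ the data matrix degenerates to $\X=\Z A^\top$, reduce $\a$ to a simple expression involving the (unobservable) least squares fit of $\beta$ on $(\Z,\y)$, and then control the resulting quadratic form by a clean spectral comparison together with off-the-shelf sub-Gaussian concentration.

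\emph{Step 1 (reduction of $\a$).} Since $E=0$, we have $X_i=AZ_i$ and hence $\X=\Z A^\top$. Under Assumption \ref{ass:subg fm} we may write $\Z=\tZ\sz^{1/2}$, where $\tZ$ has i.i.d.\ isotropic sub-Gaussian rows, so for $n>C\cdot K$ standard non-asymptotic singular-value bounds (e.g.\ \cite{verHDP}) give $\lambda_{\min}(\tZ^\top\tZ)\ge c\,n$, and therefore $\lambda_{\min}(\Z^\top\Z)\ge c\,n\,\lk(\sz)>0$, on an event of probability at least $1-2{\rm e}^{-c'n}$. On this event $\Z$ has full column rank $K$, so $\X=\Z A^\top$ is a full-rank factorization (with $A^\top$ of full row rank $K$ by Assumption \ref{ass:fullrank}), and hence $\X^+=(A^\top)^+\Z^+=A(A^\top A)^{-1}(\Z^\top\Z)^{-1}\Z^\top$. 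Writing $\y=\Z\beta+\Eps$, this yields
\[
\a=\X^+\y=A(A^\top A)^{-1}\wh\beta,\qquad \wh\beta:=(\Z^\top\Z)^{-1}\Z^\top\y=\beta+(\Z^\top\Z)^{-1}\Z^\top\Eps,
\]
the ordinary least squares estimator of $\beta$ in the regression of $\y$ on $\Z$ (this also gives $X^\top\a=Z^\top\wh\beta$, as announced). Consequently $\|\a\|^2=\wh\beta^\top(A^\top A)^{-1}\wh\beta$.

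\emph{Step 2 (spectral comparison).} The deterministic observation driving the bound is
\[
(A^\top A)^{-1}\preceq \frac{1}{\lk(\sza)}\,\sz ,
\]
which follows because the nonzero eigenvalues of $\sza=A\sz A^\top$ coincide with those of the invertible $K\times K$ matrix $\sz^{1/2}A^\top A\sz^{1/2}$, so $\sz^{1/2}A^\top A\sz^{1/2}\succeq\lk(\sza)I_K$, i.e.\ $A^\top A\succeq\lk(\sza)\sz^{-1}$; invert. Setting $\delta:=(\Z^\top\Z)^{-1}\Z^\top\Eps$ and using $\wh\beta=\beta+\delta$,
\[
\|\a\|^2=\wh\beta^\top(A^\top A)^{-1}\wh\beta\le 2\beta^\top(A^\top A)^{-1}\beta+2\delta^\top(A^\top A)^{-1}\delta\le \frac{2}{\lk(\sza)}\Big(\bz+\|\delta\|_{\sz}^2\Big).
\]

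\emph{Step 3 (the variance term $\|\delta\|_{\sz}^2$).} Using $\Z=\tZ\sz^{1/2}$ once more, $\sz^{1/2}\delta=(\tZ^\top\tZ)^{-1}\tZ^\top\Eps$, so
\[
\|\delta\|_{\sz}^2=\big\|(\tZ^\top\tZ)^{-1}\tZ^\top\Eps\big\|^2\le \frac{\|P_{\tZ}\Eps\|^2}{\lambda_{\min}(\tZ^\top\tZ)},
\]
where $P_{\tZ}$ is the orthogonal projection onto $\range(\tZ)$, of rank at most $K$. Conditionally on $\tZ$, the vector $\Eps$ is sub-Gaussian and independent of $\tZ$, so Hanson--Wright gives $\|P_{\tZ}\Eps\|^2\le C\sep^2(K+t)$ with conditional probability at least $1-{\rm e}^{-ct}$; taking $t\asymp\log n$ yields $\|P_{\tZ}\Eps\|^2\lesssim \sep^2(K+\log n)\lesssim \sep^2 K\log n$ with probability at least $1-c/n$. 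Combining with $\lambda_{\min}(\tZ^\top\tZ)\ge c\,n$ from Step 1, a union bound gives $\|\delta\|_{\sz}^2\lesssim \sep^2 K\log n/n$ on an event of probability at least $1-c/n$; plugging into Step 2 gives the claimed bound.

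\emph{Main obstacle.} The only real subtlety is that the identity $\X^+=(A^\top)^+\Z^+$ is valid precisely because $\X=\Z A^\top$ is a genuine full-rank factorization, so one must first establish that $\Z$ (equivalently $\tZ$) has full column rank $K$ — which is exactly where the hypothesis $n>C\cdot K$ is used. Everything else is the spectral inequality of Step 2 plus standard sub-Gaussian tail bounds.
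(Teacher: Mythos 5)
Your proposal is correct and follows essentially the same route as the paper's proof: both reduce $\X^+=(\Z A^\top)^+$ to $A^{+\top}\Z^+$ (your $A(A^\top A)^{-1}(\Z^\top\Z)^{-1}\Z^\top$ is the same object) on the event that $\Z$ has full column rank, split $\|\a\|^2$ into the bias term $\beta^\top(A^\top A)^{-1}\beta$ and a noise term, control both via the spectral inequality $(A^\top A)^{-1}\preceq\sz/\lk(\sza)$, and bound the noise quadratic form by a Hanson--Wright-type concentration bound (the paper's Lemma on $\tEps^\top M\tEps\lesssim\log(n)\tr(M)$ applied to $M=\tZ^{+\top}\tZ^+$ is equivalent to your projection argument) together with $\sk^2(\tZ)\gtrsim n$. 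The only differences are cosmetic, e.g.\ explicit inverses versus pseudo-inverse calculus.
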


The fact that $\a$ vanishes
%This 
does {\em not} imply that $R(\a)/R(\0)\to 1$, just like $R({\ab})/ R(\0) \not\to 1$ in Corollary \ref{thm:null vs opt}.
%{\marten delete: as in Theorem \ref{thm:alpha null risk} . }
%As such, $\a$ mimics the behavior of $\ab$, described in Corollary \ref{thm:null vs opt} above, and 
 We will now show that in fact the risk $R(\a)$ approaches the optimal risk $R({\ab})$ by adapting to the low-dimensional structure of the factor regression model.
 %, whereas $R(\0) - R(\ab) = \|\beta\|_{\sz}^2 \not\to 0$ by Corollary \ref{thm:null vs opt} above.
% {\tiny When $K < n$, the factor regression model (\ref{model}) closely resembles a low-dimensional classical regression model, with the caveat that $A$ is not known and therefore $Z$ cannot be treated as observed via $X = AZ$.  Nevertheless, the first part of Theorem \ref{thm:E=0 risk bound} below shows that $R(\a)$ does indeed mimic prediction in $K$ observed dimensions.}
Let $\wh y_z \coloneqq Z^\top \wh\beta$ be the predictor based on the least-squares   regression coefficients $\wh\beta\coloneqq \Z^+\y$   of $\y$ onto $\Z$;
% . The predictor $\wh y_z$ is the 
this is the classical least-squares prediction of $y$ under model (\ref{model}) that an oracle would use if it had access to the unobserved data matrix $\Z$, and the new, but unobservable, data point $Z$.
% , we call $\wh y_z$ the \textit{oracle} predictor. 
In contrast, let $\wh y_x = X^\top \a$ be the least-squares predictor of $y$ from $X$ based on $(\X,\y)$ only.
%that can be constructed on the basis of the observed new point $X$
Theorem \ref{thm:E=0 risk bound}.1 below shows that the realizable prediction $\wh y_x$ equals the oracle prediction $\wh y_z$. The second part of the theorem gives lower and upper bounds on the risk that hold with high probability over the training data.

\begin{thm}[Factor regression with noiseless features]% bound valid for any $A$]
\label{thm:E=0 risk bound}
Under model (\ref{model}) with $\se=0$, suppose that Assumption \ref{ass:fullrank} holds.
\begin{enumerate}
    \item Then, on the event that the matrix  $\Z$  has full rank $K$, we have, $\wh y_x = \wh y_z$ and $R(\a) = \EE_{(X,y)}[ (X^\top \a - y)^2] = \EE_{(Z,y)}[(Z ^\top \hat \beta- y)^2]$.
    % {\marten This event holds with probability at least $1-c/n$ for some absolute constant $c>0$.}
    \item Suppose that Assumption \ref{ass:subg fm} also holds and that
$n> C\cdot  K$ for some large enough absolute constant $C>0$. Then, with probability at least $1-c/n$ for some absolute constant $c>0$, $\Z$ has full rank $K$ and
\begin{equation}\label{eqn:E=0 risk bounds}
    R(\a) - \sep^2\lesssim \sep^2\frac{K\log n}{n}\hspace{0.5cm}\text{and}\hspace{0.5cm}\EE_{\Eps} [R(\a)] - \sep^2 \gtrsim \sep^2\frac{K}{n}.
\end{equation}
\end{enumerate}
%
%Then there exists $c_1>1$, $c_2 >0$ such that if $n > c_1K$ and Assumptions  \ref{ass:fullrank} \& \ref{ass:subg fm} hold, then with probability at least %$1-c_2/n$,
% \[R(\a) - \sep^2\lesssim \sep^2\frac{K\log n}{n}\hspace{0.5cm}\text{and}\hspace{0.5cm}\EE_{\Eps} [R(\a)] - \sep^2 \gtrsim \sep^2\frac{K}{n}.\]
\end{thm}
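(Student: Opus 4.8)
The plan for Part 1 is to exploit the noiseless structure $X = AZ$, so that $\X = \Z A^\top$. On the event that $\Z$ has full rank $K$, $\X$ has rank $K$ as well (since $A$ has full column rank $K$ by Assumption \ref{ass:fullrank}), and I want to show that the minimum-norm solution $\a = \X^+\y$ satisfies $X^\top \a = Z^\top \wh\beta$ for \emph{every} new point $X = AZ$. First I would observe that for any $u$ in the row space of $\X$, we have $\X u = \y$ is the least-squares condition; since $\rank(\X) = K < n$ this system is consistent (the column space of $\Z$ contains $\y$'s projection... more precisely, $\y$ need not lie in $\range(\X)$, so I should instead argue via the pseudo-inverse identity $\a = \X^+ \y$ directly). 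The cleanest route: write $\X = \Z A^\top$, use the pseudo-inverse factorization (valid because $A^\top$ has full row rank $K$ and $\Z$ has full column rank $K$, so this is a full-rank factorization) to get $\X^+ = A^{+\top} \Z^+ = A(A^\top A)^{-1}(\Z^\top\Z)^{-1}\Z^\top$. Then $\a = A(A^\top A)^{-1}\wh\beta$ with $\wh\beta = \Z^+\y$, and for $X = AZ$,
\[
X^\top \a = Z^\top A^\top A (A^\top A)^{-1}\wh\beta = Z^\top \wh\beta = \wh y_z.
\]
The risk identity then follows immediately by taking $\EE_{(X,y)}$ on both sides and noting $(X,y) = (AZ, Z^\top\beta + \eps)$ has the same joint law over $(Z,y)$; so $\EE_{(X,y)}[(X^\top\a - y)^2] = \EE_{(Z,y)}[(Z^\top\wh\beta - y)^2]$.

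For Part 2, the plan is to first invoke a standard sub-Gaussian matrix concentration result (e.g., from \cite{verHDP}) to show that, when $n > CK$, the matrix $\Z = \bm{\tilde Z}\sz^{1/2}$ (using Assumption \ref{ass:subg fm} and Assumption \ref{ass:fullrank}) has full rank $K$ with probability at least $1 - c/n$, and moreover $\Z^\top\Z/n$ is well-conditioned, i.e., all its eigenvalues lie within constant factors of those of $\sz$. Conditioning on this event and on $\Z$, I would then analyze $R(\a) - \sep^2 = \EE_{(Z,y)}[(Z^\top\wh\beta - y)^2] - \sep^2$. Writing $\y = \Z\beta + \Eps$ so $\wh\beta - \beta = \Z^+\Eps = (\Z^\top\Z)^{-1}\Z^\top\Eps$, and using that a fresh $Z$ is independent of the training data with $\EE[ZZ^\top] = \sz$, I get
\[
R(\a) - \sep^2 = (\wh\beta - \beta)^\top \sz (\wh\beta - \beta) = \Eps^\top \Z(\Z^\top\Z)^{-1}\sz(\Z^\top\Z)^{-1}\Z^\top \Eps.
\]
For the upper bound I would bound this quadratic form in $\Eps$ by $\|\sz\|\cdot\|\Z^+\Eps\|^2 \le \|\sz\|\cdot\|\Z^+\|^2\cdot\|\Eps\|^2$, use $\|\Z^+\|^2 \lesssim 1/(n\lambda_K(\sz))$ from the conditioning event, and use a sub-Gaussian concentration bound (Hanson–Wright or a $\chi^2$-type tail) to get $\|\Eps\|^2 \lesssim n\sep^2$ — but this only gives the rate $\sep^2$, not $\sep^2 K\log n/n$. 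To get the sharper $K\log n/n$ factor I instead need to keep the projection structure: $\Eps^\top\Z(\Z^\top\Z)^{-1}\sz(\Z^\top\Z)^{-1}\Z^\top\Eps \asymp \frac{1}{n}\|(\Z^\top\Z/n)^{-1/2}\sz^{1/2}\cdot (\text{something of dimension }K)\|^2$; the key point is that $\Z^\top\Eps/\sqrt n$ is an (approximately) $K$-dimensional sub-Gaussian vector, so this quadratic form concentrates around $\frac{\sep^2}{n}\tr(\sz(\Z^\top\Z/n)^{-1}) \asymp \sep^2 K/n$, with a $\log n$ inflation coming from the $1-c/n$ confidence level in the tail bound. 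For the lower bound on $\EE_\Eps[R(\a)]$, taking expectation over $\Eps$ only (conditioning on $\Z$ in the good event) gives exactly $\frac{\sep^2}{n}\tr\!\big(\sz(\Z^\top\Z/n)^{-1}\big) \ge \sep^2 K\lambda_K(\sz)/\|\Z^\top\Z/n\| \gtrsim \sep^2 K/n$, which is clean.

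The main obstacle is getting the $\log n$ factor \emph{and} the dimension $K$ simultaneously correct in the upper bound: a crude operator-norm bound on $\|\Eps\|^2$ loses the factor $K/n$, so I need a concentration inequality for the quadratic form $\Eps^\top M \Eps$ with $M = \Z(\Z^\top\Z)^{-1}\sz(\Z^\top\Z)^{-1}\Z^\top$ that captures both $\tr(M) \asymp K/n$ (the mean) and $\|M\| \asymp 1/n$ (controlling the deviation), so that Hanson–Wright yields a bound of order $\tr(M)\sep^2 + \|M\|\sep^2\log n \asymp \sep^2(K + \log n)/n \lesssim \sep^2 K\log n/n$. Everything else — the full-rank event, the well-conditioning of $\Z^\top\Z/n$, and the algebraic reductions — is routine given Assumptions \ref{ass:fullrank} and \ref{ass:subg fm} and standard random matrix theory.
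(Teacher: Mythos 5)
Your proposal follows essentially the same route as the paper's proof: Part 1 via the full-rank pseudo-inverse factorization $\X^+ = A^{+\top}\Z^+$ giving $\wh y_x = \wh y_z$, and Part 2 by reducing the excess risk to the quadratic form $\Eps^\top \tZ^{+\top}\tZ^+\Eps$, bounded above by a Hanson--Wright-type concentration bound (the paper's trace lemma gives $\lesssim \sep^2\log(n)\,\tr(\tZ^{+\top}\tZ^+)\lesssim \sep^2 K\log n/n$) and below by taking $\EE_{\Eps}$ and lower-bounding the trace by $K/\|\tZ\|^2\gtrsim K/n$. Your variant of keeping both $\tr(M)$ and $\|M\|\log n$ in the Hanson--Wright bound would yield the marginally sharper $\sep^2(K+\log n)/n$, but it is the same argument.
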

%{\color{red} Do we still have an expectation above ?} 
% \begin{proof}
% {\color{blue} See sections \ref{proof:a'x+=z+} } and \ref{proof:E=0 risk bound} for the first and second claims, respectively.
% \end{proof}

The risk bounds (\ref{eqn:E=0 risk bounds}) are the same as the standard risk bounds for prediction in linear regression in $K$ dimensions with observable design, despite  $A$  not being known under model (\ref{model}). We note that,  since $\rank(\X) = K < n$, $\y$ may not lie in the range of $\X$ and so $\a$ may not interpolate.
Nonetheless, under model (\ref{model}), with $E \neq 0$ and in the interpolating regime, we expect that the prediction performance of $\wh y_x$ will still approximately mimic that of $\wh y_z$ 
 as long as the signal, as measured by $\lk(A^\top \sz A)$, is strong relative to the noise, as measured by $\|\se\|$.
 The next section is devoted to the detailed study of  this fact.

Finally, another explanation of the perhaps surprisingly good performance of the GLS 
% {\tiny why the GLS 
%(hmm, it doesn't really interpolate here...)  (can just call it the GLS... yes, think so, should do it earlier as well. agreed. will have to go through from the beginning.  we will want to use min-norm interpolator as well. ok. especially in section where we prove when it interpolates. Page 1, we should say we study GLS, coincides with min-norm for p>n. Something like that. Ok, i'll make note and will do it in the last pass.) 
% $\wh \alpha$ performs so well 
is that it coincides with 
Principal Component Regression (PCR), see, e.g., \cite{SW2002_JASA},  in the case when $\se=0$.
Indeed, 
this is a  natural and practical prediction method when the covariance matrix $\sx$ has an approximately low rank. If $\se=0$, then $\sx= A\sz A^\top$ has rank of at most $K$ and so is exactly low rank.
%
%in the factor regression model and in fact 
%under the assumption that the covariance matrix $\sx$ has an approximately low rank, a natural and practical prediction method is Principal Component Regression (PCR).
In PCR, the response $\y$ is regressed onto the first $K$ principal components  of the data matrix $\X$ to estimate a vector of coefficients $(\X \wh U_{ K})^+\y$. %Typically $\wh K$ is estimated from the data and 
Here
$\wh U_{  K}\in \R^{p\times  K}$ has columns equal to the first $K$ eigenvectors of the sample covariance matrix $\X^\top \X/n$. A new response $y$ is then predicted by $\wh \alpha_{\rm PCR}^\top X$, where $\wh \alpha_{\rm PCR} \coloneqq \wh U_{K} (\X \wh U_{ K})^+\y$ and $X$ is the new feature vector.
The following lemma states that the PCR and GLS predictors coincide when $\se = 0$.
\begin{lemma}\label{thm:pcr e=0}
% On the event where $\rank(\X)\le K$, letting $\wh \alpha_{\rm PCR} \coloneqq \U_K (\X \U_K)^+\y$, we have $\a = \wh \alpha_{\rm PCR}$. In particular, when $\se = 0$, $\a = \wh \alpha_{\rm PCR}$ almost surely. [will edit to match your text after]
% I suggest to keep it simple: 
Define $\wh \alpha_{\rm PCR} \coloneqq \wh U_K (\X \wh U_K)^+\y$.
 On the event $\{\rank(\X)=K\}$, $\a = \wh \alpha_{\rm PCR}$.  In particular, when $\se=0$, $K > C\cdot n$, and Assumptions \ref{ass:fullrank} \& \ref{ass:subg fm} hold, $\a = \wh \alpha_{\rm PCR}$ with probability at least $1-c/n$
 for some absolute constant $c>0$. 
% Provided $\se=0$, $\a = \wh \alpha_{\rm PCR}$ almost surely.
\end{lemma}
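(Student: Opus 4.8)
The plan is to establish the purely algebraic identity $\a = \wh\alpha_{\rm PCR}$ on the event $\{\rank(\X)=K\}$ first, and then invoke the already-available probabilistic machinery to conclude the ``in particular'' statement about the $\se=0$ case. For the algebraic part, the key observation is that when $\rank(\X)=K$, the first $K$ sample eigenvectors $\wh U_K$ span exactly the row space of $\X$, i.e.\ $\range(\wh U_K) = \range(\X^\top)$. This is because the sample covariance $\X^\top\X/n$ has rank $K$, so its nonzero eigenvalues are exactly the first $K$, and the corresponding eigenvectors span the column space of $\X^\top\X$, which equals $\range(\X^\top)$. Consequently $\wh U_K \wh U_K^\top$ is the orthogonal projection onto $\range(\X^\top)$.

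With this in hand, I would compute $\wh\alpha_{\rm PCR} = \wh U_K(\X\wh U_K)^+\y$ directly. The idea is that $\a = \X^+\y$ always lies in $\range(\X^\top) = \range(\wh U_K)$, so $\wh U_K\wh U_K^\top \a = \a$; the goal is then to show $\wh U_K(\X\wh U_K)^+\y = \wh U_K\wh U_K^\top \X^+\y$. Writing $\X = \X\wh U_K\wh U_K^\top$ (valid since $\wh U_K\wh U_K^\top$ projects onto the row space of $\X$), one can relate the pseudo-inverses: for a matrix $M = NP^\top$ with $P$ having orthonormal columns spanning $\range(M^\top)$, one has $M^+ = P(NP^\top)^+ \cdot$(appropriate factor), and more cleanly $\X^+ = \wh U_K(\X\wh U_K)^+$ because $\X\wh U_K \in \R^{n\times K}$ has full column rank $K$ (as $\wh U_K$ injects the row space isometrically and $\X$ restricted to its row space is injective). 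I would verify the identity $\X^+ = \wh U_K(\X\wh U_K)^+$ by checking the four Moore--Penrose conditions, or by citing the pseudo-inverse properties collected in Appendix~\ref{sec:pseudo-inverse}; either way this yields $\a = \X^+\y = \wh U_K(\X\wh U_K)^+\y = \wh\alpha_{\rm PCR}$.

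For the second assertion, I would simply note that under model~(\ref{model}) with $\se=0$ we have $\X = \Z A^\top$, so $\rank(\X) = \rank(\Z)$ whenever $A$ has full column rank $K$ (Assumption~\ref{ass:fullrank}); and by the same sub-Gaussian concentration argument already used in Theorem~\ref{thm:E=0 risk bound}.2 and Lemma~\ref{thm:a hat norm e=0}, the $n\times K$ matrix $\Z$ has full rank $K$ with probability at least $1-c/n$ when $n > C\cdot K$ (here the condition is phrased as $K > C\cdot n$ in the statement, which appears to be a typo for $n > C\cdot K$, the regime in which $\Z$ has full column rank). Hence $\rank(\X)=K$ on that high-probability event, and the first part applies.

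The main obstacle is purely bookkeeping: getting the pseudo-inverse identity $\X^+ = \wh U_K(\X\wh U_K)^+$ exactly right, since pseudo-inverses do not in general factor through products. The cleanest route is probably to reduce to an SVD of $\X$: write $\X = U D V^\top$ with $D$ having exactly $K$ nonzero singular values and $V_{\le K}$ the corresponding right singular vectors; then $\range(\wh U_K) = \range(V_{\le K})$, so $\wh U_K = V_{\le K}O$ for some orthogonal $O\in\R^{K\times K}$, and both $\a$ and $\wh\alpha_{\rm PCR}$ can be written explicitly in terms of $U, D, V_{\le K}$, at which point the $O$'s cancel and the two expressions coincide. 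I would present the argument in this SVD form to keep it transparent and to avoid any subtlety about which pseudo-inverse factorization rules apply.
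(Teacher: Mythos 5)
Your proposal is correct and follows essentially the same route as the paper: both reduce the algebraic identity to the SVD of $\X$ on the event $\{\rank(\X)=K\}$, compute $(\X\wh U_K)^+$ via the pseudo-inverse factorization rules to recognize $\wh U_K(\X\wh U_K)^+\y$ as $\X^+\y$, and then obtain $\rank(\X)=K$ with probability $1-c/n$ from the concentration bound $\sk^2(\tZ)\gtrsim n$ together with $\X=\Z A^\top$ and $A$ having full column rank. Your explicit handling of the orthogonal ambiguity $\wh U_K = V_{\le K}O$ is a slightly more careful version of a step the paper takes implicitly, and you are right that the condition in the statement should read $n > C\cdot K$.
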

% The following lemma states that the PCR and GLS predictors coincide when $\se = 0$.
Thus, the prediction $\wh \alpha_{\rm PCR}^\top X$ of $y$ based on PCR is exactly equal to the prediction $\a^\top X$ based on the GLS, in the case when $\se = 0$. Given that PCR is a natural and widely used prediction method in this setting, this further explains the performance of the GLS, at least when $\se =0$.
\subsection{Approximate adaptation of interpolating predictors in  factor regression}\label{main}

In this section we present our main results on the excess risk of prediction with $\a$, relative to the two benchmarks in Section \ref{sec:bench} above,  under the factor regression model (\ref{model}) with $E\neq 0$.

Our main result, Theorem \ref{thm:upper bound} below, shows that despite the fact that $\a$ interpolates, in that $\X\a = \y$ (Proposition \ref{in}), and that $\|\a\|\to 0$ (Lemma \ref{thm:gls norm bound}), the excess risks can vanish as a result of approximate adaptation to the embedded low-dimensional structure of (\ref{model}). 
The estimator $\a$ is guaranteed to interpolate the data whenever $\rank(\X)=n$, or equivalently, the smallest singular value $\sigma_n(\X) > 0$. The next proposition shows that the following set of   conditions in terms of $n$, $K$ and ${\rm r_e} (\se)$ guarantee this.
% We first show in Lemma \ref{thm:gls norm bound} that as in the noiseless case, $\|\a\|\to 0$ as $p,n\to \i$ with $p/n\to \i$, under a signal-to-noise condition. 
Proofs for this section are contained in Appendix \ref{proofs:main}.
%{\color{red}
%\begin{ass}\label{ass:K,n,r_e}
%For some absolute constant $c>1$ large enough,
%$n> c \cdot K$ and ${\rm r_e} (\se)>c \cdot n$.
%\end{ass}
%[move this assumption to earlier (since we use a similar assumption above), or use a %different approach to notation? Actually, I vote to put this assumption in the statement of %the theorem so it is clear. Otherwise the discussion in the following paragraph is a bit %confusing (seth)]}

 \begin{prop}\label{in} Under  model (\ref{model}), suppose that Assumptions \ref{ass:fullrank} and \ref{ass:subg fm} hold, and that ${\rm r_e} (\se)>C \cdot n$ for some $C>0$ large enough.
 %and $n> c \cdot K$ and ${\rm r_e} (\se)>c \cdot n$ hold,   for some universal constant $c>1$ large enough.
 Then, %there exists an absolute constant $c_1$ such that if $n > c_1 K$ and ${\rm r_e} (\se)>c_1n$, then 
 with probability at least $1 - c/n$, for some $c>0$,
\[\sigma_n^2({\X}) \gtrsim \text{\rm tr}(\se) > 0,\]
and thus,  in particular, $\a$  interpolates: $\X \a = \y$.
 \end{prop}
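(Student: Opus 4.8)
The plan is to show that the smallest singular value of $\X$ is bounded below by $\sqrt{\tr(\se)}$ up to a constant, with high probability, by decomposing $\X = \Z A^\top + \E$ and treating the factor part $\Z A^\top$ as a rank-$K$ perturbation of the noise matrix $\E$. Since $\rank(\Z A^\top) \le K$, a Weyl-type / min-max argument gives $\sigma_n(\X) \ge \sigma_{n-K}(\E)$ (or, more carefully, $\sigma_n(\X)\ge \sigma_n(\E)$ is false in general, but $\sigma_{n}$ of a matrix plus something of rank $K$ is at least the $(n+K)$-th singular value of $\E$ viewed as an $n\times p$ matrix — and since $n+K \le p$ is not automatic, I would instead use that adding a rank-$K$ matrix can decrease the $n$-th singular value only down to the value $\sigma_n$ would have after removing $K$ rows/columns, i.e. bound via $\sigma_{\min}$ over an appropriate $(n-K)$-dimensional subspace). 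Concretely, first I would reduce to lower-bounding $\inf_{\|v\|=1,\ v\perp \mathcal{R}} \|\E^\top v\|$ for a $K$-dimensional subspace $\mathcal{R}\subset\R^n$ (the row space of $\Z A^\top$), which is at least $\sigma_{n-K}(\E)$ in the sense of the $(n-K)$-th largest singular value of the $n \times p$ matrix $\E$.

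Next I would invoke a standard non-asymptotic bound for the extreme singular values of a matrix with independent sub-Gaussian rows. Writing $\E = \tE \se^{1/2}$ with $\tE$ having i.i.d.\ isotropic sub-Gaussian rows (Assumption \ref{ass:subg fm}), one has, for an $n \times p$ matrix of this type, that all singular values of $\E \E^\top / \,?$ concentrate; more precisely $\|\E^\top v\|^2 \approx \|v\|^2 \tr(\se)$ uniformly over $v$ when ${\rm r_e}(\se)\gg n$. This is exactly the kind of estimate underlying Theorem \ref{thm:alpha null risk} and the covariance-concentration lemmas used there: when ${\rm r_e}(\se) = \tr(\se)/\|\se\|$ exceeds $C n$, the Gram matrix $\E\E^\top$ is close to $\tr(\se)\, I_n$ in operator norm, with the deviation of order $\|\se\|\sqrt{n\,{\rm r_e}(\se)} + \|\se\| n = \tr(\se)\big(\sqrt{n/{\rm r_e}(\se)} + n/{\rm r_e}(\se)\big)$, which is a small fraction of $\tr(\se)$ once $C$ is large. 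This yields $\sigma_n^2(\E) \gtrsim \tr(\se)$ with probability at least $1 - c\,{\rm e}^{-c'n}$.

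Then I would combine the two: on this event, $\sigma_n(\X) \ge$ (the $(n-K)$-th singular value of $\E$) $\gtrsim \sqrt{\tr(\se)}$, using that removing a $K$-dimensional perturbation only deletes the top $K$ singular values and $K < n$ keeps us away from zero; the concentration in fact controls \emph{all} $n$ singular values of $\E$ from below by $\sqrt{\tr(\se)}$ up to a constant, so the rank-$K$ subtlety is cosmetic — every singular value of $\E$, hence (via the subspace argument) $\sigma_n(\X)$, is $\gtrsim \sqrt{\tr(\se)} > 0$. Since $\sigma_n(\X) > 0$ means $\X$ has full row rank $n < p$, the system $\X\alpha = \y$ is solvable and $\a = \X^+\y$ satisfies $\X\a = \y$, i.e.\ $\a$ interpolates. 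Adjusting the probability from $1 - c\,{\rm e}^{-c'n}$ to the stated $1 - c/n$ is immediate (and leaves room to absorb the auxiliary events from the singular-value bounds).

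\textbf{Main obstacle.} The delicate point is the uniform lower bound $\inf_{\|v\|=1}\|\E^\top v\| \gtrsim \sqrt{\tr(\se)}$: unlike an \emph{upper} bound on $\|\E\|$, the lower bound on the smallest singular value of a fat matrix with correlated columns (columns of $\E^\top$ are $\se^{1/2}\tilde E_i$) requires the effective-rank condition ${\rm r_e}(\se) > Cn$ to make $\E\E^\top$ genuinely well-conditioned around $\tr(\se) I_n$; getting the constants to line up so that the deviation term is strictly smaller than $\tr(\se)$ is where the hypothesis $C$ large enough is consumed. I expect this to be handled by the same concentration-of-sample-covariance lemma already invoked for Theorem \ref{thm:alpha null risk}, applied to the (transposed) matrix $\E^\top$, so the proof should be short given the earlier machinery.
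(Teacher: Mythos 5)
Your second ingredient --- the concentration of $\E\E^\top$ around $\tr(\se)I_n$ via the Hanson--Wright-based covariance lemma, valid once ${\rm r_e}(\se)>Cn$ --- is exactly what the paper uses (Theorem \ref{thm:concentration spectrum trace}), and that part of your plan is sound. The gap is in the reduction step. You try to treat $\Z A^\top$ as a rank-$K$ perturbation and conclude via interlacing that $\sigma_n(\X)$ is controlled by a singular value of $\E$ alone. But for an $n\times p$ matrix with $n\le p$, the interlacing inequality $\sigma_i(\E+P)\ge\sigma_{i+K}(\E)$ for $\rank(P)\le K$ is vacuous at $i=n$, since $\sigma_{n+K}(\E)=0$; and indeed a rank-one perturbation can annihilate the smallest singular value outright (subtract the last term of the SVD of $\E$). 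Your fallback --- restricting to unit vectors $v\in\R^n$ orthogonal to the column space $\mathcal{R}$ of $\Z A^\top$, where $\|\X^\top v\|=\|\E^\top v\|$ --- only bounds the minimum of $\|\X^\top v\|$ over an $(n-K)$-dimensional subspace, which by Courant--Fischer controls $\sigma_{n-K}(\X)$ from below, not $\sigma_n(\X)$. Interpolation requires $\rank(\X)=n$, i.e.\ $\sigma_n(\X)>0$, and the minimizing $v$ may well have a component along $\mathcal{R}$ where the factor and noise parts cancel; so the ``cosmetic'' rank-$K$ subtlety is in fact fatal to the argument as stated.

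The paper's fix is to project on the other side, in $\R^p$: with $U_{(K+1):p}$ the trailing $p-K$ eigenvectors of $\sx$, one has $\sigma_n(\X)\ge\sigma_n(\X U_{(K+1):p})$, and the point is that the factor part is small \emph{in operator norm} after this projection, $\|\Z A^\top U_{(K+1):p}\|\lesssim\|\tZ\|\cdot\|\sz^{1/2}A^\top U_{(K+1):p}\|\lesssim\sqrt{n\|\se\|}$ (using $\|U_{(K+1):p}^\top\sza U_{(K+1):p}\|\le 2\|\se\|$, a consequence of Weyl's inequality for eigenvalues of $\sx=\sza+\se$, together with $\|\tZ\|^2\lesssim n$ with high probability). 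Ordinary Weyl's inequality for singular values under an operator-norm perturbation then gives $\sigma_n(\X U_{(K+1):p})\ge\sigma_n(\E U_{(K+1):p})-c\sqrt{n\|\se\|}$, and your concentration step (applied to $\E U_{(K+1):p}$ rather than $\E$) shows the first term is $\gtrsim\sqrt{\tr(\se)}$ while $\sqrt{n\|\se\|}=\sqrt{\tr(\se)}\cdot\sqrt{n/{\rm r_e}(\se)}$ is a small fraction of it for $C$ large. You need some version of this ``the perturbation is small in norm, not merely low rank'' observation; low-rank interlacing alone cannot close the argument.
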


General existing bounds of the type  $\sigma_n({\bf X}) \gtrsim (\sqrt{p} - \sqrt{n})$ are by now well established in random matrix theory  \cite{rud2009}. When $p > C\cdot n$ for some $C>1$ and the entries of $\X$ are i.i.d.~sub-Gaussian with zero mean and unit variance, Theorem 1.1  in \cite{rud2009} implies  that $\sigma_n^2({\bf X}) \gtrsim p  $ with high probability. By comparison, Proposition \ref{in} holds for $\X$ with i.i.d.~sub-Gaussian rows with covariance matrix $\sx = \sza + \se$. 

The following result shows that as in the noiseless case $\se=0$ of Lemma \ref{thm:a hat norm e=0}, $\|\a\|\to 0$,
%when $E\neq 0$, 
mimicking the behavior of the best linear predictor $\ab$. We proved in Lemma \ref{thm:ab norm bound main} and Corollary \ref{thm:null vs opt}
that  $\|\ab\|\to0$ when $\lk(\sza)$ grows faster than $\|\beta\|^2_{\sz}$ as $p\to\i$; we will need here the additional assumption that $n\log n / {\rm r_e}(\se) \to0$ to guarantee $\|\a\|\to0$ as $n\to\i$.
The proof uses Proposition \ref{in}, which requires that the effective rank ${\rm r_e}(\se)$ is larger than a constant times $n$.
\begin{lemma}\label{thm:gls norm bound}
 Under model (\ref{model}),   suppose that Assumptions \ref{ass:fullrank} and \ref{ass:subg fm} hold and
 $n > 
 C\cdot K$ and ${\rm r_e} (\se)>C \cdot n$ hold, for some   $C>0$. 
Then, with probability exceeding $1-c/n$, for some   $c>0$, 
\begin{equation}
    \|\a\|^2\lesssim \frac{1}{\lk(\sza)}\|\beta\|^2_{\sz} + \sep^2\frac{n\log n}{{\rm r_e}(\se)}.
\end{equation}
% \begin{align}
%     \|\a\|^2 &\lesssim \beta^\top (A^\top A)^{-1}\beta + \sep^2\frac{n\log n}{\tr(\se)}\\
% &\lesssim \frac{1}{\lk(\sza)}\|\beta\|^2_{\sz} + \sep^2\frac{n\log n}{{\rm r_e}(\se)}.
% \end{align}
\end{lemma}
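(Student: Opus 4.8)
The plan is to work on the event $\{\rank(\X)=n\}$, which holds with probability at least $1-c/n$ by Proposition \ref{in} since ${\rm r_e}(\se)>C\cdot n$. On this event $\X^{+}=\X^{\top}(\X\X^{\top})^{-1}$, so $\a=\X^{\top}(\X\X^{\top})^{-1}\y$, and by the standard minimum-norm property of the pseudo-inverse (Appendix \ref{sec:pseudo-inverse}),
\[
 \|\a\|^{2}=\y^{\top}(\X\X^{\top})^{-1}\y=\min\bigl\{\,\|z\|^{2}:z\in\R^{p},\ \X z=\y\,\bigr\}.
\]
Writing $\y=\Z\beta+\Eps$ and $\X=\Z A^{\top}+\E$, and using that the sum of a point feasible for $\X z=\Z\beta$ and one feasible for $\X z=\Eps$ is feasible for $\X z=\y$, together with $\|z_{1}+z_{2}\|^{2}\le 2\|z_{1}\|^{2}+2\|z_{2}\|^{2}$, one gets
\[
 \|\a\|^{2}\le 2\,(\Z\beta)^{\top}(\X\X^{\top})^{-1}(\Z\beta)+2\,\Eps^{\top}(\X\X^{\top})^{-1}\Eps ,
\]
and I would bound the two terms separately. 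For the noise term, bound crudely $\Eps^{\top}(\X\X^{\top})^{-1}\Eps\le\|\Eps\|^{2}/\sigma_{n}^{2}(\X)$; under Assumption \ref{ass:subg fm} sub-Gaussian concentration gives $\|\Eps\|^{2}\lesssim n\sep^{2}$, and Proposition \ref{in} gives $\sigma_{n}^{2}(\X)\gtrsim\tr(\se)=\|\se\|\,{\rm r_e}(\se)$, so this term is of order $\sep^{2}n/\tr(\se)$, which is no larger than the $\sep^{2}n\log n/{\rm r_e}(\se)$ in the statement (the logarithmic factor being harmless slack that also absorbs the high-probability steps).

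The signal term is the crux, because the naive estimate $(\Z\beta)^{\top}(\X\X^{\top})^{-1}(\Z\beta)\le\|\Z\beta\|^{2}/\sigma_{n}^{2}(\X)\lesssim n\bz/\tr(\se)$ exceeds the target $\bz/\lk(\sza)$ by a factor $n\xi/{\rm r_e}(\se)$ that can diverge, and in particular need not vanish. Instead I would use $(\Z\beta)^{\top}(\X\X^{\top})^{-1}(\Z\beta)=\min\{\|z\|^{2}:\X z=\Z\beta\}$ and exhibit a feasible point adapted to the factor structure, $\hat z=A^{+\top}\beta+z_{0}$, where $A^{+\top}\beta$ is exactly the noiseless best linear predictor (Lemma \ref{thm:ab norm bound e=0}) and $z_{0}=-\bigl(\E(I_{p}-AA^{+})\bigr)^{+}\E A^{+\top}\beta\in\ker(A^{\top})=\range(A)^{\perp}$. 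Using $A^{\top}A^{+\top}=I_{K}$, $A^{\top}z_{0}=0$, and $\E z_{0}=-\E A^{+\top}\beta$ (the last valid once $\E(I_{p}-AA^{+})$ has full row rank $n$), one checks $\X\hat z=\Z\beta$. By orthogonality of the two summands, $\|\hat z\|^{2}=\beta^{\top}(A^{\top}A)^{-1}\beta+\|z_{0}\|^{2}$. The first piece satisfies the purely linear-algebraic inequality $\beta^{\top}(A^{\top}A)^{-1}\beta\le\bz/\lk(\sza)$: indeed $\lk(\sza)(A^{\top}A)^{-1}\preceq\sz$ because $(A^{\top}A)^{1/2}\sz(A^{\top}A)^{1/2}$ has the same eigenvalues as the nonzero ones of $\sza$, hence smallest eigenvalue exactly $\lk(\sza)$. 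For the second piece, $\|z_{0}\|^{2}\le\|\E A^{+\top}\beta\|^{2}/\sigma_{n}^{2}\bigl(\E(I_{p}-AA^{+})\bigr)$, where $\|\E A^{+\top}\beta\|^{2}\lesssim n\|\se\|\,\beta^{\top}(A^{\top}A)^{-1}\beta$ by sub-Gaussian concentration of $\E$ at a fixed vector, and $\sigma_{n}^{2}\bigl(\E(I_{p}-AA^{+})\bigr)\gtrsim\tr(\se)$ by the argument of Proposition \ref{in} applied to the restriction of $\E$ to the deterministic $(p-K)$-dimensional subspace $\ker(A^{\top})$, whose effective covariance has trace at least $\tr(\se)-K\|\se\|\gtrsim\tr(\se)$ and effective rank $\gtrsim{\rm r_e}(\se)-K\gtrsim n$ (this is where ${\rm r_e}(\se)>C\cdot n$ and $n>C\cdot K$ enter). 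Hence $\|z_{0}\|^{2}\lesssim(n/{\rm r_e}(\se))\,\beta^{\top}(A^{\top}A)^{-1}\beta\le\beta^{\top}(A^{\top}A)^{-1}\beta$, so $(\Z\beta)^{\top}(\X\X^{\top})^{-1}(\Z\beta)\le\|\hat z\|^{2}\lesssim\bz/\lk(\sza)$.

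Intersecting the event of Proposition \ref{in}, the full-rank event for $\E(I_{p}-AA^{+})$, and the two concentration events for $\|\Eps\|^{2}$ and $\|\E A^{+\top}\beta\|^{2}$ --- each of probability at least $1-c/n$ --- and combining the two bounds gives $\|\a\|^{2}\lesssim\bz/\lk(\sza)+\sep^{2}n\log n/{\rm r_e}(\se)$, as claimed. The step I expect to demand the most care is the restricted smallest-singular-value bound $\sigma_{n}^{2}\bigl(\E(I_{p}-AA^{+})\bigr)\gtrsim\tr(\se)$: morally it is Proposition \ref{in} for a sub-Gaussian matrix of large effective rank, but one must control how restricting $\E$ to a $(p-K)$-dimensional subspace perturbs the trace and effective rank of its covariance. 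Everything else reduces to the variational identity, the elementary comparison $\beta^{\top}(A^{\top}A)^{-1}\beta\le\bz/\lk(\sza)$, and routine sub-Gaussian tail bounds.
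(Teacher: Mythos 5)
Your proof is correct, and it reaches the same two building blocks as the paper --- the identity $A^\top A^{+\top}=I_K$ to rewrite the signal part in terms of $A^{+\top}\beta$, and the bound $\beta^\top(A^\top A)^{-1}\beta\le \|\beta\|^2_{\sz}/\lk(\sza)$ --- but by a genuinely different route. The paper applies $\X^+$ directly to $\Z\beta=(\X-\E)A^{+\top}\beta$, giving $\a=\X^+\X A^{+\top}\beta-\X^+\E A^{+\top}\beta+\X^+\Eps$, and then only needs three facts: $\|\X^+\X\|\le 1$, $\|\E\|^2\lesssim\tr(\se)$ together with $\sn^2(\X)\gtrsim\tr(\se)$ from Proposition \ref{in}, and the Hanson--Wright-type trace bound $\tEps^\top\X^{+\top}\X^+\tEps\lesssim\log(n)\,\tr(\X^{+\top}\X^+)\lesssim n\log n/\tr(\se)$. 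You instead use the variational characterization of the min-norm interpolant and exhibit the feasible point $\hat z=A^{+\top}\beta+z_0$; this is elegant and your orthogonality and feasibility checks are right, but it forces you to control $\sigma_n^2\bigl(\E(I_p-AA^+)\bigr)$, an ingredient the paper never needs for this lemma. You correctly identify that it follows from the same computation as Proposition \ref{in} (Theorem \ref{thm:concentration spectrum trace} applied with the rank-$(p-K)$ projection $I_p-AA^+$ in place of $U_{(K+1):p}U_{(K+1):p}^\top$, using $\tr(\se(I_p-AA^+))\ge\tr(\se)-K\|\se\|\gtrsim\tr(\se)$), so there is no gap, just extra work. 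What your route buys is a slightly sharper treatment of both pieces: the correction $\|z_0\|^2$ picks up a factor $n/{\rm r_e}(\se)$ relative to $\|A^{+\top}\beta\|^2$, and your noise term $\|\Eps\|^2/\sn^2(\X)\lesssim\sep^2 n/\tr(\se)$ avoids the $\log n$; neither improvement changes the stated rate. One shared caveat: like the paper's own proof, you end with $\tr(\se)=\|\se\|\,{\rm r_e}(\se)$ in the denominator of the noise term, which matches the displayed bound $\sep^2 n\log n/{\rm r_e}(\se)$ only up to the implicit normalization $\|\se\|\gtrsim 1$; this discrepancy originates in the paper, not in your argument.
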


% We now move on to our main result, a bound on the prediction risk of $\a$. 
Despite the fact that $\|\a\|\to 0$ under the conditions stated, we now show that $\a$ can outperform the null predictor $\0$. If $\lk(\sza)$ grows faster than $\tr(\se)/n$ and $K/n\to0$, then Lemma \ref{thm:SNR cond} states that ${\rm r_e} (\sx)/n   $ remains bounded, and Theorem \ref{thm:alpha null risk} allows for the possibility that  $\a$ has asymptotically lower risk than $\0$.
Theorem \ref{thm:E=0 risk bound} above showed that $R(\a)- \sep^2$ can in fact approach $0$ under certain conditions when $E=0$.
The following result demonstrates that this can continue to hold even when $E\neq 0$.

\begin{thm}[Main result: Risk bound for factor regression]\label{thm:upper bound}
Under model (\ref{model}),   suppose that Assumptions \ref{ass:fullrank} and \ref{ass:subg fm} hold and
 $n > 
 C\cdot K$ and ${\rm r_e} (\se)>C \cdot n$ hold, for some   $C>0$. 
Then, with probability exceeding $1-c/n$, for some   $c>0$, 
%Under model \ref{model}   with Assumptions \ref{ass:fullrank} and \ref{ass:subg fm}, there exists an absolute constant $c_1$ such that if $n > c_1 K$ and ${\rm r_e} (\se)>c_1n$, then with probability at least $1 - c_2/n$, %$\a$ will interpolate, i.e. $\X \a = \y$, and
\begin{eqnarray}\label{bound:main}
%(i) \  
R(\a) - R(\alpha^*) \, \le\,  R(\a)- \sep^2 
&\lesssim& \frac{\|\beta\|^2_{\sz}}{\xi}\cdot\frac{{\rm r_e} (\se)}{n} + \sep^2\frac{n\log n}{{\rm r_e} (\se)} + \sep^2\frac{K\log n}{n}.
\end{eqnarray}
%,\]
%and also 
%\[ (ii) \  R(\a) - R(\alpha^*) \lesssim %\frac{\|\beta\|^2_{\sz}}{\xi}\cdot\frac{{\rm r_e} (\se)}{n} + \sep^2\frac{n\log %n}{{\rm r_e} (\se)} + \sep^2\frac{K\log n}{n},
%\] 
Recall    $\xi \coloneqq \lk(\sza)/\|\se\|$ is  the signal-to-noise ratio.
\end{thm}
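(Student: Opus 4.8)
The plan is to decompose the prediction $\wh y_x = X^\top \a$ in a way that makes the factor structure visible. The key algebraic fact to establish first is that $\a = \X^+ \y$ can be written, on the event $\{\rank(\X) = n\}$ guaranteed by Proposition \ref{in}, in terms of the latent design. Writing $\X = \Z A^\top + \E$ and $\y = \Z\beta + \Eps$, I would aim to show that $X^\top \a$ is close to $Z^\top \bb$ for an appropriate $\bb \in \R^K$ obtained from a regression in the latent space — essentially comparing the realizable predictor to the oracle predictor $\wh y_z = Z^\top \wh\beta$ that appeared in Theorem \ref{thm:E=0 risk bound}. Concretely, I expect to write $R(\a) - \sep^2 = \EE_{X,y}[(X^\top\a - y)^2] - \sep^2$ and, using $y = Z^\top\beta + \eps$ with $\eps \perp (X,Z)$, reduce this to $\EE_{Z,E}[(X^\top \a - Z^\top\beta)^2] = \|\a\|^2_{\se} + \EE_Z[(A^\top\a - \beta)^\top Z]^2$, i.e.\ a term from the noise $E$ in the new covariate and a term measuring how well $A^\top \a$ approximates $\beta$ in $\sz$-norm.

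Next I would bound these two pieces separately, conditionally on the training data, using concentration. For the first piece, $\|\a\|^2_{\se} \le \|\se\| \cdot \|\a\|^2$, and Lemma \ref{thm:gls norm bound} already gives $\|\a\|^2 \lesssim \|\beta\|^2_{\sz}/\lk(\sza) + \sep^2 n\log n/{\rm r_e}(\se)$; multiplying by $\|\se\|$ and recalling $\xi = \lk(\sza)/\|\se\|$ produces exactly the first term $\|\beta\|^2_{\sz}\,{\rm r_e}(\se)/(n\xi)$ (since $\|\se\| = \tr(\se)/{\rm r_e}(\se)$, and $\|\se\|/\lk(\sza) = 1/\xi$, with $\tr(\se)/\lk(\sza) = {\rm r_e}(\se)/\xi$) plus $\sep^2 n\log n/{\rm r_e}(\se)$ — wait, more carefully, $\|\se\|\cdot\sep^2 n\log n/{\rm r_e}(\se)$, so I will need $\|\se\|$ bounded, or rather absorb it; the stated bound has $\sep^2 n\log n/{\rm r_e}(\se)$, so the accounting must use $\|\se\| = \tr(\se)/{\rm r_e}(\se)$ and track constants via the blanket normalization. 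For the second piece, $\EE_Z[((A^\top\a - \beta)^\top Z)^2] = \|A^\top\a - \beta\|^2_{\sz}$, I would use the interpolation identity $\X\a = \y$, i.e.\ $(\Z A^\top + \E)\a = \Z\beta + \Eps$, to solve for $A^\top \a$: projecting appropriately gives $A^\top\a - \beta = (\text{small in } \E) + \Z^+\Eps$-type terms, and the $\Z^+\Eps$ contribution yields the parametric term $\sep^2 K\log n/n$ via standard bounds on $\|(\Z^\top\Z)^{-1}\|$ (valid since $n > CK$) and sub-Gaussian concentration of $\Z^\top\Eps$.

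The main obstacle I anticipate is controlling the cross-interaction between $\E$ and $\Z$ in inverting the interpolation equation — that is, showing that the "error" in the identity $X^\top\a \approx Z^\top\wh\beta$ contributed by $\E\a$ and by the off-diagonal blocks of $\X^\top\X$ is genuinely lower order. This requires uniform control of quantities like $\|\E\Z(\Z^\top\Z)^{-1}\|$ and $\|\E^+\|$ (using $\sigma_n^2(\X) \gtrsim \tr(\se)$ from Proposition \ref{in}), combined with the Davis–Kahan-type fact that the top-$K$ singular subspace of $\X$ is close to $\range(A)$ when the signal-to-noise ratio $\xi$ is large. I would handle this by a block decomposition of $\X$ along $\range(A)$ and its orthogonal complement, bounding each block's operator norm via the sub-Gaussian concentration in Assumption \ref{ass:subg fm} and matrix Bernstein / Bai–Yin-type estimates, then assembling the pieces; the $\log n$ factors and the $1-c/n$ probability come from taking these bounds to hold simultaneously. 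Finally, the left inequality $R(\a) - R(\ab) \le R(\a) - \sep^2$ is immediate from part 1 of Lemma \ref{thm:bench compare}, which gives $R(\ab) \ge \sep^2$.
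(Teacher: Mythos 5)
Your opening decomposition is exactly the paper's: conditioning on the training data and using independence of $Z$, $E$, $\eps$ gives $R(\a)-\sep^2=\|\a\|^2_{\se}+\|A^\top\a-\beta\|^2_{\sz}$, and the left inequality does follow from Lemma \ref{thm:bench compare}. Your handling of $\|\a\|^2_{\se}$ is also workable: the $\|\se\|$ bookkeeping you worry about resolves correctly if you use the form $\sn^2(\X)\gtrsim\tr(\se)$ from Proposition \ref{in} directly (so that $\|\se\|\cdot n/\sn^2(\X)\lesssim n\|\se\|/\tr(\se)=n/{\rm r_e}(\se)$), rather than the packaged statement of Lemma \ref{thm:gls norm bound}.

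The gap is in the second piece, $\|A^\top\a-\beta\|^2_{\sz}$, where you propose Davis--Kahan alignment of the top-$K$ singular subspace of $\X$ with $\range(A)$ plus a block decomposition. The paper needs none of this: on the event $\{\X\X^+=I_n,\ \Z^+\Z=I_K\}$ one has the exact identity $\Z^+-A^\top\X^+=(\Z^+\X-A^\top)\X^+=\Z^+\E\X^+$, so that $A^\top\a-\beta=-\Z^+\E\X^+\Z\beta+A^\top\X^+\Eps$, and everything is then controlled by the operator-norm ratios $\|\E\|^2/\sk^2(\tZ)\lesssim\tr(\se)/n$ and $\|\E\|^2/\sn^2(\X)\lesssim1$. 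Two consequences of missing this. First, your route is not merely heavier: a subspace-perturbation argument needs $\lk(\sza)$ to dominate the noise level, i.e.\ a lower bound on $\xi$, which the theorem does not assume (the inequality (\ref{bound:main}) holds for all $\xi$; it is simply uninformative when $\xi$ is small), so as sketched you would be proving a weaker statement. Second, you misattribute the dominant bias: you assign the entire first term to $\|\a\|^2_{\se}$, but that piece only contributes $\|\beta\|^2_{\sz}/\xi$, whereas the term you dismiss as ``small in $\E$'' --- namely $\|\sz^{1/2}\Z^+\E\X^+\Z\beta\|^2\lesssim(\tr(\se)/n)\,\|\beta\|^2_{\sz}/\lk(\sza)=\|\beta\|^2_{\sz}\,{\rm r_e}(\se)/(n\xi)$ --- is precisely where the leading bias comes from. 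Without the identity above, your plan as written does not produce the stated bound under the stated hypotheses.
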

% \begin{remark}
% % {\color{red} Do we still want a remark here, given the text directly above the theorem? I could just quickly explain that $R(\a)/R(\0)\to 0$. }{ \color{green} No, I agree.}
%
% \end{remark}
% \begin{proof}
% The first inequality in (\ref{bound:main}) is an immediate consequence of the second part of Lemma 
% \ref{thm:bench compare}.
% We prove the second inequality in (\ref{bound:main}) %Theorem \ref{thm:upper bound} 
% in Section \ref{proof:upper bound}. 
% %See section (\ref{proof:upper bound}).
% \end{proof}
%{\color{green} Keep only one display above, and write target risk inequality. The whole discussion that follows needs sharper editing.} \\
%
%We note that the conditions $n > c_1K$ and ${\rm r_e} (\se) > c_1n$ ensure that we are in the interpolating regime, and t
%The result holds for any value  $\xi$. However,  small values of the excess risk will only occur when $\xi \gg {\rm r_e} (\se)/n$, and thus when the effective rank of $\sx$ is smaller than $n$, as needed. 
%
\begin{remark}
%  In the scenario
% \[ \lim_{n\to \i} \left( \frac{K}{n}+  \frac{n\log n}{ {\rm r_e}(\se)} \right)=0,\]
Suppose $n \gg  \sep^2K\log n$ and ${\rm r_e}(\se) \gg \sep^2n\log n$. We then find that $\a$ interpolates by Proposition \ref{in}, and the behavior of $\a$ is determined by
the eigenvalue $\lk(\sza)$ or, equivalently, the signal-to-noise ratio $\xi=\lk(\sza))/\|\se\|$.
\begin{itemize}
    \item[(a)] If $\lk(\sza)\gg \tr(\se)/n$, then Lemma \ref{thm:SNR cond} implies that $R(\a)$ need no longer approach the trivial null risk $R(\0)$.
    \item[(b)] If $\lk(\sza)\gg \|\beta\|^2_{\sz}$, then Lemma \ref{thm:gls norm bound} implies $\|\a\|\to0$. 
\item[(c)] 
If $\lk(\sza)\gg \|\beta\|^2_{\sz}\tr(\se)/n$, then $R(\a)-\sep^2\to0$. Indeed, this assumption, together with $n \gg  \sep^2K\log n$ and ${\rm r_e}(\se) \gg \sep^2n\log n$, ensures that the right-hand side of the inequality (\ref{bound:main}) in Theorem \ref{thm:upper bound} is asymptotically negligible. \\
% When $\|\beta\|_{\sz}^2$ and $\tr(\se)/n$ are bounded below by constants, then, condition (c) implies both conditions (a) and (b), and their conclusions.  
\end{itemize}
\end{remark}

The first inequality in (\ref{bound:main}) is an immediate consequence of the first part of Lemma \ref{thm:bench compare} above.
We now discuss the three terms appearing in the upper bound (\ref{bound:main}) of Theorem \ref{thm:upper bound}. A comparison with  the risk bound in Theorem \ref{thm:E=0 risk bound} above, where the feature noise $E$ is equal to zero, reveals that the term $\sep^2K\log(n)/n$ in (\ref{bound:main})  
is equal to the risk of the oracle predictor $\wh y_z$ up to the multiplicative $\log n$ factor, and is small when $K \ll n$.
% , which can be viewed in the context of a bias-variance decomposition. 
The first two terms can be viewed as bias and variance components, respectively, that capture the impact of non-zero $\se$. 
The first term (bias) is proportional to 
the effective rank ${\rm r_e}(\se)$, while the second   term (variance) is inversely proportional to ${\rm r_e}(\se)$. As such, the variance term is implicitly regularized by the feature noise $E$, while for the bias to be small, we need the signal-to-noise ratio $\xi$ to be sufficiently large.
For example, suppose that the eigenvalues of $\sz$ and $\se$ are constant, that is, 
$c_1 \le \lk(\sz) \le \| \sz\| \le C_1$ 
%{\color{red} Here it's uncomfortable: we introduced $\sz$, and now we want to get rid of it when we actually check the assumptions... Anyway,  we should leave it like this :)}
% for some universal $c>0$, since it is reasonable to assume that the latent factors have a non-degenerate covariance matrix, 
and $c_2<\lp(\se)\le \|\se\| <  C_2$, for some $c_1,c_2, C_1, C_2\in (0,\infty)$, both standard assumptions in factor models. Then, %$\rm{r_e} (\se) \asymp p$ and  
\begin{equation}\label{one} 
{\rm{r_e}} (\se) \asymp p, \hspace{0.5cm}\text{and}\hspace{0.5cm}
\xi= \frac{\lk(\sza)}{\|\se\|}  \gtrsim \lambda_K(A^\top A).
\end{equation} 
Provided $\beta$ has uniformly bounded entries $|\beta_i|\le C$, % and $\|\sz\|\le  C$  so that 
$\|\beta\|^2_{\sz}\le C_1\cdot C^2 \cdot K $, % \lesssim K$, 
and the bias term in (\ref{bound:main}) can be bounded as 
\begin{equation}
    B_Z \coloneqq \frac{\|\beta\|^2_{\sz}}{\xi}\cdot \frac{{\rm r_e}(\se)}{n}\lesssim \frac{Kp}{n\cdot \lambda_K(A^\top A)};
\end{equation} 
it thus approaches zero whenever  \begin{equation}\label{bias} \lambda_K(A^\top A) \gg \frac{Kp}{n}. \end{equation} 
We mention that the examples of $A$ in Remark \ref{rem:een} of Section \ref{sec:effrank} all imply (\ref{bias}), provided $K\ll n$ in cases 1 and 2 (since there $\lambda_K(A^\top A) \gtrsim p$), and $K^2 \ll n$ in case 3 (since there $\lambda_K(A^\top A) \gtrsim p/K $).

We summarize this discussion in Corollary \ref{thm:risk bound purevar} below.
%,  on which Table \ref{table:results summary} in the Introduction is based. 

\begin{cor}\label{thm:risk bound purevar}
Under the same conditions as in Theorem \ref{thm:upper bound}, suppose, in particular, that $\lambda_K(\Sigma_Z)$ and  $\|\Sigma_E\|$ are constant, ${\rm r_e} (\Sigma_E) \asymp p$, and $\|\beta\|^2_{\sz} \lesssim K$. 
Then, with probability at least $1 - c/n$, for some absolute constant $c>0$,
\begin{equation}\label{cluster}R(\a) - R(\alpha^*) \leq R(\a) - \sep^2 \lesssim \frac{K}{\lambda_K(A^\top A)}\times \frac{p}{n} + \sep^2  \left( \frac{ n}{p} +  \frac{K}{n} \right)\log n .\end{equation} 
In particular, if $\lambda_K(A^\top A) \gtrsim p/K$, and  
 with probability at least $1-c/n$, for some absolute constant $c>0$, 
\begin{equation}\label{cluster}R(\a) - R(\alpha^*) \leq R(\a) - \sep^2 \lesssim \frac{K^2}{n} + \sep^2  \left( \frac{ n}{p} +  \frac{K}{n} \right)\log n.\end{equation} 
\end{cor}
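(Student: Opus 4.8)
The plan is to show that this corollary is a direct specialization of Theorem \ref{thm:upper bound}, obtained simply by substituting the stated simplifying assumptions into the three-term bound (\ref{bound:main}) and bounding each piece. No new probabilistic argument is needed; the high-probability event and its $1-c/n$ bound are inherited verbatim from Theorem \ref{thm:upper bound}, since all hypotheses of that theorem (Assumptions \ref{ass:fullrank} and \ref{ass:subg fm}, $n>C\cdot K$, ${\rm r_e}(\se)>C\cdot n$) are assumed here.

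First I would invoke Theorem \ref{thm:upper bound} to get, on the relevant event,
\[
R(\a)-R(\alpha^*)\le R(\a)-\sep^2\lesssim \frac{\|\beta\|^2_{\sz}}{\xi}\cdot\frac{{\rm r_e}(\se)}{n}+\sep^2\frac{n\log n}{{\rm r_e}(\se)}+\sep^2\frac{K\log n}{n}.
\]
Then I would handle the three terms. For the second and third terms I use ${\rm r_e}(\se)\asymp p$ directly, which turns $\sep^2 n\log n/{\rm r_e}(\se)$ into $\sep^2 (n/p)\log n$ and leaves $\sep^2(K/n)\log n$ untouched; together these give the $\sep^2\big((n/p)+(K/n)\big)\log n$ variance contribution. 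For the first (bias) term, I recall from (\ref{one}) in the surrounding discussion that under constant eigenvalues of $\sz$ and $\se$ one has $\xi=\lk(\sza)/\|\se\|\gtrsim \lambda_K(A^\top A)$ (this follows from $\lk(\sza)=\lambda_K(A\sz A^\top)\ge \lk(\sz)\,\lambda_K(A^\top A)$ together with $\|\se\|\le C_2$), and I combine this with ${\rm r_e}(\se)\asymp p$ and the hypothesis $\|\beta\|^2_{\sz}\lesssim K$ to obtain
\[
\frac{\|\beta\|^2_{\sz}}{\xi}\cdot\frac{{\rm r_e}(\se)}{n}\lesssim \frac{K}{\lambda_K(A^\top A)}\cdot\frac{p}{n}.
\]
Summing the three bounds yields (\ref{cluster}), the first displayed inequality of the corollary.

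For the second displayed inequality, I would simply substitute the additional hypothesis $\lambda_K(A^\top A)\gtrsim p/K$ into the bias term just derived, giving $\dfrac{K}{\lambda_K(A^\top A)}\cdot\dfrac{p}{n}\lesssim \dfrac{K}{p/K}\cdot\dfrac{p}{n}=\dfrac{K^2}{n}$, and leave the variance term as is. This produces $R(\a)-\sep^2\lesssim K^2/n+\sep^2\big((n/p)+(K/n)\big)\log n$ on the same event, completing the proof.

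There is essentially no hard step here: the corollary is bookkeeping on top of Theorem \ref{thm:upper bound}. The only point requiring a moment's care is the inequality $\xi\gtrsim \lambda_K(A^\top A)$, i.e.\ lower-bounding $\lambda_K(A\sz A^\top)$ by $\lk(\sz)\lambda_K(A^\top A)$; this uses that $A\sz A^\top$ and $\sz^{1/2}A^\top A\sz^{1/2}$ have the same nonzero eigenvalues and a standard eigenvalue-monotonicity bound, and it has already been asserted in (\ref{one}) of the main text, so I would just cite that. If one wanted to be fully self-contained one could note $\lambda_K(A\sz A^\top)=\lambda_K(\sz^{1/2}A^\top A\sz^{1/2})\ge \lambda_{\min}(\sz)\,\lambda_K(A^\top A)\ge c_1\lambda_K(A^\top A)$.
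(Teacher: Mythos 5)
Your proposal is correct and follows exactly the route the paper takes: the paper "proves" this corollary via the discussion immediately preceding it, which substitutes $\xi \gtrsim \lambda_K(A^\top A)$ (from the display labelled (\ref{one})), ${\rm r_e}(\se)\asymp p$, and $\|\beta\|^2_{\sz}\lesssim K$ into the three-term bound of Theorem \ref{thm:upper bound}, and then plugs in $\lambda_K(A^\top A)\gtrsim p/K$ for the second display. Your added justification $\lambda_K(A\sz A^\top)=\lambda_K(\sz^{1/2}A^\top A\sz^{1/2})\ge \lambda_{\min}(\sz)\lambda_K(A^\top A)$ is the correct way to make the cited inequality self-contained.
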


  Figure \ref{fig:double descent}   illustrates the  risk behavior proved in Theorem \ref{thm:upper bound}. Note the descent towards zero in the regime $\gamma\coloneqq p/n>1$. For completeness, we  also provide a bound on the risk  $R(\a)$ for the low-dimensional case $p \ll n$,  under model (\ref{model}), in Appendix \ref{proof:LS}.
%%note that we have not only a double ?? descent of the risk, but also a descent to {zero}, in the interpolation regime.

\begin{figure}[h]
    \centering
    \includegraphics[width=16cm]{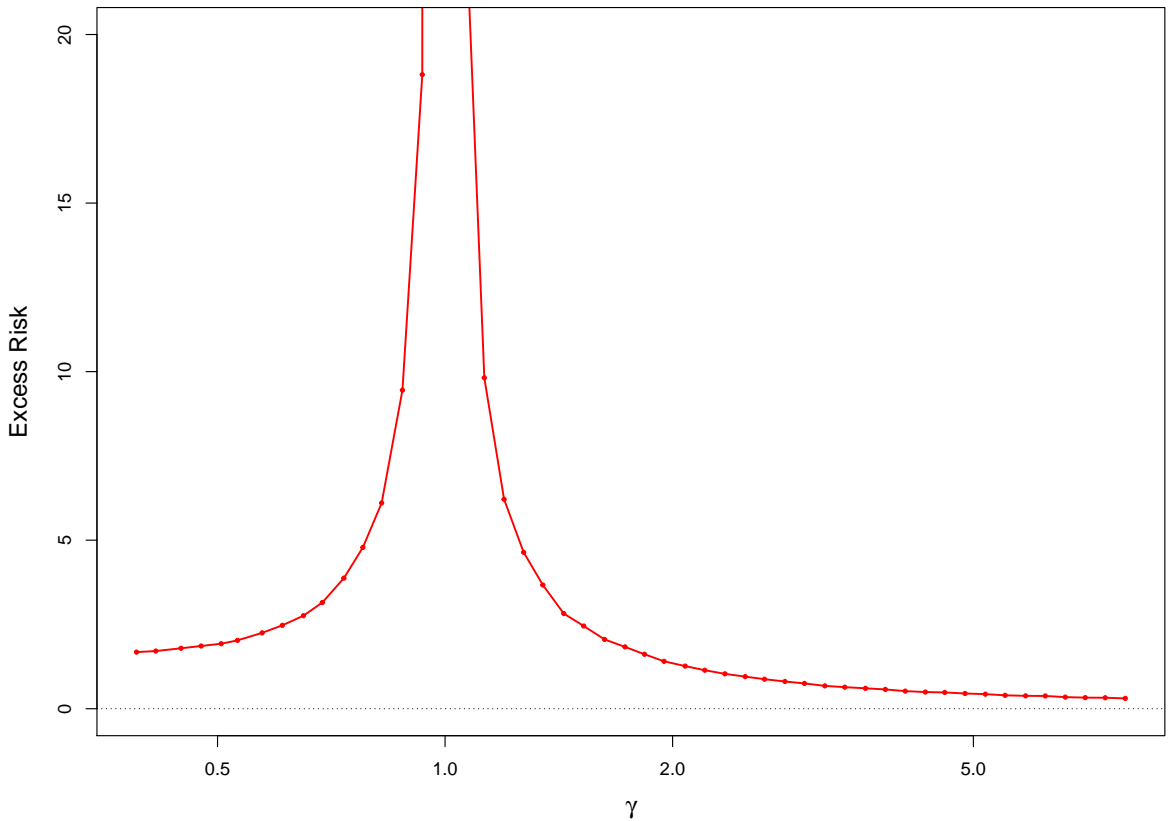}
    \caption{\small Excess prediction risk $R(\a) - \sep^2$ of the minimum-norm predictor under the factor regression model as a function of $\gamma = p/n$. %The dotted line denotes zero excess risk. 
    Here $K$ increases linearly from $16$ to $64$, $n=\lfloor K^{1.5}\rfloor$ and thus increases from $64$ to $512$, and $p$ increases from $33$ to $4066$.
    Further, $\se = I_p$, $\sz = I_K$, $\beta = (1 ,\ldots,1)^\top$, and $A = \sqrt{p}\cdot V_K$, where $V_K$ is generated by taking the first $K$ rows of a randomly generated $p\times p$ orthogonal matrix $V$.}
    \label{fig:double descent}
\end{figure}

\subsection{Comparison to existing results}%finite sample bounds}
\label{sec:compare}

The recent paper \cite{bartlett2019} 
gives a bias-variance type bound  on the excess prediction risk
of the minimum-norm predictor $\wh y_x = X^\top\a$ considered in this work.
In contrast to our study, \cite{bartlett2019} does not consider model (\ref{model}), and in fact assumes $\EE[y|X] = X^\top\theta$ for some $\theta\in \R^p$, which is typically not satisfied under (\ref{model}) when $(X,y)$ are sub-Gaussian, but not Gaussian. 

\begin{table}[h]
\centering
    \begin{tabular}{|c|c|c|c|}
    \hline
    Regime    &  Bias in Theorem \ref{thm:upper bound}  & Bias in Theorem 4 of \cite{bartlett2019} & Common variance\\
    \hline
    $p \ge n\cdot \xi$    & \rule{0pt}{0.45cm}$\|\beta\|^2_{\sz} \cdot p / (n\cdot {\xi}  )$ \rule[-0.3cm]{0pt}{0.3cm}& \rule{0pt}{0.45cm} $\|\beta\|^2_{\sz} \cdot p / (n\cdot {\xi}  )$\rule[-0.3cm]{0pt}{0.3cm} & 
    \multirow{4}{*}{\rule{0pt}{2cm}$\sep^2\log n \left\{  ({n}/{p}) + ({K}/{n}) \right\} $}
     \\
    \cline{1-3}
    $p\ll n\cdot \xi$ &\rule{0pt}{0.6cm} $\|\beta\|^2_{\sz} \cdot p / (n\cdot {\xi}  )$ \rule[-0.3cm]{0pt}{0.3cm}&\rule{0pt}{0.45cm} $\|\beta\|^2_{\sz} \cdot \sqrt{p / (n\cdot {\xi}  )}$\rule[-0.5cm]{0pt}{0.5cm}&\\
    \cline{1-3}
    $\xi \approx p$,\ $\|\beta\|^2_{\sz}\approx K$ &\rule{0pt}{0.6cm} $K/n$ \rule[-0.3cm]{0pt}{0.3cm}&\rule{0pt}{0.45cm} $K/\sqrt{n}$\rule[-0.5cm]{0pt}{0.5cm}&\\
    \cline{1-3}
    \makecell{$\xi \approx p$,\ $\|\beta\|^2_{\sz}\approx K$,\\ $K \approx n^{3/4}$ }&\rule{0pt}{0.8cm} $n^{-1/4}$ \rule[-0.3cm]{0pt}{0.3cm}&\rule{0pt}{0.45cm} $n^{1/4}$\rule[-0.5cm]{0pt}{0.5cm}&\\
    \hline
    \end{tabular}
    \caption{Comparison of risk bounds for Gaussian data.
    % In the regime of the third row, when $K\approx n^{3/4}$, for example, our bound in Theorem \ref{thm:upper bound} converges to zero, but 
    }\label{table:bart}
\end{table}

% \begin{table}[ht]
% \caption{Multi-row table}
% \begin{center}
% \begin{tabular}{|c|c|}
%     \hline
%     X&\multirow{2}{*}{Multirow}\\
%     \cline{1-1}
%     X&\\
%     \hline
% \end{tabular}
% \end{center}
% \label{tab:multicol}
% \end{table}

%
% This assumption is satisfied, however, whenever the data are jointly Gaussian; we thus offer a comparison in Table \ref{table:bart} between the respective bounds on the bias and variance terms corresponding to our 
%  Theorem \ref{thm:upper bound} and Theorem 4 of \cite{bartlett2019}, respectively, under the common case when $(X,y)$ are Gaussian and model (\ref{model}) holds.

When the data are jointly Gaussian this assumption is, however, satisfied under model (\ref{model}). For this common case, Table \ref{table:bart} compares the respective bounds on the bias and variance terms corresponding to our 
Theorem \ref{thm:upper bound} and Theorem 4 of \cite{bartlett2019}, respectively.  Again, we emphasize that the results from \cite{bartlett2019} do not hold in general for our modeling setup, but can be used to obtain the bounds in Table \ref{table:bart} in the Gaussian case. 
% For the common case when model (\ref{model}) holds and the data is jointly Gaussian,  Table  \ref{table:bart}   compares the respective bounds on the bias and variance terms corresponding to our 
%  Theorem \ref{thm:upper bound} and Theorem 4 of \cite{bartlett2019}, respectively. 
% {\tiny {\color{red} Delete: Do the entries in this Table correspond to $\|\sx\| = 1$ in \cite{bartlett2019} and  $\|\sx\| $ unbounded, for us ?  If so, say it !!!} }
  The entries in the second column of Table \ref{table:bart} correspond to  the bias in \cite{bartlett2019} under  model (\ref{model}), simplified in this table for ease of comparison\footnote{
For simplicity, we assume for this comparison that the matrices  $\sx$ and $\se$ are invertible and  that the condition numbers $\ke $ and $\kaz$ are bounded above by an absolute constant. Consequently, the effective rank ${\rm r_e} (\se)$ satisfies $c\cdot p \le {\rm r_e} (\se)\le  p $, for some $c\in (0,1)$.
}. 

In the setting of this comparison, the variance terms in our Theorem \ref{thm:upper bound} and the bound in \cite{bartlett2019} have the same rate, which we display in the third column of Table \ref{table:bart}. From the first row of Table \ref{table:bart} we see that when $p \geq n\cdot \xi $, the bias terms match as well. However,  this is not an interesting regime, as $p\ll n\cdot \xi$  is a necessary condition for either bound to converge to zero (assuming $\|\beta\|^2_{\sz}$ is bounded below). In this case, the second row of Table \ref{table:bart} shows that the bias in \cite{bartlett2019} becomes $\|\beta\|^2_{\sz}\sqrt{p /(n\cdot \xi)}$, which is larger than our bias bound in Theorem \ref{thm:upper bound} by a factor of  $\sqrt{n\cdot \xi / p }$.
From the second row we see that indeed, the upper bound on the excess risk in \cite{bartlett2019} can diverge while our bound in Theorem \ref{thm:upper bound} vanishes. For instance, if $\beta$ is a non-sparse vector in $\R^K$ with $\|\beta\|^2_{\sz} \approx K$, this phenomenon occurs if the signal-to-noise ratio $\xi$ lies in the range $ K p/ {n} \lesssim \xi \lesssim K^2 {p}/ {n}$. This illustrates that the general bound provided in \cite{bartlett2019} is not  always tight. 

The third row of Table \ref{table:bart} compares the bias rates in the simplified case when $\|\beta\|^2_{\sz}\approx K$ and $\xi\approx p$. The fourth row gives the rates under the further assumption that $K \approx n^{3/4}$, a concrete example of when our rate converges and that of \cite{bartlett2019} diverges.
Further details and discussion on the comparison of these two results are deferred to Appendix \ref{sec:finite sample}.\\

A latent factor regression model similar to (\ref{model}) has also been studied in  \cite[Section 7]{mei2019generalization} for the ridge regression estimator that minimizes the fit $\| \y - \X a \|^2+ \lambda \| a\|^2$ for any $\lambda>0$ (strict). Their model is a particular case of our model  (\ref{model}),  with $\se = \sigma_E^2 I_p$, $\sz = \sigma_Z^2 I_K$, up to an offset on $X$ so that in their case, $|\EE[X]|>0$.  Clearly, our estimator $\a$ can be viewed as the limiting case $\lambda=0$ of ridge regression.
Our results are difficult to compare directly since the  analysis in \cite{mei2019generalization} is asymptotic with $p/K\to \psi_1$ and $ n/K\to \psi_2$ for two absolute constants $\psi_1,\psi_2\in (0,\i)$.
Nevertheless, \cite[Theorem 7 and Figure 9]{mei2019generalization} also show that the excess risk $R(\wh \alpha)-\sigma_\eps^2$  is small in the large $\psi_1/\psi_2$ (corresponding to a  large $p/n$) regime, in line with our assessment.

\subsection{Comparison to other predictors}\label{sec:pcr}

In Lemma \ref{thm:pcr e=0} of Section \ref{sec:noiseless} above we showed that in the case of noiseless features, when $\se=0$, the regression vector $\wh\alpha_{\rm PCR}$ obtained by PCR is exactly equal to the GLS regression vector $\a$ on the event $\{\rank(\Z)=K\}$, which holds with probability at least $1-c/n$ for some universal constant $c>0$. In this section we show that when $\se \neq 0$, the minimum-norm estimator $\a$ is competitive even with the stylized %the population
version   $\wt \alpha_{\rm PCR} \coloneqq U_K(\X U_K)^+ \y$ of PCR under the factor regression model setting (\ref{model}) and in the high-dimensional regime $p\gg n$. This is a toy estimator as it uses the unknown dimension $K$ and  unknown matrix  $U_K$, composed of the first $K$ eigenvectors of the population covariance matrix $\sx$, in place of estimates $\wh K$ and $\wh U_{\wh K}$, respectively. We provide a simple proof, found in Appendix \ref{proof:TPCR}, of the following risk bound for $R(\wt \alpha_{\rm PCR})$. For a detailed comparison of PCR and the GLS, see \cite{bing2020prediction}, which analyzes the PCR predictor with the empirical matrix $\wh U_{\wh K}$, for a new, data  adaptive,  estimator $\wh K$ of $K$.

\begin{thm}\label{TPCR}
Under model (\ref{model}), suppose that $(X,y)$ are jointly Gaussian and that Assumption \ref{ass:fullrank} holds. % and that $\lp(\se)>0$. 
Then, if $n > C\cdot K\log n$ for some $C>0$ large enough, with probability at least $1-c/n$,
\begin{equation}\label{PCR} R(\wt \alpha_{\rm PCR}) - \sep^2 \lesssim \| \se\| \cdot \|{\ab}\|^2\frac{p}{n} + R({\ab}) \frac{ K \log (n) }{n}
\end{equation}
In particular, if 
$\se=0$, we obtain 
\begin{equation}\label{PCR2} R(\wt \alpha_{\rm PCR}) - \sep^2 \lesssim \sigma_\eps^2 \frac{ K \log (n) }{n}
\end{equation}
while, if $\lp(\se) > 0$,
%if the condition number  $\kappa(\se): = \lambda_1(\se)/ \lambda_p(\se)$  of the matrix  $\se$ is finite,
\begin{equation}\label{PCR3} R(\wt \alpha_{\rm PCR})- \sep^2 \lesssim \ke \frac{\|\beta\|^2_{\sz}}{\xi}\frac{p}{n} +\sep^2 \frac{K \log n }{n},\end{equation}
where $\kappa(\se): = \lambda_1(\se)/ \lambda_p(\se)$ is the condition number of the matrix  $\se$.
\end{thm}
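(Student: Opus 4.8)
\textbf{Proof plan for Theorem \ref{TPCR}.}
The plan is to analyze the stylized PCR predictor $\wt\alpha_{\rm PCR} = U_K(\X U_K)^+\y$ by first reducing the prediction problem to a regression problem in the $K$-dimensional coordinate system defined by $U_K$. Write $\tX \coloneqq \X U_K \in \R^{n\times K}$ for the projected design and note that for a new feature vector $X$, the PCR prediction is $X^\top \wt\alpha_{\rm PCR} = (U_K^\top X)^\top \tX^+\y$. I would decompose the excess risk $R(\wt\alpha_{\rm PCR}) - \sep^2$ into (i) an approximation (bias) term measuring how much is lost by projecting onto $U_K$ rather than using the optimal predictor $\ab$, and (ii) an estimation (variance) term coming from fitting $K$ coefficients from $n$ samples. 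For (i), the key population fact is that $\ab = \sx^{-1}\sxy$ with $\sxy = A\sz\beta$, and the component of $\ab$ (equivalently of $\sxy$) lying outside $\range(U_K)$ is controlled by the tail eigenvalues of $\sx$; by Lemma \ref{thm:spectrum} these are bounded by $\|\se\|$, and the associated contribution to the risk is of order $\|\se\|\cdot\|\ab\|^2\cdot(p/n)$ after accounting for how the projection interacts with the sample-level fit. For (ii), since $(X,y)$ is Gaussian, conditionally on $\Z$ and the relevant projections the residuals behave like a well-specified $K$-dimensional linear regression, and a standard in-sample-to-out-of-sample argument (using $n > C K\log n$ so that $\tX^\top\tX/n$ concentrates around $U_K^\top\sx U_K$, invoking Lemma \ref{thm:spectrum} part 3 to lower-bound its eigenvalues by $c_2$) yields an estimation error of order $R(\ab)\cdot (K\log n)/n$.

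Concretely, the steps in order: (1) express $R(\wt\alpha_{\rm PCR}) - \sep^2$ using the bias--variance decomposition relative to the best predictor $U_K v^*$ within $\range(U_K)$, where $v^* = \arg\min_v R(U_K v)$; (2) bound the irreducible bias $R(U_K v^*) - R(\ab)$ by projecting $\sxy$: this equals $\|\ab\|^2_{\sx}$ restricted to the orthogonal complement of $\range(U_K)$, which is at most $\|\se\|$ times the squared norm of the tail component of $\ab$, and then relate this to $\|\se\|\cdot\|\ab\|^2$ (possibly with the condition-number factor $\ke$ entering when bounding how $\sx^{-1}$ redistributes mass across the spectrum) times a dimensional factor $p/n$ — here I would use that $\ab$ has most of its $\sx$-energy but potentially spread $\ell_2$-mass, combined with $\tr$-type bounds; (3) control the variance term by a chi-squared / trace argument: conditional on the design, the estimation error is $\sigma^2_{\eta}\,\tr\big((\tX^\top\tX)^{-1}U_K^\top\sx U_K\big)$ in expectation, with $\sigma^2_\eta = R(\ab)$ in the Gaussian case (since $\eta = y - X^\top\ab \perp X$ by Lemma \ref{thm:gaussian frm}), and a matrix concentration bound for $\tX^\top\tX/n$ plus a Gaussian deviation bound gives the high-probability $R(\ab)K\log n/n$; (4) combine and simplify, then specialize: when $\se = 0$ the tail of $\sx$ vanishes so the first term drops and $R(\ab) = \sep^2$ by Lemma \ref{thm:bench compare}, giving (\ref{PCR2}); when $\lp(\se)>0$, use Lemma \ref{thm:ab norm bound main} to substitute $\|\ab\|^2 \asymp \beta^\top(A^\top A)^{-1}\beta$ and $\|\se\|\cdot\|\ab\|^2 \asymp \|\se\|\beta^\top(A^\top A)^{-1}\beta$, then bound this via $\|\beta\|^2_{\sz}/\xi$ using $\beta^\top(A^\top A)^{-1}\beta \le \|\beta\|^2_{\sz}/\lk(A^\top A)$ together with $\xi \asymp \lk(A^\top A)\|\sz\|/\|\se\|$-type relations (Lemma \ref{thm:bench compare}), and bound $R(\ab) \lesssim \sep^2$ up to the already-accounted terms, yielding (\ref{PCR3}) with the $\ke$ factor.

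The main obstacle I anticipate is step (2): precisely controlling the bias incurred by replacing the full-rank predictor $\ab$ with its projection onto $\range(U_K)$, and showing this is governed by $\|\se\|\cdot\|\ab\|^2\cdot(p/n)$ rather than something larger. The subtlety is that $\ab = \sx^{-1}A\sz\beta$ is \emph{not} exactly in $\range(U_K) = \range(A)$ when $\se \ne 0$ — it has a component along the bottom $p-K$ eigenvectors of $\sx$, and although each such eigenvalue is only of order $\|\se\|$, there are $p - K$ of them, so the naive bound picks up the full ambient dimension. The resolution should come from the fact that $\sxy = A\sz\beta$ lies \emph{exactly} in $\range(A)$, so the tail component of $\ab$ is $(\sx^{-1} - U_K U_K^\top \sx^{-1})A\sz\beta$, and the geometry of how $\range(A)$ aligns with the top eigenspace of $\sx$ (controlled by $\xi$ being large, via a Davis--Kahan / sin-theta argument) keeps this small; the $p/n$ factor then enters not from the bias per se but from the interaction with the sample-size-dependent bound on the projected design's smallest singular value, or equivalently from bounding an in-sample quantity by an out-of-sample one. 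Getting the bookkeeping right here — and in particular justifying why only $\ke$ rather than a larger power of the condition number appears — is where the real work lies; the rest is assembling standard Gaussian-design regression bounds.
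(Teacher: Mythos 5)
Your plan follows essentially the same route as the paper's proof: the same decomposition of $\wt\alpha_{\rm PCR}-\ab$ into the population projection error onto $\range(U_K)$ (your $U_Kv^*$ coincides with $P\ab$, $P=U_KU_K^\top$, because $U_K$ spans an eigenspace of $\sx$), a sample-level cross term, and a variance term driven by $\Eta=\y-\X\ab$ with $\seta^2=R(\ab)$, followed by the same specializations. The one obstacle you flag in step (2) is not actually present: the paper bounds the projection bias as $\|(I_p-P)\ab\|^2_{\sx}\le\|(I_p-P)\sx(I_p-P)\|\cdot\|\ab\|^2=\lambda_{K+1}(\sx)\|\ab\|^2\le\|\se\|\|\ab\|^2$ by Weyl's inequality --- an operator-norm bound, so the $p-K$ tail eigenvalues never accumulate and no Davis--Kahan / alignment argument is needed. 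As you correctly guessed, the $p/n$ factor comes entirely from the cross term $\|U_K(\X U_K)^+\X(I_p-P)\ab\|^2_{\sx}$, which is at most $\|\X\sx^{-1/2}\|^2/\sigma_K^2(\X U_KD_K^{-1/2})\lesssim p/n$ times the projection bias (with $D_K=U_K^\top\sx U_K$), and $\ke$ enters only at the very end via $\|\se\|\cdot\|\ab\|^2\le\ke\|\beta\|^2_{\sz}/\xi$ from Lemma \ref{thm:ab norm bound main}.
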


%  I think the proof of (\ref{PCR}) is clean. (\ref{PCR2}) is an immediate consequence (with $R(\alpha)^*)= \sigma_\eps^2$ in this case). Finally, for the last bound, I wasn't successful. Essentially the proof is
% \begin{eqnarray*}
% {\ab} &=& \sx^+ \bar A \bar \beta\\
% &=& \se^{-1} \bar A G^{-1} \bar \beta\end{eqnarray*}
% and
% \begin{eqnarray*}
% \| {\ab} \|^2 &\le& \| \se^{-1} \bar A G^{-1} \|^2 \|\bar \beta\|^2\\
% &\le& \| \se^{-1/2}\|^2 \| \se^{-1/2}  \bar A G^{-1} \|^2 \|\bar \beta\|^2\\
% &\le& \lambda_p( \se ) \| G^{-1} \|^2 \|\bar \beta\|^2\\
% &\le& \ke \frac{ \| \beta\|_{\sz}^2 }{ \lambda_K ( A \sz A^\top) }
% \end{eqnarray*}

% The third step requires some thought - the argument you gave. I vote not to delete stuff, so we should keep this here, saying it is not the focus of the paper, but we provide a simple proof, and refer for detailed theoretical comparisons in different paper.\\

% We have the upper bound for $\| \ab\|$ in case $\se>0$ in the section about the paradox earlier. So (\ref{PCR3}) really would be an immediate consequence of (\ref{PCR}).\\

% Perhaps it is of interest to discuss if $\se=0$? I think so, even though we don't have interpolation; he story should be the same $\|\ab\| \to 0$, but $\| \ab\|_{\sx}\not\to0$.
% That should be moved to the paradox section.

Provided $\ke$ is bounded above by an absolute constant, the upper bounds for the minimum-norm and PCR predictors are comparable.
Indeed, when $\ke < C<\i$, the risk bound of Theorem \ref{thm:upper bound} for the GLS $\a$ takes the form
\begin{equation}
    R(\a) - \sep^2 \lesssim \frac{\|\beta\|^2_{\sz}}{\xi}\frac{p}{n} +\sep^2 \log n \left( \frac{K}{n} +  \frac{n}{p}\right).
\end{equation}
% Indeed,   Theorem \ref{thm:upper bound} implies the bound
% , when $\ke$ is bounded above by an absolute constant, the risk bound corresponding to  the GLS $\a$ takes the form
% \begin{equation}
%     R(\a) - \sep^2 \lesssim \frac{\|\beta\|^2_{\sz}}{\xi}\frac{p}{n} +\sep^2 \log n \left( \frac{K}{n} +  \frac{n}{p}\right)
% \end{equation}
% for the excess risk of the minimum-norm predictor.
The additional term $\sep^2  n\log n/ p$ in this bound  is  absent in the PCR prediction bound  (\ref{PCR3}) above, but in the  regime $p \gg n$ it can become negligible. It is perhaps surprising that under the factor regression model, the interpolator $\a$ can not only provide consistent prediction, but can in fact have excess risk comparable to a genuine $K$-dimensional predictor widely used in practice and tailored to the problem setting. This is despite the fact that the GLS interpolates the data (when $\rank(\X)=n$) and requires no tuning parameters or knowledge of the underlying dimension $K$. We emphasize that we do not claim that the GLS is necessarily a superior predictor to PCR in this setting. Rather, we observe the perhaps surprising fact that these two methods are comparable under the conditions stated. 

% {\color{green}[refer to other works that show this surprising fact].}
Figure \ref{fig:risk compare} plots the excess prediction risk of the GLS and PCR predictors. We also include the excess prediction risks of the LASSO, Ridge regression, and the null estimator $\0$ in this figure for comparison. 
% {\marten Each point is based on how many simulations?} 
The tuning parameters for LASSO and Ridge regression were chosen by cross-validation. We see that the peak in the GLS risk at $\gamma = p/n=1$ is not present in the PCR,  LASSO  and Ridge risks. This is due to the fact that these methods are regularized at this point, and in particular do not interpolate the training data. As $\gamma$ increases, and thus $p\gg n$, the GLS risk approaches the PCR risk, as indicated by the discussion above. The plot shows how the Ridge risk also approaches the common value of the PCR and GLS risks. Recalling that GLS is a limiting case of Ridge regression with regularization parameter $\lambda\to 0$, this suggests that for $p\gg n$, in our setting, the optimal choice of regularization parameter for ridge regression approaches zero \cite{mei2019generalization,hastie2019}.
%%{\color{red} Add: as previously observed in the Montanari/Tibshirani paper}. 
\begin{figure}[h!]
    \centering
    \includegraphics[width=12cm]{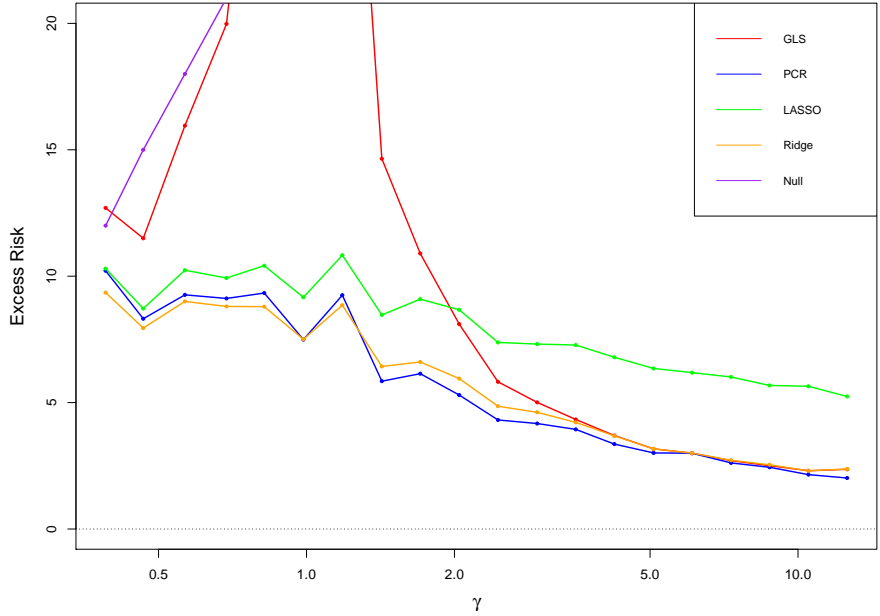}
    \caption{\small Excess prediction risk of GLS, PCR, LASSO, Ridge regression, and the null predictor as a function of $\gamma = p/n$. 
    Here $K$ increases linearly from $12$ to $69$, $n = \lfloor K^{1.5}\rfloor$ and thus increases from $41$ to $573$, and $p$ increases from $16$ to $7215$.
    Further, $\se = I_p$, $\sz = I_K$, $\beta = (1 ,\ldots,1)^\top$, and $A$ is generated by sampling each entry iid from $N(0,1/\sqrt{K})$. }
    \label{fig:risk compare}
\end{figure}
We plot the coefficients of $\ab$ in  Figure \ref{fig:alpha star} for the case $p = 7215$ and $K=69$. We can see that $\ab$ is clearly non-sparse, which explains  the inferior  performance of the LASSO in this setting.
For completeness, we contrast the above simulation setting in which $\ab$ is non-sparse with special case in which $\ab$ is in fact $K$-sparse. In this case, we
  take the matrix $A$ with   columns equal to the canonical basis vectors $e_1,\ldots,e_K\in \R^p$, multiplied by %$a=
  $\sqrt{p}$, and we set $\beta=(1,\ldots,1)^\top$,  $\sz= I_K$ and $\se=I_p$.
  %$\sz= \sigma_Z^2I_K$ and $\se= \sigma_E^2I_p$ with $\sigma_Z^2 = \sigma_E^2=1$. 
  Then $A^\top A=p I_K$ and $\ab$ is $K$-sparse
 since, by (\ref{explicit}) of Remark \ref{rem:sparse},
  \[{\ab}_i = \begin{cases} \sqrt{p}/ (p+1) & %\beta_i \{ a\sigma_Z^2/ (a^2 \sigma_Z^2+ \sigma_E^2) \} & 
  \text{for } i =1,\ldots,K\\  0 & \text{for } i = K+1,\ldots,p\end{cases}.\]
Figure \ref{fig:risk compare sparse} plots the excess risk of the GLS and other predictors for these model settings.
% , and Figure \ref{fig:alpha hist sparse} shows corresponding histograms of the components of $\ab$ and its estimates for this simulation at the point where $p=7215$.
We see that in this sparse setting the LASSO performs well, as expected, with its excess risk approximately equal to that of PCR for $p\gg n$, both of which do slightly better than GLS and Ridge. 
% The histograms in Figure \ref{fig:alpha hist sparse} corroborate this performance, showing that $\ab$, as well as its estimates by LASSO and PCR, are sparse, whereas the estimates of $\ab$ by GLS and Ridge are not, although they are more concentrated around zero compared to the case in Figure \ref{fig:alpha hist}. 
While LASSO and PCR outperform GLS in this case, we note that the excess risk of the GLS still decreases towards zero, and performs perhaps surprisingly well relative to the LASSO, given that the LASSO is specifically tailored to this exactly sparse setting. Moreover, we emphasize that for more generic choices of model parameters, $\ab$ will not necessarily be sparse or even approximately sparse, and we should expect the GLS to outperform the LASSO (see Remark \ref{rem:sparse} for further comment).

 The take-home message is that for $\gamma=p/n$ large enough, the GLS is a surprisingly competitive predictor, given its interpolating property, and in fact performs as well in the generic setting of Figure \ref{fig:risk compare} as the PCR predictor chosen with the unknown, optimal number of components $K$, in addition to Ridge regression with tuning parameter chosen by cross-validation. Even when the model parameters are carefully chosen so that the best linear predictor $\ab$ is $K$-sparse, the GLS performs not much worse than the LASSO, which is tailored to this setting, provided that $p$ is very large.  \\
 
\begin{figure}[h!]
    \centering
    \includegraphics[width=11cm]{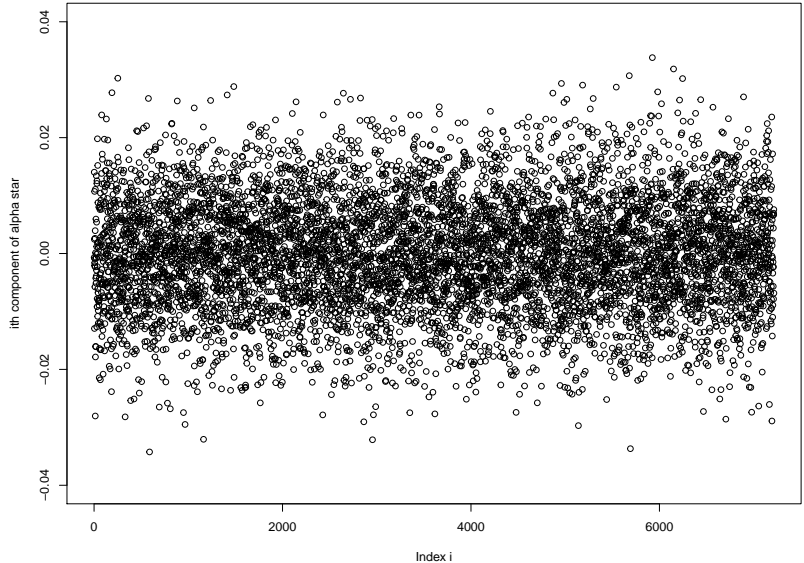}
    \caption{\small A scatter plot of the components of $\ab$, from the point in the simulation of Figure \ref{fig:risk compare} with the largest value of $\gamma$. Here $p = 7215$, $K=69$, $\se = I_p$, $\sz = I_K$, and $A$ is generated by sampling each entry iid from $N(0,1/\sqrt{K})$.}
    \label{fig:alpha star}
\end{figure}
\begin{figure}[h!]
    \centering
    \includegraphics[width=12cm]{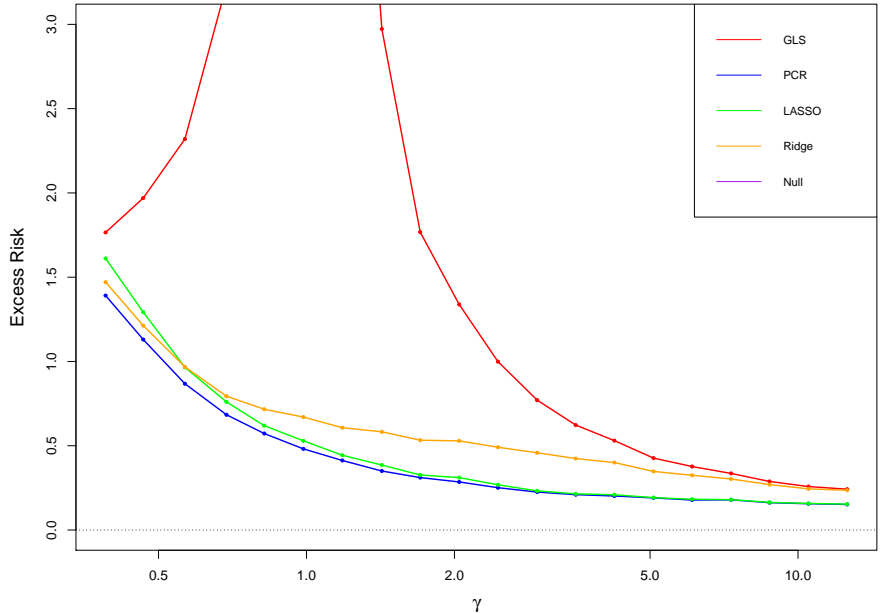}
    \caption{\small Excess prediction risk of GLS, PCR, LASSO, Ridge regression, and the null predictor as a function of $\gamma = p/n$. Null risk is not visible on plot since it is larger than the maximum plotted value. Here $K$ increases linearly from $12$ to $69$, $n = \lfloor K^{1.5}\rfloor$ and thus increases from $41$ to $573$, and $p$ increases from $16$ to $7215$.
    Further, $\se = I_p$, $\sz = I_K$, $\beta = (1 ,\ldots,1)^\top$, and $A$ has   columns equal to the canonical basis vectors $e_1,\ldots,e_K\in \R^p$, multiplied by $\sqrt{p}$. %{\seth NEED TO ADD UPDATED PLOT ONCE COMPLETE; this one has fewer values of gamma than final one.}
    }
    \label{fig:risk compare sparse}
\end{figure}
% \begin{figure}[h!]
%     \centering
%     \includegraphics[width=12cm]{Interpolation_factor_model/alpha_compare_estimates_k_sparse_i=20.pdf}
%     \caption{\small Histograms of the components of $\ab$ and its estimates from LASSO, GLS, PCR, and Ridge regression, from the simulation of Figure \ref{fig:risk compare sparse}. Here $p = 7215$, $K=69$, $\se = I_p$, $\sz = I_K$, and $A$ has   columns equal to the canonical basis vectors $e_1,\ldots,e_K\in \R^p$, multiplied by $\sqrt{p}$.}
%     \label{fig:alpha hist sparse}
% \end{figure}

\subsection*{Acknowledgements.}
	We thank the anonymous referees for their many insightful and helpful suggestions.
	Bunea and Wegkamp are supported in part by NSF grant DMS-1712709.  \\

\bibliography{refs} 
\bibliographystyle{plain}

%\printbibliography 

\newpage
\appendix

\section{Proofs for the main text}
\subsection{Proofs for Section \ref{sec:interpolation_null_risk}}\label{proofs:null}

\subsubsection*{Proof of Theorem \ref{thm:alpha null risk}}\label{proof:alpha null risk}
We work on the event
\begin{equation}
    \mathcal{K} \coloneqq \left\{\sn^2(\X)\gtrsim \tr(\sx),\ \|\y\|^2\lesssim n\sy^2 \right\}.
\end{equation}
On this event, recalling $\a = \X^+\y$ and invoking identity (\ref{eq:X+norm}) in Appendix \ref{sec:pseudo-inverse},
\begin{equation}\label{a norm bnd sx}
    \|\a\|^2\le \|\X^+\|^2\|\y\|^2= \frac{\|\y\|^2}{\sn^2(\X)}\lesssim \sy^2\frac{n}{\tr(\sx)}.
\end{equation}
By Lemma \ref{thm:theta null risk} below, 
\[\left |\frac{R(\theta)}{R(\0 )} - 1\right | \le \frac{\|\theta\|^2_{\sx}}{R(\0 )} + 2\sqrt{\frac{\|\theta\|^2_{\sx}}{R(\0 )}}\le \|\sx\|\frac{\|\theta\|^2}{R(\0 )} + 2\sqrt{\|\sx\|\frac{\|\theta\|^2 }{R(\0 )}} \]
for any vector $\theta\in \R^p$. Combining this with (\ref{a norm bnd sx}) and recalling that $\sy^2 = \EE[y^2]=R(\0)$, we find that on $\mathcal{K}$,
\[\left |\frac{R(\a)}{R(\0 )} - 1\right |\lesssim \frac{n}{{\rm r_e} (\sx)} + \sqrt{\frac{n}{{\rm r_e} (\sx)}}\]
% By Theorem \ref{thm:a norm}, if ${\rm r_e} (\sx)>C\cdot n$ for $C>0$ large enough, then with probability at least $1-ce^{-c'n}$,
% \begin{equation}
%     \frac{\|\a\|^2_{\sx}}{R(\0 )}\lesssim \frac{n}{{\rm r_e} (\sx)}.
% \end{equation}
% Thus, by Theorem \ref{thm:theta null risk}, with at least the same probability,
% \[\left |\frac{R(\a)}{R(\0 )} - 1\right | \le \frac{\|\a\|^2_{\sx}}{R(\0 )} + 2\sqrt{\frac{\|\a\|^2_{\sx}}{R(\0 )}}\lesssim \frac{n}{{\rm r_e} (\sx)} + \sqrt{\frac{n}{{\rm r_e} (\sx)}}.\]
Setting $C'=\max(C,1)$, when ${\rm r_e} (\sx)>C'n \ge n$, so $n/{\rm r_e} (\sx) > 1$, we find
\[\frac{n}{{\rm r_e} (\sx)} + \sqrt{\frac{n}{{\rm r_e} (\sx)}}\le 2 \sqrt{\frac{n}{{\rm r_e} (\sx)}}.\]
Thus, on $\mathcal{K}$,
\[\left |\frac{R(\a)}{R(\0 )} - 1\right | \lesssim \sqrt{\frac{n}{{\rm r_e} (\sx)}}.\]
All that remains is to bound the probability of $\mathcal{K}$. To this end, note that since we suppose Assumption \ref{ass:x} holds, we have $\X = \tX \sx^{1/2}$, and thus
\[\sn^2(\X) = \ln(\X\X^\top)= \ln(\tX \sx \tX),\]
where $\tX$ has i.i.d.~entries that have zero mean, unit variance, and sub-Gaussian constants bounded by an absolute constant. Theorem \ref{thm:concentration spectrum trace} below thus implies that if ${\rm r_e} (\sx)>C\cdot n$ for $C>0$ large enough, then with probability at least $1-2e^{-cn}$,
\[\sn^2(\X) \ge \tr(\sx)/2 - c_0 \|\sx\|n= \tr(\sx)\cdot[1/2 - c_0n/{\rm r_e}(\sx)].\]
Using that $n/{\rm r_e}(\sx)< 1/C$ and choosing $C$ large enough,
\begin{equation}\label{eqn:snx lb app for a norm}
    \PP(\sn^2(\X)\gtrsim \tr(\sx)) \ge 1 - 2e^{-cn}.
\end{equation}
By Assumption \ref{ass:x}, $\y = \sy\tilde \y$. Since $\tilde y_1,\ldots, \tilde y_n$ have zero mean and sub-Gaussian constants bounded by an absolute constant, Bernstein's inequality (Corollary 2.8.3 of \cite{verHDP}) implies that 
\[\PP(\|\tilde \y\|^2\gtrsim n) = \PP\left(\left |\sum_{i=1}^n {\tilde y_i}^2 \right|\gtrsim n\right)\le 2e^{-2cn}.\]
Thus,
\[\PP(\| \y\|^2\gtrsim \sy^2 n) = \PP(\sy^2\| \tilde \y\|^2\gtrsim \sy^2 n) = \PP(\| \tilde \y\|^2\gtrsim n)\le 2e^{-2cn}.\]
Combining this with (\ref{eqn:snx lb app for a norm}) establishes that $\PP(\mathcal{K}) \ge 1 - ce^{-c'n}$, thus completing the proof. \hfill $\blacksquare$

\subsubsection*{Lemma \ref{thm:theta null risk} and Theorem \ref{thm:concentration spectrum trace}}
The proof of Theorem \ref{thm:alpha null risk} above made crucial use of the following lemma and theorem. 
\begin{lemma}\label{thm:theta null risk}
For any vector $\theta \in \R^p$,
\begin{equation}
    \left |\frac{R(\theta)}{R(\0 )} - 1\right| \le \frac{\|\theta\|^2_{\sx}}{R(\0 )} + 2 \sqrt{\frac{\|\theta\|^2_{\sx}}{R(\0 )}}.
\end{equation}
\end{lemma}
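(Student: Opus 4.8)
The plan is to compute $R(\theta)$ directly from its definition and isolate the cross term. Expanding the square,
\[
R(\theta) = \EE\!\left[(X^\top\theta - y)^2\right] = \EE\!\left[(X^\top\theta)^2\right] - 2\,\EE\!\left[(X^\top\theta)\,y\right] + \EE[y^2] = \|\theta\|^2_{\sx} - 2\,\theta^\top\sxy + R(\0),
\]
where $\sxy = \EE[Xy]$ and $R(\0) = \EE[y^2]$. Dividing by $R(\0) = \sy^2$, which is strictly positive under Assumption \ref{ass:x} (and if $\sy = 0$ the statement is vacuous), gives
\[
\frac{R(\theta)}{R(\0)} - 1 = \frac{\|\theta\|^2_{\sx} - 2\,\theta^\top\sxy}{R(\0)},
\]
so that the triangle inequality yields $\left|R(\theta)/R(\0) - 1\right| \le \|\theta\|^2_{\sx}/R(\0) + 2\,|\theta^\top\sxy|/R(\0)$.

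The only remaining step is to bound the cross term. Here I would apply the Cauchy--Schwarz inequality in $L^2$: since $\theta^\top\sxy = \EE[(X^\top\theta)\,y]$, we get
\[
|\theta^\top\sxy| \le \sqrt{\EE[(X^\top\theta)^2]}\,\sqrt{\EE[y^2]} = \sqrt{\|\theta\|^2_{\sx}}\,\sqrt{R(\0)}.
\]
Substituting this into the previous bound gives $2\,|\theta^\top\sxy|/R(\0) \le 2\sqrt{\|\theta\|^2_{\sx}/R(\0)}$, which combined with the displayed identity produces exactly the claimed inequality.

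There is essentially no obstacle: the argument is a two-line expansion of the risk followed by one application of Cauchy--Schwarz. The only point worth a remark is the use of $R(\0) > 0$, handled as above. (An alternative to the $L^2$ Cauchy--Schwarz would be to view $\sxy$ as a fixed vector and invoke a generalized Cauchy--Schwarz through $\sx$, but this introduces pseudo-inverse bookkeeping for no gain, so the $L^2$ route is preferable.)
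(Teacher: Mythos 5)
Your proof is correct, and it reaches the bound by a shorter and more elementary route than the paper. Both arguments start from the same expansion $R(\theta)-R(\0)=\|\theta\|^2_{\sx}-2\,\theta^\top\sxy$, but they differ in how the cross term is controlled. The paper first establishes $\sx\ab=\sxy$ for $\ab=\sx^+\sxy$ (which requires the pseudo-inverse identity and the observation that $(I_p-\sx\sx^+)X=0$ a.s.), rewrites the cross term as $\theta^\top\sx\ab$, applies Cauchy--Schwarz with respect to the $\sx$-inner product to get $|\theta^\top\sx\ab|\le\|\theta\|_{\sx}\|\ab\|_{\sx}$, and then uses the Pythagorean decomposition $R(\0)=R(\ab)+\|\ab\|^2_{\sx}$ to conclude $\|\ab\|_{\sx}\le\sqrt{R(\0)}$. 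You collapse these three steps into a single application of the $L^2$ Cauchy--Schwarz inequality, $|\EE[(X^\top\theta)y]|\le\sqrt{\EE[(X^\top\theta)^2]}\sqrt{\EE[y^2]}=\|\theta\|_{\sx}\sqrt{R(\0)}$, which is exactly the same bound with none of the pseudo-inverse bookkeeping — precisely the simplification you flag in your closing remark. The paper's detour is not wasted in context, since the identity $\sx\ab=\sxy$ and the bound $\|\ab\|^2_{\sx}\le R(\0)$ are reused elsewhere (e.g., in the proof of Theorem~\ref{TPCR}), but as a self-contained proof of this lemma your version is cleaner. Your handling of the degenerate case $R(\0)=0$ is also a small point the paper leaves implicit.
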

\begin{proof}
We first show that $\sx \ab = \sxy$, where $\sxy \coloneqq \EE[Xy]$ and $\ab \coloneqq \sx^+\sxy$. To this end, observe that
\begin{align*}
    \cov( (I - \sx\sx^{+})X) &= (I_p - \sx\sx^{+})\EE[XX^\top](I_p - \sx\sx^{+})\\
    &= (I_p - \sx\sx^{+})\sx(I_p - \sx^+\sx)\\
    &= 0, %& (\text{since } \sx\sx^+\sx = \sx\text{, see Appendix \ref{sec:pseudo-inverse}})
\end{align*}
where we use that $\sx\sx^+\sx = \sx$ (see Appendix \ref{sec:pseudo-inverse}). Thus $(I_p - \sx\sx^{+})X = 0$ a.s., so
\begin{equation}\label{eqn:sx sxy identity}
    \sx\ab = \sx\sx^{+}\sxy = \EE[\sx\sx^{+}Xy] = \EE[Xy] = \sxy.
\end{equation}
Fixing $\theta \in \R^{p}$, we have
\begin{align*}
    R(\theta) - R(\0) &= \EE[(X^\top\theta - y)^2] - \EE[y^2]\\
    &= \theta^\top \EE[XX^\top ]\theta -2\theta^\top \EE[Xy]\\
    &= \|\theta\|^2_{\sx} - 2 \theta^\top \sxy\\
    &= \|\theta\|^2_{\sx} - 2 \theta^\top \sx \ab & (\text{by } (\ref{eqn:sx sxy identity})),
\end{align*}
so by the Cauchy-Schwarz inequality,
\begin{equation}\label{eqn:rtheta decomp2a}
    |R(\theta)- R(\0 )| \le \|\theta\|^2_{\sx} + 2\|\theta\|_{\sx}\|{\ab}\|_{\sx}.
\end{equation}
Next observe that
\[R(\0 ) = \EE[y^2] = \EE(y - X^\top \ab + X^\top {\ab})^2 = R({\ab}) + \|{\ab}\|^2_{\sx} \ge \|{\ab}\|^2_{\sx},\]
where we use that by (\ref{eqn:sx sxy identity}),
\[\EE(X^\top{{\ab}})(X^\top{\ab} -y) = {\alpha^{*\top}}\sx\ab - {\alpha^{*\top}}\sxy = 0.\]
Thus, $\|\ab\|^2_{\sx}\le R(\0 )$, so by (\ref{eqn:rtheta decomp2a}),
\begin{equation}\label{eqn:rtheta decomp2}
    |R(\theta)- R(\0 )| \le \|\theta\|^2_{\sx} + 2\|\theta\|_{\sx}\sqrt{R(\0 )}.
\end{equation}
Dividing both sides by $R(\0 )$ gives the final result.
\end{proof}

\begin{thm}\label{thm:concentration spectrum trace}
Suppose $\W$ is an $n\times r$ random matrix with independent subgaussian entries that have zero mean and unit variance. Then for any positive semi-definite matrix $\sg\in \R^{r\times r}$ and some $c' > 0$ large enough, with probability at least $1 -2e^{-cn}$,
\[\tr(\sg)/2 - c'(M^2+M^4)\|\sg\|n \le \ln(\W\sg \W^\top)\le \lambda_1(\W \sg \W^\top)\le 3\tr(\sg)/2 + c'(M^2+M^4)\|\sg\|n,\]
where $M \coloneqq \max_{i,j}\|\W_{ij}\|_{\psi_2}$\footnote{We define the sub-Gaussian norm of any real-valued random variable $U$ by $\|U\|_{\psi_2}\coloneqq \inf\{t>0: \EE \exp(U^2/t)<2\}$. We say $U$ is sub-Gaussian when $\|U\|_{\psi_2}<\i$.}.
\end{thm}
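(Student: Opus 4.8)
The plan is to prove the slightly stronger statement that $\W\sg\W^\top$ is close to its mean in operator norm, namely
\[
\bigl\|\W\sg\W^\top - \tr(\sg)\, I_n\bigr\| \;\le\; \tfrac12\tr(\sg) + c'(M^2+M^4)\|\sg\|\, n
\]
on the claimed event; reading off the largest and smallest eigenvalues then delivers both displayed inequalities simultaneously. First I would record the mean: since the entries of $\W$ are independent with zero mean and unit variance, the $(i,k)$ entry $\W_{i\sbt}^\top\sg\,\W_{k\sbt}$ of $\W\sg\W^\top$ has expectation $0$ when $i\neq k$ and $\tr(\sg)$ when $i=k$, so $\EE[\W\sg\W^\top]=\tr(\sg)\,I_n$.

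Next I would pass to a finite supremum by an $\epsilon$-net. Writing $N\coloneqq\W\sg\W^\top-\tr(\sg)I_n$, which is symmetric, the standard net bound gives $\|N\|\le 2\max_{x\in\mathcal N}|x^\top N x|$ for a $(1/4)$-net $\mathcal N$ of the unit sphere $S^{n-1}\subset\R^n$, with $|\mathcal N|\le 9^{\,n}$. The crucial point is that this net lives in $\R^n$, so its log-cardinality is $O(n)$, matching the target probability. For a fixed $x\in S^{n-1}$, set $v\coloneqq\W^\top x\in\R^r$; its coordinates $v_j=\sum_i x_i\W_{ij}$ are independent across $j$ (they depend on disjoint sets of entries of $\W$), each with zero mean, variance $\|x\|^2=1$, and $\|v_j\|_{\psi_2}\lesssim M\|x\|=M$ by the sub-Gaussian concentration of weighted sums of independent mean-zero variables. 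Since $x^\top\W\sg\W^\top x=v^\top\sg v$ and $\EE[v^\top\sg v]=\tr(\sg)$, the Hanson--Wright inequality (e.g.\ Theorem~6.2.1 in \cite{verHDP}) yields, for every $t>0$,
\[
\PP\bigl(|v^\top\sg v - \tr(\sg)| > t\bigr)\;\le\;2\exp\!\left(-c_0\min\!\left\{\frac{t^2}{M^4\|\sg\|_F^2},\ \frac{t}{M^2\|\sg\|}\right\}\right),
\]
and I would bound $\|\sg\|_F^2\le\|\sg\|\tr(\sg)$ using $\sg\succeq 0$.

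It remains to calibrate $t$. Taking $t=\tfrac14\tr(\sg)+c'(M^2+M^4)\|\sg\|\,n$ and using the elementary bound $M^2\sqrt{\|\sg\|\tr(\sg)\,n}\le\tfrac14\tr(\sg)+M^4\|\sg\|\,n$ (AM--GM applied to $\sqrt{ab}$ with weight $\lambda=2M^2$, so that $\sqrt{ab}\le\tfrac12(a/\lambda+\lambda b)$), one checks that both arguments of the minimum in the Hanson--Wright exponent are $\gtrsim c'\,n$ once $c'$ is large enough — here one also uses $M\gtrsim1$, which holds since $\W_{ij}$ has unit variance. Hence the fixed-$x$ failure probability is at most $2e^{-c_2n}$ with $c_2$ as large as desired; a union bound over $\mathcal N$ together with the factor $2$ from the net step gives the displayed operator-norm bound with probability at least $1-2\cdot 9^{\,n}e^{-c_2n}\ge 1-2e^{-cn}$. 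Extracting $\ln(\W\sg\W^\top)\ge\tr(\sg)-\|N\|$ and $\lambda_1(\W\sg\W^\top)\le\tr(\sg)+\|N\|$ then completes the proof.

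The only genuinely delicate point I anticipate is this calibration of $t$: one must verify that, \emph{uniformly over all PSD $\sg$}, the variance-type deviation scale $M^2\sqrt{\|\sg\|\tr(\sg)n}$ produced by Hanson--Wright can be absorbed into the allowed budget $\tfrac14\tr(\sg)+M^4\|\sg\|n$ — which is exactly what the AM--GM split above achieves, and is the reason the bound carries the factor $M^2+M^4$ rather than merely $M^2$. The mean computation, the net reduction, and the union bound are all routine.
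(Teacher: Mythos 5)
Your proposal is correct and follows essentially the same route as the paper's proof: reduce to an operator-norm deviation bound for $\W\sg\W^\top-\tr(\sg)I_n$ via a $1/4$-net of $S^{n-1}$ of cardinality at most $9^n$, apply Hanson--Wright to $v^\top\sg v$ with $v=\W^\top x$ (whose coordinates are independent with $\psi_2$-norm $\lesssim M$), bound $\|\sg\|_F^2\le\|\sg\|\tr(\sg)$, and absorb the square-root deviation scale into $\tfrac14\tr(\sg)+cM^4\|\sg\|n$ by AM--GM before taking the union bound. The calibration step you flag as delicate is handled identically in the paper, so there is nothing further to add.
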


A similar result for diagonal  $\Sigma$  has been derived in Lemma 9 of \cite{bartlett2019}.
We make use of the Hanson-Wright inequality in our proof to deal with non-diagonal  $\Sigma$.
Theorem 4.6.1 in \cite{verHDP} provides similar two-sided bounds for the smallest and largest  eigenvalue of $\W\sg\W^\top$, when $\sg = I_r$. 
% Since we could not find a proof of Theorem \ref{thm:concentration spectrum trace}, we include it below to bridge the gap between these two results.
\begin{proof}
%\subsubsection*{Reduction to $M= 1$ case\footnote{\color{red} Is this section trivial, should we just remove it? Can just say ``We can take $M=1$ without loss of generality".}}
%\subsubsection*{Reduction to $M= 1$ case\footnote{\color{red} Is this section trivial, should we just remove it? Can just say ``We can take $M=1$ without loss of generality".}}
We will prove that for some $c' \ge 1$,
\begin{equation}\label{eqn:op norm conc}
\|\W\sg \W^\top - \tr(\sg)I_n\| \le c'(M^2+M^4)\|\sg\|n + \tr(\sg)/2
\end{equation}
with probability at least $1-2e^{-cn}$. Equation (\ref{eqn:op norm conc}) implies that for any $v\in\R^n$ with $\|v\|=1$,
\[|v^\top\W\sg \W^\top v - \tr(\sg)| \le c'(M^2+M^4)\|\sg\|n + \tr(\sg)/2,\]
and so
\[\tr(\sg)/2 - c'(M^2+M^4)\|\sg\|n\le v^\top\W\sg \W^\top v \le 3\tr(\sg)/2 + c'(M^2+M^4)\|\sg\|n.\]
Taking the minimum and maximum over $v\in S^{n-1}$ then gives the desired result.\\

We now prove (\ref{eqn:op norm conc}). Let $\N$ be a $1/4$-net of $S^{n-1}$ with $|\N|\le 9^n$, which exists by Corollary 4.2.13 of \cite{verHDP}. Then by Exercise 4.4.3 of \cite{verHDP},

\begin{equation}\label{eqn:eps net approx}
\|\W\sg \W^\top
- \tr(\sg)I_n\| = \sup_{v\in S^{n-1}}|v^\top\W\sg \W^\top v - \tr(\sg)| \le 2 \sup_{v\in \N}|v^\top\W\sg \W^\top v - \tr(\sg)|,
\end{equation}
where we use that $\W\sg \W^\top - \textrm{tr}(\Sigma) I_n$ is symmetric in the first step.% {\color{red}(check if that's true)}.

Now fix $v\in S^{n-1}$ and define $B = \W^\top v\in \R^r$. Observe that $B$ has mean zero entries that are independent because the columns of $\W$ are independent. Furthermore, by Proposition 2.6.1 of \cite{verHDP},
\[\|B_i\|_{\psi_2}^2 = \|\sum_{j}\W_{ji}v_j\|^2_{\psi_2}\le C\sum_{j}\|\W_{ji}\|_{\psi_2}^2v_j^2 \le \max_{l i}\|\W_{l i}\|_{\psi_2}^2\sum_{j}v_j^2 = CM^2,\] 
where we used $\|v\|^2=1$ in the last step.
Thus, by the Hanson-Wright inequality (Theorem 6.2.1 in \cite{verHDP}),
\begin{equation}\label{eqn:hanson-wright 1}
\PP\left(|B^\top\sg B - \EE B^\top\sg B| \ge c_1M^2t\right) \le 2 \exp\left\{-c_2\min\left(t/\|\sg\|, t^2/\|\sg\|^2_F\right)\right\},
\end{equation}
where we can choose  $c_1>0$ large enough such that $c_2\ge 12$.

Note that 
\begin{equation}\label{eqn:B expectation}
\EE B^\top\Sigma B = \sum_{i,j,k,l}\EE v_i\W_{ij}\sg_{jl}\W_{kl}v_k= \sum_{ij} v_i^2 \sg_{jj}\EE \W_{ij}^2 = \|v\|^2 \tr(\sg) = \tr(\sg),
\end{equation}
where in the second step we use that $\W$ has independent mean zero entries, in the third step we use that $\EE \W_{ij}^2 =1$ for all $i,j$, and in the final step we use that $\|v\|=1$.

Choosing $t=\|\sg\|n/2 + \sqrt{n\|\sg\|_F^2}/2$ in (\ref{eqn:hanson-wright 1}) and using that $c_2 \ge 12$, we observe that
\[c_2t/\|\sg\| = c_2n/2 + c_2\sqrt{n\|\sg\|_F^2}/(2\|\sg\|)\ge c_2n/2\ge 3n,\]
and
\[c_2t^2/\|\sg\|_F^2 = c_2\left[n\|\sg\|/(2\|\sg\|_F) + \sqrt{n}/2 \right]^2\ge c_2n/4\ge 3n.\]
Thus,
\begin{equation}\label{eqn:hanson-wright bf amgm}
\PP\left(|B^\top\sg B - \tr(\sg)| \ge c_1M^2\|\sg\|n/2 + c_1M^2\sqrt{n\|\sg\|_F^2}/2\right) \le 2e^{-3n},
\end{equation}
where we used (\ref{eqn:B expectation}). Finally, using
\[\|\sg\|_F^2 = \tr(\sg^2) \le \|\sg\|\tr(\sg),\]
and the inequality $ 2{ab}\le a^2+b^2$,
\[c_1M^2\sqrt{n\|\sg\|_F^2}/2 \le c_1M^2\sqrt{(c_1M^2 n\|\sg\|)(\tr(\sg)/c_1M^2)}/2\le c_1^2M^4n\|\sg\|/4 + \tr(\sg)/4.\]
Thus, by (\ref{eqn:hanson-wright bf amgm}), and for $c'>0$ large enough,
\begin{equation}\label{eqn:hanson-wright af amgm}
\PP\left(|B^\top \sg B - \tr(\sg)| \ge c'(M^2+M^4)\|\sg\|n + \tr(\sg)/4\right) \le 2e^{-3n}.
\end{equation}
Denoting $c'(M^2+M^4)\|\sg\|n + \tr(\sg)/4$ by $L$, we thus have
\begin{align*}
    \PP\left(\|\W\sg \W^\top - \tr(\sg)I_n\| \ge 2L\right)& \le  \PP\left(2\sup_{v\in \N}|v^\top\W\sg \W^\top v - \tr(\sg)| \ge 2L\right)&(\text{by } (\ref{eqn:eps net approx}))\\
    &\le \sum_{v\in\N}\PP\left(|v^\top \W\sg \W^\top v - \tr(\sg)| \ge L\right) &(\text{union bound})\\
    &\le 2\times 9^n e^{-3n} &(\text{by } (\ref{eqn:hanson-wright af amgm}))\\
    & = 2e^{ n\log(9) - 3n}\le 2e^{-cn},
\end{align*}
where we define $c = 3 - \log(9)>0$ in the last step. This shows (\ref{eqn:op norm conc}) and completes the proof.
\end{proof}

\subsection{Proofs for Section \ref{sec:frm}} 

\subsubsection{Proof of Lemma \ref{thm:spectrum} from Section \ref{sec:effrank} }\label{proof:spectrum}

We will use $\sx = \sza +\se$ and the min-max formula for eigenvalues,
 \begin{equation}\label{min-max spec}
     \lambda_{i}(\sx) = \min_{S:\text{dim}(S)=i} \max_{x\in S: \|x\|=1} x^\top \sx x,
 \end{equation}
 where the minimum is taken over all linear subspaces $S\subset \R^p$ with dimension $i$. We prove the three points one by one.
\begin{enumerate}
    \item Since for any $x\in \R^p$, $x^\top \sza x\ge 0$, we have
    \[x^\top \sx x \ge x^\top \se x,\]
    so by (\ref{min-max spec}), for any $i\in [p]$, 
    \[\lambda_i(\sx) \ge \lambda_i(\se) \ge \lp(\se)> c_2.\]
    \item For any $x\in \R^p$,
    \begin{align*}
    x^\top \sx x &= x^\top \sza x+ x^\top \se x\\
    &\ge x^\top \sza x\\
    &\ge \lk(\sz) x^\top AA^\top x\\
    &\ge c_1 \cdot x^\top AA^\top x.
    \end{align*}
    Plugging this into (\ref{min-max spec}) with $i=K$, we find $\lk(\sx)\ge c_1\lk(A^\top A)$ as claimed.
    \item For any $x\in \R^p$, $x^\top \se x\le \|\se\|$. Using this in (\ref{min-max spec}), we find for any $i > K$,
    \[\lambda_i(\sx)\le \|\se\| + \lambda_i(\sza) = \|\se\| < C_2,\]
    where in the second step we use that $\rank(\sza)\le K$, so $\lambda_i(\sza)=0$ for $i>K$. Combining this with $\lambda_i(\sx)>c_2$ from part 1 above completes the proof.
\end{enumerate}
\hfill $\blacksquare$
\subsubsection{Proof of Lemma \ref{thm:bench compare} from Section \ref{sec:bench}}\label{proof:bench compare}
Using $y=Z^\top\beta + \eps$ and the fact that $\eps$ is independent of $X$ and $Z$,
\begin{equation*}
    R(\ab) = \EE[({\alpha^{*\top}} X - y)]^2 = \EE[({\alpha^{*\top}} X - Z^\top\beta)]^2 + \sep^2\ge \sep^2,
\end{equation*}
which proves the first claim. Using $X = AZ+E$, we further find
\begin{equation}\label{eqn:ralpha exp}
    R(\ab) -\sep^2=\EE[({\alpha^{*\top}}  X - Z^\top\beta)]^2= {\alpha^{*\top}} \sx\ab + \beta^\top\sz\beta - 2{\alpha^{*\top}}  A\sz\beta.
\end{equation}
Now suppose $\se$ and $\sz$ are invertible as in the second claim. Then in particular, 
\[\lp(\sx)\ge \lp(\se)>0,\]
so $\sx$ is invertible and thus $\sx^+=\sx^{-1}$. Also, $\sxy = \EE[Xy] = A\sz\beta$, so 
\[\ab = \sx^+\sxy = \sx^{-1}A\sz\beta.\] 
Defining $\bA \coloneqq A\sz^{1/2}$ and $\bb \coloneqq \sz^{1/2}\beta$, we have $\ab = \sx^{-1}\bA\bb$. Plugging this into (\ref{eqn:ralpha exp}) and simplifying, we find
\begin{equation}\label{eqn:rstar - eps before woodbury}
    R(\ab) - \sep^2 = \bb^\top\left[I_K -  \bA^\top\sx^{-1}\bA\right]\bb.
\end{equation}
By the Woodbury matrix identity,
\[\sx^{-1} = (\bA \bA^\top + \se)^{-1} = \se^{-1} - \se^{-1} \bA (I_K + \bA^\top\se^{-1}\bA)^{-1}\bA^\top\se^{-1},\]
so letting $\bG\coloneqq I_K + \bA^\top\se^{-1}\bA$,
\[\bA^\top \sx^{-1} \bA =  \bA^\top\se^{-1} \bA  -  \bA^\top\se^{-1} \bA \bG^{-1} \bA^\top \se^{-1} \bA.\]
Now using $\bA^\top\se^{-1} \bA = \bG-I_K$, we find 
\begin{align*}
    \bA^\top \sx^{-1} \bA &=  (\bG-I_K) - (\bG-I_K)\bG^{-1}(\bG-I_K)\\
    &=  \bG - I_K - (I_K - \bG^{-1})(\bG - I_K)\\
    &= \bG  - I_K - [\bG - I_K- I_K + \bG^{-1}]\\
    &= I_K - \bG^{-1}.
\end{align*}
Using this to simplify (\ref{eqn:rstar - eps before woodbury}), we find
\begin{equation}\label{rab - sep}
    R(\ab) - \sep^2 = \bb^\top \bG^{-1}\bb = \bb^\top (I_K + \bA \se^{-1}\bA)^{-1}\bb.
\end{equation}
Letting $H \coloneqq \bA \se^{-1}\bA$, we find
\begin{equation}
    R(\ab) - \sep^2 = \bb^\top H^{-1/2}(I_K + H^{-1})^{-1}H^{-1/2}\bb.
\end{equation}
For the lower bound, first observe that
	\[R(\ab)-\sep^2 = \bb^\top H^{-1/2}(I_K + H^{-1})^{-1}H^{-1/2}\bb \ge\frac{\bb^\top H^{-1}\bb}{1+ \|H^{-1}\|}=  \frac{\beta^\top (A\se^{-1}A)^{-1}\beta}{1+ \lk^{-1}(H)}.\]
	Furthermore,
\begin{equation}\label{lkH lb}
    \lk(H) = \lk(\bA^\top \se^{-1}\bA)\ge \lk(\sza)/\|\se\|= \xi,
\end{equation}
	so using this in the previous display,
	\[R(\ab)-\sep^2  \ge  \frac{\beta^\top (A^\top \se^{-1} A)^{-1}\beta}{1+ \xi^{-1}} = \frac{\xi}{1+\xi} \cdot \beta^\top (A^\top \se^{-1} A)^{-1}\beta.\]
	
To obtain the upper bound on $R(\ab)$ we use
\[R(\ab) - \sep^2 = \bb^\top H^{-1/2}(I_K + H^{-1})^{-1}H^{-1/2}\bb\le \frac{\bb^\top H^{-1} \bb}{1 + \lk(H^{-1})}
    \le \bb^\top H^{-1} \bb
    =\beta^\top (A\se^{-1}A)^{-1}\beta,\]
    where in the last step we use $\sz^{1/2}H^{-1} \sz^{1/2}
    = (A\se^{-1}A)^{-1}$.
	Finally,
	\[\beta^\top (A\se^{-1}A)^{-1}\beta = \bb^\top H^{-1}\bb\le \|\beta\|^2_{\sz}/\lk(H)\le \|\beta\|^2_{\sz}/\xi,\]
	where we use (\ref{lkH lb}) in the last step.
% 	the claim 
% 	 \[\beta^\top (A^\top \se^{-1}A)^{-1} \beta\le \frac{1}{\xi}\|\beta\|^2_{\sz}\]
% 	 follows from $R(\ab) - \sep^2\le \bb^\top H^{-1}\bb$ and $lk(H) \ge \xi$.
% \begin{align*}
%     R(\ab) - \sep^2 &= \bb^\top H^{-1/2}(I_K + H^{-1})^{-1}H^{-1/2}\bb\\
%     &\le \frac{\bb^\top H^{-1} \bb}{1 + \lk(H^{-1})}\\
%     &\le \bb^\top H^{-1} \bb\\
%     &=\beta^\top (A\se^{-1}A)^{-1}\beta.
% \end{align*}
% \[\beta^\top G^{-1}\beta = \beta^\top\sz^{1/2} (\sz^{1/2}G\sz^{1/2})^{-1}\sz^{1/2}\beta \le \|\beta\|^2_{\sz}\|(\sz^{1/2}G\sz^{1/2})^{-1}\|= \|\beta\|^2/\lk(\sz^{1/2}G\sz^{1/2}),
% \]
% and that since $\sz^{1/2}G\sz^{1/2} = I_K + \sz^{1/2}A^\top\se^{-1}A\sz^{1/2}$,
% \[\lk(\sz^{1/2}G\sz^{1/2}) \ge \lk(\sz A^\top\se^{-1} A\sz ) \ge \lk(\sza)\lp(\se^{-1}) = \lk(\sza)/\|\se\|=\xi,\]
% which together give the desired bound. 

\hfill $\blacksquare$

% {\seth
% \subsubsection{Proof of Lemma \ref{thm:null vs opt} from Section \ref{sec:null vs opt}}\label{proof:null vs opt}

% Using $y = Z^\top\beta + \eps$ with $Z$ and $\eps$ independent and mean zero,
% \[R(\0) = \EE[y^2]= \|\beta\|_{\sz}^2 + \sep^2.\]
% Thus,
% \begin{align*}
%     R(\0) - R(\ab) &= R(\0) - \sep^2 + \sep^2 - R(\ab)\\
%     &\ge \|\beta\|_{\sz}^2 - \|\beta\|_{\sz}^2\cdot \xi^{-1} & (\text{by Lemma } \ref{thm:bench compare})\\
%     &= \|\beta\|_{\sz}^2\cdot (1-\xi^{-1}).
% \end{align*}
% \hfill $\blacksquare$
% }

\subsubsection{Proofs for Section \ref{sec:low dim}}\label{proofs:low dim}

\subsubsection*{Proof of Lemma \ref{thm:ab norm bound e=0}}

Let $\bA = A\sz^{1/2}$ and $\bb \coloneqq \sz^{1/2}\beta$. Using $\sx = \sza = \bA \bA^\top$, we find
\begin{equation}\label{ab formula proof}
    \ab = \sx^{+} \bA \bb = (\bA \bA^\top)^{+}\bA \bb = \bA^{+\top}\bb,
\end{equation}
where we use Lemma \ref{thm:psuedo lemma} in the last step. Using this formula, we obtain
\[\|\ab\|^2_{\sx} = \bb^\top \bA^+ (\bA\bA^\top) \bA^{+\top}\bb= \bb^\top\bb = \|\beta\|^2_{\sz},\]
where we use that $\bA$ is full rank since $A$ and $\sz$ are full rank, and thus $\bA^+ \bA=I_K$ by Lemma \ref{thm:psuedo lemma}.

Next, by identity (\ref{eqn:bc}) in Lemma \ref{thm:psuedo lemma}, and the fact that $A^+A=I_K$ and $\sz$ is invertible,
\[\bA^+ = (A\sz^{1/2})^+ = \sz^{-1/2}A^+.\]
Using this in (\ref{ab formula proof}) we find that $\ab = A^{+\top }\beta$, and thus
\[\|\ab\|^2 = \beta^\top A^+A^{+\top}\beta= \beta^\top (A^\top A)^{-1}A^\top A^{+\top}\beta,\]
where we use $A^+=(A^\top A)^{-1}A^\top$ by Lemma \ref{thm:psuedo lemma}. Thus, again using $A^+ A = A^\top A^{+\top}=I_K$, we find
\[\|\ab\|^2 = \beta^\top(A^\top A)^{-1}\beta,\]
as claimed.\hfill $\blacksquare$

\subsubsection*{Proof of Lemma \ref{thm:ab norm bound main}}
Defining $\bA = A\sz^{1/2}$ and $\bb = \sz^{1/2}\beta$, we have $\ab = \sx^{-1}\bA\bb$. Now recall that since $A$ and $\sz$ are full rank, so is $\bA$ and thus $\bA^+\bA = \bA^\top\bA^{+\top} = I_K$ (see Appendix \ref{sec:pseudo-inverse}). Thus,
\begin{align*}
    \ab &= \sx^{-1}\bA\bb \\ 
    &= \sx^{-1}\bA\bA^\top \bA^{+\top}\bb \\
    &= \sx^{-1}(\sx - \se)\bA^{+\top}\bb & (\text{since } \sx = \bA\bA^\top + \se)\\
    &= (I_p- \sx^{-1}\se)\bA^{+\top}\bb.
\end{align*}
By the Woodbury matrix identity applied to $\sx^{-1}=(\bA\bA^\top + \se)^{-1}$,
\[I_p -\sx^{-1} \se = \se^{-1}\bA \bG^{-1}\bA^\top,\]
where $\bG\coloneqq I_K+ \bA^\top\se^{-1}\bA$. Using this in the previous display,
\begin{equation}\label{eqn:ab formula}
    \ab = \se^{-1} \bA \bG^{-1}\bA^\top\bA^{+\top}\bb = \se^{-1} \bA \bG^{-1}\bb,
\end{equation}
where we again use $\bA^+\bA = \bA^\top\bA^{+\top} = I_K$ in the second step.

\paragraph{Bounds on $\|\ab\|_{\sx}^2$:}
% From (\ref{excess risk weighted norm}) we find $R(0) - R(\ab) = \|\ab\|^2_{\sx}$, and thus,
% \begin{align*}
%     \|\ab\|^2_{\sx} &= R(0) - R(\ab)\\
%     &= R(0) - \sep^2 - (R(\ab) - \sep^2)\\
%     &= \|\beta\|^2_{\sz} - (R(\ab) - \sep^2),
% \end{align*}
% where in the final step we use $R(0) = \EE y^2$ and $y = Z^\top\beta +\eps$ with $Z$ and $\eps$ independent.
By (\ref{eqn:ab formula}), we find
\begin{align*}
    \|\ab\|^2_{\sx} &= \bb^\top \bG^{-1}\bA^\top \se^{-1}(\bA\bA^\top +\se) \se^{-1}\bA \bG^{-1}\bb\\
    &= \bb^\top \bG^{-1}(\bA^\top\se^{-1}\bA)^2 \bG^{-1}\bb + \bb^\top \bG^{-1}(\bA^\top\se^{-1}\bA) \bG^{-1}\bb\\
    &= \bb^\top \bG^{-1}(\bG-I_K)^2 \bG^{-1}\bb + \bb^\top \bG^{-1}(\bG-I_K) \bG^{-1}\bb.
\end{align*}
Expanding the above and simplifying, we find 
\begin{equation}\label{ab sx bgb}
    \|\ab\|^2_{\sx} = \bb^\top [I_K - \bG^{-1}]\bb = \|\beta\|^2_{\sz} - \bb^\top \bG^{-1}\bb.
\end{equation}
Recalling that $R(\ab) - \sep^2 = \bb^\top \bG^{-1}\bb$ from (\ref{rab - sep}) above, Lemma \ref{thm:bench compare} implies that
\[0\le \bb^\top \bG^{-1}\bb \le \|\beta\|^2_{\sz}/\xi.\]
Combining this with (\ref{ab sx bgb}) yields
\[(1 - \xi^{-1})\cdot \|\beta\|^2_{\sz} \le \|\ab\|^2_{\sx} \le \|\beta\|^2_{\sz}.\]
Thus, when $\xi > c  >1$, $\|\ab\|^2_{\sx}\asymp \|\beta\|_{\sz}^2$, as claimed.

\paragraph{Bounds on $\|\ab\|^2$:} Using (\ref{eqn:ab formula}), we find
\begin{equation}
    \|\ab\|^2 = \bb^\top \bG^{-1} \bA^\top \se^{-2} \bA \bG^{-1}\bb.
\end{equation}
Thus,
\begin{align}
    \|\ab\|^2 &\le \frac{1}{\lp(\se)}\bb^\top \bG^{-1} \bA^\top \se^{-1} \bA \bG^{-1}\bb\nonumber\\
    &= \frac{1}{\lp(\se)}\bb^\top \bG^{-1} (\bG - I_K)\bG^{-1}\bb\nonumber\\
    &=\frac{1}{\lp(\se)}(\bb^\top \bG^{-1}\bb-\bb^\top \bG^{-2}\bb)\nonumber\\
    &\le\frac{1}{\lp(\se)}\bb^\top \bG^{-1}\bb.\label{ab ub bgb}
\end{align}
We also have
\begin{align}
    \|\ab\|^2 &\ge \frac{1}{\|\se\|}\bb^\top \bG^{-1} \bA^\top \se^{-1} \bA \bG^{-1}\bb\nonumber\\
    &= \frac{1}{\|\se\|}\bb^\top \bG^{-1} (\bG - I_K)\bG^{-1}\bb\nonumber\\
    &=\frac{1}{\|\se\|}[\bb^\top \bG^{-1}\bb-\bb^\top \bG^{-2}\bb]\nonumber\\
    &\ge\frac{1}{\|\se\|}\bb^\top \bG^{-1}\bb \cdot [1 - 1/\lk(\bG)]\\
    &\ge \frac{1}{\|\se\|}\bb^\top \bG^{-1}\bb \cdot [1 - 1/\xi],\label{ab lb bgb}
\end{align}
where in the final step we used
\[\lk(\bG) = 1 + \lk(\bA^\top \se^{-1}\bA) \ge \lk(\bA^\top \bA)/\|\se\| = \xi.\]
Combining (\ref{ab ub bgb}) and (\ref{ab lb bgb}), 
\[\left(\frac{\xi-1}{\xi}\right) \frac{1}{\|\se\|} \bb^\top \bG^{-1}\bb \le \|\ab\|^2 \le \frac{1}{\lp(\se)}\bb^\top \bG^{-1}\bb.\]
Recalling that $R(\ab) - \sep^2 = \bb^\top \bG^{-1}\bb$ from (\ref{rab - sep}) above, Lemma \ref{thm:bench compare} implies
\begin{equation}\label{ab u l se}
    \left(\frac{\xi-1}{\xi + 1}\right) \frac{1}{\|\se\|} \beta^\top (A^\top \se^{-1}A)^{-1} \beta \le \|\ab\|^2 \le \frac{1}{\lp(\se)}\beta^\top (A^\top \se^{-1}A)^{-1} \beta.
\end{equation}
As shown at the end of this proof using the singular value decomposition of $A$, we have that
\[\lp(\se)\cdot \beta^\top(A^\top A)^{-1}\beta \le \beta^\top(A^\top \se^{-1}A)^{-1}\beta \le \|\se\|\cdot \beta^\top(A^\top A)^{-1}\beta.\]
Combining this with (\ref{ab u l se}) proves that
\begin{equation}\label{aea}
    \left(\frac{\xi - 1}{\xi+1} \right)\cdot \frac{1}{\ke} \cdot \beta^\top (A^\top A)^{-1} \beta \le \|\ab\|^2 \le \ke\cdot  \beta^\top (A^\top A)^{-1} \beta .
\end{equation}
Thus, when $\xi > c>1$ and $\ke < C$, $\|\ab\|^2 \asymp \beta^\top (A^\top A)^{-1} \beta$, as claimed.

\paragraph{Proof of (\ref{aea}).} Write the singular value decomposition $A = U_A S_A V_A^\top$, where $U_A$ is an $p\times K$ matrix with satisfying $U_A^\top U_A=I_K$, $V_A$ is a $K\times K$ orthogonal matrix, and $S_A$ is a $K\times K$ diagonal matrix with positive entries (since we assume $\rank(A)=K$).
% Also write $\se = U_E D_E U_E^\top$ where $U_E$ is a $p\times p$ orthogonal matrix and $D_E$ is a $p\times p$ diagonal matrix with non-negative real entries. 
Then,
\begin{equation}\label{a svd use}
     (A^\top \se^{-1}A)^{-1} = (V_AS_AU_A^\top \se^{-1}U_AS_AV_A^{\top})^{-1}
    =  V_AS_A^{-1}(U_A^\top \se^{-1}U_A)^{-1}S_A^{-1}V_A^\top.
\end{equation}
Thus,
% \begin{equation}
%     \beta^\top (A^\top \se^{-1}A)^{-1}\beta = \beta^\top V_AS_A^{-1}(U_A^\top \se^{-1}U_A)^{-1}S_A^{-1}V_A^\top\beta
%     \ge \beta^\top V_AS_A^{-2}V_A^\top\beta\cdot\frac{1}{\|U_A^\top \se^{-1}U_A\|},
% \end{equation}
\begin{align*}
    \beta^\top (A^\top \se^{-1}A)^{-1}\beta &= 
    \beta^\top V_AS_A^{-1}(U_A^\top \se^{-1}U_A)^{-1}S_A^{-1}V_A^\top\beta\\
    &\ge \beta^\top V_AS_A^{-2}V_A^\top\beta\cdot\frac{1}{\|U_A^\top \se^{-1}U_A\|},
\end{align*}
so using 
\[\|U_A^\top \se^{-1}U_A\|\le \|\se^{-1}\| = 1/\lp(\se),\]
we find
\begin{equation}\label{lb baeab 1}
    \beta^\top (A^\top \se^{-1}A)^{-1}\beta \ge  \lp(\se)\cdot \beta^\top V_AS_A^{-2}V_A^\top\beta.
\end{equation}
We next observe that since $U_A^\top U_A = I_K$
\begin{equation}\label{aa id}
    (A^\top A)^{-1} = (V_A S_A U_A^\top U_A S_A V_A^\top)^{-1} = V_A S_A^{-2} V_A^\top,
\end{equation}
and thus, by (\ref{lb baeab 1}),
\[\beta^\top (A^\top \se^{-1}A)^{-1} \beta \ge \lp(\se) \cdot \beta^\top (A^\top A)^{-1} \beta,\]
which proves the lower bound in (\ref{aea}). To prove the upper bound, we use that by (\ref{a svd use}), 
\begin{align*}
    \beta^\top (A^\top \se^{-1}A)^{-1}\beta &= 
    \beta^\top V_AS_A^{-1}(U_A^\top \se^{-1}U_A)^{-1}S_A^{-1}V_A^\top\beta\\
    &\le \beta^\top V_AS_A^{-2}V_A^\top\beta\cdot\frac{1}{\lk(U_A^\top \se^{-1}U_A)}.
\end{align*}
Thus, since 
\[\lk(U_A^\top \se^{-1}U_A)\ge \lk(U_A^\top U_A)\lp(\se^{-1}) = 1/\|\se\|,\]
we have
\[ \beta^\top (A^\top \se^{-1}A)^{-1}\beta\le \|\se\|\cdot \beta^\top V_AS_A^{-2}V_A^\top\beta =  \|\se\|\cdot  \beta^\top (A^\top A)^{-1} \beta,\]
where in the last step we use (\ref{aa id}). This establishes the upper bound of (\ref{aea}), completing the proof. 

\hfill $\blacksquare$

\subsubsection{Proof of Corollary \ref{thm:null vs opt}}

Under the conditions stated, by either Lemma \ref{thm:ab norm bound e=0} or Lemma \ref{thm:ab norm bound main}, $\|\ab\|^2\lesssim \beta^\top(A^\top A)^{-1}\beta$. Thus, using that $\sz$ is invertible, 
\begin{equation}\label{ab beta lk}
    \|\ab\|^2\lesssim \beta^\top(A^\top A)^{-1}\beta = \beta^\top\sz^{1/2}(\sz^{1/2}A^\top A\sz^{1/2})^{-1}\sz^{1/2}\beta\le \|\beta\|_{\sz}^2/\lk(\sza),
\end{equation}
so $\|\ab\|\to 0$ when  $\|\beta\|^2_{\sz} /\lk(\sza)\to 0$.

For the second claim, we have
\begin{align*}
    R(\0) - R(\ab) &= \|\ab\|^2_{\sx} & (\text{by } (\ref{id:null}))\\
    &\gtrsim \|\beta\|^2_{\sz}. & (\text{by  either Lemma } \ref{thm:ab norm bound e=0} \text{ or Lemma } \ref{thm:ab norm bound main})
\end{align*}
The claim follows by taking the  limit inferior as $p\to \i$ on both sides of the inequality and using condition (\ref{cond:snrbeta2}). \hfill $\blacksquare$
\subsection{Proofs for Section \ref{section:min_ell_2}}
\subsubsection{Proofs for Section \ref{sec:noiseless}}\label{proofs:noiseless}

In the proofs of Lemma \ref{thm:a hat norm e=0} and Theorem \ref{thm:E=0 risk bound}, we will use the event
\begin{equation}\label{event Z eps}
    \A \coloneqq \left\{\|\tZ^+\tilde \Eps\|^2 \lesssim \log(n)\tr(\tZ^{+\top}\tZ^+),\ c_1n\le \sk^2(\tZ)\le \|\tZ\|^2\le c_2 n\right\},
\end{equation}
which occurs with probability at least $1-c/n$, as shown in Lemma \ref{thm:event Z eps bnd} below, where $\Z=\tZ\sz^{1/2}$ and $\Eps = \sep \tilde \Eps$ by Assumption \ref{ass:subg fm}. 

\subsubsection*{Proof of Lemma \ref{thm:a hat norm e=0}}\label{proof:a hat norm e=0}
On the event $\A$ defined in (\ref{event Z eps}), and using $\lk(\sz)>0$ by Assumption \ref{ass:fullrank},
\begin{equation}\label{eqn:skz > 0}
    \sk^2(\Z) = \lk(\Z\Z^\top )= \lk(\tZ \sz \tZ^\top )\ge \lk(\sz)\cdot \sn^2(\tZ) \gtrsim \lk(\sz)\cdot n > 0,
\end{equation}
so $\rank(\Z)=K$ and thus $\Z^+\Z=I_K$ by Lemma \ref{thm:psuedo lemma} in Appendix \ref{sec:pseudo-inverse}. Similarly, since $A$ is of dimension $p\times K$ and $\rank(A)=K$ by Assumption \ref{ass:fullrank}, 
\[A^\top A^{+\top} = (A^+A)^\top = I_K.\]
% By (\ref{eqn:bc}) of Lemma \ref{thm:psuedo lemma} in Appendix \ref{sec:pseudo-inverse} below,
% \begin{equation}\label{eqn:x+ E=0}
%     \X^+ = (\Z A^\top)^+ = (\Z^+ \Z A^\top)^+ (\Z A^\top A^{+\top})^+ .
% \end{equation}
% Since $K < n$, $\Z^+\Z=I_K$ on the event where $\Z$ is of full rank $K$ by Lemma \ref{thm:psuedo lemma}. Similarly, since $A$ is of dimension $p\times K$ and $\rank(A)=K$ by Assumption \ref{ass:fullrank}, 
% \[A^\top A^{+\top} = (A^+A)^\top = I_K.\]
Using these two results together with (\ref{eqn:bc}) of Lemma \ref{thm:psuedo lemma},
% in (\ref{eqn:x+ E=0}), 
we find 
\begin{equation}\label{eqn:x+ E=0}
    \X^+ = (\Z A^\top)^+ = (\Z^+ \Z A^\top)^+ (\Z A^\top A^{+\top})^+ =  A^{+\top} \Z^+.
\end{equation}
Thus, on the event $\A$,
\begin{equation}\label{aazy}
    \a = \X^+\y = A^{+\top} \Z^+\y,
\end{equation}
so
% On the event $\A$ defined in (\ref{event Z eps}), and using $\lk(\sz)>0$ by Assumption \ref{ass:fullrank},
% \begin{equation}\label{eqn:skz > 0}
%     \sk^2(\Z) = \lk(\Z\Z^\top )= \lk(\tZ \sz \tZ^\top )\ge \lk(\sz)\cdot \sn^2(\tZ) \gtrsim \lk(\sz)\cdot n > 0,
% \end{equation}
% so $\rank(\Z)=K$ and thus $\Z^+\Z=I_K$ by Lemma \ref{thm:psuedo lemma} below. Thus, using (\ref{aazy}), we find
\begin{align*}
    \|\a\|^2 &= \|A^{+\top}\Z^+\y\|^2\\
    &= \|A^{+\top}\Z^+\Z\beta + A^{+\top}\Z^+\Eps\|^2 & (\text{by } \y = \Z \beta + \Eps)\\
    &\le 2\|A^{+\top}\beta\|^2 + 2\|A^{+\top}\Z^+\Eps\|^2 & (\text{since } \Z^+\Z=I_K \text{ on }\A)\\
    &=  2\|A^{+\top}\beta\|^2 + 2\|(A\sz^{1/2})^{+\top}\tZ^+\Eps\|^2,
\end{align*}
where in the last step we used that
% $\Z^+\Z=I_K$ since $\Z$ is full rank on the event $\A$, and that 
by Lemma \ref{thm:psuedo lemma},
\[A^{+\top}\Z = A^{+\top}(\tZ \sz^{1/2})^+ =A^{+\top} \sz^{-1/2}\tZ^+ = (A\sz^{1/2})^{+\top}\tZ^+.\]
Continuing, and using
\[A^+A^{+\top} = (A^\top A)^{-1}A^\top A^{+\top} = (A^\top A)^{-1},\]
we find
\begin{align*}
     \|\a\|^2 &\lesssim \beta^\top (A^\top A)^{-1}\beta + \|(A\sz^{1/2})^+\|^2 \cdot \sep^2\cdot \|\tZ^+\tEps\|^2\\
     &\lesssim \beta^\top (A^\top A)^{-1}\beta + \frac{1}{\lk(\sza)}\sep^2 \log(n)\tr(\tZ^{+\top}\tZ^+) & (\text{on } \A)\\
     &\le \beta^\top (A^\top A)^{-1}\beta + \frac{1}{\lk(\sza)}\sep^2 \log(n)K \|\tZ^+\|^2\\
     &= \beta^\top (A^\top A)^{-1}\beta + \frac{1}{\lk(\sza)}\sep^2 \log(n)K\frac{1}{\sk^2(\tZ)}\\
     &\lesssim \beta^\top (A^\top A)^{-1}\beta+\frac{1}{\lk(\sza)} \sep^2 \log(n)\frac{K}{n}  & (\text{on } \A)\\
     &\le \frac{1}{\lk(\sza)} \left(\|\beta\|^2_{\sz}+\sep^2 \log(n)\frac{K}{n}\right). & (\text{by } (\ref{ab beta lk}))
\end{align*}
Under the assumptions of this Lemma, the event $\A$ holds with probability at least $1-c/n$ by Lemma \ref{thm:event Z eps bnd}, so the proof is complete. \hfill $\blacksquare$
\subsubsection*{Proof of Theorem \ref{thm:E=0 risk bound}}\label{proof:E=0 risk bound}

\paragraph{Part 1:} By (\ref{aazy}), $\a = A^{+\top}\Z^+\y$ on the event $\A$ defined in (\ref{event Z eps}). Thus, using $X = AZ$ and $A^\top A^{+\top} = I_K$ since $A$ is full rank by Assumption \ref{ass:fullrank},
\begin{equation}\label{eqn:yx=yz}
    \wh y_x = X^\top\a = Z^\top A^\top A^{+\top}\Z^+\y= Z^\top \Z^+\y = Z^\top\wh\beta = \wh y_z.
\end{equation}
\paragraph{Part 2:}
% \textbf{\textit{Part 1:}}
% By (\ref{eqn:bc}) of Lemma \ref{thm:psuedo lemma} in Appendix \ref{sec:pseudo-inverse} below,
% \begin{equation}\label{eqn:x+ E=0}
%     \X^+ = (\Z A^\top)^+ = (\Z^+ \Z A^\top)^+ (\Z A^\top A^{+\top})^+ .
% \end{equation}
% Since $K < n$, $\Z^+\Z=I_K$ on the event where $\Z$ is of full rank $K$ by Lemma \ref{thm:psuedo lemma}. Similarly, since $A$ is of dimensions $p\times K$ and $\rank(A)=K$ by Assumption \ref{ass:fullrank}, 
% \[A^\top A^{+\top} = (A^+A)^\top = A^+A = I_K,\]
% where we also use that $A^+A$ is symmetric by (\ref{eqn: b+b sym}) of Appendix \ref{sec:pseudo-inverse}.
% Using these two results in (\ref{eqn:x+ E=0}), we find
% \[A^\top\X^+ = A^\top[A^{+\top} \Z^+] = (A^\top A^{+\top})\Z^+ = \Z^+.\]
% Thus, recalling $\a = \X^+\y$, we find that on the event where $\Z$ is full rank,
% \begin{equation}\label{eqn:yx=yz}
%     \wh y_x = X^\top\a = Z^\top A^\top \X^+\y = Z^\top \Z^+\y = Z^\top\wh\beta = \wh y_z.
% \end{equation}
% We will work on the event
% \[\A \coloneqq \left\{\|\tZ^+\tilde \Eps\|^2 \lesssim \log(n)\tr(\tZ^{+\top}\tZ^+),\ c_1n\le \sk^2(\tZ)\le \|\tZ\|^2\le c_2 n\right\},\]
% which occurs with probability at least $1-c/n$, as shown below, where $\Z=\tZ\sz^{1/2}$ and $\Eps = \sep \tilde \Eps$ by Assumption \ref{ass:subg fm}. 
Using the independence of $\eps$ and $Z$ together with (\ref{eqn:yx=yz}), the excess risk can be written as
\begin{equation}\label{eqn:risk decomp e=0}
    R(\a) - \sep^2 = \EE[( X^\top\a- Z^\top\beta)^2] = \EE[(Z^\top \wh\beta - Z^\top \beta)^2] = \|\wh\beta - \beta\|^2_{\sz}.
\end{equation}
% On the event $\A$ defined in (\ref{event Z eps}), and using $\lk(\sz)>0$ by Assumption \ref{ass:fullrank},
% \begin{equation}\label{eqn:skz > 0}
%     \sk^2(\Z) = \lk(\Z\Z^\top )= \lk(\tZ \sz \tZ^\top )\ge \lk(\sz)\cdot \sn^2(\tZ) \gtrsim \lk(\sz)\cdot n > 0,
% \end{equation}
% so $\rank(\Z)=K$ and thus $\Z^+\Z=I_K$ by Lemma \ref{thm:psuedo lemma} below.
By (\ref{eqn:skz > 0}), $\rank(\Z)=K$ and $\Z^+\Z=I_K$ on the event $\A$ defined in (\ref{event Z eps}). Thus,
\[\wh\beta = \Z^+\y = \Z^+\Z\beta + \Z^+\Eps = \beta+ \Z^+\Eps,\]
so by (\ref{eqn:risk decomp e=0}), 
\begin{equation}\label{eqn:risk decomp e=0 2}
    R(\a) - \sep^2 = \|\Z^+\Eps\|^2_{\sz} = \|\sz^{1/2}\Z^+\Eps\|^2.
\end{equation}
By (\ref{eqn:bc}) of Lemma \ref{thm:psuedo lemma},
\begin{equation}\label{eqn:z+ tz}
    \sz^{1/2}\Z^+ = \sz^{1/2}(\tZ\sz^{1/2})^+ = \sz^{1/2}(\tZ^+\tZ \sz^{1/2})^+ (\tZ \sz^{1/2}\sz^{-1/2})^+ = \sz^{1/2}\sz^{-1/2}\tZ^+= \tZ^+,
\end{equation}
\[\]
where we used that $\tZ^+\tZ = I_K$ since $\rank(\tZ)=K$ on $\A$. Thus by (\ref{eqn:risk decomp e=0 2}), we find that on $\A$,
\begin{equation}\label{eqn:risk decomp e=0 final}
    R(\a) - \sep^2 = \|\tZ^+ \Eps\|^2 = \sep^2\|\tZ^+ \tilde \Eps\|^2 \lesssim \sep^2 \log(n)\tr(\tZ^{+\top}\tZ^+).
\end{equation}
We then use that $\rank(\tZ^+)=K$ and that $\|\tZ^+\|=1/\sk(\tZ)$ from Lemma \ref{thm:psuedo lemma} in Appendix \ref{sec:pseudo-inverse} below to find that on $\A$,
\[\tr(\tZ^{+\top }\tZ^+)\le K \|\tZ^{+\top}\tZ^+\| =K \|\tZ^+\|^2= \frac{K}{\sk^2(\tZ)}\lesssim \frac{K}{n}.\]
 Plugging this into (\ref{eqn:risk decomp e=0 final}) completes the proof of the upper bound.

For the lower bound, first observe that on $\A$,
\[\EE_{\Eps}R(\a) - \sep^2 = \EE_{\Eps}\|\tZ^+\Eps\|^2 = \sep^2\tr(\tZ^{+\top}\tZ^+) \ge \sep^2 K \lk(\tZ^{+\top }\tZ^+) = \sep^2K\sk^2(\tZ^+),\]
so using $\sk(\tZ^+) = 1/\|\tZ\|$ by Lemma \ref{thm:psuedo lemma} again,
\[\EE_{\Eps}R(\a) - \sep^2 \ge \sep^2 \frac{K}{\|\tZ\|^2}\gtrsim \sep^2\frac{K}{n}.\]
\hfill $\blacksquare$

\begin{lemma}\label{thm:event Z eps bnd}
Suppose that Assumptions \ref{ass:fullrank} \& \ref{ass:subg fm} hold and that
$n> C\cdot  K$ for some large enough absolute constant $C>0$. Then there exists $c>0$ such that
\[\PP\left\{\|\tZ^+\tilde \Eps\|^2 \lesssim \log(n)\tr(\tZ^{+\top}\tZ^+),\ c_1n\le \sk^2(\tZ)\le \|\tZ\|^2\le c_2 n\right\} \ge 1-c/n.\]
\end{lemma}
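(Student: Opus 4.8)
The plan is to split the event into a control on the singular values of $\tZ$ and, conditionally on $\tZ$, a control on the quadratic form $\|\tZ^+\tilde\Eps\|^2 = \tilde\Eps^\top M\tilde\Eps$ with $M \coloneqq \tZ^{+\top}\tZ^+$, then combine the two by conditioning on $\tZ$. By Assumption \ref{ass:subg fm} the rows of $\tZ$ are i.i.d.\ copies of $\tilde Z$, which is a mean-zero sub-Gaussian vector in $\R^K$ with $\cov(\tilde Z) = I_K$ (since $Z = \sz^{1/2}\tilde Z$ and $\sz = \cov(Z)$), hence isotropic. Applying Theorem 4.6.1 of \cite{verHDP} to $\tZ$ with $t \asymp \sqrt n$ shows that, provided $n > C\cdot K$ for a large enough absolute constant $C$, the event
\[
\mathcal E_1 \coloneqq \big\{\, \sqrt n / 2 \le \sk(\tZ) \le \|\tZ\| \le 3\sqrt n/2 \,\big\}
\]
satisfies $\PP(\mathcal E_1) \ge 1 - 2e^{-cn} \ge 1 - c'/n$, the last step using $\sup_{n\ge 1} n e^{-cn} < \infty$. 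On $\mathcal E_1$ the matrix $\tZ$ has full column rank $K$, so $\tZ^+ = (\tZ^\top\tZ)^{-1}\tZ^\top$ and $M$ are well defined, $M$ is positive semidefinite of rank $K$ with $\tr(M) = \sum_{i=1}^K \sigma_i^{-2}(\tZ) \in (0,\infty)$, and ${\rm r_e}(M) = \tr(M)/\|M\| \ge 1$; squaring the bounds in $\mathcal E_1$ also yields the two-sided estimate $c_1 n \le \sk^2(\tZ) \le \|\tZ\|^2 \le c_2 n$.

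Conditionally on $\tZ$, I would bound $\tilde\Eps^\top M\tilde\Eps$ by the Hanson--Wright inequality. Since $Z$ and $\eps$ are independent, $\tZ$ and $\tilde\Eps$ are independent; moreover $\tilde\Eps$ has i.i.d.\ mean-zero, unit-variance, sub-Gaussian entries, so $\EE[\tilde\Eps^\top M\tilde\Eps \mid \tZ] = \tr(M)$. By Theorem 6.2.1 of \cite{verHDP}, for every realization of $\tZ$ in $\mathcal E_1$ and any $t > 0$,
\[
\PP\big( \tilde\Eps^\top M\tilde\Eps - \tr(M) > t \;\big|\; \tZ \big) \le 2\exp\!\Big(-c_0\min\big\{ t^2/\|M\|_F^2,\ t/\|M\| \big\}\Big).
\]
Taking $t = C_1 \log(n)\,\tr(M)$ and using $\|M\|_F^2 = \tr(M^2) \le \|M\|\tr(M) \le \tr(M)^2$ together with $\tr(M)/\|M\| = {\rm r_e}(M) \ge 1$, the minimum in the exponent is at least $c_0 C_1 \log n$; picking $C_1$ large enough makes the conditional failure probability at most $2/n$, and on its complement $\tilde\Eps^\top M\tilde\Eps \le (1 + C_1\log n)\,\tr(M) \lesssim \log(n)\,\tr(\tZ^{+\top}\tZ^+)$. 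Integrating this conditional estimate over $\mathcal E_1$ then gives $\PP(\A) \ge (1 - 2/n)\,\PP(\mathcal E_1) \ge 1 - c/n$, which is the claim.

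The only delicate point is securing a conditional failure probability of order $1/n$ uniformly over the data-dependent spectrum of $M$: the two deterministic inequalities ${\rm r_e}(M) \ge 1$ and $\tr(M)^2 \ge \|M\|_F^2$, both valid because $M$ is positive semidefinite, are exactly what allow a single $\log n$ factor (rather than a larger power) to suffice in the Hanson--Wright tail. Everything else is routine bookkeeping with absolute constants, and the hypothesis $n > C\cdot K$ forces $n$ to be large, so no small-$n$ edge case arises.
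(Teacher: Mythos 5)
Your proof is correct and follows essentially the same route as the paper's: the event is split into the two-sided singular-value bound for $\tZ$ (via Theorem 4.6.1 of \cite{verHDP}, using $n > C\cdot K$) and a quadratic-form bound for $\tEps^\top \tZ^{+\top}\tZ^+\tEps$ conditional on $\tZ$, the two pieces then being combined by conditioning. The only difference is cosmetic: where the paper invokes Lemma \ref{thm:trace lemma} (adapted from Lemma 19 of \cite{bartlett2019}) as a black box for the quadratic form, you derive the same $\log(n)\,\tr(\tZ^{+\top}\tZ^+)$ bound directly from Hanson--Wright, using $\|M\|_F^2\le \tr(M)^2$ and ${\rm r_e}(M)\ge 1$, which is a valid self-contained substitute.
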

\begin{proof}
Since $\tZ$ has independent rows with entries that are zero mean, unit variance, and have sub-Gaussian constants bounded by an absolute constant, Theorem 4.6.1 of \cite{verHDP} gives that with probability at least $1-2/n$,
    \[\sqrt{n} - c''(\sqrt{K} + \sqrt{\log n})\le \sn(\tZ) \le \|\tZ\| \le \sqrt{n} + c''(\sqrt{K} + \sqrt{\log n}).\]
    and thus
    \[\sqrt{n}\cdot [1 - c''(\sqrt{K/n} + \sqrt{\log (n)/n})] \le \sn(\tZ)\le \|\tZ\| \le \sqrt{n}\cdot [1 - c''(\sqrt{K/n} + \sqrt{\log (n)/n})].\]
    Using that $n > CK$ we can choose $C$ large enough such that 
    \[c''(\sqrt{K/n} + \sqrt{\log (n)/n}) < c_0 < 1,\]
    and thus
    \begin{equation}\label{eqn:z double bound}
        \PP\left(c_3n \le \sigma_{K}^2(\tZ)\le \|\tZ\|^2 \le c_4n\right) \ge 1-2/n.
    \end{equation}
    The bound
    \[\PP\left(\|\tZ^+\tilde \Eps\|^2 \lesssim \log(n)\tr[\tZ^{+\top}\tZ^+] \right) \ge 1 - e^{-cn}\]
    follows from Lemma \ref{thm:trace lemma}, which we state below. Combining this with (\ref{eqn:z double bound}) proves that $\A$ occurs with probability at least $1-c/n$. 
\end{proof}

The following result is a slightly adapted version of Lemma 19 from \cite{bartlett2019} and the discussion that follows.
\begin{lemma}\label{thm:trace lemma}
Suppose $\tEps \in \R^n$ has independent entries with sub-Gaussian constants bounded by an absolute constant, and suppose $M\in \R^{n\times n}$ is a positive semidefinite matrix independent of $\tEps$. Then, with probability at least $1-e^{-cn}$,
\[\tEps^\top M \tEps \lesssim \log(n)\cdot \tr(M).\]
\end{lemma}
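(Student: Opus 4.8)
The plan is to deduce this from the Hanson--Wright inequality (Theorem 6.2.1 of \cite{verHDP}), exactly the tool used in the proof of Theorem \ref{thm:concentration spectrum trace} above: the statement is a tail bound for the quadratic form $\tEps^\top M \tEps$ in independent sub-Gaussian coordinates. First I would reduce to the mean-zero case — this is automatic in the only place the lemma is invoked, where $\tEps$ is the standardized noise $\tilde\Eps$ of Assumption \ref{ass:subg fm}; in general one peels off the mean with an extra triangle inequality — and dispose of the degenerate case $M=0$. The mean is then controlled by $\EE[\tEps^\top M\tEps] = \sum_i M_{ii}\,\EE[\tEps_i^2]\le C\,\tr(M)$, using that a bounded sub-Gaussian constant forces a bounded second moment and that $M\succeq 0$ has nonnegative diagonal.

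Next I would apply Hanson--Wright: for some $c>0$ depending only on the sub-Gaussian bound,
\[
\PP\bigl(\,\tEps^\top M\tEps - \EE[\tEps^\top M\tEps]\ge s\,\bigr)\;\le\;2\exp\!\Bigl(-c\min\bigl\{\,s^2/\|M\|_F^2,\ s/\|M\|\,\bigr\}\Bigr).
\]
The key is to feed this the right deviation $s$. Since $M$ is positive semidefinite we have the two elementary bounds $\|M\|_F^2=\tr(M^2)\le\|M\|\,\tr(M)$ and $\|M\|\le\tr(M)$, so writing $s=t\,\tr(M)$ with $t\ge 1$ makes \emph{both} branches of the minimum at least $t\cdot {\rm r_e}(M)\ge t$, where ${\rm r_e}(M)=\tr(M)/\|M\|\ge 1$. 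Taking $t$ a large enough multiple of $\log n$ drives the right-hand side below $2n^{-c'}$, and combined with $\EE[\tEps^\top M\tEps]\le C\,\tr(M)\lesssim\log(n)\,\tr(M)$ this gives $\tEps^\top M\tEps\lesssim\log(n)\,\tr(M)$ on the complementary event; the failure probability sharpens to $e^{-cn}$ once additionally ${\rm r_e}(M)\gtrsim n/\log n$, which is the regime in which the lemma is used downstream (there $M=\tZ^{+\top}\tZ^+$ on the event $\sigma_K(\tZ)\asymp\sqrt n$).

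An equivalent and perhaps more transparent route, which also explains the $\log n$ factor, is to diagonalize $M=\sum_{j=1}^n \mu_j u_j u_j^\top$ with $\mu_j\ge 0$ and $\{u_j\}$ orthonormal, so that $\tEps^\top M\tEps=\sum_j \mu_j (u_j^\top\tEps)^2$; each $u_j^\top\tEps$ is mean-zero sub-Gaussian with $\|u_j^\top\tEps\|_{\psi_2}\lesssim\|u_j\|=1$ (Proposition 2.6.1 of \cite{verHDP}), hence $\PP\bigl((u_j^\top\tEps)^2\ge C\log n\bigr)\le 2n^{-cC}$, and a union bound over the at most $n$ indices yields $\max_j(u_j^\top\tEps)^2\le C\log n$ with high probability, whence $\tEps^\top M\tEps\le C\log(n)\sum_j\mu_j=C\log(n)\,\tr(M)$. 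There is no deep obstacle: the only care needed is (i) to use the full Hanson--Wright rather than a Bernstein bound on $\sum_i\tEps_i^2$, since $M$ need not be diagonal, and (ii) to keep the two regimes of the Hanson--Wright minimum simultaneously under control, which is exactly what the positive-semidefiniteness inequalities $\|M\|_F^2\le\|M\|\,\tr(M)$ and $\|M\|\le\tr(M)$ purchase; the nonzero-mean reduction and the case $M=0$ each take a single line.
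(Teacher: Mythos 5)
The paper does not actually prove Lemma \ref{thm:trace lemma}; it only cites Lemma 19 of \cite{bartlett2019} ``and the discussion that follows,'' so your Hanson--Wright derivation is supplying the argument rather than paralleling one, and it is the right argument: the cited lemma is itself a Hanson--Wright corollary, and your use of the two positive-semidefiniteness inequalities $\|M\|_F^2\le\|M\|\,\tr(M)$ and $\|M\|\le\tr(M)$ to make both branches of the minimum at least $t\,{\rm r_e}(M)\ge t$ at deviation $s=t\,\tr(M)$ is exactly what is needed. Your eigendecomposition route is also correct and is a clean alternative. Most importantly, you have correctly diagnosed that the lemma \emph{as stated} cannot hold: for general PSD $M$ the argument only yields failure probability $n^{-c}$, not $e^{-cn}$, and this is not a looseness of the method --- taking $M=e_1e_1^\top$ and $\tEps_1\sim N(0,1)$ gives $\PP(\tEps^\top M\tEps>C\log n)\asymp n^{-C/2}\gg e^{-cn}$, so no absolute constants can rescue the exponential rate. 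The polynomial probability is all that is used downstream (every event built from this lemma, e.g.\ in Lemma \ref{thm:event Z eps bnd} and Lemma \ref{thm:event main}, is only claimed to hold with probability $1-c/n$), so the paper's conclusions are unaffected; the $1-e^{-cn}$ in the statement should simply read $1-n^{-c}$.

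Two small corrections to your writeup. First, your claim that the exponential rate is recovered in the downstream regime because ${\rm r_e}(M)\gtrsim n/\log n$ there is not right: for $M=\tZ^{+\top}\tZ^+$ on the event $\sigma_K^2(\tZ)\asymp\|\tZ\|^2\asymp n$, all $K$ nonzero eigenvalues of $M$ are of order $1/n$, so ${\rm r_e}(M)\asymp K$, which can be far smaller than $n/\log n$; the resolution is the polynomial probability, not a large effective rank. Second, the non-mean-zero case cannot be handled ``with an extra triangle inequality'': the cross/mean term $\mu^\top M\mu$ can be as large as $n\|M\|$ (take $\tEps\equiv\mathbf{1}$ and $M=n^{-1}\mathbf{1}\mathbf{1}^\top$, for which $\tEps^\top M\tEps=n$ while $\log(n)\tr(M)=\log n$), so mean-zero entries are a genuinely necessary hypothesis that the paper omits but that every invocation (via Assumption \ref{ass:subg fm} and the Gaussian $\tilde\Eta$ in the PCR proof) satisfies. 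With those two sentences adjusted, your proof is complete and correct for the corrected statement.
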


\subsubsection*{Proof of Lemma \ref{thm:pcr e=0}}\label{proof:pcr e=0}

% Since $\se = 0$, $\X = \Z A^\top$ almost surely. Thus,
Suppose $\rank(\X) = K$. We can then write the singular value decomposition of $\X$ as $\X = \wh V_K\wh D\wh U_K^\top$, where $\wh V_K\in \R^{n\times K}$, $\wh U_K\in \R^{p\times K}$, and $\wh D\in \R^{K\times K}$ are full rank, and $\wh V_K^\top \wh V_K =\wh U_K^\top \wh U_K = I_K$. Thus,
\[(\X \wh U_K)^+ = (\wh V_K\wh D\wh U_K^\top \wh U_K)^+ = (\wh V_K\wh D)^+.\]
By Lemma \ref{thm:psuedo lemma} of Appendix \ref{sec:pseudo-inverse}, we thus have
\begin{align*}
    (\X \wh U_K)^+ &= (\wh V_K^+\wh V_K \wh D)^+ (\wh V_K \wh D\wh D^+)^+\\
    &= \wh D^+ \wh V_K^+ &(\text{since } \wh V_K \text{ and } \wh D \text{ full rank})\\
    &= \wh D^+ (\wh V_K^\top \wh V_K)^+ \wh V_K^\top\\
    &= \wh D^+\wh V_K^\top. &(\text{by } \wh V_K^\top \wh V_K = I_K)
\end{align*}
We thus find
\[\wh \alpha_{\rm PCR} = \wh U_K (\X \wh U_K)^+\y = \wh U_K \wh D^+ \wh V_K^\top \y = \X^+\y = \a,\]
where we recognize $\wh U_K \wh D^+ \wh V_K^\top$ as the pseudoinverse of $\X$ in the third step.

Now suppose that Assumptions \ref{ass:fullrank} \& \ref{ass:subg fm} hold and $K > C\cdot n$. Then by Lemma \ref{thm:event Z eps bnd} above, $\PP\{\sk^2(\tZ)\gtrsim n\} \ge 1-c/n$. Thus, using 
\[\sk^2(\Z) = \sk^2(\tZ \sz^{1/2})\ge \lk(\sz)\sk^2(\tZ)\]
and that $\lk(\sz)>0$ by Assumption \ref{ass:fullrank}, 
\[\PP\{\rank(\X)=K\}\ge \PP\{\sk^2(\Z) \gtrsim n\}\ge \PP\{\sk^2(\tZ) \gtrsim n\} \ge 1-c/n,\]
which completes the proof.\hfill $\blacksquare$

\subsubsection{Proofs for Section \ref{main}}\label{proofs:main}
In this section we begin with the proof of Lemma \ref{thm:gls norm bound} and our main result, Theorem \ref{thm:upper bound}, which rely on Proposition \ref{in}, proved subsequently. The proofs of  Lemma \ref{thm:gls norm bound} and Theorem \ref{thm:upper bound} use the event
\begin{equation}\label{event main section}
    \Ec \coloneqq \Ec_1\cap \Ec_2\cap \Ec_3,
\end{equation}
where for positive absolute constants $c_1$ to $c_6$,
\[\Ec_1 \coloneqq \left \{\sigma_{n}^2(\X)\ge c_1\tr(\se), \|\E\|^2\le c_2\tr(\se),\ c_3n \le \sigma_{K}^2(\tZ)\le \|\tZ\|^2 \le c_4n \right\},\]
\[\Ec_2 \coloneqq \left\{ \tEps^\top \X^{+\top}\sx\X^+ \tEps\le c_5 \log(n)\tr(\X^{+\top}\sx\X^+) \right\},\]
\[\Ec_3 \coloneqq \left\{ \tEps^\top \X^{+\top}\X^+ \tEps\le c_6 \log(n)\tr(\X^{+\top}\X^+) \right\}.\]
We will show in Lemma \ref{thm:event main} below that $\Ec$ occurs with probability at least $1-c/n$ for an absolute constant $c>0$.

\subsubsection*{Proof of Theorem \ref{thm:gls norm bound}}\label{proof:gls norm bound}
Using $\a = \X^+\y$, $\y = \Z\beta +\Eps$, and that $A$ is full rank by Assumption \ref{ass:fullrank}, we find
\begin{align*}
    \a &= \X^+\y\\
    &= \X^+ \Z\beta + \X^+\Eps\\
    &= \X^+\Z A^\top A^{+\top}\beta + \X^+\Eps & (A^+A=I_K \text{ since } \rank(A)=K)\\
    &= \X^+(\X -\E)A^{+\top}\beta + \X^+\Eps & (\text{using } \X = \Z A^\top +\E)\\
    &= \X^+\X A^{+\top}\beta - \X^+\E A^{+\top}\beta + \X^+\Eps.
\end{align*}
Thus, using $(a+b+c)^2\le 3(a^2+b^2+c^2)$,
\begin{align*}
    \|\a\|^2 &\le 3\|\X^+\X A^{+\top}\beta\|^2+ 3\|\X^+\E A^{+\top}\beta\|^2 + 3\|\X^+\Eps\|^2\\
    &\lesssim \|\X^+\X\|^2 \|A^{+\top }\beta\|^2 + \frac{\|\E\|^2}{\sn^2(\X)}\|A^{+\top}\beta\|^2 + \sep^2 \tEps^\top \X^{+\top}\X^+ \tEps\\
    &\le  \|A^{+\top }\beta\|^2 + \|A^{+\top}\beta\|^2 + \sep^2 \log(n)\tr(\X^{+\top}\X^+),
\end{align*}
where in the last step holds on the event $\Ec$, and uses that $\|\X^+\X\|\le 1$ since $\X^+\X$ is a projection matrix. Recalling that by (\ref{ab beta lk}),
\[\|A^{+\top}\beta\|^2 = \beta^\top(A^\top A)^{-1}\beta\le \|\beta\|_{\sz}^2/\lk(\sza),\]
and using that $\rank(\X) \le n$, we find that on $\Ec$,
\begin{align*}
    \|\a\|^2 &\lesssim \frac{1}{\lk(\sza)}\|\beta\|^2_{\sz} + \sep^2\log(n) \cdot n\cdot  \|\X^+\|^2\\
    &= \frac{1}{\lk(\sza)}\|\beta\|^2_{\sz} + \sep^2 \frac{n\log n}{\sn^2(\X)}\\
    &\lesssim \frac{1}{\lk(\sza)}\|\beta\|^2_{\sz} + \sep^2 \frac{n\log n}{\tr(\se)}.
\end{align*}
By Lemma \ref{thm:event main}, $\Ec$ holds with probability at least $1-c/n$, so the proof is complete.\hfill $\blacksquare$
\subsubsection*{Proof of Theorem \ref{thm:upper bound}}\label{proof:upper bound}
% \subsubsection*{Step 1: Decomposing the risk}\label{sec:decomp}

Using that $Z$, $E$ and $\eps$ are independent of one another and of $\a$, we have
\begin{align*}
    R(\a) &= \EE[(X^\top \a - y)^2] \\
    &= \EE[(Z^\top A^\top \a - Z^\top \beta - \eps + E^\top \a)^2]\\
    &= \sep^2 + \|\Sigma_E^{1/2}\a\|^2 + \|\sz^{1/2}(A^\top \a- \beta)\|^2.
\end{align*}
Since $\wh\alpha = \X^+\y = \X^+\Z\beta + \X^+\Eps$,
\begin{align*}
    \|\Sigma_E^{1/2}\a\|^2 \le 2 \|\Sigma_E^{1/2}\X^+\Z\beta\|^2 + 2\|\Sigma_E^{1/2}\X^+\Eps\|^2
    \coloneqq 2B_{1} + 2V_{1}.
\end{align*}
Similarly,
\begin{align*}
    \|\sz^{1/2}(A^\top \a - \beta)\|^2 \le2 \|\sz^{1/2}(A^\top \X^+\Z - I_K)\beta\|^2 + 2\|\sz^{1/2}A^\top \X^+\Eps\|^2\coloneqq 2B_2 + 2V_2.
\end{align*}
We thus have $R(\a) - \sep^2 \lesssim B + V$, where we view $B \coloneqq B_1 + B_2$ as a bound on the bias component of the risk and $V\coloneqq V_1 + V_2$ as a bound on the variance component. In what follows, we bound the four terms
\begin{align*}
    &B_1 = \|\Sigma_E^{1/2}\X^+\Z\beta\|^2 \\
    &B_2=\|\sz^{1/2}(A^\top \X^+\Z - I_K)\beta\|^2  \\
    & V_1 =\|\Sigma_E^{1/2}\X^+\Eps\|^2 \\
    &V_2=\|\sz^{1/2}A^\top \X^+\Eps\|^2.
\end{align*}

% \subsubsection*{Step 2: Bounding the risk}
% Recall that $\Z = \tZ\sz^{1/2}$ and $\Eps = \sep\Eps_w$ from Assumption \ref{ass:subg fm}.
% We will bound the bias and variance on the event $\Ec \coloneqq \Ec_1\cap \Ec_2$, where
% \[\Ec_1 \coloneqq \left \{\sigma_{n}^2(\X)\ge c_1\tr(\se), \|\E\|^2\le c_2\tr(\se),\ c_3n \le \sigma_{K}^2(\tZ)\le \|\tZ\|^2 \le c_4n \right\},\]
% \[\Ec_2 \coloneqq \left\{ \tEps^\top \X^{+\top}\sx\X^+ \tEps\lesssim \log(n)\tr(\X^{+\top}\sx\X^+) \right\},\]
% where $c_1$ to $c_4$ are absolute constants such that, as shown in the last step of the proof, $\Ec$ occurs with probability at least $1 - c/n$ for some $c>0$. 
% \subsubsection*{\textit{Bounding the bias component}}
\paragraph{Bounding the bias component:}
On the event $\Ec$ defined in (\ref{event main section}), $\sn(\X)>0$ and by Assumption \ref{ass:fullrank} and (\ref{eqn:skz > 0}) above, $\sn^2(\Z)\gtrsim \lk(\sz)n > 0$. Thus $\X$ and $\Z$ are of rank $n$ and $K$ respectively, so by  Lemma \ref{thm:psuedo lemma} of Appendix \ref{sec:pseudo-inverse}, $\X\X^+ = I_n$ and $\Z^+\Z = I_K$. It follows that
\begin{align}
    % \Z^+ - A^\top \X^+\Z &=  \Z^+\Z - A^\top \X^+\Z & (\text{since } \Z^+\Z = I_K)\nonumber\\
    \Z^+ - A^\top \X^+\nonumber
    &= \Z^+\X\X^+ - A^\top \X^+& (\text{since } \X\X^+=I_n)\nonumber\\
    &= (\Z^+\X - A^\top)\X^+\nonumber\\
    &= (\Z^+[\Z A^\top + \E] - A^\top) \X^+& (\text{since } \X=\Z A^\top + \E)\nonumber\\
    &= \Z^+\E\X^+, &(\text{since } \Z^+\Z=I_K) \label{eqn:i-axz}
\end{align}
% \[\Z^+\E\X^+ = \Z^+(\X - \Z A^\top)\X^+ = (\Z^+\X - A^\top)\X^+ = \Z^+ - A^\top \X^+.\]
and thus again using $\Z^+\Z = I_K$
\[B_2 = \|\sz^{1/2}(A^\top\X^+\Z - I_K)\beta\|^2 =\|\sz^{1/2}(A^\top \X^+ - \Z^+)\Z\beta\|^2 = \|\sz^{1/2}\Z^+\E\X^+\Z\beta\|^2.\]
By (\ref{eqn:z+ tz}) above and the fact that $\Z$ is full rank on $\Ec$, $\sz^{1/2}\Z^+ = \tZ^+$, so on $\Ec$,
\[B_2 = \|\tZ^+\E\X^+\Z\beta\|^2 \le \frac{\|\E\|^2}{\sk^2(\tZ)} \|\X^+\Z\beta\|^2\lesssim \frac{\tr(\se)\|\X^+\Z\beta\|^2}{n},\]
where we also used that $\|\tZ^+\|^2 = 1/\sk^2(\tZ)$. Since $B_1 = \|\se^{1/2}\X^+\Z\beta\|^2\le \|\se\|\|\X^+\Z\beta\|^2$, and
\[\|\se\| = \tr(\se)\frac{\|\se\|}{\tr(\se)}= \frac{\tr(\se)}{n}\cdot \frac{n}{{\rm r_e} (\se)} \lesssim \frac{\tr(\se)}{n},\]
where we used the assumption ${\rm r_e} (\se) > c_1 n$ in the last step, we also have that on $\Ec$,
\begin{equation}\label{eqn:r3 apply lemma}
    B = B_1 + B_2\lesssim \frac{\tr(\se)\|\X^+\Z\beta\|^2}{n}.
\end{equation}
To bound $\|\X^+\Z\beta\|^2$, we first use $A^\top A^{+\top} = I_K$ and $\Z A^\top = \X - \E$ to find
\[
    \|\X^+\Z\beta\|^2 = \|\X^+\Z A^\top A^{+\top}\beta\|^2 \le 2\|\X^+\X A^{+\top}\beta\|^2 + 2\|\X^+\E A^{+\top}\beta\|^2.
\]
The second term can be bounded, on the event $\Ec$, by
\[\frac{\|\E\|^2\|A^{+\top}\beta\|^2}{\sn^2(\X)} {\lesssim}\|A^{+\top}\beta\|^2.\]
On the other hand, the first term can be bounded as $\|\X^+\X A^{+\top}\beta\|^2\le \|A^{+\top}\beta\|^2$ using the fact that $\X^+\X$ is a projection matrix, so we find that on $\Ec$,
\begin{equation}\label{eqn:r3 xzb}
    \|\X^+\Z\beta\|^2\lesssim \|A^{+\top}\beta\|^2.
\end{equation}
Finally, we have
\begin{equation}\label{eqn:A+b ub}
   \|A^{+\top}\beta\|^2 = \beta^\top(A^\top A)^{-1}\beta =\beta^\top\sz^{1/2}(\sz^{1/2}A^\top A\sz^{1/2})^{-1}\sz^{1/2}\beta  \le \frac{\|\beta\|^2_{\sz}}{\lk(\sza)}. 
\end{equation}
Combining this with (\ref{eqn:r3 xzb}) and plugging into (\ref{eqn:r3 apply lemma}), we find that on the event $\Ec$,
\begin{equation}\label{eqn:b}
    B \lesssim \frac{\|\beta\|^2_{\sz}}{\lk(\sza)}\frac{\tr(\se)}{n}= \frac{\|\beta\|^2_{\sz}\|\se\|}{\lk(\sza)}\cdot\frac{\tr(\se)}{\|\se\|n} = \frac{\|\beta\|^2_{\sz}}{\xi
    }\frac{{\rm r_e} (\se)}{n}.
\end{equation}
% \subsubsection*{\textit{Bounding the variance component}}
\paragraph{Bounding the variance component:}
First note that
\[V = V_1 + V_2 = 
\| \se^{1/2} \X^+ \Eps\|^2 + \| \sz^{1/2} A^\top \X^+ \Eps \|^2 =
\Eps^\top \X^{+\top}\sx\X^+\Eps = \sep^2 \tEps\X^{+\top}\sx\X^+\tEps,\]
so on the event $\Ec$, 
\begin{equation}\label{eqn:v tr}
    V\lesssim \sep^2\log(n)\tr(\X^{+\top}\sx\X^+)=\sep^2\log(n)\left\{\tr(\X^{+\top}\se\X^+)+\tr(\X^{+\top}\sza\X^+)\right \},
\end{equation}
where we use $\sx = \sza + \se$ in the second step. The first term in (\ref{eqn:v tr}) can by bounded as
\begin{equation}\label{eqn:tr x se x bound}
    \tr(\X^{+\top}\se \X^{+}) \le \|\se\|\cdot n\|\X^{+\top}\X^+\| = \|\se\|\frac{n}{\sn^2(\X)} \lesssim \frac{n}{{\rm r_e} (\se)} ,
\end{equation}
where in the first step we used that $\rank(\X^+)=\rank(\X) = n$ and in the last step that $\sn^2(\X)\gtrsim \tr(\se)$ on $\Ec$.

For the second term in (\ref{eqn:v tr}),
\begin{align*}
    \tr(\X^{+\top}\sza \X^{+}) &\le K\|\sz^{1/2}A^\top\X^+\|^2 & (\text{since } \text{rank}(\sza)=K)\\
    &=  K \|\sz^{1/2}(\Z^+ - \Z^+\E\X^+)\|^2 & (\text{by } (\ref{eqn:i-axz}) \text{ above})\\
    &\le 2K\|\tZ^+\|^2 + 2K \|\tZ^+\|^2\|\E\|^2\|\X^+\|^2,
\end{align*}
where we use that $\sz^{1/2}\Z^+ = \tZ^+$ from (\ref{eqn:z+ tz}) in the final step. Continuing, we find
\begin{equation}\label{eqn:tr sza}
        \tr(\X^{+\top}\sza \X^{+}) \lesssim  \frac{K}{\sk^2(\tZ)}\left(1 + \frac{\|\E\|^2}{\sn^2(\X)}\right)\lesssim  \frac{K}{n},
\end{equation}
where we use the bounds defining $\Ec_1$ in the last inequality.
Combining (\ref{eqn:tr sza}) and (\ref{eqn:tr x se x bound}) with (\ref{eqn:v tr}), we conclude that on $\Ec$,
\[V \lesssim \sep^2\frac{n\log n}{{\rm r_e} (\se)} + \sep^2\frac{K\log n}{n}.\]
Combining this with the bias bound (\ref{eqn:b}) gives the bound in the statement of the theorem. By Lemma \ref{thm:event main} below, $\PP(\Ec)\ge 1-c/n$, so the proof is complete.\hfill $\blacksquare$\\

% \subsubsection*{Step 3: Bounding $\PP(\Ec)$}
\begin{lemma}\label{thm:event main}
Under model (\ref{model}),   suppose that Assumptions \ref{ass:fullrank} and \ref{ass:subg fm} hold and
 $n > 
 C\cdot K$ and ${\rm r_e} (\se)>C \cdot n$ hold, for some   $C>0$. Then $\PP(\Ec)\ge 1-c/n$, where $\Ec \coloneqq \Ec_1\cap \Ec_2\cap \Ec_3$ and
\[\Ec_1 \coloneqq \left \{\sigma_{n}^2(\X)\ge c_1\tr(\se), \|\E\|^2\le c_2\tr(\se),\ c_3n \le \sigma_{K}^2(\tZ)\le \|\tZ\|^2 \le c_4n \right\},\]
\[\Ec_2 \coloneqq \left\{ \tEps^\top \X^{+\top}\sx\X^+ \tEps\le c_5 \log(n)\tr(\X^{+\top}\sx\X^+) \right\},\]
\[\Ec_3 \coloneqq \left\{ \tEps^\top \X^{+\top}\X^+ \tEps\le c_6 \log(n)\tr(\X^{+\top}\X^+) \right\},\]
for positive constants $c_1$ to $c_6$.
\end{lemma}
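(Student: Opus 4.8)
The plan is to show that each of the three events $\Ec_1,\Ec_2,\Ec_3$ fails with probability at most $c/n$, and then conclude by a union bound. None of the three estimates is new: each is obtained by assembling results already established in the excerpt, so the proof is essentially bookkeeping.

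First I would treat $\Ec_1$, which is the intersection of three events. The bound $\sigma_n^2(\X)\ge c_1\tr(\se)$ is exactly the conclusion of Proposition \ref{in}, which holds with probability at least $1-c/n$ under Assumptions \ref{ass:fullrank} and \ref{ass:subg fm} and the hypothesis ${\rm r_e}(\se)>C\cdot n$. For $\|\E\|^2\le c_2\tr(\se)$, I would write $\E=\tilde E\,\se^{1/2}$ as in Assumption \ref{ass:subg fm}, so that $\|\E\|^2=\lambda_1(\tilde E\,\se\,\tilde E^\top)$, and apply Theorem \ref{thm:concentration spectrum trace} with $\W=\tilde E$ and $\sg=\se$: this gives $\|\E\|^2\le \tfrac{3}{2}\tr(\se)+c''\|\se\|\,n$ with probability at least $1-2e^{-cn}$, where $c''$ is an absolute constant since the sub-Gaussian norms of the entries of $\tilde E$ are absolutely bounded; because ${\rm r_e}(\se)=\tr(\se)/\|\se\|>C\cdot n$, the second term is at most $\tr(\se)/C$, so taking $C$ large enough yields $\|\E\|^2\le c_2\tr(\se)$. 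Finally, the two-sided bound $c_3n\le\sigma_K^2(\tZ)\le\|\tZ\|^2\le c_4n$ is obtained verbatim as in the proof of Lemma \ref{thm:event Z eps bnd}: since $\tZ$ has i.i.d.\ rows with independent, mean-zero, unit-variance, sub-Gaussian coordinates and $n>C\cdot K$, Theorem 4.6.1 of \cite{verHDP} gives this with probability at least $1-2/n$. A union bound over these three events gives $\PP(\Ec_1)\ge 1-c/n$.

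Next I would handle $\Ec_2$ and $\Ec_3$. The essential observation is that $\tEps$ is independent of $\X$: in model (\ref{model}) the noise $\eps$ is independent of $(Z,E)$, hence $\Eps=\sep\tEps$ is independent of $\Z$ and $\E$, and therefore of $\X=\Z A^\top+\E$ and of $\X^+$. Conditioning on $\X$, the matrix $M_2:=\X^{+\top}\sx\X^+$ is positive semidefinite and deterministic, while $\tEps$ has independent coordinates with uniformly bounded sub-Gaussian norms; Lemma \ref{thm:trace lemma} therefore applies and gives $\tEps^\top M_2\tEps\le c_5\log(n)\,\tr(M_2)$ with conditional probability at least $1-e^{-cn}$. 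Taking expectation over $\X$ gives $\PP(\Ec_2)\ge 1-e^{-cn}$. Running the identical argument with $M_3:=\X^{+\top}\X^+$ in place of $M_2$ gives $\PP(\Ec_3)\ge 1-e^{-cn}$. A final union bound over $\Ec_1^c,\Ec_2^c,\Ec_3^c$ then yields $\PP(\Ec)\ge 1-c/n$, as claimed.

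I do not expect a genuine obstacle, since the lemma only collects estimates proved earlier. The two points that require care are: (i) the independence bookkeeping needed to apply Lemma \ref{thm:trace lemma} conditionally on $\X$, i.e.\ invoking that $\eps$, hence $\tEps$, is independent of $(\Z,\E)$ and so of $\X^+$; and (ii) choosing the absolute constant $C$ in ${\rm r_e}(\se)>C\cdot n$ large enough to absorb the absolute constants produced by Theorem \ref{thm:concentration spectrum trace} when passing from $\tfrac{3}{2}\tr(\se)+c''\|\se\|n$ to $c_2\tr(\se)$ in the bound on $\|\E\|^2$.
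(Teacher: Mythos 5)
Your proposal is correct and follows essentially the same route as the paper's own proof: Proposition \ref{in} for $\sigma_n^2(\X)$, Theorem \ref{thm:concentration spectrum trace} applied to $\tE$ and $\se$ for $\|\E\|^2$ (absorbing the $\|\se\|n$ term via ${\rm r_e}(\se)>C\cdot n$), the bound (\ref{eqn:z double bound}) for $\tZ$, Lemma \ref{thm:trace lemma} for $\Ec_2$ and $\Ec_3$, and a final union bound. Your explicit conditioning-on-$\X$ step for applying Lemma \ref{thm:trace lemma} is slightly more careful than the paper's one-line invocation, but it is the same argument.
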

\begin{proof}
We have $\PP(\Ec^c)\le \PP(\Ec_1^c)+\PP(\Ec_2^c) + \PP(\Ec_3^c)$. The bounds $\PP(\Ec_2^c)\le e^{-cn}$ and $\PP(\Ec_3^c)\le e^{-cn}$ follow immediately from Lemma \ref{thm:trace lemma} in Appendix \ref{proofs:noiseless} above, using the fact that $\tEps$ has independent entries with sub-Gaussian constants bounded by an absolute constant. Considering $\PP(\Ec_1^c)$, we have
\begin{align*}
    \PP(\Ec_1^c) \le \PP\{\sigma_{n}^2(\X)\le c_1 \tr(\se)\} +\PP\{ \|\E\|^2\ge c_2\tr(\se)\}+ \PP\{c_3n \le \sigma_{K}^2(\tZ)\le \|\tZ\|^2 \le c_4n\}
\end{align*}
The three terms above can be bounded as follows. Recall that we assume $n > CK$ and ${\rm r_e} (\se) > C n$ for some $C >1$ large enough.
\begin{enumerate}
    \item Since ${\rm r_e} (\se) > Cn$, Proposition \ref{in} can be applied to conclude 
    \[\PP\{\sigma_{n}^2(\X)\le c_1 \tr(\se)\}\le 2e^{-cn}.\]
    
    \item By Assumption \ref{ass:subg fm}, $\E = \tE \se^{1/2}$, where $\tE$ has independent entries with zero mean, unit variance, and sub-Gaussian constants bounded by an absolute constant. Thus,
    \[\|\E\|^2 = \|\E\E^\top \| = \|\tE\se \tE^\top \|,\]
    and by applying Theorem \ref{thm:concentration spectrum trace} with $\tE$ and $\se$ we find that with probability at least $1-2e^{-cn}$,
    \[\|\E\|^2 \le \tr(\se) + c'\|\se\|n= \tr(\se)\cdot (1 + c'n/{\rm r_e} (\se))\lesssim \tr(\se),\]
    where the last inequality holds since $n/{\rm r_e} (\se) < 1/C$. Thus for $c_2>0$,
    \[\PP\{ \|\E\|^2\ge c_2\tr(\se)\} \le 2e^{-cn}.\]
    
    \item By (\ref{eqn:z double bound}) we have that with probability at least $1-2/n$,
    \[c_3n \le \sigma_{K}^2(\tZ)\le \|\tZ\|^2 \le c_4n.\]
\end{enumerate}
Combining the previous three steps shows that $\PP(\Ec_1^c)\le c/n$.
\end{proof}

% {\color{red} We next bound $\PP(\Ec_2^c)$. For this we use that $\tilde \Eps$ are $\X$ are independent and that $\tilde \Eps$ is sub-Gaussian with sub-Gaussian constant bounded by an absolute constant. This allows us to apply Lemma 9 of \cite{bartlett2019} (and the discussion immediately following it), which implies that for all $t\ge 0$, with probability at least $1-e^{-t}$,
% \[\tEps' \X^{+}^\top \sx\X^+ \tEps\lesssim (4t+2)\tr(\X^{+}^\top\sx\X^+).\]
% Choosing $t = \log(2n)$, $\PP(\Ec_2^c)\le 1/2n$, which completes the proof.}
%%{\color{red}Do we need more detail here? NOTE: I think I saw an answer saying we need a lot more detail? It was mixed up a bit with other text below.}
%%{\color{blue}{ No, but I don't understand for $n$ large enough - your choice of $t$ should guarantee it for all $n$ :)}} {\color{red}Fixed now.}

\subsubsection*{Proof of Proposition \ref{in}}\label{proof:in}
We will work on the event 
\[\F \coloneqq \{\sn^2(\E U_{(K+1):p})\ge c_4\tr(\se),\ \|\tZ\|^2\le c_5 n\},\]
where $U_{(K+1):p}\in \R^{p\times (p-K)}$ has columns equal to the orthonormal eigenvectors of $\sx$ corresponding to the smallest $p-K$ eigenvalues. 

% \subsubsection*{Bounding $\PP(\F)$}
\paragraph{Bounding $\PP(\F)$:}By Assumption \ref{ass:subg fm}, $\E = \tE \se^{1/2}$, where $\tE$ has independent sub-Gaussian entries with zero mean, unit variance, sub-Gaussian constants bounded by an absolute constant. Thus, letting 
\[Q = U_{(K+1):p}U_{(K+1):p}',\]
we have
\[\sn^2(\E U_{(K+1):p}) = \ln(\E Q\E^\top) = \ln(\tE\se^{1/2}Q\se^{1/2} \tE^\top).\]
We can now apply Theorem \ref{thm:concentration spectrum trace}, stated and proved above in Section \ref{proofs:null}, with $\tE$ and $\se^{1/2}Q\se^{1/2}$. Noting that $M = \max_{ij} \|\tE\|_{\psi_2}$ is bounded by an absolute constant by Assumption \ref{ass:subg fm}, this implies that with probability at least $1-2e^{-cn}$,
\begin{equation}\label{eqn:sne lb with P}
    \sn^2(\E U_{(K+1):p}) \ge \tr(\se^{1/2}Q\se^{1/2})/2 - c'\|\se^{1/2}Q\se^{1/2}\|n.
\end{equation}
Since $Q$ is a projection matrix, $\|\se^{1/2}Q\se^{1/2}\|\le \|\se\|\|Q\|= \|\se\|$. Furthermore,
\begin{align*}
    \tr(\se^{1/2}Q\se^{1/2}) &= \tr(\se Q)\\
    &= \tr(\se) - \tr(\se (I-Q))\\
    &\ge \tr(\se) - K\|\se(I-Q)\| &(\text{since } \rank(I-Q)=K)\\
    &\ge \tr(\se) - K\|\se\|\|I-Q\|\\
    &= \tr(\se) - K\|\se\| & (\text{since } \|I-Q\|=1)\\
    &\ge \tr(\se) - n\|\se\|. &(\text{since } n\ge K)
\end{align*}
Plugging these two results into (\ref{eqn:sne lb with P}), we find that with probability at least $1-2e^{-cn}$,
\begin{equation}\label{eqn:sneu lb final}
    \sn^2(\E U_{(K+1):p}) \ge \tr(\se)/2 - (1/2+c')n\|\se\| = \tr(\se) \cdot[1/2 - (1/2+c')n/{\rm r_e} (\se)]\gtrsim \tr(\se),
\end{equation}
where in the last inequality we use that $n/{\rm r_e}(\se) < 1/C$ and choose $C$ large enough.

Also, since $\tZ$ has independent rows with entries that have zero mean, unit variance, and sub-Gaussian constants bounded by an absolute constant, we have that by Theorem 4.6.1 of \cite{verHDP},
\[\|\tZ\|^2\le c_2 n,\]
with probability at least $1-e^{-c'n}$. Combining this with \ref{eqn:sneu lb final} we conclude that
\[\PP(\F) \ge 1 - c e^{-c'n}.\]
\paragraph{Bounding $\sn(\X)$ on $\F$:}
% \subsubsection*{Bounding $\sn(\X)$ on $\F$ }
We now show that $\sn^2(\X) \gtrsim \tr(\se)$ holds on the event $\F$. Let $\sx = UDU^\top $ with $U\in \R^{p\times p}$ orthogonal and $D = \diag(\lambda_1(\sx),\ldots,\lambda_p(\sx))$. Define $U_K\in \R^{p\times K}$ to be the sub-matrix of $U$ containing the first $K$ columns, and define $U_{(K+1):p}$ to be composed of the last $p-K$ columns of $U$. Then
\[I_p = UU^\top  = U_KU_K^\top  + U_{(K+1):p}U_{(K+1):p}^\top ,\]
so
\[\ln(\X\X^\top ) = \ln(\X U_KU_K^\top \X^\top + \X U_{(K+1):p}U_{(K+1):p}^\top \X^\top )\ge \ln(\X U_{(K+1):p}U_{(K+1):p}^\top \X^\top ), \]
where we use the min-max formula for eigenvalues in the last step. This implies 
\begin{equation}\label{eqn:x lb xu}
    \sn(\X) \ge \sn(\X U_{(K+1):p}).
\end{equation}
By Weyl's inequality for singular values, and using $\X = \Z A^\top  + \E$,
\[|\sn(\X  U_{(K+1):p}) - \sn(\E U_{(K+1):p})| \le \|\Z A^\top   U_{(K+1):p}\|,\]
so by (\ref{eqn:x lb xu}),
\begin{equation}\label{eqn:lb x weyl}
    \sn(\X)\ge \sn(\X U_{(K+1):p}) \ge \sn(\E U_{(K+1):p}) - \|\Z A^\top   U_{(K+1):p}\| \gtrsim \sqrt{\tr(\se)} - \|\Z A^\top   U_{(K+1):p}\|,
\end{equation}
where the last inequality holds on the event $\F$. We show below that $\|\Z A^\top   U_{(K+1):p}\|\lesssim \sqrt{n\|\se\|}$ on $\F$, which implies that
\[\sn(\X) \gtrsim \sqrt{\tr(\se)} - c\sqrt{n\|\se\|} = \sqrt{\tr(\se)}\cdot (1-  c\sqrt{n/{\rm r_e} (\se)})\gtrsim \sqrt{\tr(\se)},\]
%{\color{blue} make it $\ge 1/2 \cdots$}{\color{red} why?}
where in the last inequality we use that $n/{\rm r_e}(\se) < 1/C$ and choose $C$ large enough.

% \subsubsection*{Upper bound of $\|\Z A^\top   U_{(K+1):p}\|$}
\paragraph{Upper bound of $\|\Z A^\top   U_{(K+1):p}\|$:}
On the event $\F$,
\begin{equation}\label{eqn:zau ub 1}
    \|\Z A^\top   U_{(K+1):p}\|^2 = \|\tZ \sz^{1/2} A^\top   U_{(K+1):p}\|\le \|\tZ\|^2 \|\sz^{1/2} A^\top   U_{(K+1):p}\|^2 \lesssim n \|\sz^{1/2} A^\top   U_{(K+1):p}\|^2.
\end{equation}
Furthermore, using $\sx = \sza+\se$, and that $U_{(K+1):p}^\top \sx U_{(K+1):p} = D_{(K+1):p}$ where we define $D_{(K+1):p} \coloneqq \diag(\lambda_{K+1}(\sx),\ldots,\lambda_{p}(\sx))$,
\begin{align*}
    \|\sz^{1/2} A^\top   U_{(K+1):p}\|^2 &= \|U_{(K+1):p}^\top \sza U_{(K+1):p}\|\\
    &= \|U_{(K+1):p}^\top \sx U_{(K+1):p} - U_{(K+1):p}^\top \se U_{(K+1):p}\|\\
    &= \|D_{(K+1):p}- U_{(K+1):p}^\top \se U_{(K+1):p}\|\\
    &\le \lambda_{K+1}(\sx) + \|U_{(K+1):p}^\top \se U_{(K+1):p}\|\\
    &\le \lambda_{K+1}(\sx) + \|\se\|\|U_{(K+1):p}^\top  U_{(K+1):p}\|\\
    &= \lambda_{K+1}(\sx) + \|\se\|,
\end{align*}
where we use $U_{(K+1):p}^\top  U_{(K+1):p} = I_{p-K}$ in the last step. Thus, using that 
\[\lambda_{K+1}(\sx) = \lambda_{K+1}(\sx) - \lambda_{K+1}(\sza) \le \|\se\|\]
by Weyl's inequality and the fact that $\lambda_{K+1}(\sza)=0$, we find
\[\|\sz^{1/2} A^\top   U_{(K+1):p}\|^2\le 2\|\se\|.\]
Combining this with (\ref{eqn:zau ub 1}), we find that on $\F$,
\[\|\Z A^\top   U_{(K+1):p}\| \lesssim \sqrt{n\|\se\|}.\]
\hfill $\blacksquare$

%\subsubsection

\subsubsection{Proof of Theorem \ref{TPCR} from Section \ref{sec:pcr}}\label{proof:TPCR}

% \subsubsection*{Step 1: Formulation as a linear model}
% Define $\ab \coloneqq \sx^{-1}\sxy$ and $\eta \coloneqq y - X^\top \ab$, and note that 
% \[\EE[X\eta] = \EE[Xy] - \EE[XX']\ab= \sxy - \sx \sx^+\sxy = 0,\]
% where in the last step we use the fact that $\sx \sx^+\sxy = \sxy$ from Lemma \ref{thm:xxxy}. Since we assume $(X,y)$ is Gaussian, and $X$ and $\eta$ are uncorrelated, they are independent. We thus have the linear model
% \begin{equation}\label{eqn:pcr lin model}
%     y = X^\top\ab + \eta,
% \end{equation}
% with $X$ and $\eta$ independent, and noise variance 
% \begin{equation}\label{eqn:eta var}
%     \EE[\eta^2] = \EE[(X^\top\ab - y)^2] = R(\ab).
% \end{equation}

% \subsubsection*{Step 2: Decomposition of risk}
% We aim to bound the risk
% \[\rpcr(\bpcr) \coloneqq \EE\left[ ( X^\top U_K \bpcr - y)^2 \right],\]
% with $\bpcr \coloneqq (\X U_K)^+\y$ and where $U_K$ is the $p\times K$ matrix with columns equal to the first $K$ (orthonormal) eigenvectors of $\sx$. Using the linear model formulation \ref{eqn:pcr lin model} and the independence of $X$ and $\eta$, we find
Let $D_K = U_K^\top \sx U_K = \diag(\lambda_1(\sx),\ldots,\lk(\sx))$ and note that since $A$ and $\sz$ are rank $K$ by Assumption \ref{ass:fullrank},
\[\lk(\sx) \ge \lk(\sza) \ge \lk(\sz)\lk(AA^\top)>0,\]
and thus $D_K$ is invertible. Furthermore, define $\eta = y - X^\top\ab$ with variance $\seta^2 = \EE[\eta^2]$, and the sample version $\Eta = \y - \X\ab$. We work on the event $\D \coloneqq \D_1\cap \D_2$, where
\[\D_1 \coloneqq \left\{\sk^2(\X U_K D_K^{-1/2}) \gtrsim n,\ \|\X\sx^{-1/2}\|^2 \lesssim p\right\},\]
and
\[\D_2 \coloneqq \left\{\|(\X U_K D_K^{-1/2})^+\Eta\|^2\lesssim \log(n)\cdot \seta^2\cdot \tr[(\X U_K D_K^{-1/2})^{+\top}(\X U_K D_K^{-1/2})^+]\right\}.\]
As the last step of this proof, we will show that $\PP(\D) \ge 1 - c'/n$. 

Letting $\eta \coloneqq y - X^\top \ab$, we have
\begin{equation}\label{eqn:x eta}
    \EE[X\eta] = \EE[Xy] - \EE[XX^\top ]\ab= \sxy - \sx \sx^+\sxy = 0,
\end{equation}
where we used (\ref{eqn:sx sxy identity}) in the last step. Thus,
\begin{align}\label{eqn:pcr risk init decomp}
     R(\wt \alpha_{\rm PCR}) &\coloneqq \EE[(X^\top \wt \alpha_{\rm PCR}  - y)^2]\nonumber\\
    &= \EE\left[ ( X^\top \wt \alpha_{\rm PCR} - X^\top \ab - \eta)^2 \right] \nonumber\\
    &= \EE\left[ ( X^\top \wt \alpha_{\rm PCR}  - X^\top\ab)^2 \right]  + \EE[\eta^2] & (\text{by } \ref{eqn:x eta})\nonumber\\
    &= \|\wt \alpha_{\rm PCR} - \ab\|^2_{\sx} + R(\ab).
\end{align}
Defining the projection matrix $P = U_KU_K^\top$, and writing
\[\y = \X\ab + \Eta = \X P\ab + \X (I_p-P)\ab + \Eta, \]
we find
\begin{align*}
    \wt \alpha_{\rm PCR}  &= U_K(\X U_K)^+\y\\
    &= U_K(\X U_K)^+\X P \ab + U_K(\X U_K)^+\X (I_p-P)\ab + U_K(\X U_K)^+\Eta.
\end{align*}
From the fact that $\X U_K$ is an $n\times K$ matrix with $K< n$ and $\rank(\X U_K) = K$ on the event $\D_1$, we have $(\X U_K)^+\X U_K = I_K$ by Lemma \ref{thm:psuedo lemma} of Appendix \ref{sec:pseudo-inverse} below. Thus, using $P = U_K U_K^\top$ we have $(\X U_K)^+\X P = U_K^\top$. Applying this in the previous display, we find
\[\wt \alpha_{\rm PCR} = P \ab + U_K(\X U_K)^+\X (I_p-P)\ab + U_K(\X U_K)^+\Eta.\]
It thus follows from the decomposition (\ref{eqn:pcr risk init decomp}) that
\begin{align}
    R(\wt \alpha_{\rm PCR}) - R(\ab) &=  \|\wt \alpha_{\rm PCR} - \ab\|^2_{\sx} \nonumber\\
    &\lesssim \|(I_p-P)\ab\|^2_{\sx} + \|U_K(\X U_K)^+ \X(I_p-P)\ab\|^2_{\sx} + \|U_K (\X U_K)^+\Eta\|^2_{\sx}\nonumber\\
    &=: B_1 + B_2 + V.
\end{align}

\paragraph{Bounding $B_1$:} We find
\begin{equation}\label{eqn:B_1 init pcr}
    B_1 = \|\sx^{1/2}(I_p - P)\ab\|^2 \le \|\sx^{1/2}(I_p-P)\|^2\|\ab\|^2= \|(I-P)\sx (I-P)\|\|\ab\|^2.
\end{equation}
Since $I-P$ is a projection onto the span of the last $p-K$ eigenvectors of $\sx$ with eigenvalues $\lambda_{K+1}(\sx),\ldots,\lambda_p(\sx)$, we have $\|(I-P)\sx (I-P)\| = \lambda_{K+1}(\sx)$. By Weyl's inequality,
\[\lambda_{K+1}(\sx) = \lambda_{K+1}(\sx) - \lambda_{K+1}(\sza) \le \|\se\|,\]
where we used that $\lambda_{K+1}(\sza)=0$ in the first step since $\rank(\sza)= K$. Thus 
\[\|\sx^{1/2}(I_p-P)\|^2\le \|\se\|,\] 
and combining this with (\ref{eqn:B_1 init pcr}) we find
\begin{equation}\label{pcr b1 bnd}
    B_1 \le \|\se\|\|\ab\|^2.
\end{equation}
% Using that  $\sx = \sza +\se$, Lemma \ref{thm:bart bias} of Appendix \ref{sec:finite sample} can be applied to find
% \[\|\ab\|^2 \le \ke \|\beta\|^2_{\sz}/\lk(\sza),\] 
% so
% \begin{equation}\label{eqn:B_1 pcr bound}
%     B_1 \le \ke\frac{\|\se\|}{\lk(\sza)}\cdot\|\beta\|^2_{\sz} = \ke \frac{\|\beta\|^2_{\sz}}{\xi}.
% \end{equation}

\paragraph{Bounding $B_2$:} Recalling $D_K = U_K^\top \sx U_K$,
\begin{align}
B_2 &= {\ab}^{\top} (I_p-P) \X^\top (\X U_K)^{+\top} U_K^\top\sx U_K (\X U_K)^+ \X(I-P)\ab\nonumber\\
&= \|D_K^{1/2}(\X U_K)^+ \X(I_p-P)\ab\|^2.\label{eqn:B2 dk}
\end{align}
Observe that by Lemma \ref{thm:psuedo lemma} of Appendix \ref{sec:pseudo-inverse}, 
\begin{equation}\label{eqn:xukdk}
    (\X U_K D_K^{-1/2})^+ = [ (\X U_K)^+(\X U_K) D_K^{-1/2}]^+ \cdot [\X U_K D_K^{-1/2}D_K^{1/2}]^+ = D_K^{1/2} (\X U_K)^+,
\end{equation}
where we used that $\X U_K$ is a full rank $n\times K$ matrix with $K< n$ so $(\X U_K)^+(\X U_K)=I_K$. Using this in (\ref{eqn:B2 dk}) yields
\begin{align*}
    B_2 &= \|(\X U_K D_K^{-1/2})^+ \X (I_p - P)\ab\|^2\\
    &\le \frac{\|\X (I_p - P)\ab\|^2}{\sk^2(\X U_K D_K^{-1/2})}\\
    &\le \frac{\|\X \sx^{-1/2}\|^2 }{\sk^2(\X U_K D_K^{-1/2})}\cdot \|\sx^{1/2}(I_p - P)\ab\|^2\\
    &\lesssim \frac{p}{n} \|\sx^{1/2}(I_p - P)\ab\|^2,
\end{align*}
where the last step holds on $\D$. Recalling that $\|\sx^{1/2}(I_p - P)\ab\|^2 = B_1$ and using (\ref{pcr b1 bnd}), we find that
\begin{equation}\label{eqn:b2 final pcr}
    B_2 \lesssim  \|\se\|\cdot \|\ab\|^2\frac{p}{n}.
\end{equation}

\paragraph{Bounding $V$:} We have on $\D$,
\begin{align*}
    V &= \Eta^\top (\X U_K)^{+\top} U_K^\top \sx U_K  (\X U_K)^{+} \Eta\\
    &= \Eta^\top (\X U_K)^{+\top} D_K (\X U_K)^{+} \Eta\\
    &= \|D^{1/2}_K(\X U_K)^{+} \Eta\|^2\\
    &= \|(\X U_K D_K^{-1/2})^{+} \Eta\|^2 & (\text{by } (\ref{eqn:xukdk}))\\
    &\lesssim \seta^2 \cdot \log(n)\cdot \tr[(\X U_K D^{-1/2})^{+\top}(\X U_K D^{-1/2})^+] &(\text{on } \D_2)\\
    &\le \seta^2 \cdot \log(n)\cdot K\cdot \|(\X U_K D^{-1/2})^+\|^2 &(\text{since } \rank(\X U_K D^{-1/2})=K)\\
    &= \seta^2\cdot \frac{K\log n}{\sk^2(\X U_K D^{-1/2})}\\
    &\lesssim \seta^2\cdot \frac{K\log n}{n}. & (\text{on } \D_1).
\end{align*}
Recalling $\eta = y - X^\top \ab$ so $\seta^2 = R(\ab)$,
\begin{equation}\label{pcr v bnd}
    V\lesssim R(\ab)\cdot \frac{K\log n}{n}.
\end{equation}
Combining this with (\ref{pcr b1 bnd}) and (\ref{eqn:b2 final pcr}) proves (\ref{PCR}). \\

In the case $\se=0$, the bound (\ref{PCR2}) follows immediately from (\ref{PCR}). When $\lp(\se)>0$, Lemma \ref{thm:bench compare} of Section \ref{sec:bench} implies
\[R(\ab)\le \sep^2 + \frac{\|\beta\|^2}{\xi}.\]
% we use from Lemma \ref{thm:bench compare} that
% \[\seta^2 \le \sep^2 + \frac{\|\beta\|^2_{\sz}}{\xi},\]
% so
% \begin{equation}\label{pcr v se}
%     V\lesssim \frac{\|\beta\|^2_{\sz}}{\xi}\frac{K\log n}{n} + \sep^2\frac{K\log n}{n}.
% \end{equation}
When $\lp(\se)>0$, we also have that \[\|\ab\|^2\le \ke \beta^\top(A^\top A)^{-1}\beta\le \frac{1}{\lp(\se)}\cdot \frac{\|\beta\|^2_{\sz}}{\xi}.\]
Plugging the last two displays into (\ref{PCR}) gives
\begin{align*}
    \rpcr(\bpcr) - R(\ab) &\lesssim  \ke\frac{\|\beta\|^2_{\sz}}{\xi}\cdot \frac{p}{n}+  \frac{\|\beta\|^2_{\sz}}{\xi}\frac{K\log n}{n} + \sep^2\frac{K\log n}{n}\\
    &\lesssim \ke \frac{\|\beta\|^2_{\sz}}{\xi}\cdot \frac{p}{n}+\sep^2\frac{K\log n}{n},
\end{align*}
where in the second step we use that 
\[K\log n < c\cdot n \lesssim p\le \ke p.\] 
This proves (\ref{PCR3}). All that remains is to bound the probability of the event $\D$.

\paragraph{Bounding $\PP(\D)$:}
% \[\D_1 \coloneqq \left\{\sk^2(\X U_K D_K^{-1/2}) \gtrsim n,\ \|\X\sx^{-1/2}\|^2 \lesssim p\right\},\]
We first bound the probability $\PP(\D_1)$. Note that the matrix $\X U_K D_K^{-1/2}$ has independent Gaussian rows $D_K^{-1/2} U_K^\top X_i$, with covariance
\[\EE[D_K^{-1/2} U_K^\top X_i X_i^\top  U_K D_K^{-1/2}] = D_K^{-1/2} U_K^\top \Sigma_X U_K D_K^{-1/2} = D_K^{-1/2} D_K D_K^{-1/2} = I_K,\]
and so $\X U_K D_K^{-1/2}$ i.i.d.~$N(0,1)$ entries. Thus, by Theorem 4.6.1 of \cite{verHDP}, with probability at least $1-2/n$,
\begin{equation}\label{eqn:sk lb pcr}
    \sk(\X U_K D_K^{-1/2}) \ge \sqrt{n} - c(\sqrt{K} + \sqrt{\log n})= \sqrt{n}\cdot [1 - c\sqrt{K/n} - c\sqrt{\log(n)/n}]\gtrsim \sqrt{n},
\end{equation}
where in the last step we use the assumption that $n > CK>C$ and choose $C$ large enough.

Similarly, $\X\sx^{-1/2}$ is a $n\times p$ matrix with i.i.d.~$N(0,1)$ entries, so again by by Theorem 4.6.1 of \cite{verHDP}, with probability at least $1-2e^{-n}$,
\begin{equation}
    \|\X \sx^{-1/2}\| \le \sqrt{n} + c(\sqrt{p}+\sqrt{n}) \lesssim \sqrt{p}.
\end{equation}
Using a union bound to combine this with (\ref{eqn:sk lb pcr}), we find
\[\PP(\D_1) \ge 1 - c'/n,\]
for some $c' > 0$.

To bound $\PP(\D_2)$, first note that by (\ref{eqn:x eta}) and the assumption that $(X,y)$ are Gaussian, $\X$ and $\Eta$ are independent. Furthermore, $\tilde \Eta = \Eta/\seta$ has independent $N(0,1)$ entries. We can thus apply Lemma \ref{thm:trace lemma} from Appendix \ref{proofs:noiseless} above with 
\[M = (\X U_K D_K^{-1/2})^{+\top}(\X U_K D_K^{-1/2})^+\]
to conclude that with probability at least $1-e^{-cn}$,
\[\|(\X U_K D_K^{-1/2})^+\Eta\|^2 = \Eta^\top M\Eta
    =\seta^2 {\tilde \Eta}^\top M {\tilde \Eta}
    \lesssim \seta^2\cdot\log(n)\cdot \tr(M),\]
and so $\PP(\D_2^c)\le e^{-cn}$.\hfill $\blacksquare$

\subsection{ Detailed comparison of the bias and variance terms in Section  \ref{sec:compare} }\label{sec:finite sample}
In this sections we give a detailed comparison between our Theorem \ref{thm:upper bound} and Theorem 4 in \cite{bartlett2019}. We assume throughout this section that the matrices $\sx$ and $\se$ are invertible and the condition number $\ke $ of the matrix $\se$ is bounded above by an absolute constant $c_1$.

%{\color{red} [note: should look if there are any cases when Bartlett's bound is trivial, for example if $k^*=\i$, and see if our bound could hold in this case.]}
 
First define the effective ranks 
\[r_k(\sx) \coloneqq \frac{\sum_{i>k}\lambda_i(\sx)}{\lambda_{i+1}(\sx)},\hspace{1cm} R_k(\sx) \coloneqq \frac{\left(\sum_{i>k} \lambda_i(\sx)\right)^2}{\sum_{i>k} \lambda_i^2(\sx)}.\]
The bound of \cite{bartlett2019} is stated to hold for probability at least $1-\delta$ for a general $\delta<1$ such that $\log(1/\delta)>n/c$ for an absolute constant $c>1$. Taking $\delta = e^{-c'n}$ (for an appropriate $c'$) to ease comparison with our results, the bound then states that with when model (\ref{model}) holds, $(X,y)$ are jointly Gaussian, $\rank(\sx) \ge n$, and $n$ is large enough, with probability at least $1-e^{-c'n}$,
 \[R(\a) - R(\ab) \lesssim B + V,\]
 where
 \begin{equation}
     B \coloneqq \|\ab\|^2\|\sx\|\max\left\{\sqrt{\frac{r_0(\sx)}{n}},\frac{r_0(\sx)}{n}, 1\right\},
 \end{equation}
 and
 \begin{equation}
     \V \coloneqq \sep^2\log(n)\left( \frac{n}{R_{K^*}(\sx)}+\frac{K^*}{n}\right)
 \end{equation}
 are bounds on the bias and variance respectively, and 
 \begin{equation}\label{eqn:k star}
     K^* = \min\{k \ge 0: r_k(\sx)/n \ge b\},
 \end{equation}
 where $b >1$ is an absolute constant. 
 
 We now compare these two terms to the corresponding terms in our bound in Theorem \ref{thm:upper bound}. 
 \subsubsection{Comparison of variance terms}
 We first compare the variance term $\V$ to corresponding variance term in our Theorem \ref{thm:upper bound}, display (\ref{bound:main}). Note that as long as the SNR 
 \[\xi \coloneqq \lk(A\sz A^\top )/\|\se\|\]
 grows fast enough, $K^* = K$ for large enough $n$, where $K$ is the dimension of the latent variables $Z\in \R^K$ in the factor regression model.
\begin{lemma}\label{thm:k=k star}
If $K/n = o(1)$, ${\rm r_e} (\se)/n\ra \i$, and $\xi\ra \i$, such that $\xi^{-1}{\rm r_e} (\se)/n = o(1)$, then $K^* = K$ for all $n$ large enough.
 \end{lemma}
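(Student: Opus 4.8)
The plan is to compute the relevant effective ranks $r_k(\sx)$ for the factor regression model and show that the threshold defining $K^*$ in \eqref{eqn:k star} is crossed exactly at $k = K$. Recall from Lemma \ref{thm:spectrum} that under the FRM the first $K$ eigenvalues of $\sx = \sza + \se$ satisfy $\lambda_i(\sx) \ge \lk(\sza) \gtrsim \xi\|\se\|$ for $i \le K$ (up to the condition number of $\sz$), while the remaining eigenvalues obey $\lambda_{p}(\se)\le \lambda_i(\sx)\le \|\se\|$ for $i > K$, since $\lambda_i(\sza)=0$ for $i>K$ and by Weyl's inequality. First I would use these spectral bounds to estimate $r_k(\sx) = \sum_{i>k}\lambda_i(\sx)/\lambda_{k+1}(\sx)$ for $k = K-1$ and $k = K$.

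For $k = K$: the numerator is $\sum_{i>K}\lambda_i(\sx)$, which is comparable to $\tr(\se)$ (it differs from $\tr(\se) - \sum_{i=1}^K \lambda_i^{(K)}$ only by lower-order terms controlled by $K\|\se\|$, negligible since $K/n \to 0$ while $\tr(\se) = {\rm r_e}(\se)\|\se\| \gg n\|\se\|$). The denominator $\lambda_{K+1}(\sx)$ lies in $[\lambda_p(\se),\|\se\|]$, so $r_K(\sx) \asymp \tr(\se)/\|\se\| = {\rm r_e}(\se)$, up to the (bounded, by the standing assumption) condition number $\ke$. Hence $r_K(\sx)/n \asymp {\rm r_e}(\se)/n \to \infty$, so for $n$ large enough $r_K(\sx)/n \ge b$, which shows $K^* \le K$. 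For $k = K-1$: now the denominator is $\lambda_K(\sx) \ge \lk(\sza) \gtrsim \xi\|\se\|$, while the numerator $\sum_{i\ge K}\lambda_i(\sx) \le \lk(\sza) + \tr(\se) \lesssim \xi\|\se\| + {\rm r_e}(\se)\|\se\|$. Therefore $r_{K-1}(\sx) \lesssim 1 + {\rm r_e}(\se)/\xi$, and dividing by $n$ gives $r_{K-1}(\sx)/n \lesssim 1/n + \xi^{-1}{\rm r_e}(\se)/n \to 0$ by hypothesis. Thus for $n$ large, $r_{K-1}(\sx)/n < b$, and more generally $r_k(\sx)/n < b$ for all $k < K$ since $r_k$ is non-increasing in $k$ up to index $K-1$ (the denominators $\lambda_{k+1}(\sx)$ for $k<K$ are all $\gtrsim \lk(\sza)$ and the numerators only shrink). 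Combining the two bounds gives $K^* = K$ for all $n$ large enough.

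I expect the main obstacle to be bookkeeping around the condition numbers and the precise handling of the ``transition'' eigenvalue $\lambda_{K+1}(\sx)$, which is only pinned down to the interval $[\lambda_p(\se),\|\se\|]$; one must check that the bounded-$\ke$ assumption (in force throughout Appendix \ref{sec:finite sample}) makes the ratio $\tr(\se)/\lambda_{K+1}(\sx)$ comparable to ${\rm r_e}(\se)$ rather than merely an upper/lower bound, and that the $O(K\|\se\|)$ correction terms from truncating $\tr(\se)$ are genuinely lower order given $K/n\to 0$ and ${\rm r_e}(\se)/n\to\infty$. The other minor point to verify carefully is monotonicity of $k\mapsto r_k(\sx)$ on $\{0,\dots,K-1\}$, so that crossing the threshold first happens precisely at $k=K$ and not earlier; this follows because for $k < K$ the denominator $\lambda_{k+1}(\sx)$ stays of order $\lk(\sza)$ while the numerator decreases, keeping $r_k(\sx)/n$ below $b$ throughout that range.
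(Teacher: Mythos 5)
Your treatment of the two pivotal indices is correct and is essentially the paper's argument there: for $k=K$ you use $\lambda_{K+1}(\sx)\le\|\se\|$ (Weyl, since $\lambda_{K+1}(\sza)=0$) together with $\sum_{i>K}\lambda_i(\sx)\ge \tr(\se)-K\|\se\|$ to get $r_K(\sx)/n\gtrsim {\rm r_e}(\se)/n\to\infty$, and for $k=K-1$ you get $r_{K-1}(\sx)\le 1+{\rm r_e}(\se)/\xi$, so $r_{K-1}(\sx)/n\to 0$. (For $k=K$ only the upper bound on $\lambda_{K+1}(\sx)$ matters; the appeal to bounded $\ke$ and to $\lambda_{K+1}(\sx)\ge\lambda_p(\se)$ is not needed for the direction you want.)

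The gap is in extending the bound to all $k<K-1$. The claim that $k\mapsto r_k(\sx)$ is non-increasing on $\{0,\dots,K-1\}$ is false in general: both numerator and denominator decrease with $k$, and, for instance, with $K=2$, $\lambda_1(\sza)=\xi^2\|\se\|$, $\lambda_2(\sza)=\xi\|\se\|$ one gets $r_0(\sx)\approx 1+1/\xi+{\rm r_e}(\se)/\xi^2$ but $r_1(\sx)\approx 1+{\rm r_e}(\se)/\xi>r_0(\sx)$. Moreover, even if monotone non-increasing held it would point the wrong way: $r_{K-1}$ would then be the \emph{minimum} over $k\le K-1$, so bounding it says nothing about earlier indices. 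Your parenthetical fallback (denominator $\gtrsim\lk(\sza)$ uniformly in $k$, numerator at most $\tr(\sx)$) yields $r_k(\sx)\lesssim {\rm r_e}(\se)/\xi + K\,\kappa(\sza)$, which requires $\kappa(\sza)$ bounded --- an assumption that is neither among the lemma's hypotheses nor a standing assumption of that appendix (only $\ke$ is assumed bounded there). The paper closes this with a single bound valid for every $\ell\le K-1$: it writes $\sum_{i>\ell}\lambda_i(\sx)\le \tr(\se)+(K-\ell)\lambda_{\ell+1}(\sza)+\ell\|\se\|$ and divides by $\lambda_{\ell+1}(\sx)\ge\lambda_{\ell+1}(\sza)$, so that the middle term contributes only $K-\ell\le K$ rather than $K\kappa(\sza)$, giving $r_\ell(\sx)/n\le (K/n)(1+\xi^{-1})+\xi^{-1}{\rm r_e}(\se)/n\to 0$ uniformly in $\ell$. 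You need this index-dependent pairing of numerator and denominator (or an equivalent device) to complete the proof.
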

Thus, under the conditions stated in Lemma \ref{thm:k=k star} and for $n$ large enough,
\[\V \coloneqq \sep^2\log(n)\left(\frac{n}{R_{K}(\sx)}+\frac{K}{n} \right).\]
Using the convexity of $x\mapsto x^2$, we can bound $R_K(\sx)$ above via
%  \[R_K(\sx)= \frac{\left(\sum_{i=K+1}^p \lambda_i(\sx)\right)^2}{\sum_{i=K+1}^p \lambda_i^2(\sx)}\le \frac{(p-K)^2\|\se\|^2}{(p-K)\lp^2(\se)} \le p\ke^2\lesssim p,\]
 \[R_K(\sx)= \frac{\left(\sum_{i=K+1}^p \lambda_i(\sx)\right)^2}{\sum_{i=K+1}^p \lambda_i^2(\sx)} \le \frac{(p-K) \sum_{i=K+1}^p \lambda_i^2(\sx)}{\sum_{i=K+1}^p \lambda_i^2(\sx)} \le p.\] 
 Thus,
 \begin{equation}\label{eqn:v lb}
     V\ge \sep^2\log(n)\left(\frac{n}{p}+\frac{K}{n}\right).
 \end{equation}
When $\ke<c_1$, $p\lesssim {\rm r_e} (\se)\le p$, and so the variance term in the bound of our Theorem \ref{thm:upper bound} is
 \[\sep^2\log(n)\left(\frac{n}{r_0(\se)}+\frac{K}{n}\right) \lesssim \sep^2\log(n)\left(\frac{n}{p}+\frac{K}{n}\right).\]
 Thus, comparing with (\ref{eqn:v lb}), we see that under the stated conditions our variance bound is the same as that of \cite{bartlett2019}, up to absolute constants.

\begin{proof}[Proof of Lemma \ref{thm:k=k star}]
We will prove that
 \begin{equation}\label{eqn:rl/n ub}
     \frac{r_\ell(\sx)}{n}\le \frac{K}{n}(1 + \xi^{-1}) +  \frac{1}{\xi}\frac{{\rm r_e} (\se)}{n},\hspace{1cm}\text{for } 0\le \ell\le K-1
 \end{equation}
and that
 \begin{equation}\label{eqn:rl/n lb}
     \frac{r_K(\sx)}{n} \ge \frac{{\rm r_e} (\se)}{n} - \frac{K}{n}.
\end{equation}
Together with the definition of $K^*$ in (\ref{eqn:k star}), these two bounds imply Lemma \ref{thm:k=k star}.

First note that for $0\le \ell \le K$,
 \begin{align}
     \sum_{i=\ell+1}^p\lambda_i(\sx) &= \tr(\sx) - \sum_{i=1}^\ell\lambda_i(\sx)\nonumber\\
     &= \tr(\se) + \tr(\sza) - \sum_{i=1}^\ell\lambda_i(\sx)\nonumber\\
     &=\tr(\se) + \sum_{i=\ell+1}^K\lambda_i(\sza) + \sum_{i=1}^\ell(\lambda_i(\sza) - \lambda_i(\sx)),\label{eqn:sum l ub before weyl}
 \end{align}
 where the sums from $\ell+1$ to $K$ and from $1$ to $\ell$ are defined to be zero when $\ell = K$ and $\ell = 0$, respectively.\\
 
 {\bf Proof of (\ref{eqn:rl/n ub}).} By Weyl's inequality, 
 \begin{equation}\label{eqn:sza sx weyl}
     |\lambda_i(\sza) - \lambda_i(\sx)| \le \|\se\|,
 \end{equation}
 so by (\ref{eqn:sum l ub before weyl}),
 \begin{align}
     \sum_{i=\ell+1}^p\lambda_i(\sx) &\le \tr(\se) + (K-\ell)\lambda_{\ell+1}(\sza) + \ell \|\se\|\nonumber\\
     &\le \tr(\se) + K\lambda_{\ell+1}(\sza) + K \|\se\|\label{eqn:sum l ub before weyl 2}.
 \end{align}
%  From the min-max formula for eigenvalues we also have
%  \begin{equation}\label{eqn:lk lb min max}
%      \lambda_{\ell +1}(\sx) = \lambda_{\ell+1}(\sza + \se) \ge \lambda_{\ell+1}(\sza).
%  \end{equation}
 From the min-max formula for eigenvalues we have
 \begin{equation*}\label{min-max}
     \lambda_{\ell +1}(\sx) = \min_{S:\text{dim}(S)=\ell+1} \max_{x\in S: \|x\|=1} x^\top \sx x,
 \end{equation*}
 where the minimum is taken over all linear subspaces $S\subset \R^p$ with dimension $\ell+1$. Since $x^\top \sx x \ge x^\top \sza x$ for any $x\in \R^p$, this implies
 \begin{equation}\label{eqn:lk lb min max}
     \lambda_{\ell +1}(\sx)\ge \lambda_{\ell +1}(\sza).
 \end{equation}
 Combining (\ref{eqn:sum l ub before weyl 2}) and (\ref{eqn:lk lb min max}), we find
 \begin{align*}
     r_{\ell}(\sx) &= \frac{\sum_{i=\ell+1}^p\lambda_i(\sx)}{\lambda_{\ell+1}(\sx)} \\
     &\le K\left(1 + \frac{\|\se\|}{\lambda_{\ell+1}(\sza)}\right) + \frac{\tr(\se)}{\lambda_{\ell+1}(\sza)}\\
     &\le K\left(1 + \frac{\|\se\|}{\lambda_{K}(\sza)}\right) + \frac{\tr(\se)}{\lambda_{K}(\sza)}\\
     &= K(1+\xi^{-1}) + \xi^{-1} {\rm r_e} (\se),
 \end{align*}
 which completes the proof of (\ref{eqn:rl/n ub}).\\
 
 \textbf{Proof of (\ref{eqn:rl/n lb}).} Equation (\ref{eqn:sum l ub before weyl}) for $\ell = K$ is
 \[\sum_{i=K+1}^p\lambda_i(\sx) = \tr(\se) + \sum_{i=1}^K(\lambda_i(\sza)-\lambda_i(\sx)).\]
 Again using (\ref{eqn:sza sx weyl}), \begin{equation}\label{eqn:sum k+1 lb}
     \sum_{i=K+1}^p\lambda_i(\sx) \ge \tr(\se) - K\|\se\|.
 \end{equation}
Since
%Another application of Weyl's inequality gives
 \begin{eqnarray}
\label{eqn:k+1 weyl}
     \lambda_{K+1}(\sx) &=& \lambda_{K+1}(\sx) - \lambda_{K+1}(\sza) \qquad\text{ (since $\lambda_{K+1}(\sza)=0$)}\nonumber\\
     &\le&  \|\se\| \qquad\text{(Weyl's inequality)}.
\end{eqnarray} 
% where in the first step we use that %$\sza$ has rank at most $K$, and thus 
% $\lambda_{K+1}(\sza)=0$. 
Combining (\ref{eqn:sum k+1 lb}) and (\ref{eqn:k+1 weyl}), we find
 \[r_K(\sx) =\frac{\sum_{i=K+1}^p\lambda_i(\sx)}{\lambda_{K+1}(\sx)} \ge {\rm r_e} (\se) - K,\]
 which proves (\ref{eqn:rl/n lb}).
\end{proof}

\subsubsection{Comparison of bias terms}
A more interesting comparison arises  between the bias term  $B$ and the corresponding bias term in Theorem \ref{thm:upper bound}, display (\ref{bound:main}). Here we will see how the approach we take in this paper, explicitly taking advantage of the structure of the factor regression model, leads to a stronger bound under certain conditions

\begin{lemma}\label{thm:bart bias}Suppose $\xi\coloneqq \lk(\sza)/\|\se\| > 1$ and $A$, $\sz$, $\se$ are all full rank. Then
% \begin{equation}\label{eqn:bart bias lb full}
%     B \ge \frac{\|\beta\|^2_{\sz}}{2\ke}(1-\xi^{-1})\max\left(\sqrt{\frac{r_0(\sx)}{n}},\frac{r_0(\sx)}{n}\right),
% \end{equation}
\begin{equation}\label{eqn:bart bias lb full}
    B \ge\left(\frac{\xi - 1}{\xi+1} \right)\cdot \frac{1}{\ke}\|\beta\|_{\sz}^2\max\left(\sqrt{\frac{r_0(\sx)}{n}},\frac{r_0(\sx)}{n}\right),
\end{equation}
where
\begin{equation}\label{eqn:r0 lb for lemma}
    \frac{r_0(\sx)}{n} \ge \frac{1}{2}\frac{r_0(\sza)}{n} + \frac{1}{2\kappa({\sza})}\frac{1}{\xi}\frac{{\rm r_e} (\se)}{n}.
\end{equation}
In particular, if $\xi > c_1>1$ and $\ke< c_2$, $\kappa({\sza})< c_2$ for absolute constants $c_1,c_2$,
\begin{equation}\label{eqn:bart bias lb}
    B \gtrsim \|\beta\|^2_{\sz}\max\left(\sqrt{\frac{1}{\xi}\frac{p}{n}},\frac{1}{\xi}\frac{p}{n}\right).
\end{equation}
\end{lemma}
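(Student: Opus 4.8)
The plan is to bound the two factors of
$B=\|\ab\|^{2}\,\|\sx\|\max\{\sqrt{r_{0}(\sx)/n},\,r_{0}(\sx)/n,\,1\}$ separately, establishing (\ref{eqn:bart bias lb full}) and (\ref{eqn:r0 lb for lemma}) first and then deducing (\ref{eqn:bart bias lb}). For the factor $\|\ab\|^{2}\|\sx\|$, I would start from inequality~(\ref{aea}), established in the proof of Lemma~\ref{thm:ab norm bound main} and valid as soon as $\xi>1$ with $A,\sz,\se$ full rank, which gives $\|\ab\|^{2}\ge\big(\tfrac{\xi-1}{\xi+1}\big)\tfrac{1}{\ke}\,\beta^{\top}(A^{\top}A)^{-1}\beta$. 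Substituting $\bA=A\sz^{1/2}$, $\bb=\sz^{1/2}\beta$ identifies $\beta^{\top}(A^{\top}A)^{-1}\beta=\bb^{\top}(\bA^{\top}\bA)^{-1}\bb\ge\|\beta\|_{\sz}^{2}/\|\bA^{\top}\bA\|=\|\beta\|_{\sz}^{2}/\|\sza\|$, using $\|\bA^{\top}\bA\|=\|\bA\bA^{\top}\|=\|\sza\|$. Combining with $\|\sx\|\ge\|\sza\|$ yields $\|\ab\|^{2}\|\sx\|\ge\big(\tfrac{\xi-1}{\xi+1}\big)\tfrac{1}{\ke}\,\|\beta\|_{\sz}^{2}$, and dropping the constant $1$ from the maximum defining $B$ produces (\ref{eqn:bart bias lb full}).

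Next I would lower-bound $r_{0}(\sx)$, noting first that $r_{0}(\sx)=\tr(\sx)/\|\sx\|={\rm r_e}(\sx)$ and $r_{0}(\sza)={\rm r_e}(\sza)$. The key observation is that $\xi>1$ forces $\|\se\|=\lambda_{K}(\sza)/\xi\le\|\sza\|$, so by Weyl's inequality $\|\sx\|\le\|\sza\|+\|\se\|\le2\|\sza\|$. Hence $r_{0}(\sx)=\big(\tr(\sza)+\tr(\se)\big)/\|\sx\|\ge\tfrac12 r_{0}(\sza)+\tfrac12{\rm r_e}(\se)\,\|\se\|/\|\sza\|$, and rewriting $\|\sza\|=\lambda_{1}(\sza)=\kappa(\sza)\,\lambda_{K}(\sza)=\kappa(\sza)\,\xi\,\|\se\|$ (with $\kappa(\sza)$ read as the ratio of the largest to the smallest nonzero eigenvalue of $\sza$) converts the second term into $\tfrac{1}{2\kappa(\sza)}\tfrac1\xi\,{\rm r_e}(\se)$; dividing through by $n$ gives (\ref{eqn:r0 lb for lemma}).

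Finally, for the ``in particular'' claim, under $\xi>c_{1}>1$, $\ke<c_{2}$, $\kappa(\sza)<c_{2}$ the prefactor $\big(\tfrac{\xi-1}{\xi+1}\big)\tfrac1\ke$ is bounded below by an absolute constant, so (\ref{eqn:bart bias lb full}) gives $B\gtrsim\|\beta\|_{\sz}^{2}\max\{\sqrt{r_{0}(\sx)/n},\,r_{0}(\sx)/n\}$; discarding the nonnegative term $\tfrac12 r_{0}(\sza)/n$ in (\ref{eqn:r0 lb for lemma}) and using that $\ke<c_{2}$ forces ${\rm r_e}(\se)\ge p/\ke\ge p/c_{2}$, I get $r_{0}(\sx)/n\gtrsim\tfrac1\xi\tfrac pn$. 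The proof then closes with the elementary fact that $\max\{\sqrt a,a\}\gtrsim\max\{\sqrt b,b\}$ whenever $a\ge c\,b$ for an absolute constant $c>0$ (an easy case analysis on the size of $cb$ relative to $1$), applied with $a=r_{0}(\sx)/n$ and $b\asymp\tfrac1\xi\tfrac pn$, which yields (\ref{eqn:bart bias lb}). I do not expect a serious obstacle: this is essentially bookkeeping, and the only place that needs care is keeping the relation between $\|\sza\|$, $\kappa(\sza)$, $\xi$ and $\|\se\|$ straight (in particular the convention that $\kappa(\sza)$ refers to the nonzero eigenvalues), together with the mildly fiddly case split for the maximum at the end.
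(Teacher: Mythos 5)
Your proposal is correct and follows essentially the same route as the paper's proof: lower-bounding $\|\ab\|^{2}$ via (\ref{aea}) and $\beta^\top(A^\top A)^{-1}\beta\ge\|\beta\|_{\sz}^2/\|\sza\|$, combining with $\|\sx\|\ge\|\sza\|$ for (\ref{eqn:bart bias lb full}), and using $\|\sx\|\le 2\|\sza\|$ (from $\xi>1$) together with $\|\sza\|=\kappa(\sza)\,\xi\,\|\se\|$ for (\ref{eqn:r0 lb for lemma}). Your explicit handling of the ``in particular'' claim (via ${\rm r_e}(\se)\ge p/\ke$ and the monotonicity of $\max\{\sqrt{a},a\}$) fills in a step the paper leaves to the surrounding discussion, but the substance is identical.
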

%We make several observations based on this Lemma. Firstly,  c
Compared to our bias bound $\|\beta\|^2_{\sz} p / (n\cdot \xi) $ in Theorem \ref{thm:upper bound}, there is an additional quantity $r_0(\sza)/n$ %which can be
of order $O(K/n)$. Ignoring this quantity, provided both $\ke$ and $\kappa({\sza})$ are uniformly  bounded, we obtain %{\color{red}[comment on what happens when $\kappa_{\sza}$ is not bounded]}, we find 
the lower bound (\ref{eqn:bart bias lb}). When $p/(n\cdot \xi) < 1$, this rate is worse by a factor $\sqrt{p/(n\cdot \xi)}$, compared to
 the bias term $\|\beta\|^2_{\sz} p / (n\cdot \xi) $ in Theorem \ref{thm:upper bound}.
 %, we find that under these assumptions, our bound is better by a factor of $\sqrt{\xi^{-1}p/n}$.
 
%  We will use the following simple result in the proof of Lemma \ref{thm:bart bias}, which we isolate here for reference.

\begin{proof}[Proof of Lemma \ref{thm:bart bias}]
% By Lemma \ref{thm:ab norm bound}, which we isolate below for reference elsewhere,
% \[\|\sx\|\|\ab\|^2 \ge \frac{\|\beta\|^2_{\sz}}{2\ke}(1-\xi^{-1}), \]
% which implies (\ref{eqn:bart bias lb full}). 
Using that $A$, $\sz$, $\se$ are all full rank, by (\ref{aea}) above,
% Lemma \ref{thm:ab norm bound main} in Section \ref{sec:low dim},
\[\|\ab\|^2 \ge \left(\frac{\xi - 1}{\xi+1} \right)\cdot \frac{1}{\ke} \cdot \beta^\top (A^\top A)^{-1} \beta\ge \left(\frac{\xi - 1}{\xi+1} \right)\cdot \frac{1}{\ke} \frac{\|\beta\|_{\sz}^2}{\|\sza\|}.\]
Thus, using $\|\sx\| = \|\sza+\se\|\ge \|\sza\|$,
\[\|\sx\|\|\ab\|^2 \ge \left(\frac{\xi - 1}{\xi+1} \right)\cdot \frac{1}{\ke}\|\beta\|_{\sz}^2,\]
which implies (\ref{eqn:bart bias lb full}).

To prove (\ref{eqn:r0 lb for lemma}), we first recall that $r_0(\sx) = \tr(\sx)/\|\sx\|$ and $\sx = \sza + \se$, which implies that
\[\frac{r_0(\sx)}{n}  = \frac{ \tr(A\sz A^\top )}{n\|\sx\|} + \frac{\tr(\se)}{n\|\sx\|}.\]
Observing that $\|\sx\|\le \|\sza\|+\|\se\|\le 2\|\sza\|$, where we use that $\|\se\|\le \|\sza\|$ by the assumption $\xi >1$, we find
% \[\frac{r_0(\sx)}{n} \ge \]
% To lower bound the second term, we use
\begin{align}
    \frac{r_0(\sx)}{n} 
    & \ge \frac{1}{2}\frac{r_0(\sza)}{n} + \frac{1}{2}\frac{\tr(\se)}{n\|\sza\|}\nonumber\\
    &=\frac{1}{2}\frac{r_0(\sza)}{n} + \frac{1}{2}\frac{\lk(\sza)}{\|\sza\|}\frac{\|\se\|}{\lk(\sza)}\frac{\tr(\se)}{n\|\se\|}\nonumber\\
    &= \frac{1}{2}\frac{r_0(\sza)}{n} + \frac{1}{2\kaz}\frac{1}{\xi}\frac{{\rm r_e} (\se)}{n},\nonumber
\end{align}
which proves (\ref{eqn:r0 lb for lemma}).
\end{proof}

\section{Supplementary Results}\label{sec:supp}

% \subsection{A concentration result for quadratic forms}

% We use the following result several times throughout the paper. 

\subsection{Closed form solutions of min-norm estimator and minimizer of $R(\alpha)$}\label{sec:closed form a}
\begin{lemma}\label{thm:ab minimizer}
For zero mean random variables $X\in \R^p$ and $y\in \R$, suppose $\sx \coloneqq\EE[XX^\top]$ and $\sy^2 \coloneqq \EE[y^2]$ are finite, and let $\sxy = \EE[Xy]$. Then $\ab \coloneqq \sx^+\sxy$ is a minimizer of $R(\alpha)$:
\[R(\ab) = \min_{\alpha \in \R^p}R(\alpha).\]
\end{lemma}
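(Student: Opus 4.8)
The plan is to verify directly that $\ab = \sx^+\sxy$ minimizes the quadratic $R(\alpha) = \EE[(X^\top\alpha - y)^2] = \alpha^\top\sx\alpha - 2\alpha^\top\sxy + \sy^2$. First I would recall the key algebraic fact, already used in the proof of Lemma \ref{thm:theta null risk}, that $\sx\ab = \sxy$; this follows because $\cov((I_p - \sx\sx^+)X) = (I_p - \sx\sx^+)\sx(I_p - \sx^+\sx) = 0$ using the pseudo-inverse identity $\sx\sx^+\sx = \sx$, so $(I_p - \sx\sx^+)X = 0$ almost surely, whence $\sxy = \EE[Xy] = \EE[\sx\sx^+ Xy] = \sx\sx^+\sxy = \sx\ab$. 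Given this, for any $\alpha \in \R^p$ write $\alpha = \ab + (\alpha - \ab)$ and expand.

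The main computation is then the completion-of-squares step: for arbitrary $\alpha$, set $\delta = \alpha - \ab$, so that
\[
R(\alpha) = (\ab + \delta)^\top\sx(\ab+\delta) - 2(\ab+\delta)^\top\sxy + \sy^2.
\]
Expanding and collecting terms, the cross terms are $2\delta^\top\sx\ab - 2\delta^\top\sxy = 2\delta^\top(\sx\ab - \sxy) = 0$ by the identity above. Hence $R(\alpha) = R(\ab) + \delta^\top\sx\delta$, and since $\sx$ is positive semi-definite, $\delta^\top\sx\delta \ge 0$, giving $R(\alpha) \ge R(\ab)$ for all $\alpha$, which is the claim. (One could equivalently write $R(\alpha) = R(\ab) + \|\alpha - \ab\|_{\sx}^2$ in the notation of the paper.)

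There is essentially no serious obstacle here — the only subtlety is the justification of $\sx\ab = \sxy$, i.e. that $\sxy$ lies in the range of $\sx$, which is precisely where the almost-sure identity $(I_p - \sx\sx^+)X = 0$ does the work; everything else is a one-line quadratic expansion. I would present the proof in that order: (i) establish $\sx\ab = \sxy$, (ii) expand $R(\alpha)$ around $\ab$, (iii) conclude by positive semi-definiteness of $\sx$. If one also wants the finiteness of $R(\alpha)$ to be non-vacuous, note that the hypotheses $\EE[XX^\top]$ and $\EE[y^2]$ finite (together with Cauchy--Schwarz for $\sxy$) guarantee all the quantities above are well-defined.
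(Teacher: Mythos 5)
Your proposal is correct and follows essentially the same route as the paper: both hinge on the identity $\sx\ab = \sxy$, established via the same almost-sure argument $(I_p - \sx\sx^{+})X = 0$, and both then exploit the quadratic form of $R(\alpha)$. The only cosmetic difference is that you complete the square around $\ab$ while the paper sets the gradient of the convex quadratic to zero; these are interchangeable and equally rigorous.
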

\begin{proof}
We have 
\[R(\alpha) = \EE[(X^\top \alpha - y)^2] = \alpha^\top \sx \alpha + \sy^2 - 2\alpha^\top\sxy,\]
so since $R(\alpha)$ is convex, $\alpha$ is a minimizer if and only if
\[\nabla_\alpha R(\alpha) = 2\sx \alpha - 2\sxy = 0.\]
By (\ref{eqn:sx sxy identity}), $\sx\ab = \sxy$, so the claim is proved.

\end{proof}
For $\X \in \R^{n\times p}$ and $\y\in \R^n$, let 
\[\a \coloneqq \arg\min\left\{\|\alpha\|:\ \|\X\alpha - \y\| = \min_u \|\X u - \y\|\right\}.\]
We then have the following result.
\begin{lemma}
$\a = \X^+\y$.
\end{lemma}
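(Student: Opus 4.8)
The plan is to show that the minimum-norm solution to the least-squares problem coincides with the Moore–Penrose pseudo-inverse applied to $\y$, which is a completely standard linear-algebra fact. First I would recall the defining properties of $\X^+$ collected in Appendix \ref{sec:pseudo-inverse} (in particular that $\X\X^+\X = \X$, $\X^+\X\X^+ = \X^+$, that $\X^+\X$ and $\X\X^+$ are the orthogonal projections onto $\range(\X^\top)$ and $\range(\X)$ respectively, and that $\range(\X^+) = \range(\X^\top)$).

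\begin{proof}
First I would establish that $\X^+\y$ is a least-squares solution, i.e.\ that it attains $\min_u\|\X u - \y\|$. Since $\X\X^+$ is the orthogonal projection onto $\range(\X)$, for any $u\in\R^p$ we have $\X u \in \range(\X)$, so $\|\X u - \y\| \ge \|\X\X^+\y - \y\|$ by the defining property of orthogonal projection onto $\range(\X)$; and $\X(\X^+\y) = \X\X^+\y$ attains this minimum. Hence $\X^+\y$ belongs to the set $S \coloneqq \{u : \|\X u - \y\| = \min_v\|\X v - \y\|\}$.

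Next I would characterize $S$. If $u_0$ is any fixed element of $S$, then $u \in S$ if and only if $\X u = \X u_0$ (both equal the projection $\X\X^+\y$), i.e.\ $u - u_0 \in \ker(\X)$. Thus $S = u_0 + \ker(\X)$ is an affine subspace. Taking $u_0 = \X^+\y$, note that $\X^+\y \in \range(\X^+) = \range(\X^\top) = \ker(\X)^\perp$. Therefore $\X^+\y$ is the orthogonal projection of the origin onto the affine subspace $S$, and for any $u = \X^+\y + w$ with $w\in\ker(\X)$ we have, by orthogonality of $\X^+\y$ and $w$,
\[
\|u\|^2 = \|\X^+\y\|^2 + \|w\|^2 \ge \|\X^+\y\|^2,
\]
with equality only when $w = 0$. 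Hence $\X^+\y$ is the unique minimizer of $\|u\|$ over $S$, which is precisely the definition of $\a$. Therefore $\a = \X^+\y$.
\end{proof}

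The only mild subtlety is making sure the pseudo-inverse facts invoked ($\X\X^+$ is the orthogonal projector onto $\range(\X)$, and $\range(\X^+) = \range(\X^\top) = \ker(\X)^\perp$) are exactly the ones recorded in Appendix \ref{sec:pseudo-inverse}; since the excerpt says a summary of these properties is given there, I would simply cite them. There is no real obstacle here — the argument is a short, self-contained decomposition into ``it solves the least-squares problem'' plus ``among solutions it has minimal norm,'' each following directly from the projection interpretation of $\X\X^+$ and $\X^+\X$.
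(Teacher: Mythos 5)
Your proof is correct, but it takes a genuinely different route from the paper's. You argue geometrically: $\X\X^+$ is the orthogonal projector onto $\range(\X)$, so $\X^+\y$ attains the least-squares minimum; the solution set is the affine subspace $\X^+\y + \ker(\X)$; and since $\X^+\y \in \range(\X^+) = \range(\X^\top) = \ker(\X)^\perp$, Pythagoras gives that it is the unique element of minimal norm. The paper instead characterizes the minimizers of $\|\X u - \y\|$ by the normal equations $\X^\top\X u = \X^\top\y$ (via convexity and the gradient), verifies that $\X^+\y$ satisfies them using $\X^\top\X\X^+ = \X^\top$, establishes existence and uniqueness of $\a$ from strict convexity of the norm over the convex solution set, and then shows algebraically that $\X^+\X\a$ is a solution with $\|\X^+\X\a\|\le\|\a\|$, forcing $\a = \X^+\X\a = (\X^\top\X)^+\X^\top\y = \X^+\y$. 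Your argument is shorter and makes the geometry transparent; the paper's argument has the advantage of using only identities explicitly recorded in its Appendix \ref{sec:pseudo-inverse}. The one caveat for your version is exactly the one you flag: the facts that $\X\X^+$ is the orthogonal projector onto $\range(\X)$ and that $\range(\X^+) = \range(\X^\top) = \ker(\X)^\perp$ are \emph{not} among the properties listed in that appendix, so you would need to derive them (each follows in a line from symmetry and idempotence of $\X\X^+$ and $\X^+\X$ together with $\X\X^+\X=\X$ and $\X^+\X\X^+=\X^+$) or cite them explicitly; as written this is a small but real gap relative to what the paper makes available.
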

\begin{proof}

\textbf{Step 1: Existence and uniqueness of $\a$.} Since
\[\nabla_u\|\X u- y\|^2= 2\X^\top \X u - 2\X^\top y,\]
and $\|\X u- y\|^2$ is convex in $u$, $u$ is a minimizer of $u\mapsto \|\X u- y\|^2$ if and only if
\begin{equation}\label{eqn:min cond}
    \X^\top \X u = \X^\top  \y.
\end{equation}
 By the properties of the pseudo-inverse, $\X^\top \X\X^+ = \X^\top $, so
\[\X^\top \X (\X^+ \y) = \X^\top  \y,\]
and thus $\X^+ \y$ is a minimizer of $\|\X u - \y\|$. The set of vectors $u$ satisfying $\X^\top \X u = \X^\top  \y$ is also convex, so $\a$ is a minimizer of a strictly convex function $\|\cdot\|$ over a non-empty convex set. Such a minimizer exists and is unique, so $\a$ exists and is unique.

\textbf{Step 2: formula for $\a$.} Since $\a$ is a minimizer of $\|\X u-\y\|$, it must satisfy \ref{eqn:min cond}, i.e. 
\begin{equation}\label{eqn:a min cond}
    \X^\top \X \a = \X^\top  y.
\end{equation}
We can write 
\[\a = \X^+\X \a + (I - \X^+\X)\a,\]
and using $\X\X^+\X = \X$ as well as the fact that $\X^+\X$ is symmetric (see Appendix \ref{sec:pseudo-inverse}), a quick calculation gives
\[\|\a\|^2 = \|\X^+\X\a\|^2 + \|(I - \X^+\X)\a\|^2.\]
Thus $\|\X^+\X \a\|\le \|\a\|^2$, and also
\[\X^\top \X (\X^+\X\a) = \X^\top \X \a = \X^\top \y,\]
where we used $\X\X^+\X = \X$ in the first step and \ref{eqn:a min cond} in the second step. Thus $\X^+\X \a$ is a minimizer of $\|\cdot\|$ among minimizers of $\|\X u- \y\|$. Since by Step 1 above $\a$ is the unique such minimizer, $\X^+\X\a = \a$. Thus,
\begin{align*}
    \a &= \X^+\X\a\\
    &= (\X^\top \X)^+\X^\top \X \a &(\text{since } \X^+ = (\X^\top \X)^+ \X^\top )\\
    &= (\X^\top \X)^+\X^\top  \y & (\text{by } \ref{eqn:a min cond})\\
    &=\X^+ \y. &(\text{since } \X^+ = (\X^\top \X)^+ \X^\top )
\end{align*}
\end{proof}

\subsection{Proof that (\ref{model}) is a special case of (\ref{model linear}) in the Gaussian case}\label{proofs:lin model}
% We use the following lemma in Section \ref{sec:low dim}.
\begin{lemma}\label{thm:gaussian frm}
Suppose that $(X,y)$ follows model (\ref{model}) and is furthermore jointly Gaussian. Then model (\ref{model linear}) holds with $\theta=\ab$ and and error  $\eta \coloneqq y-  X^\top \ab$, independent of $X$, where $\ab = \sx^+\sxy$ is the best linear predictor under model (\ref{model}).
\end{lemma}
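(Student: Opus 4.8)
The plan is to take $\eta \coloneqq y - X^\top\ab$ as the definition, so that the decomposition $y = X^\top\ab + \eta$ holds tautologically with $\theta = \ab$, and then to check that $\eta$ has the two properties demanded by model (\ref{model linear}): that it is mean zero and independent of $X$. The feature equation $X = AZ + E$ appearing in (\ref{model linear}) is literally the same as in (\ref{model}), so nothing needs to be verified there; the entire content of the lemma is the statement about $\eta$.

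First I would record the trivial fact that $\EE[\eta] = \EE[y] - \EE[X]^\top\ab = 0$, since $X$ and $y$ are mean zero under (\ref{model}). The key computation is then to show $\eta$ is uncorrelated with $X$. Using $\ab = \sx^+\sxy$ together with the identity $\sx\ab = \sxy$ already established in (\ref{eqn:sx sxy identity}) (in the proof of Lemma \ref{thm:theta null risk}), we obtain
\[\EE[X\eta] = \EE[Xy] - \EE[XX^\top]\ab = \sxy - \sx\ab = 0,\]
so the cross-covariance of $X$ and $\eta$ vanishes.

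The last step is to promote ``uncorrelated'' to ``independent.'' Since $(X,y)$ is jointly Gaussian and $\eta = y - X^\top\ab$ is an affine function of $(X,y)$, the augmented vector $(X,\eta)$ is itself jointly Gaussian; for jointly Gaussian vectors a zero cross-covariance block is equivalent to independence. Hence $\eta$ is independent of $X$, and taking $\theta = \ab$ shows that model (\ref{model linear}) holds, as claimed. There is no genuine obstacle here: the only point that requires a moment's care is observing that $(X,\eta)$ inherits joint Gaussianity from $(X,y)$, which is what licenses the uncorrelated-implies-independent implication; everything else is bookkeeping with the normal equations $\sx\ab = \sxy$.
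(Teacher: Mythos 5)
Your proof is correct and follows essentially the same route as the paper's: compute $\EE[X\eta]=\sxy-\sx\ab=0$ via the normal-equations identity $\sx\ab=\sxy$ from (\ref{eqn:sx sxy identity}), then use joint Gaussianity of $(X,\eta)$ to upgrade zero correlation to independence. No gaps.
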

\begin{proof}
We first compute
\[\EE[X\eta] = \EE[X( y-  X^\top \ab)^2]=\EE[XX^\top ]\ab - \EE[Xy] = \sx\ab - \sxy,\]
where we use that $X$ and $y$ are mean zero in the final step. Using the fact that $\sx\ab = \sxy$ from (\ref{eqn:sx sxy identity}) above, we find $\EE[X\eta]=0$ so $X$ and $\eta$ are uncorrelated, where we again use that $(X,y)$ are mean zero, so $\eta$ is mean zero. Since $X$ and $y$ are jointly normal, it follows that $X$ and $\eta$ are jointly normal. Thus, $X$ and $\eta$ are independent and so model (\ref{model linear}) holds as claimed.
\end{proof}

\subsection{Risk of $\a$ under the factor regression model for $p\ll n$}\label{proof:LS}
For completeness, we provide a risk bound for the minimum-norm estimator $\a$ under the factor regression model in the low-dimensional regime $p\ll n$.
\begin{thm}\label{LS}
Under model \ref{model}, suppose that Assumptions \ref{ass:x}, \ref{ass:fullrank} \& \ref{ass:subg fm} hold. Then if $n > C\cdot p$ for some $C>0$ large enough and $p\ge K$, with probability at least $1-c/n$, 
%%Suppose model \ref{model} holds together with assumptions ... . Let $\ke = \|\se\|/\lp(\se)$. Then there exists $c_1>1$ such that if $n > c_1 p$ then with probability at least ...,
\[R(\a) - \sep^2 \lesssim \ke  \frac{\|\beta\|^2_{\sz}}{\xi} +  \frac{p}{n}\sep^2\log n,\]
where $\ke = \lambda_1(\se)/\lambda_p(\se)$ is the condition number of $\se$.
\end{thm}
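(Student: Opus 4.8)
Here is the plan. In the regime $n>C p$ the matrix $\X=\tX\sx^{1/2}$ (Assumption \ref{ass:x}) has full column rank $p$ with high probability, so $\a=\X^{+}\y$ is the ordinary least squares vector: $\X^{+}=(\X^{\top}\X)^{-1}\X^{\top}$ and, crucially, $\X^{+}\X=I_p$. I would work on the event
\[
\G \coloneqq \Big\{ \X^{\top}\X \succeq c_1 n\,\sx,\ \ \|\E\|^2\le c_2 n\|\se\|,\ \ \tEps^{\top}\X^{+\top}Q\X^{+}\tEps \le c_3\log(n)\,\tr(\X^{+\top}Q\X^{+})\ \text{for}\ Q\in\{\se,\sza\}\Big\},
\]
for suitable absolute constants $c_1\in(0,1)$, $c_2,c_3>0$, and record its consequences: on $\G$ we have $(\X^{\top}\X)^{-1}\preceq (c_1n)^{-1}\sx^{-1}\preceq (c_1n)^{-1}\se^{-1}$ (since $\sx=\sza+\se\succeq\se$), hence $\|\X^{+}\|^2=\lp(\X^{\top}\X)^{-1}\le(c_1n\lp(\se))^{-1}$; for any PSD $Q\preceq\sx$, $\X^{+\top}Q\X^{+}\preceq(c_1n)^{-1}\X(\X^{\top}\X)^{-1}\X^{\top}$, a matrix of operator norm $\le(c_1n)^{-1}$; and $\tr(\X^{+\top}Q\X^{+})=\tr((\X^{\top}\X)^{-1}Q)$ by cyclicity of the trace.

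Next, I would use the exact decomposition (identical to the one opening the proof of Theorem \ref{thm:upper bound}, from the mutual independence of $Z$, $E$, $\eps$ with $X=AZ+E$, $y=Z^{\top}\beta+\eps$)
\[
R(\a)-\sep^2 = \|\se^{1/2}\a\|^2 + \|\sz^{1/2}(A^{\top}\a-\beta)\|^2,
\]
together with the identity $\Z=(\X-\E)A^{+\top}$, which follows from $A^{\top}A^{+\top}=I_K$ (valid since $\rank A=K$ by Assumption \ref{ass:fullrank}). Substituting $\y=\Z\beta+\Eps$ into $\a=\X^{+}\y$ and using $\X^{+}\X=I_p$ gives
\[
\a = A^{+\top}\beta - \X^{+}\E A^{+\top}\beta + \X^{+}\Eps,\qquad A^{\top}\a-\beta = -A^{\top}\X^{+}\E A^{+\top}\beta + A^{\top}\X^{+}\Eps .
\]
After $(u+v+w)^2\le 3(u^2+v^2+w^2)$, the excess risk is controlled, up to a constant, by five terms: the ``bias'' contributions $\|\se^{1/2}A^{+\top}\beta\|^2$, $\|\se^{1/2}\X^{+}\E A^{+\top}\beta\|^2$ and $\|\sz^{1/2}A^{\top}\X^{+}\E A^{+\top}\beta\|^2$, and the ``variance'' contributions $\|\se^{1/2}\X^{+}\Eps\|^2$ and $\|\sz^{1/2}A^{\top}\X^{+}\Eps\|^2$.

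Bounding these on $\G$ is then elementary, using $\|A^{+\top}\beta\|^2=\beta^{\top}(A^{\top}A)^{-1}\beta\le\|\beta\|^2_{\sz}/\lk(\sza)$ (as in (\ref{eqn:A+b ub})) and $\|\se\|/\lk(\sza)=\xi^{-1}$. For the bias terms: $\|\se^{1/2}A^{+\top}\beta\|^2\le\|\se\|\,\|A^{+\top}\beta\|^2\le\|\beta\|^2_{\sz}/\xi$; then $\|\se^{1/2}\X^{+}\E A^{+\top}\beta\|^2\le\|\X^{+\top}\se\X^{+}\|\,\|\E\|^2\|A^{+\top}\beta\|^2\lesssim(\ke/n)(n\|\se\|)\|A^{+\top}\beta\|^2\le\ke\|\beta\|^2_{\sz}/\xi$ (using $\|\X^{+\top}\se\X^{+}\|\le\|\se\|\,\|\X^{+}\|^2$), and similarly $\|\sz^{1/2}A^{\top}\X^{+}\E A^{+\top}\beta\|^2\le\|\X^{+\top}\sza\X^{+}\|\,\|\E\|^2\|A^{+\top}\beta\|^2\lesssim(1/n)(n\|\se\|)\|A^{+\top}\beta\|^2\le\|\beta\|^2_{\sz}/\xi$ (using $\sza\preceq\sx$ and the projection bound). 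For the variance terms, Lemma \ref{thm:trace lemma} applies because $\X$, hence $\X^{+\top}Q\X^{+}$, is independent of $\Eps=\sep\tEps$, giving $\|\se^{1/2}\X^{+}\Eps\|^2\lesssim\sep^2\log(n)\,\tr((\X^{\top}\X)^{-1}\se)\le\sep^2\log(n)(c_1n)^{-1}\tr(I_p)=\sep^2p\log(n)/(c_1n)$, and $\|\sz^{1/2}A^{\top}\X^{+}\Eps\|^2\lesssim\sep^2\log(n)\,\tr((\X^{\top}\X)^{-1}\sza)\le\sep^2\log(n)(c_1n)^{-1}\tr(\sx^{-1}\sza)\le\sep^2p\log(n)/(c_1n)$, where $\tr(\sx^{-1}\sza)=\tr(I_p)-\tr(\sx^{-1}\se)\le p$. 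Summing and using $\ke\ge1$ yields $R(\a)-\sep^2\lesssim\ke\|\beta\|^2_{\sz}/\xi+(p/n)\sep^2\log n$.

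It remains to check $\PP(\G)\ge1-c/n$. The inclusion $\X^{\top}\X\succeq c_1n\sx$ (and $\rank\X=p$) follows from Theorem 4.6.1 of \cite{verHDP} applied to $\tX=\X\sx^{-1/2}$, which has independent, zero-mean, unit-variance, sub-Gaussian entries: $\sigma_p^2(\tX)\ge(\sqrt n-C(\sqrt p+\sqrt{\log n}))^2\ge n/4$ once $n>Cp$ with $C$ large, whence $\X^{\top}\X=\sx^{1/2}\tX^{\top}\tX\sx^{1/2}\succeq\sigma_p^2(\tX)\sx$. The estimate $\|\E\|^2\lesssim n\|\se\|$ follows from Theorem \ref{thm:concentration spectrum trace} applied to $\tE$ and $\se$ with $\tr(\se)\le p\|\se\|\lesssim n\|\se\|$, and the two quadratic-form bounds follow from Lemma \ref{thm:trace lemma}; a union bound then gives $\PP(\G)\ge1-c/n$. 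I expect the only delicate point to be keeping the variance term free of the condition number $\ke$: this is precisely why one uses the refined inclusion $(\X^{\top}\X)^{-1}\se\preceq(c_1n)^{-1}I_p$ (via $\sx\succeq\se$) rather than the crude $(\X^{\top}\X)^{-1}\preceq(c_1n\lp(\se))^{-1}I_p$, which would introduce a spurious $\ke$ into the $p/n$ term.
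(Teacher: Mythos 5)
Your proposal is correct and follows essentially the same route as the paper: the same bias--variance decomposition $R(\a)-\sep^2=\|\se^{1/2}\a\|^2+\|\sz^{1/2}(A^\top\a-\beta)\|^2$, the same high-probability events (smallest singular value of $\tX$, operator norm of $\E$, and the quadratic-form/trace bound of Lemma \ref{thm:trace lemma}), and the same final estimates for each piece. Your expansion of $\a$ via $\X^+\X=I_p$ and $\Z\beta=(\X-\E)A^{+\top}\beta$ is a mild streamlining of the paper's argument (which routes $A^\top\X^+$ through $\Z^+$ and hence also needs $\sigma_K^2(\tZ)\gtrsim n$ on the event), and your explicit treatment of the $\|\se^{1/2}A^{+\top}\beta\|^2$ term and of the condition-number bookkeeping in the variance is, if anything, slightly more complete than what is written in the paper.
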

\begin{proof}
As in the proof of Theorem \ref{thm:upper bound} found in section \ref{proof:upper bound} above, 
\[R(\a) \le 2 (B_1+B_2) + 2(V_1+V_2),\]
where
\begin{align*}
    &B_1 = \|\Sigma_E^{1/2}\X^+\Z\beta\|^2 \\
    &B_2=\|\sz^{1/2}(A^\top \X^+\Z - I_K)\beta\|^2  \\
    & V_1 =\|\Sigma_E^{1/2}\X^+\Eps\|^2 \\
    &V_2=\|\sz^{1/2}A^\top\X^+\Eps\|^2.
\end{align*}
We will bound these four terms on the event $\B = \B_1 \cap \B_2$, where
\[\B_1 \coloneqq \{\|\tE\|^2 < c_1n,\ \sk^2(\tZ) > c_2 n,\ \sigma_p^2(\tX)\ge c_3 n\}\]
and
\[\B_2 \coloneqq \left\{ \tEps^\top \X^{+\top}\sx\X^+ \tEps\le c_5\log(n)\cdot \tr(\X^{+\top}\sx\X^+)\right\}.\]
As the last step of the proof, we will show that $\PP(\B)\ge 1-c/n$.

\paragraph{Bounding the bias component:} First observe that since $K<n$, when $\Z$ is full rank, $\Z^+\Z=I_K$ and so
\[A^\top\X^+ = \Z^+\Z A^\top\X^+ = \Z^+(\X - \E)\X^+= \Z^+\X\X^+ - \Z^+\E\X^+.\]
Thus, 
\begin{align}\label{eqn:B_2 initial bound p<n}
    B_2 &= \|(A^\top\X^+\Z - I_K)\beta\|^2 \nonumber\\
    &=\|(\Z^+\X\X^+\Z - I_K)\beta - \Z^+\E\X^+\Z\beta\|^2_{\sz}\nonumber\\
    &\le 2\|(\Z^+\X\X^+\Z - I_K)\beta\|^2_{\sz} + 2\|\Z^+\E\X^+\Z\beta\|^2_{\sz}.
\end{align}
Note that since $p\ge K$, by Assumption \ref{ass:fullrank}, $\rank(A)=K$ so by Lemma \ref{thm:psuedo lemma} of Appendix \ref{sec:pseudo-inverse}, 
\begin{equation}\label{eqn:aa = I}
    A^\top A^{+\top} = I_K.
\end{equation}
We thus have
\begin{align}\label{eqn:bias ub1 p<n}
    \|(\Z^+\X\X^+\Z - I_K)\beta\|^2_{\sz} &= \|(\Z^+\X\X^+\Z - \Z^+\Z)\beta\|^2_{\sz}\nonumber\\
    &=\|\tZ^+(\X\X^+-I_p)\Z\beta\|^2\nonumber\\
    &\le \frac{\|(\X\X^+-I_p)\Z\beta\|^2}{\sk^2(\tZ)}\nonumber\\
    &\lesssim \frac{1}{n}\|(\X\X^+-I_p)\Z\beta\|^2\nonumber & (\text{on } \B)\\
    &= \frac{1}{n}\|(\X\X^+-I_p)\Z A^\top A^{+\top}\beta\|^2 \nonumber & (\text{by } (\ref{eqn:aa = I}))\\
    &= \frac{1}{n}\|(\X\X^+-I_p)(\X-\E) A^{+\top}\beta\|^2 \nonumber & (\text{since } \X = \Z A^\top + \E)\\
    &= \frac{1}{n}\|(\X\X^+-I_p)\E A^{+\top}\beta\|^2 \nonumber & (\text{since } \X\X^+\X = \X)\\
    &\le \frac{1}{n}\|\X\X^+-I_p\|\cdot\|\E A^{+\top}\beta\|^2 \nonumber\\
    &\le \frac{1}{n}\|\E A^{+\top}\beta\|^2 \nonumber\\
    &\lesssim \frac{n\|\se\|}{n}\frac{\|\beta\|^2_{\sz}}{\lk(\sza)}\nonumber &(\text{on } \B \text{ and by } (\ref{eqn:A+b ub}))\\
    &= \frac{\|\beta\|^2_{\sz}}{\xi},
\end{align}
where in the penultimate step we used 
\begin{equation}\label{eqn: ab upper bound}
    \|A^{+\top}\beta\|^2 \le \frac{\|\beta\|^2_{\sz}}{\lk(\sza)}
\end{equation}
from (\ref{eqn:A+b ub}). We can bound the second term in \ref{eqn:B_2 initial bound p<n} as follows:
\begin{align*}
    \|\Z^+\E\X^+\Z\beta\|^2_{\sz} &= \|\tZ^+\E\X^+\Z\beta\|^2\\
    &\le \frac{\|\E\|^2}{\sk^2(\tZ)}\|\X^+\Z\beta\|^2\\
    &\lesssim \|\se\|\cdot \|\X^+\Z\beta\|^2& (\text{on } \B)\\
    &=\|\se\|\cdot \|\X^+\Z  A^\top A^{+\top}\beta\|^2  & (\text{since } A^\top A^{+\top} = I_K)\\
    &= \|\se\|\cdot \|\X^+(\X-\E) A^{+\top}\beta\|^2  & (\text{since } \X = \Z A^\top + \E)\\
    &\le 2\|\se\|\cdot \|\X^+\X A^{+\top}\beta\|^2 + 2\|\se\|\cdot \|\X^+\E A^{+\top}\beta\|^2\\
    &\lesssim \|\se\| \|A^{+\top}\beta\|^2 + \|\se\|\frac{\|\E\|}{\sigma_p^2(\X)}\|A^{+\top}\beta\|^2 & (\text{since } \|\X^+\X\|\le 1)\\
    &\lesssim \|\se\| \cdot \ke \|A^{+\top}\beta\|^2\\
    &\le \ke \frac{\|\beta\|^2_{\sz}}{\xi}. & (\text{by } (\ref{eqn: ab upper bound}))
\end{align*}
Using this and (\ref{eqn:bias ub1 p<n}) in (\ref{eqn:B_2 initial bound p<n}), and using the fact that $\ke >1$, we find that on the event $\B$,
\begin{equation}
    B_2\lesssim \ke \frac{\|\beta\|^2_{\sz}}{\xi}.
\end{equation}

\paragraph{Bounding the variance component:} We have
\begin{align}
    V_1+V_2 &= \Eps^\top \X^{+\top} \sx \X^+ \Eps\nonumber\\
    &= \sep^2 \tEps^\top \X^{+\top} \sx \X^+ \tEps & (\text{by Assumption } \ref{ass:subg fm})\nonumber\\
    &\lesssim \sep^2\log(n)\tr(\X^{+\top}\sx\X^+) & (\text{on } \B_2)\nonumber\\
    &\le \sep^2\log(n)\cdot p \|\X^{+\top}\sx\X^+\| & (\text{since } \rank(\X^+)=p)\nonumber\\
    &= \sep^2\log(n)\cdot p \|\sx^{1/2}\X^+\|^2.\label{eqn:var p<n init bound}
\end{align}
From Assumption \ref{ass:x}, $\X = \tX \sx^{1/2}$, and from Lemma \ref{thm:psuedo lemma} of Appendix \ref{sec:pseudo-inverse} below, 
\[(\tX \sx^{1/2})^+ = (\tX^+ \tX \sx^{1/2})^+ (\tX \sx^{1/2}\sx^{-1/2})^+ = \sx^{-1/2}\tX^+.\]
Using this in (\ref{eqn:var p<n init bound}), we find
\[V_1+V_2 \lesssim \sep^2\log(n)\cdot p\|\tX^+\|^2 = \sep^2\log(n)\frac{p}{\sigma_p^2(\tX)}\]
\paragraph{Proof that $\PP(\B)\ge 1-c/n$.}
% {\color{red} can we just say "by similar reasoning as in previous proofs" or something like that? It feels repetitive otherwise.}
The bounds $\PP(\B_1) \ge 1-c/n$ and $\PP(\B_2) \ge 1 - e^{-cn}$ follow respectively from Theorem 4.6.1 of \cite{verHDP} and Lemma \ref{thm:trace lemma} in Appendix \ref{proofs:noiseless} above, by similar reasoning as in the proof of Theorem \ref{thm:upper bound}, for example.
\end{proof}

\subsection{Signal to noise ratio bound for clustered variables}\label{sec:snr cluster}

We present here a lower bound on the signal-to-noise ratio $\xi = \lk(\sza)/\|\se\|$ in terms of the number $|I_a|$ of features related to cluster $a$ only, for $1\le a\le K$. We recall the definition
\[I_a \coloneqq \left\{i\in [p]:\ |A_{ia}|=1, A_{ib} = 0 \text{ for } b\neq a\right\}.\]
\begin{lemma}\label{thm:snr cluster}
$\xi \ge \min_a |I_a|\cdot \lk(\sz)/\|\se\|$.
\end{lemma}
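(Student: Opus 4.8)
The plan is to bound $\lambda_K(A\sz A^\top)$ from below by $\min_a |I_a| \cdot \lambda_K(\sz)$ and then divide by $\|\se\|$. The key observation is that the sets $I_1,\dots,I_K$ are disjoint subsets of $[p]$, and if we restrict the loading matrix $A$ to the rows indexed by $\bigcup_a I_a$, the resulting submatrix $A_{\mathrm{sub}} \in \R^{(\sum_a |I_a|)\times K}$ has a block structure: row $i\in I_a$ is $\pm e_a^\top$ (a signed canonical basis vector in $\R^K$). Hence $A_{\mathrm{sub}}^\top A_{\mathrm{sub}} = \mathrm{diag}(|I_1|,\dots,|I_K|)$, since the $\pm 1$ entries square to $1$ and distinct columns are supported on disjoint rows. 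First I would write $A^\top A \succeq A_{\mathrm{sub}}^\top A_{\mathrm{sub}} = \mathrm{diag}(|I_1|,\dots,|I_K|)$, which follows because $A^\top A - A_{\mathrm{sub}}^\top A_{\mathrm{sub}} = \sum_{i\notin \bigcup_a I_a} A_{i\sbt}A_{i\sbt}^\top \succeq 0$.

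Next I would pass from $A^\top A$ to $A\sz A^\top$. Since $\sz$ is positive semidefinite, write $\sz^{1/2}$ for its symmetric square root; then $A\sz A^\top$ and $\sz^{1/2}A^\top A \sz^{1/2}$ have the same nonzero eigenvalues, and in particular $\lambda_K(A\sz A^\top) = \lambda_K(\sz^{1/2}A^\top A\sz^{1/2})$ when both are $K\times K$ (using that $A$ has rank $K$ so $A\sz A^\top$ has rank $K$; this is the relevant case, and if $\sz$ is only rank-deficient the bound is trivial). Using $A^\top A \succeq \mathrm{diag}(|I_1|,\dots,|I_K|) \succeq (\min_a |I_a|) I_K$, we get $\sz^{1/2}A^\top A \sz^{1/2} \succeq (\min_a |I_a|)\sz$, and therefore by the min-max characterization of eigenvalues $\lambda_K(\sz^{1/2}A^\top A \sz^{1/2}) \geq (\min_a |I_a|)\lambda_K(\sz)$. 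Dividing by $\|\se\|$ yields $\xi = \lambda_K(A\sz A^\top)/\|\se\| \geq \min_a |I_a|\cdot \lambda_K(\sz)/\|\se\|$, as claimed.

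I do not anticipate a genuine obstacle here; the only point requiring a little care is the matrix-monotonicity step $A^\top A \succeq (\min_a|I_a|)I_K$ combined with conjugation by $\sz^{1/2}$, and the identification of the $K$-th eigenvalues of $A\sz A^\top$ and $\sz^{1/2}A^\top A\sz^{1/2}$, both of which are standard. One could alternatively argue directly: for any unit vector $v\in\R^K$, $v^\top \sz^{1/2}A^\top A\sz^{1/2}v = \|A\sz^{1/2}v\|^2 \geq \sum_a \sum_{i\in I_a}(A_{i\sbt}^\top \sz^{1/2}v)^2 = \sum_a |I_a|\,(\sz^{1/2}v)_a^2 \geq (\min_a|I_a|)\,\|\sz^{1/2}v\|^2 \geq (\min_a|I_a|)\lambda_K(\sz)$, then take the minimum over $v$. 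This avoids invoking Loewner-order monotonicity and is perhaps the cleanest route to write out.
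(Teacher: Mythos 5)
Your proposal is correct and follows essentially the same route as the paper: both arguments reduce to showing $\lambda_K(A^\top A)\ge \min_a |I_a|$ from the disjoint-support, $\pm 1$ structure of the rows indexed by the $I_a$ (the paper via the variational characterization $\|Av\|^2\ge\sum_b|I_b|v_b^2$, you via the equivalent Loewner bound $A^\top A\succeq \mathrm{diag}(|I_1|,\dots,|I_K|)$), and then combine with $\lambda_K(A\sz A^\top)\ge\lambda_K(\sz)\lambda_K(A^\top A)$. Your closing "direct" variant is in fact the paper's computation applied to $A\sz^{1/2}$, so there is nothing substantive to distinguish the two proofs.
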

\begin{proof}
For any $v\in\R^K$ with $\|v\|=1$,
\begin{align*}
    v^\top A^\top Av &= \|Av\|^2 = \sum_{i=1}^p \left(\sum_{a=1}^KA_{ia}v_a\right)^2\\
    &\ge \sum_{i\in I} \left(\sum_{a=1}^KA_{ia}v_a\right)^2\\
    &= \sum_{b=1}^K \sum_{i\in I_b} A_{ib}^2v_b^2 \\
    &= \sum_{b=1}^K |I_b|v_b^2 & (|A_{ib}|=1 \text{ for } i\in I_b)\\
    &\ge \min_a |I_a| \cdot\sum_{b=1}^Kv_b^2 = \min_a |I_a|. &(\text{since } \|v\|=1).
\end{align*}
Thus, using $\lk(\sza)\ge \lk(\sz)\lk(A^\top A)$,
\[\xi = \lk(\sza)/\|\se\|\ge \lk(A^\top A)\lk(\sz)/\|\se\|\ge \min_a |I_a| \lk(\sz)/\|\se\|,\]
which completes the proof.
\end{proof}

\section{Properties of the Moore-Penrose pseudo-inverse}
\label{sec:pseudo-inverse}
We state the definition and some properties of the pseudo-inverse in this section for completeness. The material here can be found in \cite{matrix_cookbook}, along with proofs of some of the statements. For a matrix $B\in \R^{n\times m}$, there exists a unique matrix $B^+$, which we define as the pseudo-inverse of $B$, satisfying the following four conditions:
\begin{align}
    &BB^+B = B \\
    &B^+B B^+ = B^+\\
    &BB^+ \text{ is symmetric}\\
    &B^+B \text{ is symmetric}\label{eqn: b+b sym}
\end{align}
We will use the following properties of the pseudo-inverse in this paper.

\begin{lemma}\label{thm:psuedo lemma}
For any $B\in \R^{n\times m}$ and $C\in \R^{m\times d}$,
\begin{equation}\label{eqn:bc}
    (BC)^+ = (B^+BC)^+(BCC^+)^+.
\end{equation}
Furthermore, for any matrix $B\in \R^{n\times m}$ with $r = \rank(B)$ and smallest non-zero  singular value $\sigma_{r}(B)$,
\begin{align}
    &B^\top BB^+ = B^\top \\
    &B^\top (BB^\top)^+ = B^+\\
    &(B^\top B)^+ B^\top = B^+\\
    &B^+B = I_m \ \text{\rm if } r= m\\
    &BB^+ = I_n \ \text{\rm if } r= n\\
    &\|B^+\| = 1/\sigma_{r}(B)\label{eq:X+norm}\\
    &\rank(B^+) = \rank(B) = r.
\end{align}
\end{lemma}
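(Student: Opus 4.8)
The plan is to work throughout with the reduced singular value decomposition $B = U\Sigma V^\top$, where $U\in\R^{n\times r}$ and $V\in\R^{m\times r}$ have orthonormal columns (so $U^\top U = V^\top V = I_r$) and $\Sigma = \diag(\sigma_1,\dots,\sigma_r)$ with $\sigma_1\ge\cdots\ge\sigma_r>0$. First I would check that $V\Sigma^{-1}U^\top$ satisfies the four defining conditions of the pseudo-inverse, so that by uniqueness $B^+ = V\Sigma^{-1}U^\top$; in particular $BB^+ = UU^\top$ and $B^+B = VV^\top$ are the orthogonal projections onto the column space and the row space of $B$, respectively.

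Given this representation, every identity in the lemma except (\ref{eqn:bc}) is a one-line substitution. We have $B^\top BB^+ = V\Sigma U^\top UU^\top = V\Sigma U^\top = B^\top$. Since $BB^\top = U\Sigma^2 U^\top$ has pseudo-inverse $U\Sigma^{-2}U^\top$, it follows that $B^\top(BB^\top)^+ = V\Sigma U^\top U\Sigma^{-2}U^\top = V\Sigma^{-1}U^\top = B^+$; symmetrically, $B^\top B = V\Sigma^2 V^\top$ has pseudo-inverse $V\Sigma^{-2}V^\top$, giving $(B^\top B)^+B^\top = V\Sigma^{-1}U^\top = B^+$. When $r=m$ the matrix $V$ is $m\times m$ orthogonal, so $B^+B = VV^\top = I_m$; when $r=n$ the matrix $U$ is orthogonal, so $BB^+ = UU^\top = I_n$. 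Finally $\|B^+\| = \|V\Sigma^{-1}U^\top\| = \|\Sigma^{-1}\| = 1/\sigma_r$ because $\sigma_r$ is the smallest positive singular value, and $\rank(B^+) = \rank(V\Sigma^{-1}U^\top) = r$ since $U$ and $V$ have full column rank $r$ and $\Sigma^{-1}$ is invertible.

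The only step requiring real work is the product identity (\ref{eqn:bc}), which is Cline's formula for the pseudo-inverse of a product $BC$. Here I would set $A = BC$, $X = B^+BC$, $Y = BCC^+$, and first record the structural identities $A = BX$ and $A = YC$: indeed $BX = BB^+BC = BC = A$ from $BB^+B = B$, and $YC = BCC^+C = BC = A$ from $CC^+C = C$. The plan is then to verify directly that $Z := X^+Y^+$ satisfies the four Penrose conditions for $A$ — $AZA = A$, $ZAZ = Z$, $(AZ)^\top = AZ$ and $(ZA)^\top = ZA$ — whence $Z = A^+$ by uniqueness, which is exactly (\ref{eqn:bc}). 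I expect the main obstacle to be the bookkeeping in this verification: one has to track how the symmetric idempotents $XX^+$, $X^+X$, $YY^+$, $Y^+Y$ interact with $B$, $C$, and one another, and in particular show that $AZ$ and $ZA$ collapse to the orthogonal projections onto $\range(A)$ and $\range(A^\top)$, which relies on $BX = A = YC$ together with range inclusions such as $\range(X)\subseteq\range(B^\top)$ and $\range(Y)\subseteq\range(B)$. As a fallback I would either substitute the SVDs of both $B$ and $C$ and compute, or cite Cline's original argument; but the Penrose-condition verification is self-contained and is the route I would carry out in full.
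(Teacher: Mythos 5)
Your proposal is correct, but note that the paper does not actually prove this lemma: it is stated in Appendix \ref{sec:pseudo-inverse} ``for completeness'' with only a pointer to the Matrix Cookbook, so any self-contained argument is strictly more than what the paper supplies. Your route --- establish $B^+ = V\Sigma^{-1}U^\top$ from the reduced SVD via uniqueness of the Penrose inverse, then read off the remaining identities by substitution --- is the natural one, and every one of those computations checks out, including (\ref{eq:X+norm}) and the rank statement.

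The one place where your write-up is a plan rather than a proof is the product identity (\ref{eqn:bc}), and you have correctly located the difficulty there: this is Cline's theorem on the pseudo-inverse of a product, and verifying the four Penrose conditions for $Z = X^+Y^+$ with $X = B^+BC$, $Y = BCC^+$ is indeed the standard proof. Your preliminary identities $BX = BC = YC$ are right; it also helps to record $X = B^+(BC)$ and $Y = (BC)C^+$, which give $\range(Y)=\range(BC)$ and $\range(X^\top)=\range((BC)^\top)$, hence $YY^+=(BC)(BC)^+$ and $X^+X=(BC)^+(BC)$. The verification then reduces to showing that $(BC)Z$ and $Z(BC)$ equal these two orthogonal projections, after which $(BC)Z(BC)=BC$ and $Z(BC)Z=Z$ follow at once. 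Be warned that this bookkeeping is genuinely fiddly --- for instance, the tempting shortcut $(BC)X^+ = Y$ is false in general, as a $2\times 2$ example with $B$ a coordinate projection and $C$ the all-ones matrix shows --- but it does close. Since the paper itself only cites a reference here, your stated fallback of citing Cline would already match the paper's own standard of rigor; carrying out the Penrose verification in full would exceed it.
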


\end{document}